\newtheorem{theorem}{Theorem}
\newtheorem{lemma}{Lemma}
\theoremstyle{definition}
\newtheorem{definition}{Definition}
\newtheorem{remark}{\textbf{Remark}}
\DeclareMathOperator*{\argmax}{arg\,max}
\DeclareMathOperator*{\define}{\coloneqq}
\DeclareMathOperator*{\KL}{KL}
\DeclareMathOperator{\unigcn}{UniGCN}
\DeclareMathOperator{\mign}{M-IGN}
\DeclareMathOperator{\tmphn}{T-MPHN}
\DeclareMathOperator{\hgnn+}{HGNN+}
\DeclareMathOperator{\hgnnr}{HGNN}
\DeclareMathOperator{\ads}{AllDeepSets}
\newcommand{\norm}[1]{\lVert #1 \rVert}
\renewcommand{\vec}[1]{\mathbf{#1}}
\begin{document}

%%
%% The "title" command has an optional parameter,
%% allowing the author to define a "short title" to be used in page headers.
\title{Generalization Performance of Hypergraph Neural Networks}

%%
%% The "author" command and its associated commands are used to define
%% the authors and their affiliations.
%% Of note is the shared affiliation of the first two authors, and the
%% "authornote" and "authornotemark" commands
%% used to denote shared contribution to the research.
\author{Yifan Wang}
\affiliation{%
  \institution{University of Delaware}
  \city{Newark}
  \state{Delaware}
  \country{USA}
}
\email{yifanw@udel.edu}

\author{Gonzalo R. Arce}
\affiliation{%
  \institution{University of Delaware}
  \city{Newark}
  \state{Delaware}
  \country{USA}
}
\email{arce@udel.edu}

\author{Guangmo Tong}
\affiliation{%
  \institution{University of Delaware}
  \city{Newark}
  \state{Delaware}
  \country{USA}
}
\email{amotong@udel.edu}
\begin{abstract}
Hypergraph neural networks have been promising tools for handling learning tasks involving higher-order data, with notable applications in web graphs, such as modeling multi-way hyperlink structures and complex user interactions. Yet, their generalization abilities in theory are less clear to us. In this paper, we seek to develop margin-based generalization bounds for four representative classes of hypergraph neural networks, including convolutional-based methods (UniGCN), set-based aggregation (AllDeepSets), invariant and equivariant transformations (M-IGN), and tensor-based approaches (T-MPHN). Through the PAC-Bayes framework, our results reveal the manner in which hypergraph structure and spectral norms of the learned weights can affect the generalization bounds, where the key technical challenge lies in developing new perturbation analysis for hypergraph neural networks, which offers a rigorous understanding of how variations in the model's weights and hypergraph structure impact its generalization behavior. Our empirical study examines the relationship between the practical performance and theoretical bounds of the models over synthetic and real-world datasets. One of our primary observations is the strong correlation between the theoretical bounds and empirical loss, with statistically significant consistency in most cases.

% Our empirical study examines the empirical loss and theoretical bounds across diverse synthetic and real-world datasets. To our delight, we observe that the theoretical bounds are well aligned with the empirical loss with statistical significance in most cases. 
% Hypergraph neural networks are promising tools for learning tasks involving higher-order relationships in data, with notable applications in web graphs, such as modeling multi-way hyperlink structures and complex user interactions. However, their theoretical generalization abilities are not well understood. In this paper, we develop margin-based generalization bounds for five representative classes of HyperGNNs, each utilizing a distinct mechanism for leveraging hypergraph structures, including traditional message passing (UniGCN), set-based aggregation (AllDeepSets), invariant and equivariant transformations (M-IGN), multi-perspective attention (T-MPHN), and spectral techniques (HGNN+). Using the PAC-Bayes framework, our results illustrate how hypergraph structures and spectral norms of the learned weights influence generalization bounds. The key technical challenge lies in developing a new perturbation analysis for HyperGNNs, which provides a rigorous understanding of how perturbations in the model's weights and hypergraph structure affect generalization behavior. Our empirical study evaluates empirical loss and theoretical bounds across diverse synthetic and real-world datasets, revealing a significant alignment between theoretical bounds and empirical loss, with statistical significance in most cases.
\end{abstract}

%%
%% The code below is generated by the tool at http://dl.acm.org/ccs.cfm.
%% Please copy and paste the code instead of the example below.
%%
\begin{CCSXML}
<ccs2012>
   <concept>
       <concept_id>10010147.10010257.10010258.10010259.10010263</concept_id>
       <concept_desc>Computing methodologies~Supervised learning by classification</concept_desc>
       <concept_significance>300</concept_significance>
       </concept>
   <concept>
       <concept_id>10003752.10010070.10010071.10010072</concept_id>
       <concept_desc>Theory of computation~Sample complexity and generalization bounds</concept_desc>
       <concept_significance>500</concept_significance>
       </concept>
 </ccs2012>
\end{CCSXML}

\ccsdesc[300]{Computing methodologies~Supervised learning by classification}
\ccsdesc[500]{Theory of computation~Sample complexity and generalization bounds}

\keywords{Graph Classification; Hypergraph Neural Networks; Learning Theory.}
%% A "teaser" image appears between the author and affiliation
%% information and the body of the document, and typically spans the
%% page.
% \begin{teaserfigure}
%   \includegraphics[width=\textwidth]{sampleteaser}
%   \caption{Seattle Mariners at Spring Training, 2010.}
%   \Description{Enjoying the baseball game from the third-base
%   seats. Ichiro Suzuki preparing to bat.}
%   \label{fig:teaser}
% \end{teaserfigure}

% \received{20 February 2007}
% \received[revised]{12 March 2009}
% \received[accepted]{5 June 2009}

%%
%% This command processes the author and affiliation and title
%% information and builds the first part of the formatted document.
\maketitle

\section{Introduction}

The web represents a vast, interconnected system comprising various types of graphs, such as those formed by web pages \cite{broder2000graph}, social networks \cite{easley2010networks}, and hyperlink networks \cite{kumar2000web}. Analyzing such web graphs is crucial for tasks like search engine ranking \cite{scarselli2005graph}, hyperlink prediction \cite{zhang2018beyond}, community detection \cite{zhou2006learning}, and user behavior analysis \cite{kitsios2022user}. Graph learning algorithms, such as Graph Neural Networks (GNNs), have proven powerful in a variety of real-world applications \cite{antelmi2023survey}. However, traditional GNNs are inherently limited to modeling pairwise relationships. Hypergraph Neural Networks (HyperGNNs) extend GNNs by modeling higher-order relationships through hyperedges that capture complex multi-way interactions \cite{zhou2006learning}, making them more suitable for web-based applications.

In recent years, several advanced HyperGNN architectures have been developed, including HGNN \cite{feng2019hypergraph}, HyperSAGE \cite{arya2020hypersage}, K-GNN \cite{morris2020weisfeiler}, KP-GNN \cite{NEURIPS2022_1ece70d2}, and T-MPNN \cite{wang2024t}. While empirical studies have demonstrated the strong performance of these HyperGNNs, rigorous theoretical analysis is necessary for gaining deeper insights into these models. A well-researched focus is on examining their expressiveness power, typically assessing the ability to distinguish between hypergraph structures or realize certain functions \cite{Xu2020What,azizian2021expressive,feng2019hypergraph,NEURIPS2022_1ece70d2}. However,  the expressive power of HyperGNNs does not necessarily inform their generalization ability. To date, our understanding of the generalization performance of HyperGNNs is still limited, which is the gap we aim to fill in this work.

% although we are currently not able to justify the degree of such correlation in theory, we hope that this is a starting point of such studies.

% it is equally important to assess how well these bounds align with real-world scenarios. In this paper, we try to investigate the consistency between theoretical predictions and empirical results is critical to ensuring that these models are not only theoretically sound but also robust in practice. Such consistency helps guide model improvements, inform hyperparameter tuning, and validate the robustness of HyperGNNs when deployed in noisy and dynamic environments commonly seen in web data. To this end, we conduct extensive experiments to evaluate the degree of alignment between our derived generalization bounds and the empirical performance of HyperGNNs across a diverse set of datasets and tasks.

% [\small{[0.959, 0.975, 0.870]}]

In this paper, we seek to provide the very first theoretical evidence of the generalization performance of HyperGNNs for hypergraph classification. Through the PAC-Bayes framework, we examine four representative HyperGNN structures: UniGCN \cite{ijcai2021p353}, AllDeepSets \cite{chien2022you}, M-IGN \cite{morris2019weisfeiler3,ijcai2021p353}, and T-MPHN \cite{wang2024t}. These models were selected for their unique architectural approaches: UniGCN employs convolutional-based methods, AllDeepSets utilizes set-based aggregation techniques, M-IGN incorporates invariant and equivariant transformations, and T-MPHN leverages tensor-based operations. While HyperGNNs often generalize GNNs in an immediate manner, techniques of PAC-Bayes for GNNs \cite{liao2021a,ju2023generalization,sun2024pac} cannot be directly applied in that feature aggregations in HyperGNNs must be performed over large and heterogeneous sets of nodes, leading to challenges in developing perturbation analysis. Consequently, new analytical techniques are required to accommodate the aggregation mechanisms inherent in HyperGNNs.

Building on the theoretical work, we conduct an empirical study to assess the consistency between theoretical bounds and empirical performance of HyperGNNs. Unlike previous studies that focus on numerical comparisons of generalization bounds, we aim to directly evaluate the alignment between theoretical bounds and model performance, examining how well these bounds explain HyperGNNs' behavior. Since the obtained theoretical results represent upper bounds on generalization performance, this investigation focuses on validating these findings and exploring whether they can offer practical guidance for improving HyperGNNs' performance.

The contributions can be summarized as follows.  
\begin{itemize}
    \item We develop a refined analysis on obtaining the perturbation bound for HyperGNNs by decomposing the output variation into two essential quantities: a) the upper bounds on the maximum node representation and b) the maximum variation of the layer's output caused by the perturbed weights.
    \item We derive generalization bounds that demonstrate the correlation between the model generalization capacity and several key model attributes, such as the spectral norm of the parameters, the maximum hyperedge size, and the maximum size of the hyperedges that share the same node.
    \item We conduct a detailed analysis of the empirical loss and theoretical generalization bounds on datasets with varying hypergraph structures. Our results show a consistent positive correlation between the empirical loss and theoretical bounds, as depicted in Figure \ref{fig:intro}. Additionally, we observe that training significantly enhances this alignment. We further explore how different hypergraph structures impact both empirical performance and theoretical bounds.
\end{itemize}

% , where we leverage the optimal Monte Carlo estimation \cite{dagum2000optimal} and Pearson correlation coefficient to ensure the statistical significance.

%and bounds, and explore the potential of using theoretical bounds to offer insights into the empirical behavior of our considered models.
\begin{figure}[t]
\centering
\subfloat{\setlength{\fboxsep}{0pt}\fbox{\includegraphics[width=0.4\textwidth]{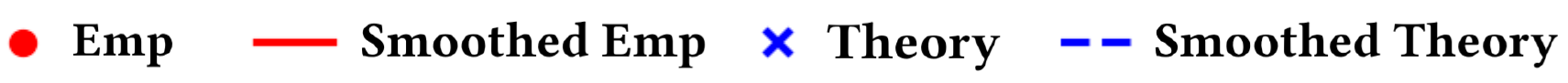}}}
\addtocounter{subfigure}{-1}
\subfloat{{\includegraphics[width=0.11\textwidth]{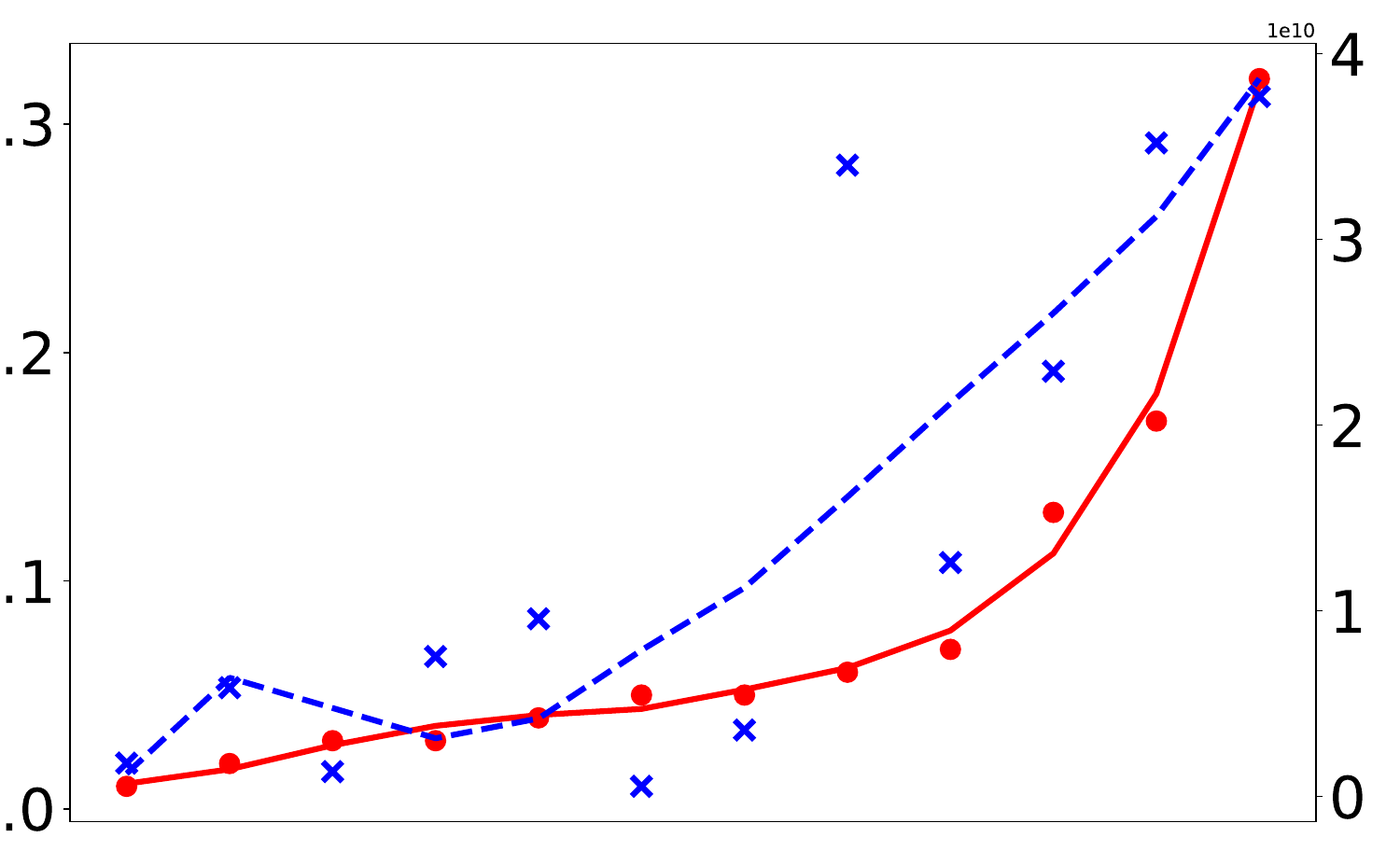}}{\includegraphics[width=0.11\textwidth]{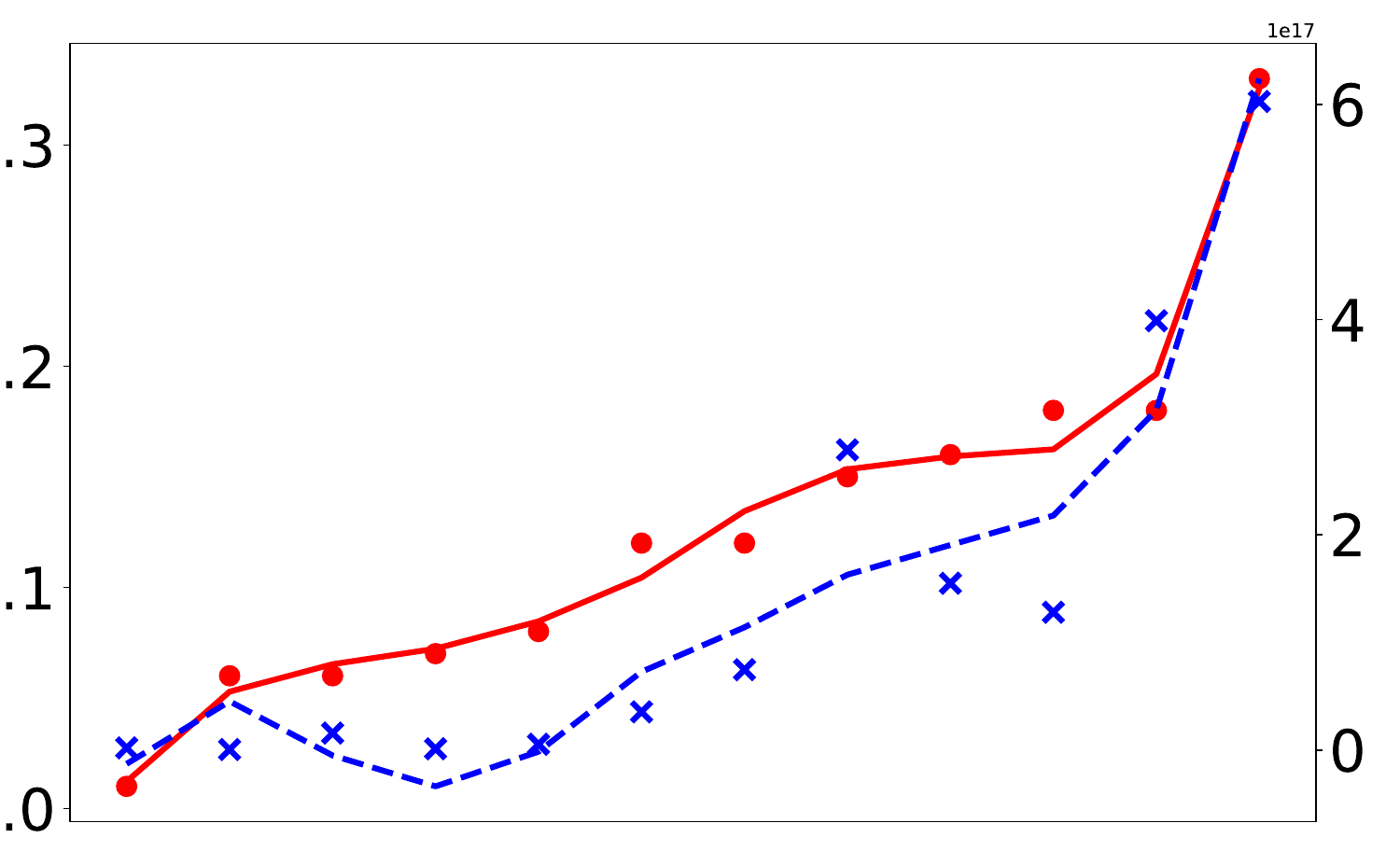}}{\includegraphics[width=0.11\textwidth]{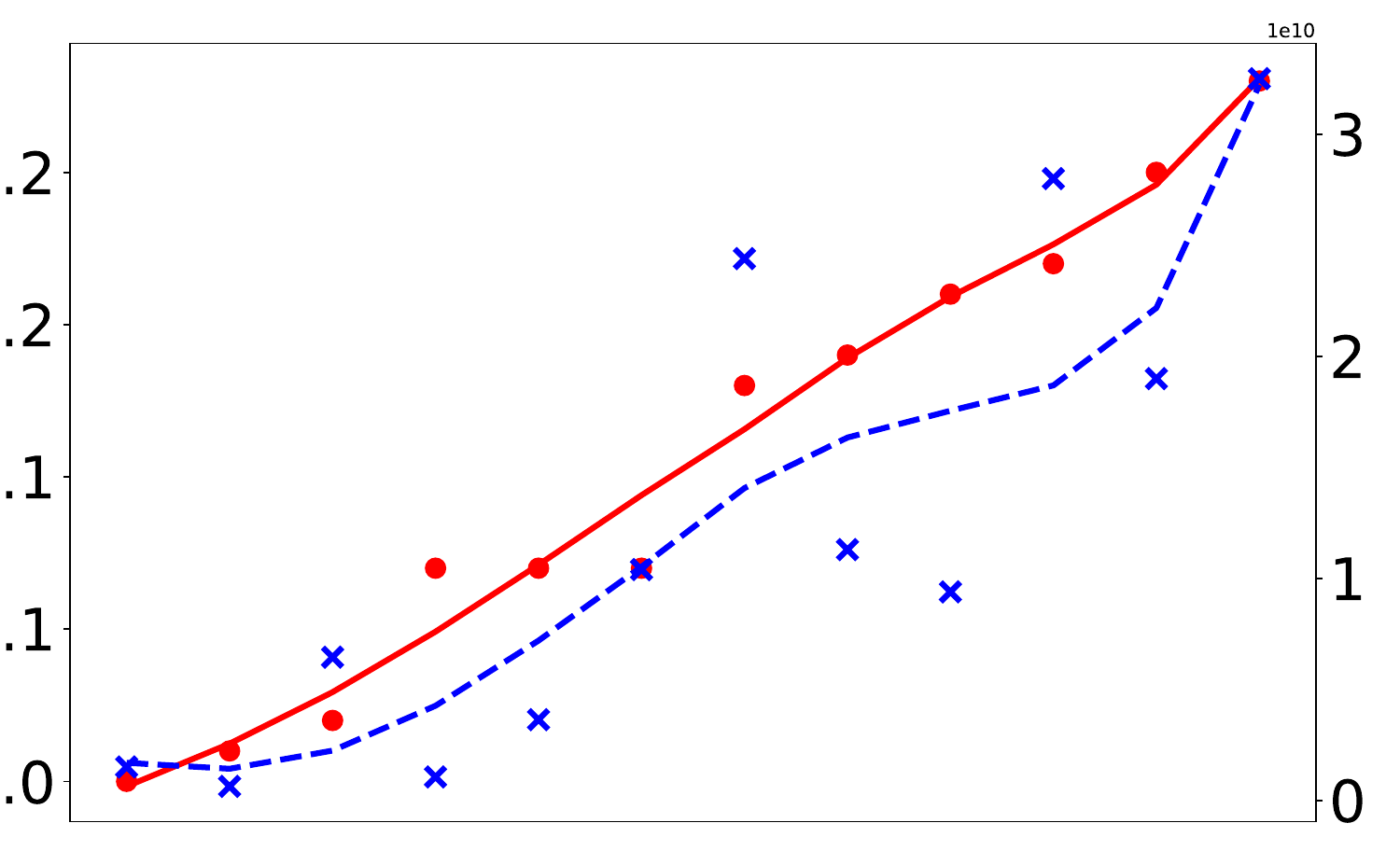}}{\includegraphics[width=0.11\textwidth]{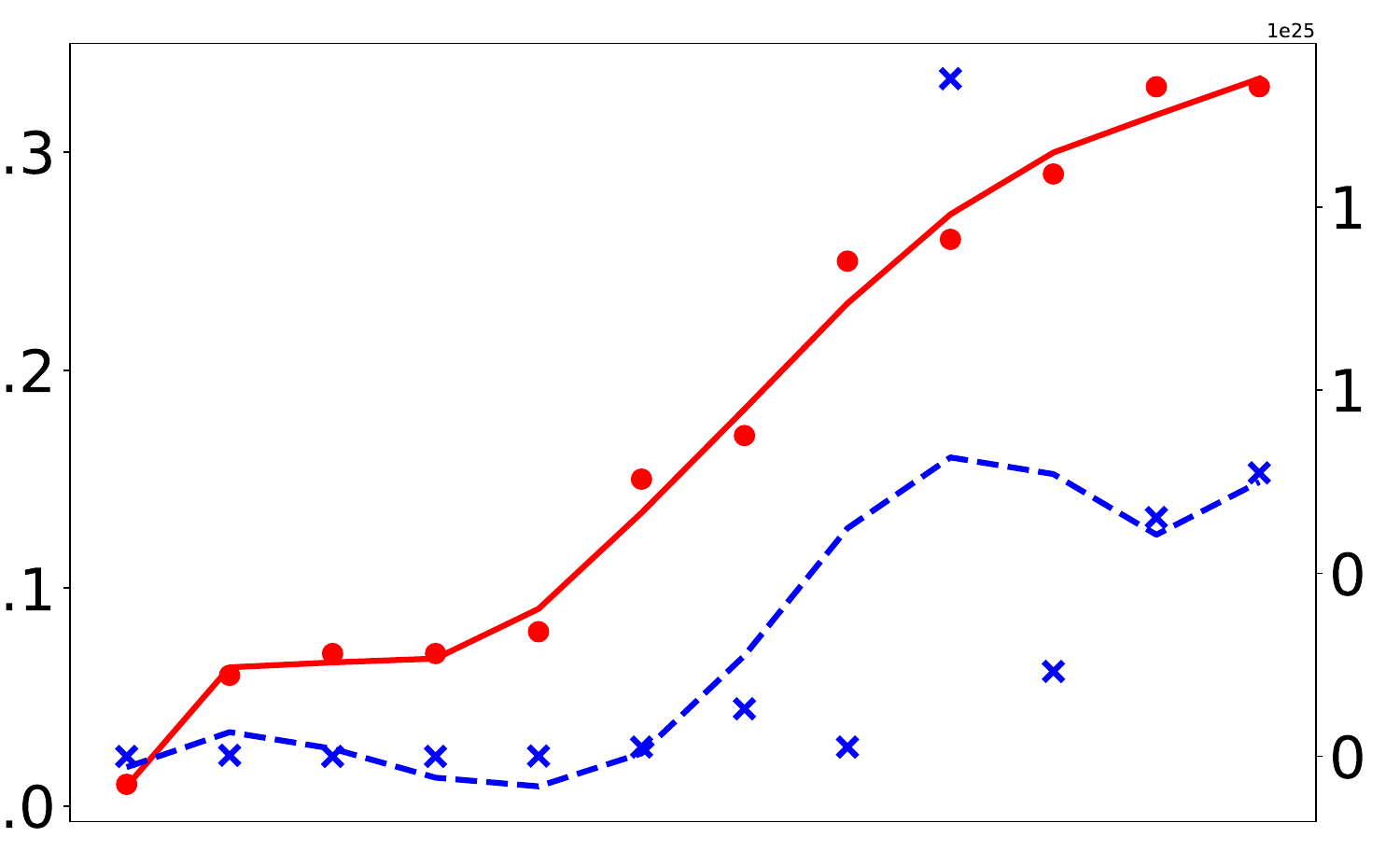}}}
\vspace{-4mm}
\small{
\caption{Consistency between empirical loss (Emp) and theoretical bounds (Theory). Each subgraph shows the empirical loss, theoretical bound, and their curves via the Savitzky-Golay filter \cite{savitzky1964smoothing} of 12 groups of datasets with UniGCN in different layers.}
\label{fig:intro}
\vspace{-5mm}
\Description{figure1}}
\end{figure}

\textbf{Organization.} Sec \ref{sec:relatedwork} introduces the related works. The preliminaries are provided in Sec \ref{sec:pre}. In Sec \ref{sec:main}, we present the theoretical results. The empirical studies are given in Sec \ref{sec:exp}. Further discussions on technical proofs and additional details on the experiments can be found in the appendix. The source code and a subset of data are located in a github repository\footnote{https://github.com/yifanwang123/Generalization-Performance-of-Hypergraph-Neural-Networks}.

\section{Related Works}\label{sec:relatedwork}
\textbf{HyperGNNs.} The existing HyperGNNs can be categorized into four classes:
\begin{itemize}
    \item \textbf{HyperGCNs.} HyperGCNs bridge the gap between traditional GNNs and hypergraph structures by leveraging hypergraph Laplacians, enabling the application of well-established GCN techniques to capture higher-order interactions \cite{gao2022hgnn+,ijcai2021p353,yadati2019hypergcn}. Bai et al. \cite{bai2021hypergraph} proposed a HyperGCN model that learns from hypergraph structure and edge features. Feng et al. \cite{feng2019hypergraph} uses a Chebyshev polynomial to approximate the hypergraph Laplacian, leveraging the spectral properties of hypergraphs. Additionally, Yadati et al. \cite{yadati2019hypergcn} proposed an efficient technique to approximate hypergraph Laplacians by focusing on clique expansion, reducing computational complexity while preserving the essential structure of the hypergraph.
    \item \textbf{HyperMPNNs.} HyperMPNNs are significant for their ability to directly model complex dependencies in hypergraphs through message-passing mechanisms, providing a flexible framework to capture multi-way node relationships that are not easily represented by simple graphs \cite{ijcai2021p353}. One popular example includes HyperSAGE, which employs a two-layer strategy combining both hyperedge-level and node-level aggregation \cite{arya2020hypersage}. Attention-based variants of HyperMPNNs use attention weights to prioritize messages from different nodes or hyperedges \cite{bai2021hypergraph,Zhang2020Hyper-SAGNN}. Structure-based HyperMPNNs integrate structural features of hypergraphs directly into the model's embeddings \cite{huang2023boosting,nikolentzos2020k,bodnar2021weisfeiler,NEURIPS2022_1ece70d2}.
    \item \textbf{HyperGINs.} HyperGINs are distinguished by their strong expressive power, achieved by extending the concept of Graph Isomorphism Networks (GINs) \cite{xu2018how} to hypergraphs and utilizing multiset functions, which preserves invariance or equivariance to input transformations \cite{maron2018invariant,geerts2020expressive}. For instance, \cite{morris2019weisfeiler3} combines the $k$-Weisfeiler-Lehman test with GINs to develop $k$-GNN, further boosting expressive power. However, despite their inherent expressiveness, the generalization performance of these models remains unclear and needs further investigation.
    \item \textbf{Tensor-based HyperGNNs.} These models leverage tensor operations that provide a structured and effective means of capturing the complexity of hypergraph interactions \cite{pena2023t,dal2024windowed,wang2024tensorized}. Gao et al. \cite{gao2020hypergraph} introduced a tensor representation that allows dynamic adjustments of hypergraph components during learning. Building on this, Wang et al. \cite{wang2024t} advanced the approach by encoding hypergraph structures using adjacency tensors and cross-node interaction tensors through T-product operations, enabling richer and more expressive data representations.
\end{itemize}

\textbf{Theoretical aspects.} The primary theoretical focus has been on the expressive ability of HyperGNNs. Inspired by the relationship between MPNNs and the 1-Weisfeiler-Leman (1-WL) test \cite{xu2018how}, a natural approach to designing more expressive HyperGNNs is to simulate higher-order WL tests \cite{morris2019weisfeiler,geerts2020expressive,ijcai2021p353}. One line of research aims to develop a unified framework to enhance the expressive power of HyperGNNs. For example, AllSets framework \cite{chien2022you} extends the scope of existing HyperGNNs by employing two multiset functions, covering models like HCHA \cite{bai2021hypergraph}, HNHN \cite{dong2020hnhn}, HyperSAGE \cite{arya2020hypersage}, and HyperGCNs \cite{yadati2019hypergcn}. Additionally, motivated by the success of Subgraph GNNs \cite{cotta2021reconstruction}, several studies have explored the structural generalization capacity of higher-order GNNs \cite{qian2022ordered,luo2022on,zhang2024beyond}, demonstrating that certain HyperGNNs can generalize across graphs of varying sizes after being trained on a limited set of graphs.

\textbf{Generalization performance on GNNs.} Several studies have developed generalization bounds using classical statistical learning frameworks, such as Vapnik–Chervonenkis (VC) dimension and Rademacher complexity \cite{jegelka2022theory,vapnik1968uniform,scarselli2018vapnik,esser2021learning}. Another approach leverages kernel learning techniques via the Neural Tangent Kernel (NTK) \cite{du2019graph,bartlett2001rademacher}, where the idea is to approximate a neural network using a kernel derived from its training dynamics. Recent research has focused on deriving norm-based bounds using the PAC-Bayes framework \cite{liao2021a}. Although these methods have been effective for analyzing standard GNNs, they are not directly applicable to HyperGNNs due to the complex higher-order interactions and non-linear dependencies inherent in hypergraph structures.

\section{Preliminaries}\label{sec:pre}
\subsection{Hypergraphs}
A hypergraph $\mathcal{G} = (\mathcal{V}, \mathcal{E})$ is given by a set $\mathcal{V} = \{v_1, v_2, ..., v_N\}$ of $N\in \mathbb{Z}^+$ nodes and a set $\mathcal{E}=\{e_1, e_2, ..., e_K\}$ of $K\in \mathbb{Z}^+$ hyperedges, where each hyperedge is a nonempty subset of $\mathcal{V}$. We use $M \define \max_{k\in [K]}|e_k|$ to denote the maximum cardinality of hyperedges. For each node $v_i$, its neighbor set $\mathcal{N}_i$ consists of the nodes that share at least one common hyperedge with $v_i$, i.e., $\mathcal{N}_i = \{v_j \in \mathcal{V}\setminus\{v_i\} | ~\exists~ e \in \mathcal{E}, \{v_i, v_j\}\subseteq e\}$; the degree of node $v_i$ is defined as $|\mathcal{N}_i|$, and let the maximum node degree be $D \define \max_i |\mathcal{N}_i|$. We use $r_i = \{e \in \mathcal{E}|~v_i \in e\}$ to denote the incident hyperedge set of node $v_i$; the maximum cardinality of incident hyperedge sets is denoted by $R\define \max_i |r_i|$. The input hypergraph is associated with a node feature matrix $\mathbf{X}\in \mathbb{R}^{N\times d}$ and an edge feature matrix $\mathbf{Z}\in \mathbb{R}^{K\times d}$ with $d\in \mathbb{Z}^+$, where $\mathbf{X}[i,:]$ (resp., $\mathbf{Z}[k,:]$) denotes the feature associated with node $v_i$ (resp., hyperedge $e_k$). Following the common practice \cite{neyshabur2018pac}, we assume that $\norm{\mathbf{X}[i:]}_{2}\leq B^2$ and $\norm{\mathbf{Z}[k:]}_{2}\leq B^2$ for some constant $B \in \mathbb{R}^+$, where $\norm{\cdot}_2$ denotes the $l_2$ norm. Furthermore, $\norm{\cdot}$, $\norm{\cdot}_F$, and $\norm{\cdot}_{\infty}$ denote the spectral norm, Frobenius norm, and infinite norm for matrices, respectively. For the convenience of readers, a summary of notations is provided in Table \ref{tab:notation} in the Appendix. The following operation will be frequently used for describing HyperGNNs.

\begin{definition}[Operation $\otimes$]
    Given a matrix $\mathbf{A}\in \mathbb{R}^{a \times b}$ and a tensor $\mathbf{B}\in \mathbb{R}^{a \times b \times c}$, the resulting matrix  $ \mathbf{B}\otimes \mathbf{A}\in \mathbb{R}^{a\times c}$ is defined by $(\mathbf{B}\otimes \mathbf{A}) [i,:] =  \mathbf{A}[i,:]\mathbf{B}[I,:,:].$
\end{definition}

\subsection{Hypergraph classification}
% $= \mathcal{D}_{\mathcal{A}} \times \mathcal{D}_{\mathcal{Y}}$
We focus on the hypergraph classification task with $C \in \mathbb{Z}^+$ labels $[C]\define \{1,...,C\}$, where the input domain $ \mathcal{A}$ consists of triplets $A=(\mathcal{G}, \mathbf{X}, \mathbf{Z})$. Suppose that the input-label samples follow a latent distribution $\mathcal{D}$ over $\mathcal{A} \times [C]$. We consider classifiers in the form of $f_{\vec{w}}: \mathcal{A} \rightarrow \mathbb{R}^C$ parameterized by $\vec{w}$, where a prediction by $\argmax_i f_{\vec{w}}(A)[i]$ for each input $A \in \mathcal{A}$.
Given a parametric space $F$ of classifiers and a training set $S=\{(A_i,y_i)\}$ consisting of $m \in \mathbb{Z}^+$ iid samples from $\mathcal{D}$, our goal is to learn a classifier $f_{\vec{w}} \in F$ that can minimize the true error \cite{neyshabur2018pac}:
\begin{equation}{\label{equ:margin loss}}
    \mathcal{L}_{\mathcal{D}}(f_{\vec{w}}) = \mathbb{E}_{ (A,y)\sim \mathcal{D}}\Big[\mathbbm{1}\Big(f_{\vec{w}}(A)[y] \leq \max_{j \neq y} f_{\vec{w}}(A)[j]\Big)\Big],
\end{equation}
where $\mathbbm{1}(\cdot) \in \{0, 1\}$ is the indicator function. The empirical loss $\mathcal{L}_{S, \gamma}(f_{\vec{w}})$ we consider is the common used multiclass margin loss \cite{freund1998large,langford2002pac,mcallester2003simplified,neyshabur2018pac} with respect to a specified margin $\gamma \in \mathbb{R}^+$:
\begin{equation}\label{equ:empirical_loss}
    \mathcal{L}_{S, \gamma}(f_{\vec{w}}) = \frac{1}{m}\sum_{ (A, y) \in S} \mathbbm{1}\Big(f_{\vec{w}}(A)[y] \leq \gamma + \max_{j \neq y} f_{\vec{w}}(A)[j]\Big).
\end{equation}

\section{Generalization performance of HyperGNNs}\label{sec:main}
In this section, we present the main results of this paper. We proceed by introducing the standard PAC-Bayes framework and then present the generalization bounds for a representative HyperGNN from each class discussed in Section \ref{sec:relatedwork}.

\subsection{The analytical framework}
% Given a fixed prior distribution over the hypothesis space $F$ parameterized by ${\vec{w}}$ and a posterior distribution over $F$ learned from the samples, the traditional PAC-Bayes framework provides the generalization bound for a randomized predictor from $F$ sampled from the posterior distribution  \cite{mcallester1998some,mcallester1999pac}; along with this direction, several studies have developed margin-based generalization bounds on a single deterministic predictor in response to random perturbations \cite{mcallester2003simplified,neyshabur2018pac}. In particular, for a given prior distribution $P$ over the parameters, the posterior distribution can be the learned parameter $\vec{w}$ shifted by a random perturbation $\vec{u} \sim Q$. As long as the Kullback-Leibler (KL) divergence between $P$ and $Q$ is tractable, one can derive the standard PAC-Bayesian bound \cite{mcallester2003simplified,neyshabur2018pac,liao2021a}.  

% as stated in Lemma \ref{lemma:Generalization_margin_bound} \cite{mcallester2003simplified,neyshabur2018pac,liao2021a} in Sec \ref{app:pac} in the Appendix.

% Under such a framework, by scrutinizing the unique message-passing schemes in individual HyperGNNs, our analysis focuses on designing proper prior and posterior distributions that can satisfy the desired perturbation condition by which the generalization bound can be derived. This construction is non-trivial as it needs to account for the form of the perturbation bound of HyperGNNs while ensuring a finite covering of its range to make the union bound tractable.
In the PAC-Bayes framework, given a prior distribution over the hypothesis space, which refers to the weight space of the model, the posterior distribution over model parameters is updated based on the training data. This framework provides a generalization bound for models that are drawn from the posterior distribution \cite{mcallester1998some,mcallester1999pac}. Building on this foundation, recent work has developed margin-based generalization bounds for deterministic models by introducing controlled random perturbations \cite{mcallester2003simplified,neyshabur2018pac}. In particular, the posterior distribution can be represented as the learned parameters with an added random perturbation. As long as the Kullback-Leibler (KL) divergence between the prior and posterior distributions remains tractable, the standard PAC-Bayesian bound can be derived, provided that the shift in the model's output caused by the perturbation is small \cite{mcallester2003simplified,neyshabur2018pac,liao2021a}.

Under such a learning framework, our analysis focuses on deriving generalization bounds for HyperGNNs by scrutinizing their unique message-passing schemes. The main challenge lies in designing suitable prior and posterior distributions that must meet three critical conditions: (a) a tractable KL divergence, (b) adherence to perturbation constraints, and (c) constructing a countable covering of the hypothesis space, due to the fact that the standard framework is typically tailored to one fixed model. The complex aggregation mechanisms and multi-way interactions in HyperGNNs necessitate a refined perturbation analysis to manage recursive dependencies and inequalities effectively. Additionally, achieving a finite covering is essential to making the union bound tractable in the PAC-Bayes analysis, as it allows for the approximation of the infinite set of possible weights with a finite subset. In addition, the perturbation bounds also influence the covering size, requiring a precise design that conforms to the specific format needed to derive the bound.

\subsection{UniGCN}\label{sec:unigcn} 
For HyperGCNs, we examine UniGCN \cite{ijcai2021p353} which adapts the standard GCN architecture for hypergraphs by integrating degree-based normalization for nodes and hyperedges. UniGCN takes the hypergraph $\mathcal{G}$ and node feature $\mathbf{X}$ as input, with the initial node representation $H^{(0)} = \mathbf{X}\in \mathbb{R}^{N\times d}$. Suppose that the model has $L\in \mathbb{Z}^+$ propagation steps. In each propagation step $l \in [L]$, the model computes the node representation $H^{(l)} \in \mathbb{R}^{N\times d_{l}}$ with $d_{l}\in \mathbb{Z}^+$ by
\begin{equation*}
    H^{(l)} =\mathbf{C}_4^{\intercal}\mathbf{C}_3^{\intercal}\text{ReLu}\Big(\mathbf{C}_2^{\intercal}\big(\boldsymbol{\eta}^{(l)}\otimes(\mathbf{C}_1^{\intercal}H^{(l-1)})\big)\Big),
\end{equation*}
 where $\boldsymbol{\eta}^{(l)} \in \mathbb{R}^{K\times d_{l-1}\times d_{l}}$ is defined by $\boldsymbol{\eta}^{(l)}[j,:] = \mathbf{W}^{(l)}$ for $j \in [K]$ with $\mathbf{W}^{(l)} \in \mathbb{R}^{d_{l-1}\times d_{l}} $ being the parameter in layer $l$. The matrices $\mathbf{C}_{1} \in \mathbb{R}^{N\times K}, \mathbf{C}_{2} \in \mathbb{R}^{K\times N}$, and $\mathbf{C}_{3}, \mathbf{C}_{4}  \in \mathbb{R}^{N\times N}$ encode the hypergraph structure, as follows. 
\begin{align*}
\small{
\begin{aligned}
\mathbf{C}_{1}[i,j] \define 
\begin{cases} 
1 & \text{if $v_i\in e_{j}$} \\
0 & \text{otherwise}
\end{cases},
\end{aligned}}
\ 
\small{\begin{aligned}
\mathbf{C}_{2}[i,j]  \define 
\begin{cases} 
1/\sqrt{d_{e_i}}  & \text{if $e_i\in r_i$} \\
0 & \text{otherwise}
\end{cases},
\end{aligned}}\\
\small{
\begin{aligned}
\mathbf{C}_{3}[i,j] \define 
\begin{cases} 
1/\sqrt{|N_i|+1}  & \text{if $i=j$} \\
%\frac{1}{\sqrt{|N_i|+1}} & \text{if $i=j$} \\
0 & \text{otherwise}
\end{cases},
\end{aligned}}
\ 
\small{\begin{aligned}
\mathbf{C}_{4}[i,j] \define 
\begin{cases} 
1 & \text{if $v_i\in N_j$} \\
0 & \text{otherwise}.
\end{cases},
\end{aligned}
}
\end{align*}
where $d_{e_i} = \frac{1}{|e_i|}\sum_{v_j\in e_i}|N_j|+1$. The readout layer for the classification task is defined as
\begin{equation*}
    \unigcn_{\vec{w}}(A) = \frac{1}{N}\mathbf{1}_{N}H^{(L)}\mathbf{W}^{(L+1)},
\end{equation*}
where $\mathbf{W}^{(L+1)} \in \mathbb{R}^{d_{L}\times C}$ and $\mathbf{1}_{N}$ is an all-one vector. Let the maximum hidden dimension be $h\define\max_{l\in [L]}d_l$. 

In order to examine the generalization capacity of UniGCN, the following lemma, as a necessary step to establish the perturbation condition, shows that its perturbation can be bounded in terms of the spectral norm of the learned weights and hypergraph statistics.
\begin{lemma}\label{lemma:p-gcn}
    Consider $\unigcn_{\vec{w}}$ with $L+1$ layers and parameters $\vec{w} = (\mathbf{W}^{(1)}, \dots, \mathbf{W}^{(L+1)})$. For each $\vec{w}$, any perturbation $\vec{u}=(\mathbf{U}^{(1)}, \dots, \mathbf{U}^{(L+1)})$ on $\vec{w}$ such that $\max_{i \in [L+1]} \frac{\norm{\mathbf{U}^{(i)}}}{\norm{\mathbf{W}^{(i)}}}\leq ~ \frac{1}{L+1}$, and each input $A \in \mathcal{A}$, we have
    \begin{align*}
          & \norm{\unigcn_{\vec{w}+\vec{u}}(A) -  \unigcn_{\vec{w}}(A)}_{2} \\   &  \quad \quad \quad \quad \quad \quad \leq eB(DRM)^L\Big(\prod_{i=1}^{L+1}\norm{\mathbf{W}^{(i)}}\Big)\Big(\sum_{i=1}^{L+1}\frac{\norm{\mathbf{U}^{(i)}}}{\norm{\mathbf{W}^{(i)}}}\Big).
    \end{align*}
\end{lemma}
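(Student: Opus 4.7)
The plan is to follow the layer-wise perturbation argument standard in PAC-Bayes analyses of (hyper)graph neural networks, but with all hypergraph-specific constants tracked explicitly through the structural matrices $\mathbf{C}_1,\ldots,\mathbf{C}_4$. The argument reduces to controlling two quantities at each layer $l$: the magnitude $\norm{H^{(l)}}$ of the unperturbed representation and the layerwise deviation $\Delta^{(l)} \define \norm{H^{(l)}_{\vec{w}+\vec{u}} - H^{(l)}_{\vec{w}}}$. This is the two-quantity decomposition advertised in the contributions.

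First I would bound the spectral norms of the structural matrices in terms of the hypergraph statistics. Since $\mathbf{C}_1$ is $0/1$ with row sums at most $R$ and column sums at most $M$, the inequality $\norm{\cdot}^2\leq\norm{\cdot}_1\norm{\cdot}_\infty$ gives $\norm{\mathbf{C}_1}\leq\sqrt{MR}$; an analogous bound on $\mathbf{C}_2$ (whose nonzero entries $1/\sqrt{d_{e_i}}$ are at most $1$) yields $\norm{\mathbf{C}_2}\leq\sqrt{MR}$. The diagonal $\mathbf{C}_3$ has entries at most $1$ so $\norm{\mathbf{C}_3}\leq 1$, and the neighbor matrix $\mathbf{C}_4$ has row/column sums at most $D$ so $\norm{\mathbf{C}_4}\leq D$. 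Composing these gives $\norm{\mathbf{C}_4^{\intercal}\mathbf{C}_3^{\intercal}\mathbf{C}_2^{\intercal}\mathbf{C}_1^{\intercal}}\leq DMR$, which is exactly the per-layer structural factor appearing in the statement.

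Next, because $\boldsymbol{\eta}^{(l)}\otimes X = X\mathbf{W}^{(l)}$ in this model, each step reads $H^{(l)} = \mathbf{P}\,\sigma(\mathbf{Q}\,H^{(l-1)}\mathbf{W}^{(l)})$ with $\mathbf{P}\define\mathbf{C}_4^{\intercal}\mathbf{C}_3^{\intercal}$, $\mathbf{Q}\define\mathbf{C}_2^{\intercal}\mathbf{C}_1^{\intercal}$, and $\sigma$ denoting ReLU. Using $1$-Lipschitzness of $\sigma$, submultiplicativity, and the row-norm assumption on $\mathbf{X}$, induction gives a representation bound of the form $\norm{H^{(l)}}\leq B\sqrt{N}\,(DMR)^l\prod_{i=1}^l\norm{\mathbf{W}^{(i)}}$, where the $\sqrt{N}$ comes from $\norm{\mathbf{X}}\leq\norm{\mathbf{X}}_F\leq B\sqrt{N}$. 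Writing $\mathbf{W}^{(l)}_u\define\mathbf{W}^{(l)}+\mathbf{U}^{(l)}$ and splitting $H^{(l-1)}_u\mathbf{W}^{(l)}_u-H^{(l-1)}\mathbf{W}^{(l)} = (H^{(l-1)}_u-H^{(l-1)})\mathbf{W}^{(l)}_u + H^{(l-1)}\mathbf{U}^{(l)}$ yields the linear recursion
\begin{equation*}
\Delta^{(l)} \leq DMR\bigl(\norm{\mathbf{W}^{(l)}_u}\,\Delta^{(l-1)} + \norm{\mathbf{U}^{(l)}}\,\norm{H^{(l-1)}}\bigr).
\end{equation*}
Under the hypothesis $\norm{\mathbf{U}^{(i)}}\leq\norm{\mathbf{W}^{(i)}}/(L+1)$, we have $\norm{\mathbf{W}^{(l)}_u}\leq(1+\tfrac{1}{L+1})\norm{\mathbf{W}^{(l)}}$. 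Unrolling with the closed form $\Delta^{(L)}\leq\sum_k c_k\prod_{j>k}b_j$ and invoking $(1+\tfrac{1}{L+1})^{L+1}\leq e$ bounds $\Delta^{(L)}$. Finally, the readout $\tfrac{1}{N}\mathbf{1}_N H^{(L)}\mathbf{W}^{(L+1)}$ contributes a factor $\norm{\mathbf{W}^{(L+1)}}/\sqrt{N}$ (absorbing the $\sqrt{N}$ from $\norm{H^{(L)}}$) and, via the same $HW_u-HW$ split, adds the final term $\norm{\mathbf{U}^{(L+1)}}/\norm{\mathbf{W}^{(L+1)}}$ to the sum, yielding the claimed inequality.

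The hard part will be the unrolling. Hypergraph propagation multiplies by structural matrices on both sides of the nonlinearity, so the triangle-inequality split of $\sigma(\mathbf{Q}H_u\mathbf{W}_u)-\sigma(\mathbf{Q}H\mathbf{W})$ must be arranged so that no stray $\mathbf{Q}$-factors leak in and $\norm{H^{(l-1)}}$ is not double-counted. In addition, one has to combine the (at most) $L$ per-layer factors of $1+\tfrac{1}{L+1}$ from the recursion with the one coming from the readout so that the total exponent is exactly $L+1$, making $e$ the natural constant; a secondary subtlety is preserving the $1/\sqrt{N}$ saving from the readout averaging so that it cancels the $\sqrt{N}$ inherited from the initial representation bound.
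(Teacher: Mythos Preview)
Your overall skeleton (bound the unperturbed representation, set up a linear recursion for the layerwise deviation, unroll, absorb $(1+\tfrac{1}{L+1})^{L+1}\le e$, then handle the readout) is exactly the paper's strategy, and your spectral-norm estimate $\norm{\mathbf{C}_4^{\intercal}\mathbf{C}_3^{\intercal}\mathbf{C}_2^{\intercal}\mathbf{C}_1^{\intercal}}\le DMR$ matches the per-layer constant the paper derives. The paper, however, does \emph{not} track global matrix norms of $H^{(l)}$: it works row-wise, defining $\Phi_l=\max_i\norm{H^{(l)}[i,:]}_2$ and $\Psi_l=\max_i\norm{\hat H^{(l)}[i,:]-H^{(l)}[i,:]}_2$, and obtains the $DRM$ factor by counting nonzeros in the rows of $\mathbf{C}_4^{\intercal},\mathbf{C}_2^{\intercal},\mathbf{C}_1^{\intercal}$. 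The readout average is then bounded directly by $\Psi_L$, so no $\sqrt N$ ever appears.

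Your global-norm route has one genuine gap: entrywise $\mathrm{ReLU}$ is \emph{not} $1$-Lipschitz in the spectral norm on matrices, so the step ``using $1$-Lipschitzness of $\sigma$'' fails as written. For instance, with $A=\begin{pmatrix}1&1\\-1&1\end{pmatrix}$ one has $\norm{A}=\sqrt2$ but $\sigma(A)=\begin{pmatrix}1&1\\0&1\end{pmatrix}$ has $\norm{\sigma(A)}=\sqrt{(3+\sqrt5)/2}>\sqrt2$; taking $B=0$ this breaks both $\norm{\sigma(A)}\le\norm{A}$ and $\norm{\sigma(A)-\sigma(B)}\le\norm{A-B}$. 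The easy fix is to track the \emph{Frobenius} norm for $H^{(l)}$ and $\Delta^{(l)}$ (keeping spectral norms for $\mathbf{C}_i$ and $\mathbf{W}^{(i)}$): entrywise $\sigma$ is $1$-Lipschitz in $\norm{\cdot}_F$, and $\norm{AB}_F\le\norm{A}\,\norm{B}_F$ reproduces your recursion verbatim. With that correction your $\sqrt N$ bookkeeping goes through exactly as you describe, since $\norm{\mathbf X}_F\le B\sqrt N$ and $\norm{\tfrac{1}{N}\mathbf 1_N M}_2\le\tfrac{1}{\sqrt N}\norm{M}_F$. Alternatively, switching to the paper's max-row norm avoids the $\sqrt N$ cancellation entirely and makes the ReLU step immediate.
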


\begin{proof}[Proof sketch] 
The main part of the proof is to analyze the maximum change of the node representation $\Psi_l$ in $l$-layer caused by the perturbation of parameters. Due to the Lipschitz property of the ReLu function, $\Psi_l$ is bounded via a summation of two terms that are linear, respectively, in a) $\Psi_{l-1}$ and b) the maximum node representation in layer $l-1$, which we denote as $\Phi_{l-1}$. We then derive the following recursive formula.
\begin{align*}
    \Psi_l\leq \mathcal{C}\Psi_{l-1}\norm{\mathbf{W}^{(l)} + \mathbf{U}^{(l)}} +\mathcal{C}\Phi_{l-1}\norm{\mathbf{U}^{(l)}},
\end{align*}
where $\mathcal{C}=DRM$. Therefore, $\Psi_l$ can be recursively bounded if an analytical form of $\Phi_{l}$ is available. To this end, we observe that $\{\Phi_{1},...,\Phi_{L}\}$ forms a geometric sequence, where the common ratio depends on a) the spectral norm of weights on the previous layer and b) the number of link connections between layers, which are further decided by the hypergraph statistics. By solving the recursive, we have 
\begin{align*}
     \Phi_l & \leq \norm{\mathbf{W}^{(l)}} DRM\Phi_{l-1}.
\end{align*}
Finally, combined with the mean readout function in the last layer, we have the perturbation bound for UniGCN. 
\end{proof}
With the above result, we have the generalization bound as follows.
\begin{theorem}\label{theorem:gcn}
    For $\unigcn_{\vec{w}}$ with $L+1$ layers and each $\delta, \gamma >0$, with probability at least $1-\delta$ over a training set $S$ of size $m$, for any fixed $\vec{w}$, we have
    \begin{align}\label{equ:bound_gcn}
        \mathcal{L}_{\mathcal{D}}&(\unigcn_{\vec{w}}) \leq  \mathcal{L}_{S, \gamma}(\unigcn_{\vec{w}})  \nonumber\\ & +\mathcal{O}\big(\sqrt{\frac{L^2B^2h\ln{(Lh)}(RMD)^{L}\mathcal{W}_1\mathcal{W}_2 + \log\frac{mL}{\sigma}}{\gamma^2m}}\big),
    \end{align}
where $\mathcal{W}_1 = \prod_{i=1}^{L+1}\norm{\mathbf{W}^{(i)}}^2$ and $\mathcal{W}_2=\sum_{i=1}^{L+1}\frac{\norm{\mathbf{W}^{(i)}}^2_F}{\norm{\mathbf{W}^{(i)}}^2}$. 
\end{theorem}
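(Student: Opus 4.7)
The plan is to instantiate the standard PAC-Bayes margin framework of McAllester/Neyshabur with the perturbation control given by Lemma~\ref{lemma:p-gcn}. Specifically, I would take the prior to be a centered spherical Gaussian $P = \mathcal{N}(0,\sigma^2 I)$ on the concatenated weight vector, and the posterior $Q = \mathcal{N}(\vec{w},\sigma^2 I)$; the KL divergence then reduces to $\|\vec{w}\|_F^2/(2\sigma^2)$. The margin version of the PAC-Bayes theorem states that whenever the perturbation condition $\Pr_{\vec{u}\sim Q-\vec{w}}\!\big[\max_A \|f_{\vec{w}+\vec{u}}(A)-f_{\vec{w}}(A)\|_\infty < \gamma/8\big]\geq 1/2$ holds, one has $\mathcal{L}_\mathcal{D}(f_{\vec{w}})\leq \mathcal{L}_{S,\gamma}(f_{\vec{w}}) + \mathcal{O}\!\big(\sqrt{(\mathrm{KL}+\ln(m/\delta))/m}\big)$. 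So the job is to choose $\sigma$ as large as the perturbation condition permits, then read off the KL term.

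Next I would control $\|\vec{u}\|$ layerwise. For each layer $i$, $\mathbf{U}^{(i)}$ is a Gaussian matrix of dimension at most $h\times h$, so by standard concentration (e.g.\ Tropp) $\|\mathbf{U}^{(i)}\|\leq \sigma\sqrt{2h\ln(2(L+1)h/\delta')}$ simultaneously across all layers with probability at least $1-\delta'$. Picking $\delta'=1/2$ and plugging into Lemma~\ref{lemma:p-gcn}, the output shift is at most
\begin{equation*}
eB(DRM)^L\Big(\prod_{i=1}^{L+1}\|\mathbf{W}^{(i)}\|\Big)\cdot (L{+}1)\cdot \frac{\sigma\sqrt{2h\ln(4(L+1)h)}}{\min_i \|\mathbf{W}^{(i)}\|}.
\end{equation*}
To make this $\leq \gamma/8$, I set $\sigma$ inversely proportional to $(L+1)B(DRM)^L\prod_i\|\mathbf{W}^{(i)}\|\sqrt{h\ln(Lh)}/\gamma$, multiplied by $\min_i\|\mathbf{W}^{(i)}\|$ so that the Lemma's hypothesis $\|\mathbf{U}^{(i)}\|/\|\mathbf{W}^{(i)}\|\leq 1/(L+1)$ is also satisfied. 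Substituting this $\sigma$ into $\mathrm{KL}=\sum_i\|\mathbf{W}^{(i)}\|_F^2/(2\sigma^2)$ and using the identity $\sum_i \|\mathbf{W}^{(i)}\|_F^2 = \prod_i\|\mathbf{W}^{(i)}\|^2 \cdot (\min_i\|\mathbf{W}^{(i)}\|)^{-2}\cdot\ldots$ carefully (the clean factorization is $\sum_i\|\mathbf{W}^{(i)}\|_F^2/\|\mathbf{W}^{(i)}\|^2$ times the full product $\prod_i\|\mathbf{W}^{(i)}\|^2$ up to the normalization used in $\sigma$) yields a KL proportional to $L^2 h\ln(Lh)(RMD)^{2L}B^2 \mathcal{W}_1\mathcal{W}_2/\gamma^2$, which is the quantity appearing inside the square root of~\eqref{equ:bound_gcn}.

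The main obstacle, and the step I would be most careful about, is that $\sigma$ just chosen depends on the learned weights through $\prod_i\|\mathbf{W}^{(i)}\|$ and $\min_i\|\mathbf{W}^{(i)}\|$, whereas the PAC-Bayes prior must be fixed in advance. The standard remedy, as in Neyshabur et al.\ and Liao et al., is a covering/discretization argument: one restricts attention to weights whose spectral norms lie in a bounded range (those producing margin loss $\neq 1$ trivially satisfy a polynomial upper bound like $\gamma \leq \prod_i\|\mathbf{W}^{(i)}\|\cdot\mathrm{poly}(B,D,R,M)$), discretizes the product $\beta=\prod_i\|\mathbf{W}^{(i)}\|^{1/(L+1)}$ geometrically into $\mathcal{O}(L\ln m)$ buckets, picks one prior per bucket with $\sigma$ calibrated to that bucket's value of $\beta$, and applies a union bound with mass $\delta/(L\ln m)$ per prior. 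Within each bucket $\beta$ approximates the true product up to a constant factor, so the perturbation bound and KL estimate above still go through with at most constant inflation. The $\log(mL/\delta)$ additive term inside the square root in~\eqref{equ:bound_gcn} comes exactly from this covering. After combining everything and simplifying with the standard $\sqrt{a+b}\leq\sqrt{a}+\sqrt{b}$ splits, one obtains the stated bound.
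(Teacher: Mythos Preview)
Your strategy is the same as the paper's---Gaussian prior and posterior, invoke Lemma~\ref{lemma:p-gcn}, pick $\sigma$ so the output shift is below the margin threshold, then cover the data-dependent scale so that the prior can be fixed a~priori---but one ingredient is missing, and it is exactly what makes the clean factorization $\mathcal{W}_1\mathcal{W}_2$ and the one-parameter cover work.

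The issue is your handling of $\sum_i \|\mathbf{U}^{(i)}\|/\|\mathbf{W}^{(i)}\|$. You bound it by $(L{+}1)\max_i\|\mathbf{U}^{(i)}\|/\min_i\|\mathbf{W}^{(i)}\|$, so your $\sigma$ carries a factor $\min_i\|\mathbf{W}^{(i)}\|$. Substituting that $\sigma$ into $\mathrm{KL}=\sum_i\|\mathbf{W}^{(i)}\|_F^2/(2\sigma^2)$ produces $\bigl(\sum_i\|\mathbf{W}^{(i)}\|_F^2\bigr)/(\min_j\|\mathbf{W}^{(j)}\|)^2$, not $\mathcal{W}_2=\sum_i\|\mathbf{W}^{(i)}\|_F^2/\|\mathbf{W}^{(i)}\|^2$; the two can differ by an arbitrarily large factor when the layer norms are unbalanced. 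Your parenthetical acknowledges the tension without resolving it. Worse, your $\sigma$ now depends on two learned quantities, $\prod_i\|\mathbf{W}^{(i)}\|$ and $\min_i\|\mathbf{W}^{(i)}\|$, yet your cover discretizes only $\beta=\bigl(\prod_i\|\mathbf{W}^{(i)}\|\bigr)^{1/(L+1)}$, so the union bound as written is incomplete.

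The paper fixes both problems in one stroke: by the positive homogeneity of ReLU, the network output and the perturbation bound in Lemma~\ref{lemma:p-gcn} are invariant under rescaling $\mathbf{W}^{(i)}\mapsto (\beta/\|\mathbf{W}^{(i)}\|)\mathbf{W}^{(i)}$, after which every layer has spectral norm exactly $\beta$. Then $\min_i\|\mathbf{W}^{(i)}\|=\beta$, the sum $\sum_i\|\mathbf{U}^{(i)}\|/\|\mathbf{W}^{(i)}\|$ becomes $\beta^{-1}\sum_i\|\mathbf{U}^{(i)}\|$, the Frobenius norms become $\beta^2\|\mathbf{W}^{(i)}\|_F^2/\|\mathbf{W}^{(i)}\|^2$, and the KL collapses to a constant times $\beta^{2(L+1)}\mathcal{W}_2=\mathcal{W}_1\mathcal{W}_2$ divided by $\gamma^2$. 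After this normalization the only data-dependent scale in $\sigma$ is $\beta$, so the single-parameter cover over the interval $[I_1,I_2]$ (with the two degenerate cases $\beta\le I_1$ and $\beta\ge I_2$ handled trivially) now suffices. Add this normalization step at the outset and the rest of your argument goes through.
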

\begin{proof}[Proof sketch]
Due to the homogeneity of ReLu, the perturbation bound will not change after weight normalization, and therefore, it suffices to consider the UniGCN where each $\mathbf{W}^{(i)}$ is normalized by a factor of $\beta/\norm{\mathbf{W}^{(i)}}$ with $\beta=(\prod_{i=1}^{L+1}\norm{\mathbf{W}^{(i)}})^{1/L+1}$. The advantage of doing so is that the weight in each layer now has the same spectral norm, which is exactly $\beta$. For such normalized models, the generalization bound is proved by discussing cases depending on the position of $\beta$ relative to \[[I_1, I_2]\define\Big[\big(\frac{\gamma}{2B(DRM)^{L}}\big)^{1/L+1},  
\big(\frac{\gamma\sqrt{m}}{2B(DRM)^{L}}\big)^{1/L+1}\Big].\]
If $\beta \leq I_1$, the perturbation condition is satisfied trivially, thereby implying Equation \ref{equ:bound_gcn}. If $\beta \geq I_2$, Equation \ref{equ:bound_gcn} follows from the observation that there exists prior $P$ and posterior $Q$ such that the regularization term in Equation \ref{equ:bound_gcn} is always no less than one. For $\beta \in [I_1, I_2]$, we partition $[I_1, I_2]$ into sufficiently small sub-intervals such that each sub-interval admits $P$ and $Q$ that can make the perturbation condition in the standard framework satisfied, i.e., Lemma \ref{lemma:Generalization_margin_bound} in Appendix. Finally, Theorem \ref{theorem:gcn} is proved by taking the union bound over the above cases.
\end{proof}

\begin{remark}
Considering other models within the HyperGCNs category, we observe that the proposed approach remains applicable due to the similarity in mechanisms with UniGCN. For example, the powerful HyperGCN model, HGNN \cite{feng2019hypergraph}, uses a truncated Chebyshev polynomial to approximate the hypergraph Laplacian, allowing for efficient spectral filtering and the capture of higher-order interactions within hypergraphs. We found that HGNN and UniGCN share a similar framework for modeling relationships between vertices and hyperedges. Due to space limitations, the main results for HGNN are provided in Sec \ref{app:hgnn+} in the Appendix.

% \begin{lemma}\label{lemma:hgnn+}
% Consider the $\hgnn+$ with $L+1$ layers and parameters $\vec{w} = (\mathbf{W}^{(1)}, \dots, \mathbf{W}^{(L+1)})$. For each $\vec{w}$, each perturbation $\vec{u}=(\mathbf{U}^{(1)}, \dots, \mathbf{U}^{(L+1)})$ on $\vec{w}$ such that $\max_{i \in [L+1]} \frac{||{U}^{(i)}||}{||W^{(i)}||}\leq ~ \frac{1}{L+1}$, and each input $A \in \mathcal{S}$, we have 
% \begin{align*}
%      & \norm{\hgnn+_{\vec{w}+\vec{u}}(A) - \hgnn+_{\vec{w}}(A)}_2\leq \\ & \quad \quad \quad \quad
%    \leq eB(CD^{\frac{1}{2}}RM)^L(\prod_{i=1}^{L+1}||W^{(i)}||)(\sum_{i=1}^{L+1}\frac{||\mathbf{U}^{(i)}||}{||\mathbf{W}^{(i)}||}),
% \end{align*}
% where $C=\max_{i}\mathbf{T}_{ii}$, and $\mathbf{T}\in \mathbb{R}^{K\times K}$ is hyperedge weights.
% \end{lemma}
% \begin{theorem}\label{theo:hgnn+}
%     For $\hgnn+$ parametered by ${\vec{w}}$ with $L+1$ layers and each $\delta, \gamma >0$, with probability at least $1-\delta$ over a training set $S$ of size $m$, for any fixed $\vec{w}$, we have
%     \begin{align*}
%         \mathcal{L}_{\mathcal{D}}& (\hgnn+_{\vec{w}})\leq  \mathcal{L}_{S, \gamma}(\hgnn+_{\vec{w}}) + \\ & \mathcal{O}(\sqrt{\frac{L^2B^2h\ln{(Lh)}(CD^{\frac{1}{2}}RM)^{L}\mathcal{W}_1\mathcal{W}_2 + \log\frac{mL}{\sigma}}{\gamma^2m}}),
%     \end{align*}
% where $\mathcal{W}_1 = \prod_{i=1}^{L+1}||\mathbf{W}^{(i)}||^2$, and $\mathcal{W}_2=\sum_{i=1}^{L+1}\frac{||\mathbf{W}^{(i)}||^2_F}{||\mathbf{W}^{(i)}||^2}$.
% \end{theorem}

\end{remark}

\subsection{AllDeepSets}
For the HyperMPNNs, AllDeepSets is selected for its use of techniques from DeepSets \cite{zaheer2017deep}, incorporating layer transformations that act as universal approximators for multiset functions. Given the hypergraph $\mathcal{G}$ and features $\mathbf{X}$ and $\mathbf{Z}$, the initial representation $H^{(0)} \in \mathbb{R}^{(N+K)\times d}$ is computed by  $H^{(0)}[i,:] = \mathbf{X}[i,:]$ for $i\in [N]$ and $H^{(0)}[N+k,:] = \mathbf{Z}[k,:]$ for $k\in [K]$. Suppose that there are $L$ propagation steps. During each step $l\in[L]$, the model calculates the hidden representations  $\Bar{H}^{(l)} \in \mathbb{R}^{(N+K)\times d_{l-1}}$ and $H^{(l)} \in \mathbb{R}^{(N+K)\times d_{l}}$ by
\begin{align*}
      \Bar{H}^{(l)} & =\text{ReLu}\big(\boldsymbol{\eta}_2^{(l)} \otimes \big(\mathbf{C}_{e}^{\intercal}\big(\text{ReLu}\big(\boldsymbol{\eta}_1^{(l)} \otimes H^{(l-1)}\big)\big)\big)\big) \quad \text{and}\\ 
    H^{(l)} & = \text{ReLu}\big(\boldsymbol{\eta}_4^{(l)} \otimes \big(\mathbf{C}_v^{\intercal}\big(\text{ReLu}\big(\boldsymbol{\eta}_3^{(l)}\otimes \Bar{H}^{(l)}\big)\big)\big)\big),
\end{align*}
where a) $\boldsymbol{\eta}^{(l)}_1, \boldsymbol{\eta}_2^{(l)},  \boldsymbol{\eta}_3^{(l)} \in \mathbb{R}^{(N+K)\times d_{l-1} \times d_{l-1}}$, and the shape of $\boldsymbol{\eta}_4^{(l)} \in \mathbb{R}^{(N+K)\times d_{l-1} \times d_{l}}$; b) $\boldsymbol{\eta}^{(l)}_{i}[k,:]=\boldsymbol{\eta}^{(l)}_{i}[j,: ]=\mathbf{W}^{(l)}_{i}$ for $j \neq k$ and $i \in \{1,2, 3, 4\}$, with $\mathbf{W}^{(l)}_{i}$ being the learnable parameter; c) $\mathbf{C}_{e}, \mathbf{C}_{v}\in \{0,1\}^{(N+K)\times (N+K)}$ are the fixed matrices as follows.
\begin{align*}
\mathbf{C}_{e}[i,j] & \define 
\begin{cases} 
1 & \text{if $v_i\in e_{j-N}$ and $i=j$} \\
0 & \text{otherwise}
\end{cases},  \quad\quad \text{and}
\\
\mathbf{C}_{v}[i,j] & \define 
\begin{cases} 
1 & \text{if $e_{i-N}\in r_{i-N}$ and $i=j$} \\
0 & \text{otherwise}
\end{cases}.
\end{align*}
The readout layer is given by
\begin{align*}
    \ads(A)& = \frac{1}{N+K}\mathbf{1}_{N+K}H^{(L)}\mathbf{W}^{L+1},
\end{align*}
where $\mathbf{W}^{(L+1)} \in \mathbb{R}^{d_{L}\times C}$ and $\mathbf{1}_{N+K}$ is an all-one vector. In summary, the parameters are $\mathbf{W}^{(L+1)}$ and  $\mathbf{W}_i^{(j)}$ for $i\in\{1,2,3,4\}$ and $j \in [L]$. Let the maximum hidden dimension be $h\define\max_{l\in [L]}d_l$. The generalization bound follows from the perturbation analysis.
\begin{lemma}\label{lemma:p-a}
Consider $\ads_{\vec{w}}$ of $L$ propagation steps with parameters $\vec{w}= \big(\mathbf{W}_i^{(j)}, \mathbf{W}^{(L+1)} \big)$. For each $\vec{w}$, any perturbation $\vec{u} = \big(\mathbf{U}_i^{(j)}, \mathbf{U}^{(L+1)} \big)$ on $\vec{w}$ such that $\max \big(\frac{\norm{\mathbf{U}^{(j)}_i}}{\norm{\mathbf{W}^{(j)}_i}}, \frac{\norm{\mathbf{U}^{(L+1)}}}{\norm{\mathbf{W}^{(L+1)}}}\big)\leq \frac{1}{4L+1}$, and each input $A \in \mathcal{A}$, we have
\begin{align*}
&\norm{\ads_{\vec{w}+\vec{u}}(A) -  \ads_{\vec{w}}(A)}_2  \\& \quad \quad \quad \quad  \quad \quad \quad \quad \quad \quad \leq \mathcal{C}_A\Big(\prod_{j=1}^{L}\zeta_j\Big)\Big(\norm{\mathbf{U}^{(L+1)}}\Big),
\end{align*}
where $\mathcal{C}_A =30(4L+2)eBL(M+1)^L(R+1)^{L}$ and $\zeta_j = \prod_{i=1}^4\norm{\mathbf{W}_i^{(j)}}$. 
\end{lemma}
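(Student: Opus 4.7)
The plan is to imitate the decomposition used in Lemma~\ref{lemma:p-gcn}: at each propagation layer $l$ I track two scalar quantities, the maximum row-norm $\Phi_l$ of the clean hidden representation and the maximum row-norm $\Psi_l$ of the perturbation-induced difference. The new difficulty, compared with UniGCN, is that a single AllDeepSets step contains four weight applications interleaved with two aggregation operations (via $\mathbf{C}_e^\intercal$ and $\mathbf{C}_v^\intercal$), so the recursion must be carried out on a per-sub-step basis rather than per layer.

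First I would introduce intermediate representations $h_0^{(l)},\dots,h_4^{(l)}$ corresponding to the four sub-steps of layer $l$ and analyse the two types of building blocks in isolation. For a pure weight step $\mathrm{ReLu}(\boldsymbol{\eta}\otimes h)$, the $1$-Lipschitzness of ReLU combined with the fact that $\boldsymbol{\eta}\otimes h$ acts row-wise as right-multiplication by $\mathbf{W}$ yields $\Phi_{\mathrm{new}}\leq \norm{\mathbf{W}}\Phi_{\mathrm{old}}$ and
\begin{align*}
\Psi_{\mathrm{new}}\leq \norm{\mathbf{W}+\mathbf{U}}\,\Psi_{\mathrm{old}} + \norm{\mathbf{U}}\,\Phi_{\mathrm{old}}.
\end{align*}
For the two aggregation steps, the operator $\mathbf{C}_e^\intercal$ multiplies any per-row norm by at most $M+1$, since each hyperedge-row aggregates over at most $M$ incident node rows plus its own self entry, and analogously $\mathbf{C}_v^\intercal$ contributes a factor of at most $R+1$. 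Chaining these across the four sub-steps yields the within-layer recursions
\begin{align*}
\Phi_l \leq (M+1)(R+1)\,\zeta_l\,\Phi_{l-1}
\end{align*}
together with an analogous inequality for $\Psi_l$ that is linear in $\Psi_{l-1}$ and $\Phi_{l-1}$, with coefficients expressed through the four spectral norms $\norm{\mathbf{W}_i^{(l)}}$ and their perturbations.

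Next I would solve the $\Phi$-recursion with base case $\Phi_0\leq B$ (from the feature-norm assumption) to obtain $\Phi_L\leq B(M+1)^L(R+1)^L\prod_{j=1}^L\zeta_j$, and then iterate the $\Psi$-recursion across all $4L$ sub-steps. Using the hypothesis $\max_{i,j}\norm{\mathbf{U}^{(j)}_i}/\norm{\mathbf{W}^{(j)}_i}\leq 1/(4L+1)$ together with $\norm{\mathbf{W}+\mathbf{U}}\leq \norm{\mathbf{W}}\bigl(1+\norm{\mathbf{U}}/\norm{\mathbf{W}}\bigr)$ and the standard estimate $(1+1/(4L+1))^{4L+1}\leq e$, every $\norm{\mathbf{W}+\mathbf{U}}$ factor collapses to $\norm{\mathbf{W}}$ at the cost of a global constant $e$. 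The resulting telescoping sum contains $4L$ contributions, each proportional to $\prod_j\zeta_j$ times a single ratio $\norm{\mathbf{U}^{(j)}_i}/\norm{\mathbf{W}^{(j)}_i}$. Combining this with an analogous analysis of the mean-readout layer $\tfrac{1}{N+K}\mathbf{1}_{N+K}H^{(L)}\mathbf{W}^{(L+1)}$, in which the $\tfrac{1}{N+K}$ factor is absorbed by the row-aggregation, produces the claimed bound with prefactor $\mathcal{C}_A=30(4L+2)eBL(M+1)^L(R+1)^L$.

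The hardest step, I expect, is the bookkeeping inside a single layer: with four weight matrices and two aggregations, the naive expansion of $\Psi_l$ produces many cross terms that mix perturbations from different sub-steps. Organising these terms so that they factorise cleanly as $\prod_j\zeta_j$ times a single sum of perturbation ratios, rather than a sum over arbitrary subsets of perturbed weights, is the key technical obstacle and is precisely what necessitates the uniform prior assumption $\max_i\norm{\mathbf{U}^{(j)}_i}/\norm{\mathbf{W}^{(j)}_i}\leq 1/(4L+1)$ used in the statement.
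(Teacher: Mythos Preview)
Your outline follows the same two-quantity strategy ($\Phi_l$, $\Psi_l$) as the paper and will work. Two points of divergence deserve comment.

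First, the paper does not break each propagation into four sub-steps. Instead it expands the full layer difference $\hat H^{(l-1)}\prod_{i}(\mathbf{W}_i^{(l)}+\mathbf{U}_i^{(l)}) - H^{(l-1)}\prod_i\mathbf{W}_i^{(l)}$ in one shot, isolates the pure-perturbation product $\kappa_l=\prod_i\norm{\mathbf{U}_i^{(l)}}$, and bounds the remaining fourteen mixed terms by $\tfrac{14}{4L+1}\zeta_l$ using the hypothesis. This yields the per-layer recursion
\[
\Psi_l\le (M{+}1)(R{+}1)\Bigl(\lambda_l\,\Psi_{l-1}+\Phi_{l-1}\bigl(\kappa_l+\tfrac{14}{4L+1}\zeta_l\bigr)\Bigr),\qquad \lambda_l=\prod_i\norm{\mathbf{W}_i^{(l)}+\mathbf{U}_i^{(l)}}.
\]
Your sub-step recursion is tidier and avoids the cross-term count altogether; both routes give the same $\Phi_l$ bound and an equivalent $\Psi_l$ bound after simplification.

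Second, and more importantly, your telescoping naturally produces a UniGCN-style bound proportional to $\prod_j\zeta_j\cdot\sum_{i,j}\norm{\mathbf{U}_i^{(j)}}/\norm{\mathbf{W}_i^{(j)}}$, whereas the lemma is stated with only $\norm{\mathbf{U}^{(L+1)}}$ on the right. The paper reaches that form by first using the hypothesis to absorb \emph{all} inner-layer perturbation ratios into the constant $c=\tfrac{14}{4L+1}$, so that $\Psi_L\le 2ceBL\,(M{+}1)^{L}(R{+}1)^{L}\prod_j\zeta_j$ no longer carries any explicit $\mathbf{U}$-dependence, and only then lets $\norm{\mathbf{U}^{(L+1)}}$ re-enter through the readout split $\Psi_L\norm{\mathbf{W}^{(L+1)}+\mathbf{U}^{(L+1)}}+\Phi_L\norm{\mathbf{U}^{(L+1)}}$. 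You describe this step as ``analogous'' to Lemma~\ref{lemma:p-gcn}, but it is precisely the place where the argument departs from UniGCN and where the factor $15(4L+1)$ (hence the $30(4L+2)$ in $\mathcal{C}_A$) is generated. You should make explicit how the $\norm{\mathbf{W}^{(L+1)}}$-dependent piece of the readout split is converted into a multiple of $\norm{\mathbf{U}^{(L+1)}}$, rather than leave it implicit.
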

% \begin{remark}
% Following the notations in the proof sketch of Lemma \ref{lemma:p-gcn},
% we seek to derive $\Psi_l$ recursively through $\Psi_{l-1}$ and $\Phi_{l-1}$, where $\Psi_l$ is bounded in part by $\Phi_{l-1}$ via a factor involving $\zeta_l$, which is the extra caution needed to handle AllDeepSets. 
% \end{remark}

\begin{theorem}\label{theorem:allset}
    For $\ads_{\vec{w}}$ with $L$ propagation steps and each $\delta, \gamma >0$, with probability at least $1-\delta$ over a training set $S$ of size $m$, for any fixed $\vec{w}$, we have
    \begin{align*}
        \mathcal{L}_{\mathcal{D}}(&\ads_{\vec{w}})  \leq \mathcal{L}_{S, \gamma}(\ads_{\vec{w}}) + \\ 
        & \mathcal{O}(\sqrt{\frac{L^2B^2(RM)^Lh\ln{(hL)}\mathcal{W}_1\mathcal{W}_2 + \log\frac{mL}{\sigma}}{\gamma^2m}}),
    \end{align*}
where $\mathcal{W}_1 = \prod_{j=1}^{L}(\zeta_j)^2 \norm{\mathbf{W}^{(L+1)}}^2$, and 
\begin{align*}
    \mathcal{W}_2=\sum_{j=1}^{L}\frac{\prod_{i=1}^4\norm{\mathbf{W}_i^{(j)}}^2_F}{\prod_{i=1}^4\norm{\mathbf{W}_i^{(j)}}^2} + \frac{\norm{\mathbf{W}^{(L+1)}}^2_F}{ \norm{\mathbf{W}^{(L+1)}}^2}.
\end{align*}
\end{theorem}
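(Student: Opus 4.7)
The plan is to mirror the PAC-Bayes argument used for UniGCN in Theorem~\ref{theorem:gcn}, now plugging in the perturbation bound from Lemma~\ref{lemma:p-a} and invoking the standard margin-based PAC-Bayes bound (Lemma~\ref{lemma:Generalization_margin_bound} in the Appendix). The main structural difference is that AllDeepSets carries $4L+1$ learned matrices rather than $L+1$, which affects both the homogeneity-based normalization and the prior/posterior construction.

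First, I would exploit the positive homogeneity of ReLU: within each propagation step the four matrices $\mathbf{W}_1^{(j)},\dots,\mathbf{W}_4^{(j)}$ are separated by ReLU activations together with the fixed $\{0,1\}$-matrices $\mathbf{C}_e$ and $\mathbf{C}_v$, which are linear and do not couple these matrices multiplicatively. Hence one can rescale every $\mathbf{W}_i^{(j)}$ and $\mathbf{W}^{(L+1)}$ to share a common spectral norm
\begin{align*}
\beta \define \Bigl(\prod_{j=1}^{L}\zeta_j \cdot \norm{\mathbf{W}^{(L+1)}}\Bigr)^{1/(4L+1)}
\end{align*}
without affecting the network output, the margin $\gamma$, or the right-hand side of the target inequality (which depends only on products of spectral norms and Frobenius-to-spectral ratios).

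Next I would split on where $\beta$ falls relative to the interval
\begin{align*}
[I_1, I_2] \define \Bigl[\bigl(\tfrac{\gamma}{2\mathcal{C}_A}\bigr)^{1/(4L+1)},\ \bigl(\tfrac{\gamma\sqrt{m}}{2\mathcal{C}_A}\bigr)^{1/(4L+1)}\Bigr].
\end{align*}
If $\beta \le I_1$, Lemma~\ref{lemma:p-a} gives a perturbation below $\gamma/2$ for free and the bound collapses to the empirical margin loss. If $\beta \ge I_2$, the right-hand side of the claimed inequality already exceeds $1$ and the statement is vacuous. For the middle regime I would partition $[I_1, I_2]$ into $\mathcal{O}(\log m)$ geometrically sized sub-intervals and, on each sub-interval, fix a spherical Gaussian prior $P = \mathcal{N}(0, \sigma^2 I)$ and posterior $Q = \mathcal{N}(\vec{w}, \sigma^2 I)$. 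The scale $\sigma$ is calibrated so that (i) a draw $\vec{u}\sim Q-\vec{w}$ simultaneously satisfies $\norm{\mathbf{U}_i^{(j)}} \le \beta/(4L+1)$ for every one of the $4L+1$ blocks, via a union bound over standard spectral-norm concentration for Gaussian matrices; and (ii) substituting Lemma~\ref{lemma:p-a} into the $\le \gamma/2$ perturbation requirement of Lemma~\ref{lemma:Generalization_margin_bound} forces $\sigma$ of order $\gamma / \bigl(\mathcal{C}_A \beta^{4L}\sqrt{h\ln(hL)}\bigr)$. Feeding this $\sigma$ into $\KL(Q\|P) = \mathcal{O}(\norm{\vec{w}}_F^2 / \sigma^2)$ and re-expressing $\beta$ in terms of the original spectral norms produces exactly the $\mathcal{W}_1 \mathcal{W}_2$ factor in the theorem. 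A final union bound over the sub-intervals and over the three cases contributes the additive $\log(mL/\sigma)$ term.

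The hard part will be orchestrating a single Gaussian scale $\sigma$ across four heterogeneous weight matrices per propagation step. One must check that the evenly-split budget $\beta/(4L+1)$ for each $\norm{\mathbf{U}_i^{(j)}}$ still lets Lemma~\ref{lemma:p-a} close, and that after substituting the sharpest admissible $\sigma$, the KL contributions from all $4L+1$ blocks telescope precisely into $\mathcal{W}_2 = \sum_j \prod_i \norm{\mathbf{W}_i^{(j)}}_F^2 / \prod_i \norm{\mathbf{W}_i^{(j)}}^2 + \norm{\mathbf{W}^{(L+1)}}_F^2/\norm{\mathbf{W}^{(L+1)}}^2$ rather than a looser per-matrix sum. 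A secondary subtlety is verifying that the cardinality of the covering of $[I_1, I_2]$ grows only polynomially in $m$ and $L$, so that the union-bound overhead is absorbed by the $\log(mL/\sigma)$ term.
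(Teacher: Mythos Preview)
Your plan is the paper's own argument: normalize all $4L+1$ matrices to a common spectral norm $\beta$, split on where $\beta$ lies relative to $[I_1,I_2]$, use an isotropic Gaussian prior/posterior with a single scale $\sigma$ tuned via Lemma~\ref{lemma:p-a} and spectral-norm concentration, cover the middle interval, and union-bound. One correction on the $\beta \le I_1$ case: the argument there is not that Lemma~\ref{lemma:p-a} makes the perturbation small, but that the network \emph{output} itself satisfies $\|\ads_{\vec{w}}(A)\|_2 \le B((M{+}1)(R{+}1))^L\beta^{4L+1}\le \gamma/2$, which forces $\mathcal{L}_{S,\gamma}=1$ and makes the bound hold trivially; and your worry about $\mathcal{W}_2$ is warranted, since the KL term in the paper's proof yields the per-matrix sum $\sum_{j}\sum_{i}\|\mathbf{W}_i^{(j)}\|_F^2/\beta^2$ rather than the per-layer product form in the theorem statement, so you should not expect that product to emerge from the KL computation.
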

\begin{remark}
% Following the notations in the proof sketch of Lemma \ref{lemma:p-gcn}, we seek to derive $\Psi_l$ recursively through $\Psi_{l-1}$ and $\Phi_{l-1}$, where $\Psi_l$ is bounded in part by $\Phi_{l-1}$ via a factor involving $\zeta_l$, which is the extra caution needed to handle AllDeepSets. 

% In applying the normalization trick in the proof of Theorem \ref{theorem:gcn}, we consider the AllDeepSet normalized by a factor of $\big(\norm{\mathbf{W}^{(L+1)}}\cdot \prod_{j=1}^{L}\prod_{i=1}^{4}\norm{\mathbf{W}_i^{(j)}} \big)^{1/(4L+1)}$. Accordingly, the cases are decided by the relative position of the spectral norm to the interval \[[I_1, I_2]\define\Big[\big(\frac{\gamma}{2B(M+1)(R+1)}\big)^{1/4L+1}, \big(\frac{\sqrt{m}\gamma}{2B(M+1)(R+1)}\big)^{1/4L+1}\Big].\]
The challenge of obtaining the perturbation bound lies in the structure of AllDeepSet that employs two multiset functions. These two functions are represented as two types of aggregation mechanisms, causing compounded sensitivity response to the perturbation in weights. Consequently, the maximum change of the node representation in layer $l$, $\Psi_l$, is recursively through $\Psi_{l-1}$ and maximum node representation in layer $l-1$, $\Phi_{l-1}$, where $\Psi_l$ is partially bounded by $\Phi_{l-1}$ via a factor involving $\zeta_l$. In addition, in applying the normalization trick in the proof of Theorem \ref{theorem:gcn}, we consider the AllDeepSet normalized by a factor of $\big(\norm{\mathbf{W}^{(L+1)}}\cdot \prod_{j=1}^{L}\prod_{i=1}^{4}\norm{\mathbf{W}_i^{(j)}} \big)^{1/(4L+1)}$. Accordingly, to obtain the generalization bound, the necessary discussion cases of the spectral norm of weights are decided by the interval as follows. \[[I_1, I_2]\define\Big[\big(\frac{\gamma}{2B(M+1)(R+1)}\big)^{1/4L+1}, \big(\frac{\sqrt{m}\gamma}{2B(M+1)(R+1)}\big)^{1/4L+1}\Big].\]
\end{remark}

\begin{remark}
For the other models in HyperMPNNs, HyperSAGE can be analyzed using a similar approach to AllDeepSets because (a) both models utilize a propagation mechanism that involves a two-step aggregation between nodes and hyperedges, as seen in AllDeepSets, and (b) their activation functions (i.e., identity functions) are homogeneous which allows the normalization trick. In contrast, obtaining generalization bounds for attention-based aggregators (e.g., UniGAT \cite{ijcai2021p353} and AllSetTransformer \cite{chien2022you}) presents two major challenges. First, the dynamic and highly nonlinear dependencies introduced by attention mechanisms require new techniques to derive solvable recursive formulas for perturbation bounds. Second, the use of the softmax function in these models, which is inherently non-homogeneous, prevents the application of standard normalization tricks. Therefore, our approach cannot be directly applied to attention-based HyperGNNs.
\end{remark}

\subsection{M-IGN} 
Regarding HyperGINs, we focus on analyzing the M-IGN model, which utilizes a set of fixed scalar values in each layer \cite{ijcai2021p353}. The initial hyperedges representation $H^{(0)} \in \mathbb{R}^{K\times d_0}$ is given by
\begin{align*}
\Bar{H}^{(0)}[k,:] = \sum_{v_i\in e_k}(\mathbf{X}[v_i,:]) \quad \text{and} \quad
H^{(0)}  = \text{ReLu}(\Bar{H}^{(0)}\mathbf{W}_0),
\end{align*}
where $\mathbf{W}_0\in \mathbb{R}^{d\times d_0}$ with $d_0\in \mathbb{Z}^+$. Suppose that there are $L\in \mathbb{Z}^+$ propagation steps. In each step $l \in[L]$, a hyperparameter $\alpha^{(l)}\in [0, 1]$ is used to control the balance between the hyperedge feature and the aggregated node feature from their neighbors. The model computes the hidden hyperedge representation $H^{(l)} \in \mathbb{R}^{K\times d_{l}}$ with $d_l \in \mathbb{Z}^+$ by 
\begin{align*}
\Bar{H}^{(l)}[k,:] &= (1+\alpha^{(l)})H^{(l-1)}[k,:] + \sum _{e_i\in N(e_k)} H^{(l-1)}[i,:]\  \text{and} 
\\
H^{(l)} &= \text{ReLu}(\Bar{H}^{(l)}\mathbf{W}^{(l)}),
\end{align*}
where a) $\mathbf{W}^{(l)} \in \mathbb{R}^{d_{l-1}\times d_{l}}$, b) $N(e_k)\subseteq \mathcal{E}$ denotes the neighborhood of $e_k$, and c) $N(e_k) = \{e| e\cap e_k \neq \emptyset\}$. Through an aggregation process, the readout layer is computed by
\begin{align*}
\Bar{H}^{(L+1)}[k,:] &= \sum_{e_i\in N(e_k)}H^{(L)}[i,:] \quad\text{and} \\
    \mign(A) & = \frac{1}{K}\mathbf{1}_K\big(\text{ReLu}(\Bar{H}^{(L+1)}\mathbf{W}^{(L+1)})\big),
\end{align*}
where $\mathbf{W}^{(L+1)} \in \mathbb{R}^{d_{L+1}\times C}$ and $\mathbf{1}_K$ is an all-one vector. Let the maximum hidden dimension be $h\define\max_{l\in [0, L+1]}d_l$. We now provide the perturbation bound.
\begin{lemma}\label{lemma:p-gin}
Consider the $\mign_{\vec{w}}$ of $L+2$ layers with parameters $\vec{w} = \big(\mathbf{W}^{(0)}, \dots, \mathbf{W}^{(L+1)} \big)$. For each $\vec{w}$, any perturbation $\vec{u}= \big(\mathbf{U}^{(0)}, \dots, \mathbf{U}^{(L+1)} \big)$ on $\vec{w}$ such that $\max_{i\in \{0\}\cup [L+1]} \big(\frac{\norm{\mathbf{U}^{(i)}}}{\norm{\mathbf{W}^{(i)}}}\big)\leq ~ \frac{1}{L+2}$, and for each input $A \in \mathcal{A}$, we have
    \begin{align*}
        \norm{\mign_{\vec{w}+\vec{u}}(A) - & \mign_{\vec{w}}(A)}_2 \\
        &\leq \mathcal{C}_{I_1}(\prod_{i=0}^{L+1}\norm{\mathbf{W}^{(i)}})\big(\sum_{i=0}^{L+1}\frac{\norm{\mathbf{U}^{(i)}}}{\norm{\mathbf{W}^{(i)}}}\big),
    \end{align*}
    where $\mathcal{C}_{I_1} = 2e^2M^{L+2}D^{L+1}BE^{(1, L)}$ and $E^{(i, j)} = \prod_{k=i}^j1+\alpha^{(k)}$.
\end{lemma}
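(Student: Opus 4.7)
The plan is to follow the same decomposition strategy already demonstrated for UniGCN in the proof of Lemma \ref{lemma:p-gcn}, but adapted to M-IGN's hyperedge-centric propagation and the extra $(1+\alpha^{(l)})$ self-loop factors. Concretely, I would track two quantities per layer: $\Phi_l \define \max_{k\in[K]} \|H^{(l)}[k,:]\|_2$ (an upper bound on the maximum hyperedge representation under the unperturbed weights) and $\Psi_l \define \max_{k\in[K]} \|H^{(l)}_{\vec{w}+\vec{u}}[k,:] - H^{(l)}_{\vec{w}}[k,:]\|_2$ (the propagated perturbation). The final perturbation bound on the output will follow by combining $\Psi_{L+1}$ with the averaging $\frac{1}{K}\mathbf{1}_K(\cdot)$ from the readout.

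First, I would handle the base layer. Since $\bar{H}^{(0)}[k,:]$ sums feature vectors of up to $M$ nodes in $e_k$, each of norm bounded by $B$, and then applies $\mathbf{W}_0$ followed by ReLu, the nonexpansiveness of ReLu yields $\Phi_0 \leq MB\|\mathbf{W}^{(0)}\|$ and $\Psi_0 \leq MB\|\mathbf{U}^{(0)}\|$. For the recursive step $l \in [L]$, the identity $\text{ReLu}(x+\delta) - \text{ReLu}(x)$ being $1$-Lipschitz, combined with the aggregation rule, gives
\begin{align*}
\Psi_l \leq \big((1+\alpha^{(l)}) + D\cdot M\big)\,\Psi_{l-1}\,\|\mathbf{W}^{(l)}+\mathbf{U}^{(l)}\| + \big((1+\alpha^{(l)}) + D\cdot M\big)\,\Phi_{l-1}\,\|\mathbf{U}^{(l)}\|,
\end{align*}
where the $DM$ factor bounds the size of the hyperedge neighborhood $N(e_k)$ (at most $M$ nodes per hyperedge, each touching at most $D$ neighboring nodes that induce neighboring hyperedges), and $(1+\alpha^{(l)})$ comes from the self-term. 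An analogous recursion bounds $\Phi_l$ in terms of $\Phi_{l-1}$, $\|\mathbf{W}^{(l)}\|$, and the same aggregation factor, yielding a geometric solution $\Phi_l \leq MB\,\prod_{i=0}^{l}\|\mathbf{W}^{(i)}\|\cdot (DM)^l \, E^{(1,l)}$ modulo absorbing constants.

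Next, I would unroll the $\Psi_l$ recursion. Using $\|\mathbf{W}^{(l)}+\mathbf{U}^{(l)}\| \leq (1+\tfrac{1}{L+2})\|\mathbf{W}^{(l)}\|$ under the perturbation hypothesis and the standard bound $(1+\tfrac{1}{L+2})^{L+2} \leq e$, the accumulated product across all layers contributes the $e^2$ constant in $\mathcal{C}_{I_1}$ (one $e$ from the propagation recursion and one from the readout layer). Unrolling gives a sum in which each term is bounded by the product $\prod_{i=0}^{L+1}\|\mathbf{W}^{(i)}\|$ multiplied by a single ratio $\|\mathbf{U}^{(i)}\|/\|\mathbf{W}^{(i)}\|$, with the aggregation factors collapsing into $M^{L+2}D^{L+1}E^{(1,L)}$. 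Finally, the readout step applies one more ReLu and weight $\mathbf{W}^{(L+1)}$ with aggregation neighborhood $N(e_k)$ of size at most $DM$, then averages $K$ rows of $\mathbf{1}_K/K$; the averaging does not worsen the $\ell_2$ bound, so the single extra factor of $DM\|\mathbf{W}^{(L+1)}\|$ merges into the overall constant to yield $\mathcal{C}_{I_1}$.

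The main obstacle will be carefully handling the coupled recursion between $\Psi_l$ and $\Phi_l$ when the self-loop weight $1+\alpha^{(l)}$ varies across layers, because the product $E^{(1,L)}$ must be isolated cleanly from the $(DM)^L$ neighborhood factor. A naive bound that lumps $(1+\alpha^{(l)}) + DM$ into a single scalar would yield a binomial-like expansion rather than the clean product form $(DM)^{L+1}E^{(1,L)}$ claimed in the lemma. Avoiding this requires bounding $(1+\alpha^{(l)}) + DM \leq (1+\alpha^{(l)}) \cdot (DM + 1) \leq (1+\alpha^{(l)}) \cdot 2DM$ (since $DM \geq 1$) so that the $E^{(1,L)}$ product and the $(DM)^{L+1}$ factor separate multiplicatively. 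A second, subtler point is that the base and readout layers do not carry an $\alpha$ factor, so the product $E^{(1,L)}$ runs only from $1$ to $L$; care is needed to index these boundary layers correctly when applying the recursion and to ensure that the constant $2e^2$ in $\mathcal{C}_{I_1}$ correctly absorbs the $(1+\alpha)$-free contributions from the base and the readout.
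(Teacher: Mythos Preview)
Your proposal is correct and mirrors the paper's proof almost exactly: same two-quantity decomposition $(\Phi_l,\Psi_l)$, same recursion $\Psi_l \leq a_{l-1}\Psi_{l-1}+b_{l-1}$, same unrolling with the $(1+\tfrac{1}{L+2})^{L+2}\leq e$ trick, and same treatment of the base and readout layers.

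The one place you diverge is the additive-to-multiplicative conversion. You bound the aggregation contribution as $(1+\alpha^{(l)}) + DM$ and then use $(1+\alpha^{(l)}) + DM \leq 2(1+\alpha^{(l)})DM$, which costs you a factor of $2$ at \emph{every} propagation layer and hence $2^{L}$ overall, not the single factor of $2$ in $\mathcal{C}_{I_1}$. The paper sidesteps this: it writes the layer map as $H^{(l)}=\text{ReLu}\big(L^{(l)}H^{(l-1)}\mathbf{W}^{(l)}\big)$ with $L^{(l)}=(1+\alpha^{(l)})I+A_e$ and bounds the row action of $L^{(l)}$ directly by the multiplicative factor $MD(1+\alpha^{(l)})$, so the recursion for both $\Phi_l$ and $\Psi_l$ already carries $MD(1+\alpha^{(l)})$ as a single product term. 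That is what lets $E^{(1,L)}$ and $(MD)^{L}$ separate cleanly without any $2^{L}$. If you want to land on the stated constant rather than a $2^{L}$-inflated one, adopt that direct multiplicative bound on $L^{(l)}$ instead of your two-step inequality; otherwise your argument still yields the lemma up to a constant depending only on $L$, which suffices for Theorem~\ref{theorem:gin}.
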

% \begin{remark}
% In deriving $\Psi_l$ for M-IGN, the aggregation in $\Bar{H}^{(0)}$ leads to a non-zero value for the basic case $\Psi_0$, and therefore, when solving recursive inequality for $\Psi_l$, we have to carefully rearrange the extra term caused by the basic case order to make it compatible with other terms therein.
% \end{remark}

\begin{theorem}\label{theorem:gin}
    For $\mign_{\vec{w}}$ with $L+2$ layers and each $\delta, \gamma >0$, with probability at least $1-\delta$ over a training set $S$ of size $m$, for any fixed $\vec{w}$, we have
    \begin{align*}
        \mathcal{L}_{\mathcal{D}}(\mign_{\vec{w}})\leq \mathcal{L}_{S, \gamma}(\mign_{\vec{w}}) + \mathcal{O}(\sqrt{\frac{\mathcal{C}_{I_2}\mathcal{W}_1\mathcal{W}_2 + \log\frac{mL}{\delta}}{\gamma^2m}}),
    \end{align*}
where a) $\mathcal{C}_{I_2} = (MD)^{L}B^2h\ln{(Lh)}(E^{(1,L)})^2$ with $E^{(i, j)}$ being defined as $\prod_{k=i}^j1+\alpha^{(k)}$, b) $\mathcal{W}_1 = ~\prod_{i=0}^{L+1}\norm{\mathbf{W}^{(i)}}^2$, and c) $\mathcal{W}_2=\sum_{i=0}^{L+1}\frac{\norm{\mathbf{W}^{(i)}}^2_F}{\norm{\mathbf{W}^{(i)}}^2}$. 
\end{theorem}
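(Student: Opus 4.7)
The plan is to mirror the proof of Theorem \ref{theorem:gcn}: combine the perturbation bound from Lemma \ref{lemma:p-gin} with the standard PAC-Bayes margin inequality (Lemma \ref{lemma:Generalization_margin_bound} in the Appendix) after a homogeneity-based reduction. Because every nonlinearity in $\mign$ is ReLU, the network is positively homogeneous in each $\mathbf{W}^{(i)}$, so I first set $\beta \define (\prod_{i=0}^{L+1}\norm{\mathbf{W}^{(i)}})^{1/(L+2)}$ and replace each $\mathbf{W}^{(i)}$ by $(\beta/\norm{\mathbf{W}^{(i)}})\mathbf{W}^{(i)}$. The function $\mign_{\vec{w}}$ is unchanged, every layer now has spectral norm exactly $\beta$, $\mathcal{W}_1 = \beta^{2(L+2)}$, and the ratios $\norm{\mathbf{W}^{(i)}}_F^2/\norm{\mathbf{W}^{(i)}}^2$ that define $\mathcal{W}_2$ are preserved by the rescaling.

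Next I would partition the analysis according to where $\beta$ lies relative to
\[
[I_1, I_2] \define \Bigl[\bigl(\tfrac{\gamma}{2\mathcal{C}_{I_1}}\bigr)^{1/(L+2)},\ \bigl(\tfrac{\gamma\sqrt{m}}{2\mathcal{C}_{I_1}}\bigr)^{1/(L+2)}\Bigr],
\]
where $\mathcal{C}_{I_1}=2e^2 M^{L+2} D^{L+1} B E^{(1,L)}$ is the constant from Lemma \ref{lemma:p-gin}. For $\beta\leq I_1$ the perturbation condition is trivially met, so the claim follows from a direct invocation of the standard PAC-Bayes margin bound. For $\beta\geq I_2$ the advertised term $\mathcal{O}(\sqrt{\mathcal{C}_{I_2}\mathcal{W}_1\mathcal{W}_2/(\gamma^2 m)})$ already exceeds $1$, so the bound is vacuously valid since the true loss is at most $1$.

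The substantive case is $\beta\in[I_1,I_2]$. I would cover this compact interval by $\mathcal{O}(\log m)$ geometric sub-intervals so that all constants are essentially fixed on each. On a typical sub-interval I fix a representative $\tilde\beta$ and choose prior $P=\mathcal{N}(0,\sigma^2\mathbf{I})$, posterior $Q=\mathcal{N}(\vec{w},\sigma^2\mathbf{I})$, and $\sigma$ so that the Gaussian tail estimate $\max_i\norm{\mathbf{U}^{(i)}}\leq \sigma\sqrt{2h\ln(4(L+2)h/\delta)}$ simultaneously (i) certifies the Lemma \ref{lemma:p-gin} hypothesis $\max_i\norm{\mathbf{U}^{(i)}}/\norm{\mathbf{W}^{(i)}}\leq 1/(L+2)$ and (ii) keeps the output perturbation below $\gamma/4$ with probability at least $1/2$. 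Substituting this $\sigma$ into $\KL(Q\|P)=\sum_i\norm{\mathbf{W}^{(i)}}_F^2/(2\sigma^2)$ and using $\beta^{2(L+2)}=\mathcal{W}_1$ yields a KL proportional to $\mathcal{C}_{I_1}^2 h\ln(\cdot)\mathcal{W}_1\mathcal{W}_2/\gamma^2$; a final union bound over the sub-intervals absorbs the $\log(mL/\delta)$ correction.

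The main obstacle is the bookkeeping of constants in the middle regime. One must track the factor $E^{(1,L)} = \prod_{k=1}^{L}(1+\alpha^{(k)})$ together with the hypergraph parameters $M$ and $D$ through the $\sigma$ calibration so that $\mathcal{C}_{I_1}^2 = 4e^4 M^{2L+4}D^{2L+2}B^2(E^{(1,L)})^2$ collapses, under the square root that bounds the KL, to exactly the advertised $\mathcal{C}_{I_2} = (MD)^L B^2 h\ln(Lh)(E^{(1,L)})^2$. Unlike $\unigcn$, $\mign$ aggregates over hyperedge neighborhoods (each of size controlled by $MD$) in every propagation layer and again in the readout, and the skip connection $(1+\alpha^{(l)})H^{(l-1)}$ contributes an extra $(1+\alpha^{(l)})$ per layer; these compound into the $M^{L+2}D^{L+1}E^{(1,L)}$ inside $\mathcal{C}_{I_1}$, and it is non-obvious a priori that the calibrated $\sigma$ recovers precisely the claimed $(MD)^L(E^{(1,L)})^2$ scaling after the square root.
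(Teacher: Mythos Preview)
Your proposal is correct and follows essentially the same route as the paper: normalize via homogeneity so every layer has spectral norm $\beta$, split into three regimes for $\beta$, and on the middle interval choose a Gaussian prior/posterior with $\sigma$ calibrated so the Lemma~\ref{lemma:p-gin} perturbation bound yields $\gamma/4$, then take a union bound over a finite cover. Two cosmetic deviations worth noting: the paper defines the interval endpoints using the output-norm constant $E^{(1,L)}M^{L+1}D^{L}B$ (from the $\Phi_L$ bound) rather than $\mathcal{C}_{I_1}$, and in the small-$\beta$ case the operative reason the bound holds is that the network output is at most $\gamma/2$ so $\mathcal{L}_{S,\gamma}=1$, not that the perturbation condition is ``trivially met''; neither point affects the big-$\mathcal{O}$ conclusion.
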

\begin{remark}
% In deriving $\Psi_l$ for M-IGN, the aggregation in $\Bar{H}^{(0)}$ leads to a non-zero value for the basic case $\Psi_0$, and therefore, when solving recursive inequality for $\Psi_l$, we have to carefully rearrange the extra term caused by the basic case order to make it compatible with other terms therein.
% For M-IGN, the normalization factor is found by \\ $\big(\prod_{i=0}^{L}\norm{\mathbf{W}^{(i)}}\big)^{1/L+2}$, and the critical interval is given by
%     \[[I_1, I_2]\define\Big[\big(\frac{\gamma}{2E^{(1,L)}M^{L+1}D^{L}B}\big)^{1/L+2}, \big(\frac{\sqrt{m}\gamma}{2E^{(1,L)}M^{L+1}D^{L}B}\big)^{1/L+2}\Big].\]
The main challenge with M-IGN lies in its use of layer-specific scalars for managing propagation, where parameter variations across different layers lead to non-uniform propagation behaviors. This non-uniformity complicates the derivation of the recursive inequality for $\Psi_l$, as the additional term introduced by these parameters needs to be carefully reordered to ensure compatibility with the other terms. We then derive the generalization bound by examining the interval of the spectral norm of weights, specified as follows.
    \[[I_1, I_2]\define\Big[\big(\frac{\gamma}{2E^{(1,L)}M^{L+1}D^{L}B}\big)^{1/L+2}, \big(\frac{\sqrt{m}\gamma}{2E^{(1,L)}M^{L+1}D^{L}B}\big)^{1/L+2}\Big].\]
\end{remark}

\begin{remark}
The given approach can be extended to other HyperGINs. Given the similarity in their propagation mechanisms, we can leverage the same framework to analyze the generalization properties of these models, with adjustments to the recursive inequalities to accommodate their specific layer-wise characteristics. For example, in $k$-GNN \cite{morris2019weisfeiler}, each layer aggregates structural information from interactions between nodes or subgraphs with layer-specific scalars. These interactions result in a solvable recursive formula for the perturbation bound, where the linear factor is directly represented by the substructure properties. This allows for a consistent analysis of the model, similar to M-IGN.
\end{remark}

% \begin{table}
% \centering
% \caption{\small{Bounds comparison. "-" means inapplicable. $L$ is the number of propagations. Note that $\mathcal{W}_{p}$ and $\mathcal{W}_{s}$ denote the parameter-dependent in the bound, respectively, with their specific definitions varying slightly depending on the model.}}
% \label{tab:bounds}
% \vspace{-1mm}
% \small{
% \begin{tabular}{l@{\hspace{1mm}}l@{\hspace{1mm}}l@{\hspace{1mm}}l@{\hspace{1mm}}l@{\hspace{1mm}}l@{\hspace{1mm}}}
% \toprule   
% \textbf{Model} & $D$ & $M$ & $R$  & $h$  & $\vec{w}$ \\ \midrule 
% \textbf{UniGCN} & $\mathcal{O}(D^L)$ & $\mathcal{O}(M^L)$ & $\mathcal{O}(R^L)$ &  $\mathcal{O}(h\ln{(Lh)})$  & $\mathcal{O}(\mathcal{W}_{p}\mathcal{W}_{s})$ \\
% \textbf{AllDeepSet} & - & $\mathcal{O}(M^L)$  & $\mathcal{O}(R^L)$ &  $\mathcal{O}(h\ln{(h)})$ & $\mathcal{O}(\mathcal{W}_{p}\mathcal{W}_{s})$ \\
% \textbf{M-IGN} & $\mathcal{O}(D^L)$ & $\mathcal{O}(M^L)$ &-   &  $\mathcal{O}(h\ln{(Lh)})$ & $\mathcal{O}(\mathcal{W}_{p}\mathcal{W}_{s})$ \\
% \textbf{T-MPHN} & - & - & - &  $\mathcal{O}(h\ln (h))$ & $\mathcal{O}(\frac{\sum\norm{\textbf{W}}_F^2}{\norm{\textbf{W}^{(L+1)}}^2})$ \\ 
% \textbf{HGNN+} & $\mathcal{O}(D^{\frac{L}{2}})$ & $\mathcal{O}(M^L)$  & $\mathcal{O}(R^L)$  &  $\mathcal{O}(h\ln{(Lh)})$ & $\mathcal{O}(\mathcal{W}_{p}\mathcal{W}_{s})$\\
% \bottomrule
% \end{tabular}%
% }
% \end{table}

\subsection{T-MPHN}
The tensor-based HyperGNN, T-MPHN \cite{wang2024t} leverages high dimensional hypergraph descriptors and joint node interaction inherent in hyperedges for message passing. The model takes $\mathcal{G}$ and node feature $\mathbf{X}$ as input, and the initial hidden node representation $H^{(0)}\in \mathbb{R}^{N\times d_0}$ is computed by $H^{(0)} = \text{ReLu}(\mathbf{W}^{(0)}\mathbf{X})$ with $\mathbf{W}^{(0)}\in \mathbb{R}^{d\times d_0}$. Suppose that there are $L\in \mathbb{Z}^+$ propagation steps. For each node $v_i \in \mathcal{V}$, we denote its representation in step $l$ by $\mathbf{x}_{v_i}^{(l)} \in \mathbb{R}^{d_{l}}$. The model updates the node representation $\mathbf{x}_{v_i}^{(l')}\in \mathbb{R}^{2d_{l-1}}$ by
\begin{align*}
            \mathbf{m}^{(l)}_{e^M(v_i)} & \define \underset{\{\mathcal{U}\in \pi(\cdot)| \pi(\cdot) \in \pi(e^M(-v_i))\}}{\text{SUM}}\big( {\text{CNI}}_{\mathcal{U}}(H^{(l-1)}))\big),  \\
            \mathbf{m}^{(l)}_{\mathcal{N}^{M}(v_i)} & \define \underset{e^M\in E^M(v_i)}{\text{AVERAGE}}\big(a_e\mathbf{m}^{(l-1)}_{e^M(v_i)}\big), \quad \quad \quad \quad\text{and}\\
            \mathbf{x}^{(l')}_{v_i} & = \text{CONCAT}\big(\mathbf{x}_{v_i}^{(l-1)}, \mathbf{m}^{(l)}_{\mathcal{N}^{M}(v_i)}\big),
\end{align*}
where a) $e^M(v_i)$ represents the $M^{th}$-order hyperedge,  b) $\pi(\cdot)$ denotes the sequence permutation function, c) ${\text{CNI}}_{\mathcal{U}}(\cdot)$ represents a matrix operation for an ordered sequence $\mathcal{U}$ of indexes, d) $E^M(v_i)$ indicates the $M^{th}$-order incident hyperedge of node $v_i$, e) $\mathcal{N}^{M}(v_i)$ is the $M^{th}$-order neighborhood of node $v_i$, and f) $a_e$ denotes adjacency value of hyperedge $e$. The definitions of the above terminologies are presented in Sec \ref{s:t} in the Appendix. Let $ G^{(l)} =(\mathbf{x}^{(l')}_{v_1}, \mathbf{x}^{(l')}_{v_2},\dots, \mathbf{x}^{(l')}_{v_N})$. The model then calculates the hidden node representation $H^{(l)}\in \mathbb{R}^{N\times d_l}$ by adding a row-wise normalization: 
\begin{align*}
\Bar{H}^{(l)} & = \text{ReLu}\big(\textbf{W}^{(l)}G^{(l)}\big)  \\
H^{(l)}  &= \Big(\frac{\Bar{H}^{(l)}[1,;]}{\norm{\Bar{H}^{(l)}[1,;]}_2}, \dots, \frac{\Bar{H}^{(l)}[N,;]}{\norm{\Bar{H}^{(l)}[N,;]}_2}\Big),  
\end{align*}
where $\mathbf{W}^{(l)} \in \mathbb{R}^{2d_{l-1}\times d_l}$. The readout layer is defined as 
\begin{align*}
\tmphn(A) = \frac{1}{N}\mathbf{1}_NH^{(L)}\mathbf{W}^{(L+1)},
\end{align*}
where $\mathbf{W}^{(L+1)} \in \mathbb{R}^{d_{L}\times C}$ and $\mathbf{1}_N$ is an all-one vector. Let the maximum hidden dimension be $h\define\max_{l\in [L]}d_l$. We have the following results for T-MPHN. 

\begin{lemma}\label{lemma:p_t}
Consider the $\tmphn_{\vec{w}}$ of $L+1$ layers with parameters $\vec{w} = \big(\mathbf{W}^{(1)}, \dots, \mathbf{W}^{(L+1)} \big)$. For each $\vec{w}$, each perturbation $\vec{u}=\big(\mathbf{U}^{(1)}, \dots, \mathbf{U}^{(L+1)} \big)$, and each input $A \in \mathcal{A}$, we have
    \begin{align*}
        \norm{\tmphn_{\vec{w}+\vec{u}}(A) - \tmphn_{\vec{w}}(A)}_2 \leq 2\norm{\mathbf{W}^{(L+1)}} + 3\norm{\mathbf{U}^{(L+1)}}.
    \end{align*}
\end{lemma}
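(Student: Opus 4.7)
The essential observation is that T-MPHN performs row-wise $\ell_2$-normalization at the end of every propagation step, so each row of $H^{(L)}$ (and of its counterpart $\widetilde{H}^{(L)}$ produced by $\vec{w}+\vec{u}$) has $\ell_2$-norm at most $1$, irrespective of the weights. This is precisely why the target bound depends neither on the depth $L$, nor on the hypergraph statistics $M,R,D$, nor on any of the inner-layer weights $\mathbf{W}^{(0)},\dots,\mathbf{W}^{(L)}$ or their perturbations, and it also explains the absence of the small-perturbation assumption imposed in the three previous lemmas. The plan is therefore to bypass any recursive unrolling of the hidden layers altogether and read off everything from the readout.

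\textbf{Step 1 (consequences of normalization).} Writing $\widetilde{H}^{(L)}$ for the final-layer representation computed with $\vec{w}+\vec{u}$, I would first record the three bounds
\begin{align*}
\norm{\tfrac{1}{N}\mathbf{1}_N H^{(L)}}_2 \leq 1,\qquad
\norm{\tfrac{1}{N}\mathbf{1}_N \widetilde{H}^{(L)}}_2 \leq 1,\qquad
\norm{\tfrac{1}{N}\mathbf{1}_N(\widetilde{H}^{(L)} - H^{(L)})}_2 \leq 2,
\end{align*}
each of which follows from the triangle inequality once one notes that $\norm{H^{(L)}[i,:]}_2\le 1$ and $\norm{\widetilde{H}^{(L)}[i,:]}_2\le 1$ for every $i\in[N]$.

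\textbf{Step 2 (splitting the readout difference).} I would then decompose
\begin{align*}
\tmphn_{\vec{w}+\vec{u}}(A) - \tmphn_{\vec{w}}(A)
= \tfrac{1}{N}\mathbf{1}_N\bigl(\widetilde{H}^{(L)} - H^{(L)}\bigr)\bigl(\mathbf{W}^{(L+1)} + \mathbf{U}^{(L+1)}\bigr)
+ \tfrac{1}{N}\mathbf{1}_N H^{(L)}\,\mathbf{U}^{(L+1)},
\end{align*}
apply submultiplicativity of the spectral norm together with the inequalities of Step~1, and bound the first summand by $2(\norm{\mathbf{W}^{(L+1)}}+\norm{\mathbf{U}^{(L+1)}})$ and the second by $\norm{\mathbf{U}^{(L+1)}}$. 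Summing yields the claimed $2\norm{\mathbf{W}^{(L+1)}} + 3\norm{\mathbf{U}^{(L+1)}}$.

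\textbf{Expected obstacle.} In sharp contrast to UniGCN, AllDeepSets, and M-IGN, there is no recursive perturbation machinery to carry through the hidden layers here, so the ``hard part'' is conceptual rather than technical: one must recognize that the row-wise normalization completely absorbs the propagation of weight perturbations through the interior of the network, at the price of losing all dependence on $\mathbf{W}^{(1)},\dots,\mathbf{W}^{(L)}$. The only mildly delicate point is handling rows where $\bar{H}^{(l)}[i,:]=\vec{0}$, which I would resolve by adopting the standard convention that the normalized row is then also $\vec{0}$; the inequality $\norm{H^{(l)}[i,:]}_2 \leq 1$ is still valid and the two-step argument above goes through unchanged.
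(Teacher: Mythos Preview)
Your proposal is correct and follows essentially the same approach as the paper: both exploit the row-wise normalization to get $\norm{H^{(L)}[i,:]}_2\le 1$ and $\norm{\widetilde{H}^{(L)}[i,:]}_2\le 1$ (hence the row-wise difference is at most $2$), then split the readout difference into the same two summands and bound them by $2(\norm{\mathbf{W}^{(L+1)}}+\norm{\mathbf{U}^{(L+1)}})$ and $\norm{\mathbf{U}^{(L+1)}}$ respectively. The only cosmetic difference is that the paper bounds each row first and then averages, whereas you average first and then bound---both routes yield the identical inequality.
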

% \begin{remark}
% In T-MPHN, we again find that $\Psi_l$ is bounded via a summation of two terms that are linear in $\Psi_{l-1}$ and $\Phi_{l-1}$, respectively. Due to row-wise normalization, $\Psi_{l-1}$ is bounded by the Euclidean distance between two normalized vectors, which is 2. $\Phi_{l-1}$ is bounded by 1. The perturbation bound follows immediately by incorporating the mean readout function in the last layer. Different from the previous models, the perturbation bound here is always satisfied without any assumption on the spectral norm of perturbations and weights.   
% \end{remark}

\begin{theorem}\label{theorem:t}
    For $\tmphn_{\vec{w}}$ with $L+1$ layers and each $\delta, \gamma >0$, with probability at least $1-\delta$ over a training set $S$ of size $m$, for any fixed $\vec{w}$, we have
    \begin{align*}
        \mathcal{L}_{\mathcal{D}}(\tmphn_{\vec{w}}) &  \leq \mathcal{L}_{S, \gamma}(\tmphn_{\vec{w}}) +  \\ & \mathcal{O}\big(\sqrt{\frac{L^2h\ln h\sum_{i=1}^{L+1}\norm{\mathbf{W}}_F^2 + \log\frac{mL}{\sigma}}{\gamma^2m + \norm{\mathbf{W}^{(L+1)}}^2m}}\big).
    \end{align*}
\end{theorem}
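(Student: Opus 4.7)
The plan is to apply the PAC-Bayes margin-based generalization bound (Lemma \ref{lemma:Generalization_margin_bound} in the appendix) along the same template used in the proofs of Theorems \ref{theorem:gcn}, \ref{theorem:allset}, and \ref{theorem:gin}. Specifically, I would use a spherical Gaussian prior $P=\mathcal{N}(\vec{0},\sigma^{2}I)$ and posterior $Q=\mathcal{N}(\vec{w},\sigma^{2}I)$ over the concatenated weights, so that $\mathrm{KL}(Q\|P)=\sum_{i=1}^{L+1}\norm{\mathbf{W}^{(i)}}_{F}^{2}/(2\sigma^{2})$. Substituting this into the PAC-Bayes inequality will then produce the numerator term $L^{2}h\ln(h)\sum_{i}\norm{\mathbf{W}^{(i)}}_{F}^{2}$ once $\sigma$ has been chosen consistently with the perturbation requirement.

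The central step is to verify the perturbation condition using Lemma \ref{lemma:p_t}. The lemma is distinctive in two respects: the bound depends only on the final-layer weight (the row-wise normalization at each layer absorbs the effects of intermediate perturbations, which explains the absence of any $M$, $D$, $R$, or $L$-exponential dependence), and it carries a fixed additive term $2\norm{\mathbf{W}^{(L+1)}}$ that does not vanish as $\vec{u}\to \vec{0}$. Using standard Gaussian spectral-norm tail bounds, $\norm{\mathbf{U}^{(L+1)}}\lesssim \sigma\sqrt{h\ln h}$ holds with probability at least $1/2$, so the condition $\norm{\tmphn_{\vec{w}+\vec{u}}(A)-\tmphn_{\vec{w}}(A)}_{2}\le \gamma/4$ is guaranteed provided $\sigma\lesssim (\gamma/4-2\norm{\mathbf{W}^{(L+1)}})/\sqrt{h\ln h}$. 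Choosing $\sigma$ at this boundary produces an effective denominator of the form $(\gamma-c\norm{\mathbf{W}^{(L+1)}})^{2}m$, which, together with a trivial treatment of the regime $\norm{\mathbf{W}^{(L+1)}}\gtrsim \gamma$ where the bound is vacuous, can be rewritten up to universal constants as $\gamma^{2}m+\norm{\mathbf{W}^{(L+1)}}^{2}m$ appearing in the theorem.

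To upgrade the bound from a fixed $\vec{w}$ to a uniform statement, I would reuse the discretization-plus-union-bound argument from Theorem \ref{theorem:gcn}: partition the admissible range of $\norm{\mathbf{W}^{(L+1)}}$ into intervals small enough that a single choice of $\sigma$ serves every weight in the interval, and then union-bound over intervals. This discretization is what supplies both the $\log(mL/\delta)$ and the extra $L^{2}$ factor in the numerator. The two extreme regimes, where $\norm{\mathbf{W}^{(L+1)}}$ is very small so $\sigma$ is constrained only by $\gamma$, or very large so the perturbation condition can only be satisfied by declaring the bound trivial, are handled separately in the same three-case spirit as the intervals $[I_{1},I_{2}]$ in earlier proofs.

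The main obstacle I anticipate is the fixed term $2\norm{\mathbf{W}^{(L+1)}}$ in Lemma \ref{lemma:p_t}. Because this term is independent of $\vec{u}$, one cannot shrink it by lowering the noise level, which is the usual degree of freedom exploited in PAC-Bayes analyses; the normalization trick from Theorem \ref{theorem:gcn} cannot rescue this either, because it does not touch the last-layer norm in isolation. The proof must therefore absorb $\norm{\mathbf{W}^{(L+1)}}$ into an effective margin rather than into the perturbation budget, and the denominator $\gamma^{2}m+\norm{\mathbf{W}^{(L+1)}}^{2}m$ in the theorem is precisely the residue of this trade-off; making this rearrangement precise, while keeping the dependence on $L$ polynomial rather than exponential, is the delicate part of the argument.
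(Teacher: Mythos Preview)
Your proposal is essentially the paper's approach: Gaussian prior/posterior, invoke Lemma~\ref{lemma:p_t} to verify the perturbation condition, choose $\sigma$ so that $2\norm{\mathbf{W}^{(L+1)}}+3\norm{\mathbf{U}^{(L+1)}}<\gamma/4$, and run the three-case analysis with a covering over the admissible range of the last-layer norm (the paper first normalizes all layers to a common $\beta$ and covers $\beta\in[\gamma/2,\gamma\sqrt{m}/2]$, but since only $\norm{\mathbf{W}^{(L+1)}}$ enters Lemma~\ref{lemma:p_t} this is cosmetic). You have also correctly isolated the fixed additive term $2\norm{\mathbf{W}^{(L+1)}}$ as the crux, and your ``effective margin'' resolution is precisely what the paper does, setting $\sigma\propto(\gamma-c\,\norm{\mathbf{W}^{(L+1)}})/\sqrt{h\ln h}$ and absorbing the resulting $(\gamma-\norm{\mathbf{W}^{(L+1)}})^{2}$ denominator into the stated form.
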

\begin{remark}
% In T-MPHN, we again find that $\Psi_l$ is bounded via a summation of two terms that are linear in $\Psi_{l-1}$ and $\Phi_{l-1}$, respectively. Due to row-wise normalization, $\Psi_{l-1}$ is bounded by the Euclidean distance between two normalized vectors, which is 2. $\Phi_{l-1}$ is bounded by 1. The perturbation bound follows immediately by incorporating the mean readout function in the last layer. Different from the previous models, the perturbation bound here is always satisfied without any assumption on the spectral norm of perturbations and weights.   
% The normalization factor for T-MPHN is given by $\big(\prod_{i=1}^{L+1}\norm{\mathbf{W}^{(i)}}\big)^{1/L+1}$, and together with the perturbation bound in Lemma \ref{lemma:p_t}, the generalization bound is derived by considering three cases with respective to $[I_1, I_2]\define\Big[\frac{\gamma}{2},  \frac{\gamma\sqrt{m}}{2}\Big]$. 
Due to the tensor-based representations involving complex node interactions through advanced computational operations, deriving the recursive formula for $\Psi_l$ in T-MPHN is challenging. However, we found that the row-wise normalization results in $\Psi_{l-1}$ being bounded by the Euclidean distance between two normalized vectors. Consequently, both $\Psi_{l-1}$ and $\Phi_{l-1}$ are upper-bounded by a constant. The perturbation bound follows immediately by incorporating the mean readout function in the last layer. Different from the previous models, the perturbation bound here is always satisfied without any assumption on the spectral norm of perturbations and weights. Altogether, the generalization bound is derived by considering three cases of the weights respective to $[I_1, I_2]\define\Big[\frac{\gamma}{2},  \frac{\gamma\sqrt{m}}{2}\Big]$. 
\end{remark}
\begin{remark}
For other tensor-based HyperGNNs, the underlying mechanisms can vary significantly from one model to another, making it difficult to apply our method uniformly. For instance, the TNHH \cite{wang2024tensorized} uses outer product aggregation with partially symmetric CP decomposition, which differs from the approach used in T-MPHN. The key challenge lies in obtaining a solvable recursive formula for the perturbation bound. In particular, the THNN's outer product pooling generates high-order tensor interactions among nodes representations, causing perturbations to propagate multiplicatively rather than additively; this results in perturbation effects that are non-linear and involve higher-degree terms, making linear approximations ineffective. And it prevents the recursive propagation of perturbations. Therefore, our current method is not directly applicable to these models, and each must be analyzed individually to account for their unique aggregation method.
\end{remark}

\begin{table}
\renewcommand{\arraystretch}{1.2} 
\centering
\small{
\caption{Bounds comparison. $L$ is the number of propagations. Note that $\mathcal{W}_{p}$ and $\mathcal{W}_{s}$ denote the parameter-dependent in the bound, respectively, with their specific definitions varying slightly depending on the model.}
\label{tab:bounds}
\vspace{-1mm}
\begin{tabular}{@{} l@{\hspace{1.2mm}}c@{\hspace{1mm}}c@{\hspace{1mm}}c@{\hspace{1mm}}c@{\hspace{1mm}}c@{\hspace{1mm}}@{}}
\toprule   
\textbf{Model} & $D$ & $M$ & $R$  & $h$  & $\vec{w}$ \\ \midrule 
\textbf{UniGCN} & $\mathcal{O}(D^L)$ & $\mathcal{O}(M^L)$ & $\mathcal{O}(R^L)$ &  $\mathcal{O}(h\ln{(Lh)})$  & $\mathcal{O}(\mathcal{W}_{p}\mathcal{W}_{s})$ \\
\textbf{AllDeepSet} & N/A & $\mathcal{O}(M^L)$  & $\mathcal{O}(R^L)$ &  $\mathcal{O}(h\ln{(h)})$ & $\mathcal{O}(\mathcal{W}_{p}\mathcal{W}_{s})$ \\
\textbf{M-IGN} & $\mathcal{O}(D^L)$ & $\mathcal{O}(M^L)$ & N/A   &  $\mathcal{O}(h\ln{(Lh)})$ & $\mathcal{O}(\mathcal{W}_{p}\mathcal{W}_{s})$ \\
\textbf{T-MPHN} & N/A & N/A & N/A &  $\mathcal{O}(h\ln (h))$ & $\mathcal{O}(\frac{\sum\norm{\textbf{W}}_F^2}{\norm{\textbf{W}^{(L+1)}}^2})$ \\ 
\textbf{HGNN} & $\mathcal{O}(D^{\frac{L}{2}})$ & $\mathcal{O}(M^L)$  & $\mathcal{O}(R^L)$  &  $\mathcal{O}(h\ln{(Lh)})$ & $\mathcal{O}(\mathcal{W}_{p}\mathcal{W}_{s})$\\
\bottomrule
\end{tabular}%
}
\end{table}

% \subsection{HGNN+}
% 

\begin{figure}[ht]
\centering
%\vspace{1 mm}
\subfloat{\label{fig: lagend}\setlength{\fboxsep}{0pt}\fbox{\includegraphics[width=0.4\textwidth]{Styles/figure_2/legend_2.pdf}}}

\addtocounter{subfigure}{-1}
\vspace{-1mm}
% \subfloat[\small{[ER, Trained UniGCN, 0.912, 0.663, 0.719]}]{\includegraphics[width=0.16\textwidth]{Styles/figure_2/smoothed_UniGCN_2_er.pdf}\includegraphics[width=0.16\textwidth]{}\includegraphics[width=0.16\textwidth]{}}\hspace{2mm}
% \vspace{-1mm}
\subfloat[\small{[SBM, UniGCN, 0.959, 0.975, 0.870]}]{\includegraphics[width=0.15\textwidth]{Styles/figure_2/smoothed_UniGCN_2_sbm.pdf}\includegraphics[width=0.15\textwidth]{Styles/figure_2/smoothed_UniGCN_4_sbm.pdf}\includegraphics[width=0.15\textwidth]{Styles/figure_2/smoothed_UniGCN_6_sbm.pdf}}
\vspace{-1mm}
\subfloat[\small{[ER, M-IGN, 0.797, 0.815, 0.501]}]{\includegraphics[width=0.15\textwidth]{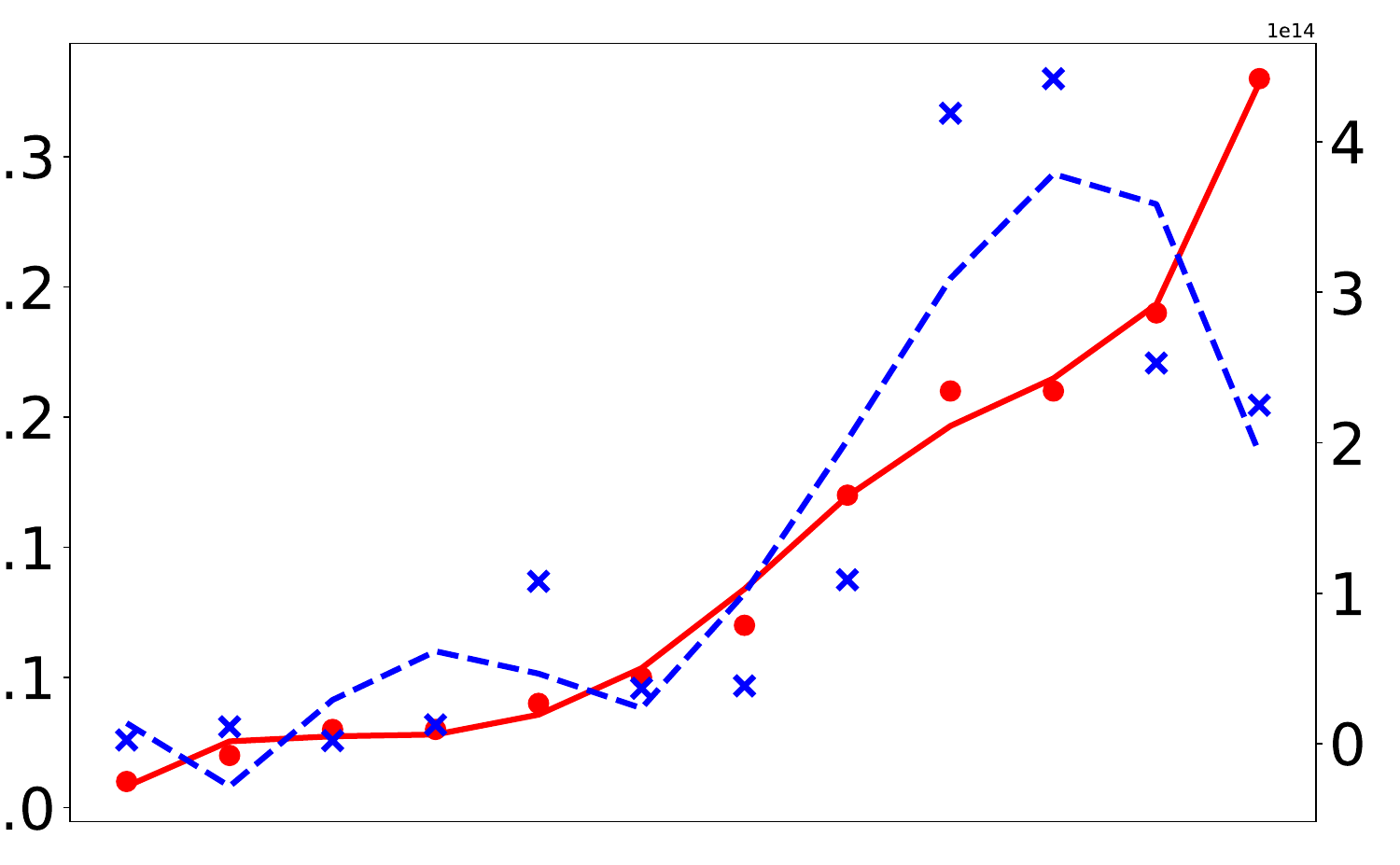}\includegraphics[width=0.15\textwidth]{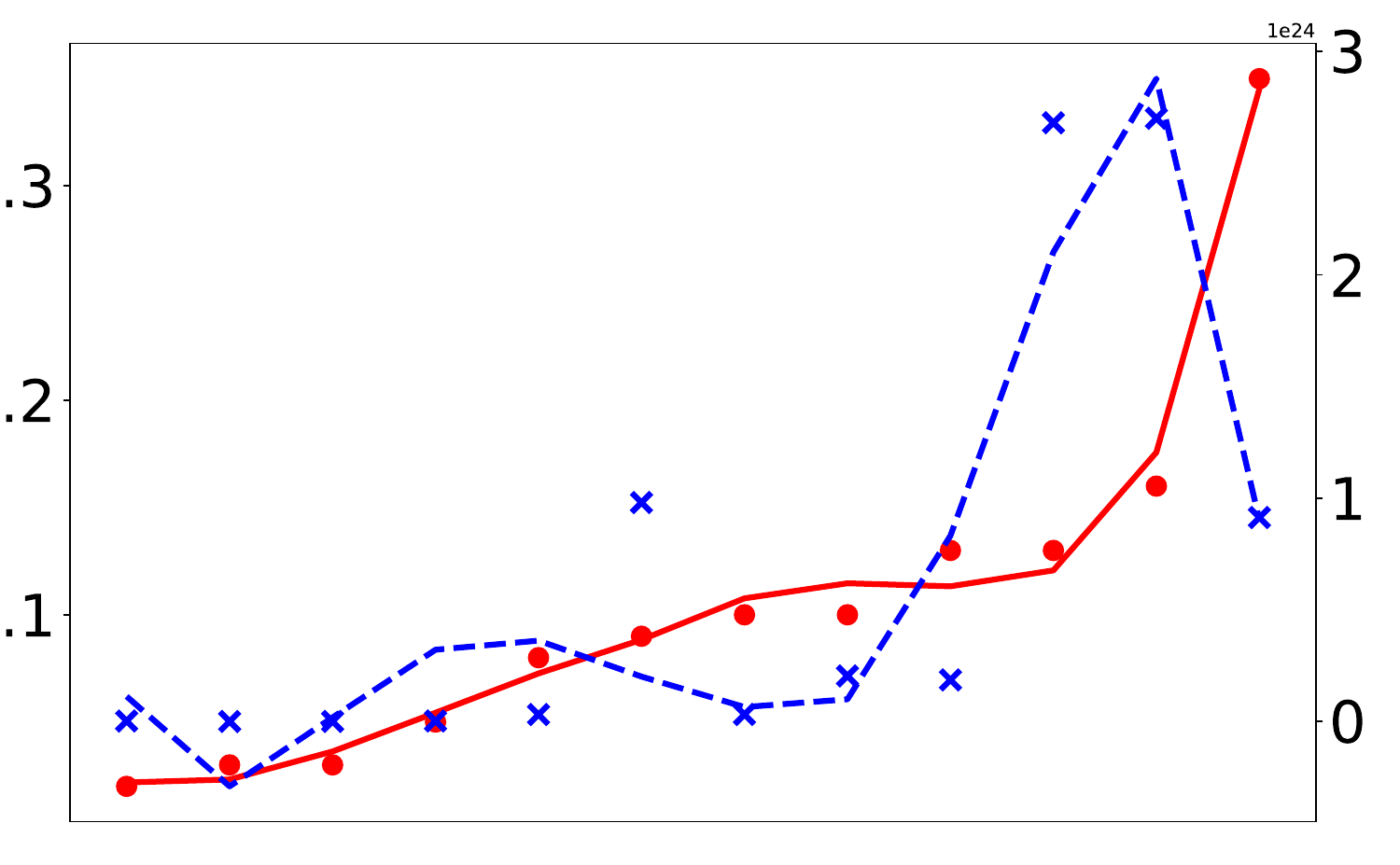}\includegraphics[width=0.15\textwidth]{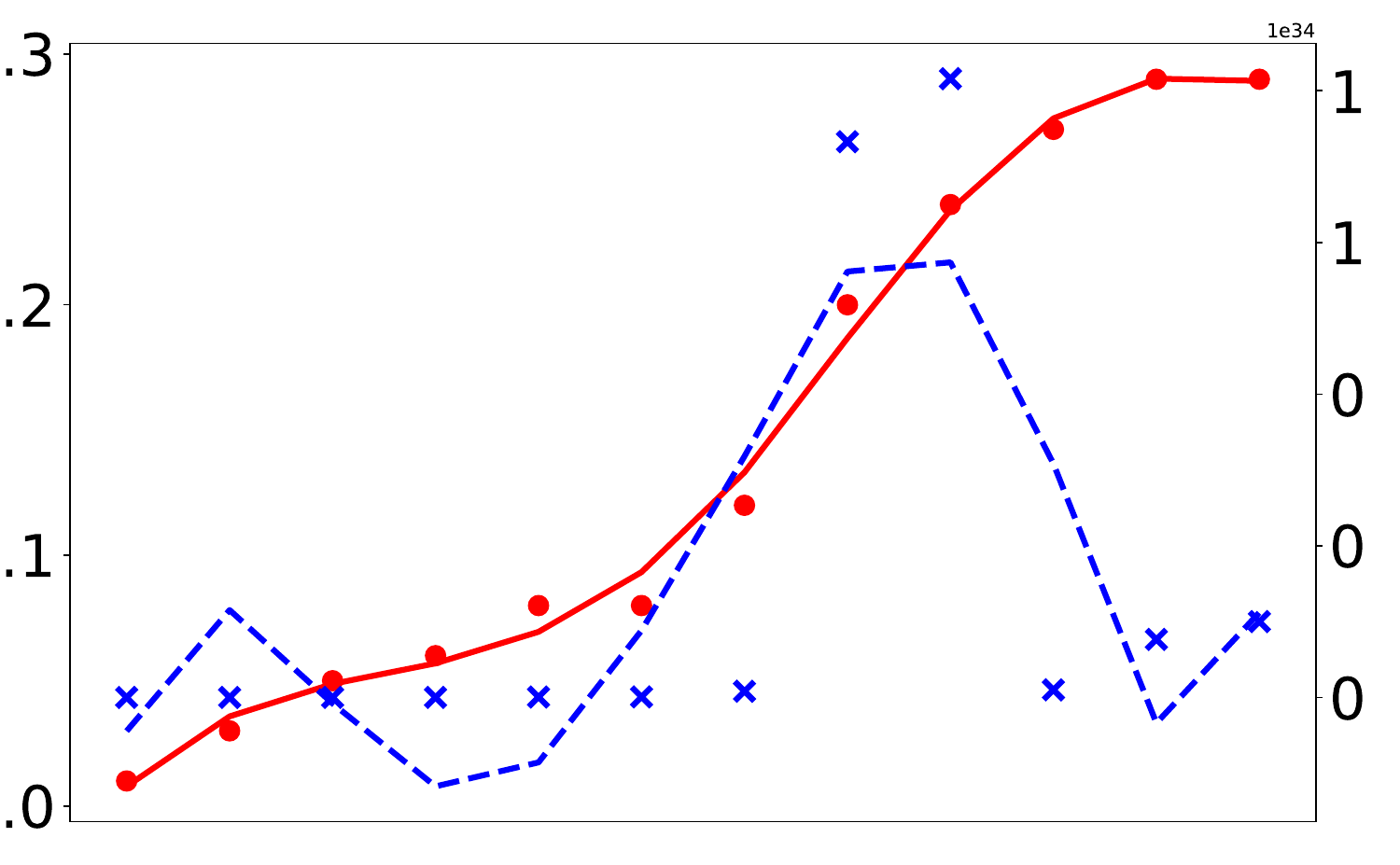}}
% \vspace{-1mm}
% \subfloat[\small{[SBM, AllDeepSets, 0.882, 0.159, 0.706]}]{\includegraphics[width=0.15\textwidth]{}\includegraphics[width=0.15\textwidth]{}\includegraphics[width=0.15\textwidth]{}}
% \vspace{-1mm}
% \subfloat[\small{[SBM, T-MPHN, 0.380, -0.368, 0.062]}]{\includegraphics[width=0.15\textwidth]{}\includegraphics[width=0.15\textwidth]{}\includegraphics[width=0.15\textwidth]{}}
\vspace{-1mm}
\subfloat[\small{[DBLP, AllDeepSets, 0.985, 0.996, 0.965]}]{\includegraphics[width=0.15\textwidth]{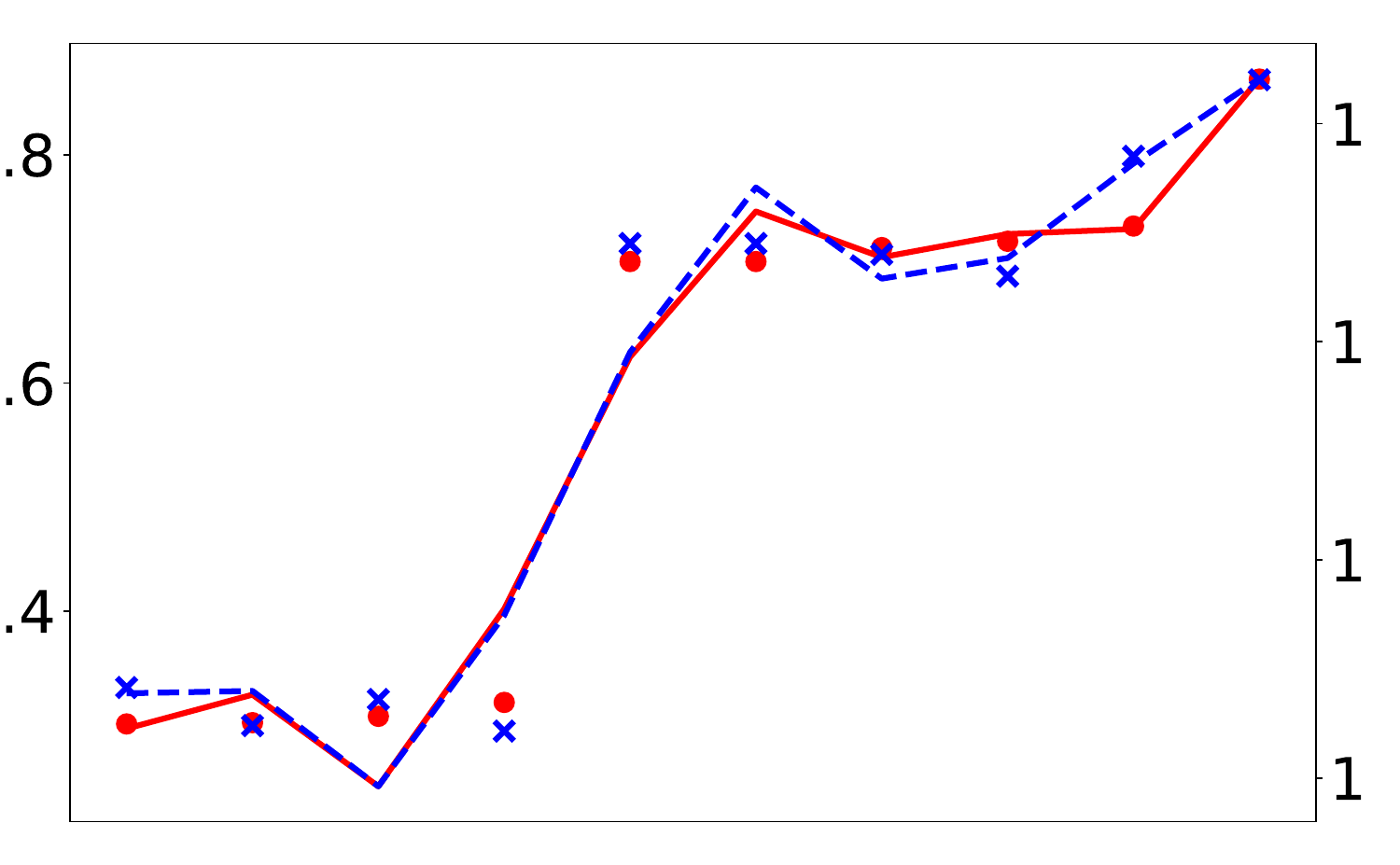}\includegraphics[width=0.15\textwidth]{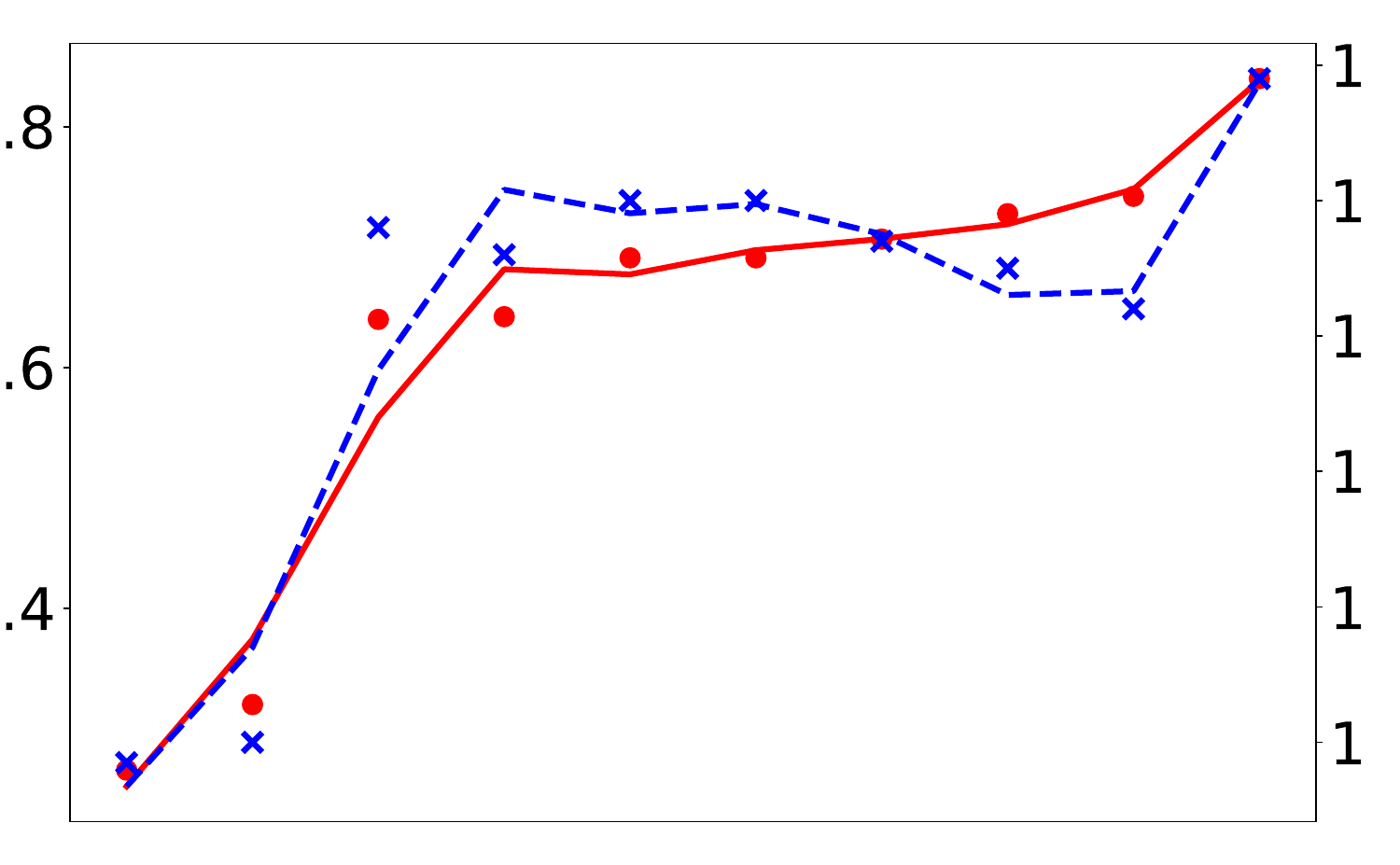}\includegraphics[width=0.15\textwidth]{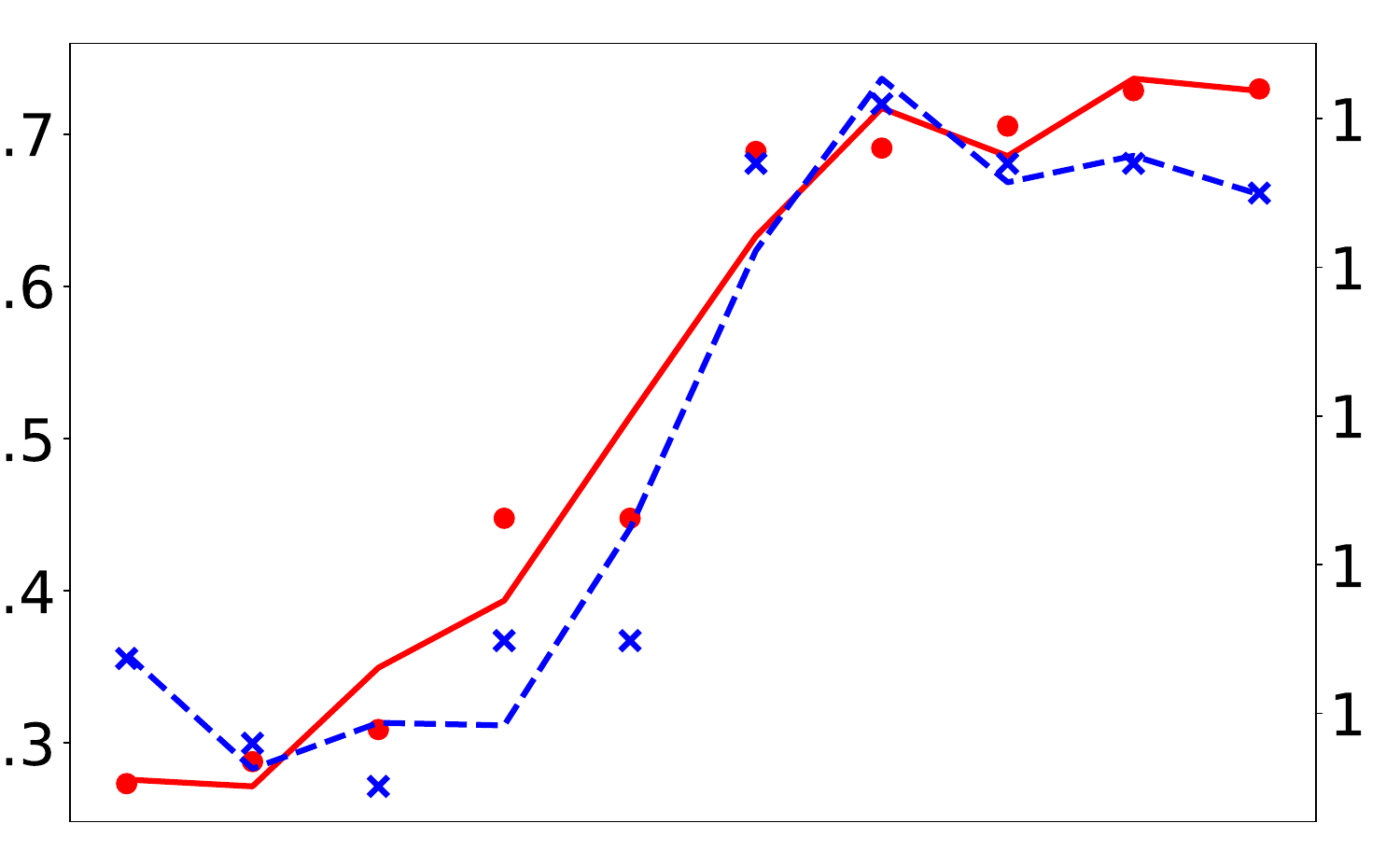}}
\vspace{-1mm}
\subfloat[\small{[Collab, AllDeepSets, 0.937, 0.984, 0.916]}]{\includegraphics[width=0.15\textwidth]{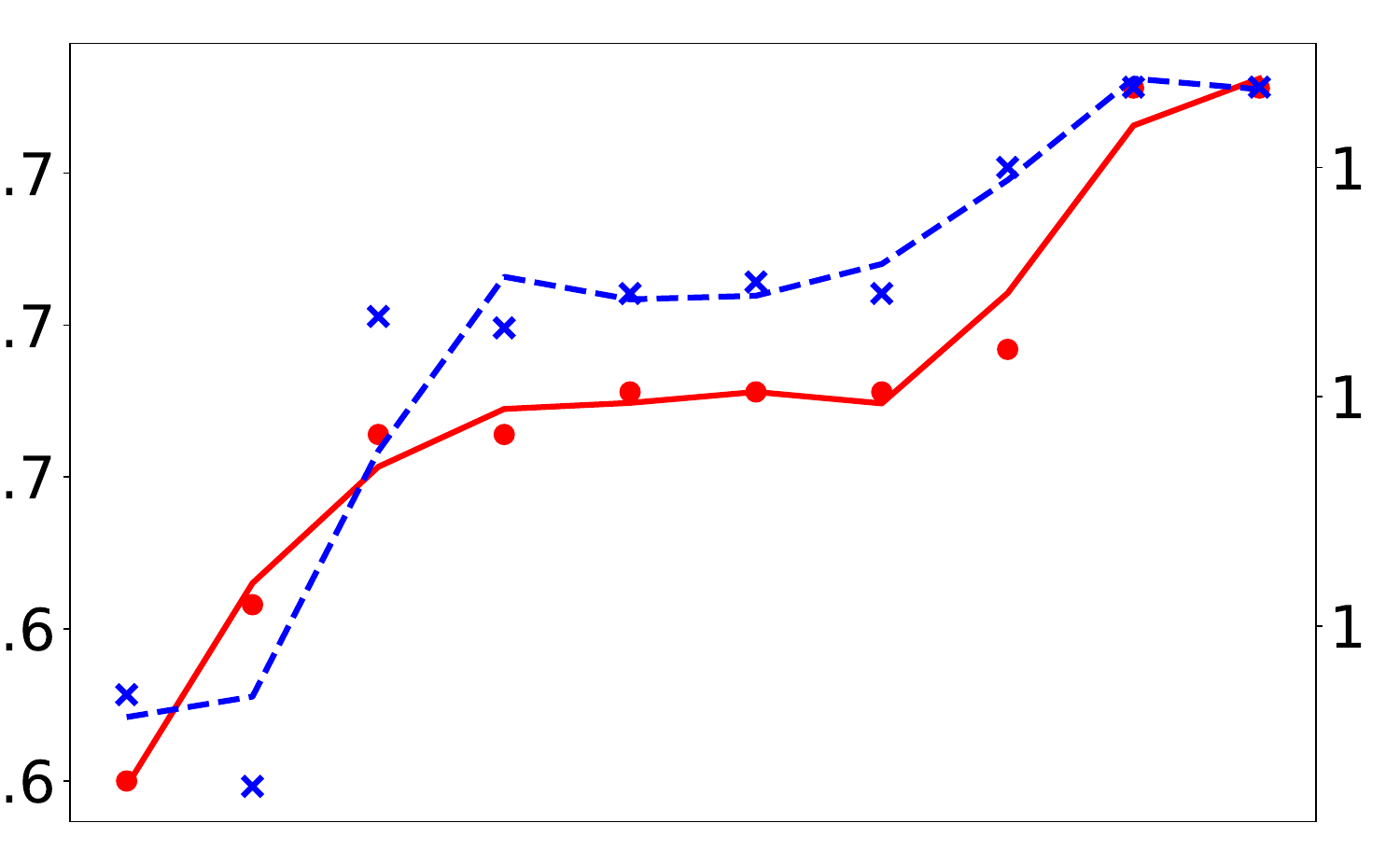}\includegraphics[width=0.15\textwidth]{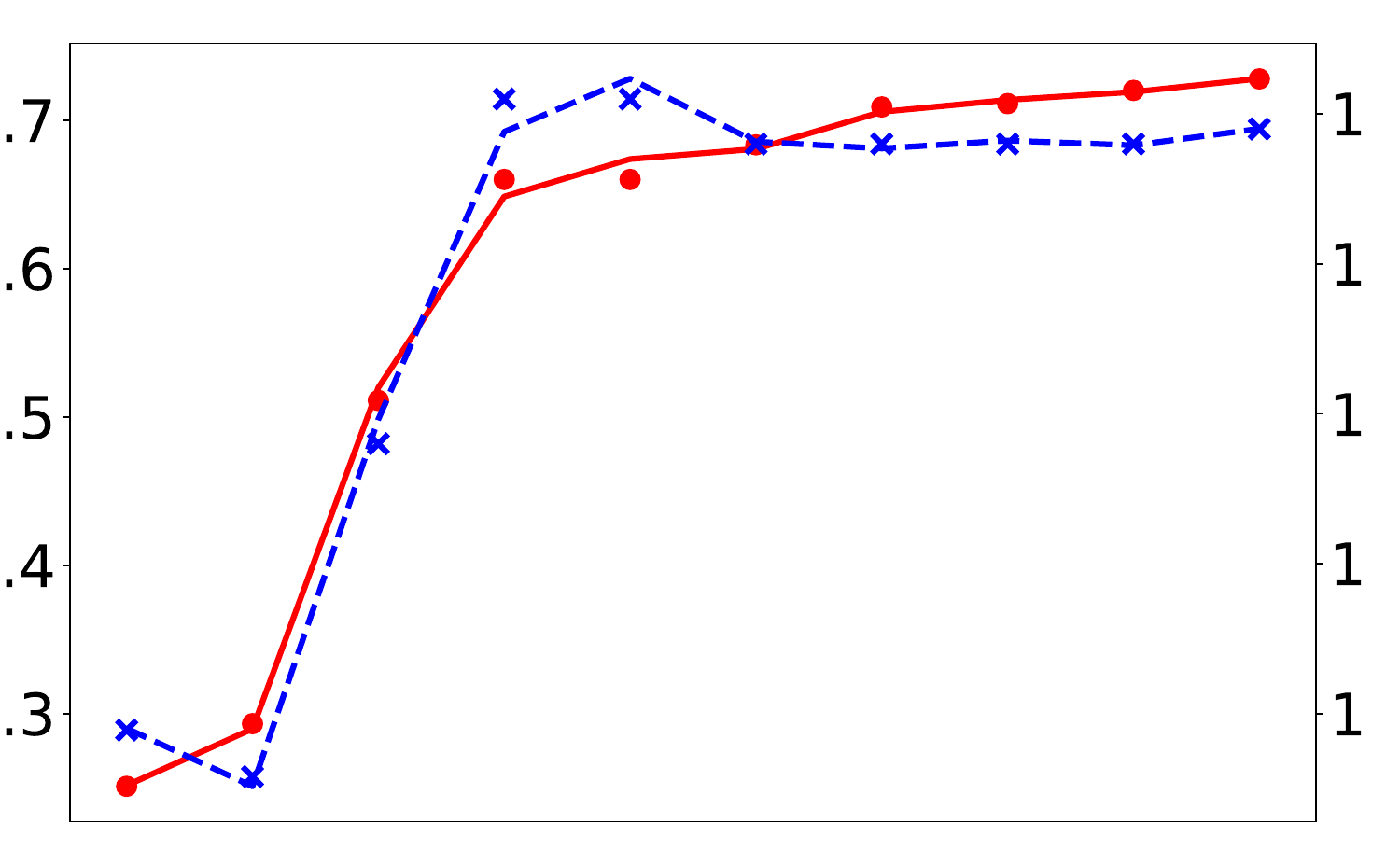}\includegraphics[width=0.15\textwidth]{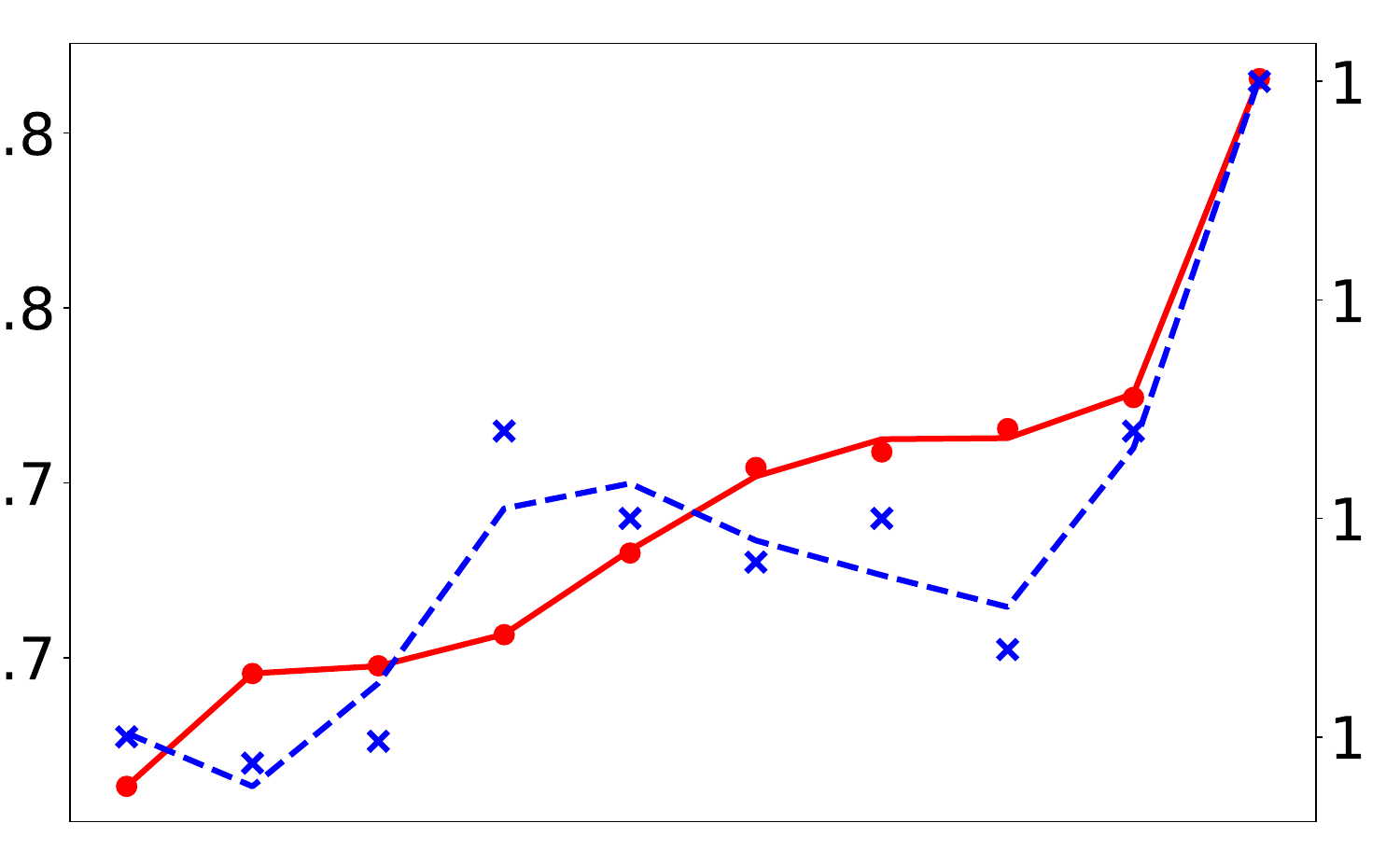}}
\vspace{-1mm}
\small{\caption{Consistency between empirical loss (Emp) and theoretical bounds (Theory). Each subgroup labeled by [graph type, model, $r_2$, $r_4$, $r_6$] shows the empirical loss, theoretical bound, and their curves via Savitzky-Golay filter \cite{savitzky1964smoothing} and models (i.e., UniGCN, M-IGN, and AllDeepSets), where each figure plots the results of synthetic datasets (i.e., SBM) and real datasets (i.e., DBLP and Collab); the figures, from left to right, show the results with 2, 4 and 6 propagation steps, where $r_2$, $r_4$, and $r_6 \in [-1,1]$ are the Pearson correlation coefficients between the two sets of points in each figure -- higher $r$ indicating stronger positive correlation.}
\label{fig:2trained}
\vspace{-3mm}
\Description{figure1}}
\end{figure}

\subsection{Discussion}
The generalization bounds of the discussed models depend reasonably on a) the properties on the hypergraphs (i.e., $D$, $R$, and $M$), b) the hidden dimension $h$, c) the number of propagation steps $L$, and d) the spectral norm of learned parameters. We summarize such relationships in Table \ref{tab:bounds}. In addition, UniGCN can be seen as the application of GCN architecture. For a graph classification task, if we treat the given graphs as hypergraphs with $M=1$ and $R=1$, we can obtain the following term within the Big-O notation in the Theorem \ref{theorem:gcn}: $\big(\sqrt{\frac{L^2B^2h\ln{(Lh)}\mathcal{W}_1\mathcal{W}_2 + \log\frac{mL}{\sigma}}{\gamma^2m}}\big)$, which aligns with the result in \cite{liao2021a}.

\section{Experiments}\label{sec:exp}
Our empirical study explores three questions: Q1) Does the empirical error follow the trends given by the theoretical bounds? Q2) How does training influence the degree to which the empirical error consistently aligns with the theoretical trends? Q3) How do the properties of the hypergraphs and statistics on HyperGNNs influence the empirical performance?

\subsection{Settings}
We conduct hypergraph classification experiments over a) two real-world datasets DBLP-v1 \cite{pan2013graph} and COLLAB\cite{yanardag2015deep}, b) twelve synthetic datasets generated based on the Erdos–Renyi (ER) \cite{hagberg2008exploring} model, and c) twelve synthetic datasets generated based on the Stochastic Block Model (SBM) \cite{abbe2018community}. The synthetic datasets are generated with diverse hypergraph statistics (i.e., $N$, $M$, and $R$). For each dataset, we generate a pool of input-label $(A, y)$ pairs using the HyperPA method \cite{do2020structural,lee2021hyperedges,lee2023m}, known for its ability to replicate realistic hypergraph patterns, and the Wrap method \cite{yu2021multi,xu2022one,liu2023revisiting} a standard method for generating label-specific features. The random train-test-valid split ratio is $0.5$-$0.3$-$0.2$. The implementation of the considered models is adapted from their original code \cite{chien2018community,ijcai2021p353,wang2024t}, with the propagation step $L$ being selected from $\{2, 4, 6, 8\}$. For each model on a given dataset, we examine the empirical loss and the theoretical generalization bound. The empirical loss is calculated either using the optimal Monte Carlo algorithm \cite{dagum2000optimal}, which guarantees an estimation error of no more than $\epsilon = 10\%$ with a probability of at least $\delta = 90\%$, or by averaging over multiple runs. The details of the bound calculations can be found in Sec \ref{app:bounds_cal} in the Appendix. In particular, a model with random weights refers to one with randomly initialized parameters that have not undergone training, and under this setting, the empirical loss is calculated by averaging over five runs.
% All the experiments are trained with NVIDIA RTX A4000. We use SGD as the optimizer for AllDeepSets and Adam for UniGCNs, M-GINs, T-MPHN and HGNN+. The random train-test-valid split ratio is $0.5$-$0.3$-$0.2$ and the hidden dimension $h = 64$.
% The learning rate is chosen from $\{0.002, 0.01, 0.05\}$. The batch size is $20$ and the number of epochs is $100$ with the early stop. We evaluate our models in three different propagation steps: $2, 4$, and $6$. The $\gamma$ in margin loss is set to $0.25$. 
The complete details are provided in Sec \ref{ap:details_of_exp} in the Appendix, including dataset statistics, sample generation, training settings, and test settings. 
% \vspace{-3mm}

\begin{table}[t!]
\renewcommand{\arraystretch}{1}
\centering
{\small%
\caption{Results of empirical loss (Emp) and theoretical bound (Theory). Each dataset is labeled by $[N, M, R, \max{(D)}]$.}
\label{tab:main}
%\resizebox{\textwidth}{!}
% {\small%
\begin{tabular}{@{}c @{\hspace{1mm}} l @{\hspace{3mm}} c  @{\hspace{1mm}} c @{\hspace{3mm}} c @{\hspace{1mm}}  c @{\hspace{3mm}} c @{\hspace{1mm}} c @{\hspace{1mm}}@{}}
\toprule
\multirow{2}{*}{\textbf{ER}}      & \multirow{2}{*}{\textbf{L}} & \multicolumn{2}{@{}l}{\textbf{[200,20,20,166]}}     & \multicolumn{2}{@{}l}{\textbf{[200,20,40,166]}}  & \multicolumn{2}{@{}l@{}}{\textbf{[600,60,60,587]}}   
 \\ \cmidrule{3-8} 
 &  & \textbf{Emp} & \textbf{Theory} & \textbf{Emp} & \textbf{Theory} & \textbf{Emp} & \textbf{Theory}\\ \midrule
{\multirow{3}{*}{\textbf{UniGCN}}}                   & \textbf{2}                  & 0.01                 & 6.46E+08       & 0.07                 & 1.07E+09          & 0.17                 & 2.80E+10    \\
\multicolumn{1}{c}{}                  & \textbf{4}                  & 0.08                 & 2.79E+14       & 0.01                 & 1.35E+15         & 0.18                 & 3.81E+17  \\
\multicolumn{1}{c}{}                                 & \textbf{6}                  & 0.08                 & 5.66E+19       & 0.04                 & 5.41E+20        & 0.12                 & 8.79E+24   \\ \midrule
{\multirow{3}{*}{\textbf{M-IGN}}}      & \textbf{2}                  & 0.01                 & 2.55E+12       & 0.03                 & 2.10E+12       & 0.19                 & 2.53E+14   \\
\multicolumn{1}{c}{}                   & \textbf{4}                  & 0.03                 & 6.35E+19       & 0.03                 & 7.74E+19      & 0.35                 & 9.13E+23 \\
\multicolumn{1}{c}{}                                 & \textbf{6}                  & 0.01                 & 8.36E+26       & 0.03                 & 7.07E+26       & 0.29                 & 1.25E+33 
\\ \midrule
{\multirow{3}{*}{\textbf{AllDeepSet}}} & \textbf{2}                  & 0.00                 & 1.04E+08       & 0.05                 & 1.56E+09        & 0.26                 & 1.39E+09    \\
\multicolumn{1}{c}{}                   & \textbf{4}                  & 0.03                 & 5.21E+12       & 0.04                 & 4.39E+12        & 0.27                 & 1.01E+14       \\
\multicolumn{1}{c}{}                                 & \textbf{6}                  & 0.05                 & 4.82E+15       & 0.03                 & 2.51E+16        & 0.25                 & 2.18E+18       
\\ \bottomrule
\end{tabular}%
}
\end{table}

\subsection{Results and observations}

\begin{figure*}[ht]
\centering
%\vspace{1 mm}
\subfloat{\label{fig: lagend_}\setlength{\fboxsep}{0pt}\fbox{\includegraphics[width=0.4\textwidth]{Styles/figure_2/legend_2.pdf}}}

\addtocounter{subfigure}{-1}
\subfloat[\small{[2, 0.800, 0.177]}]{\includegraphics[width=0.12\textwidth]{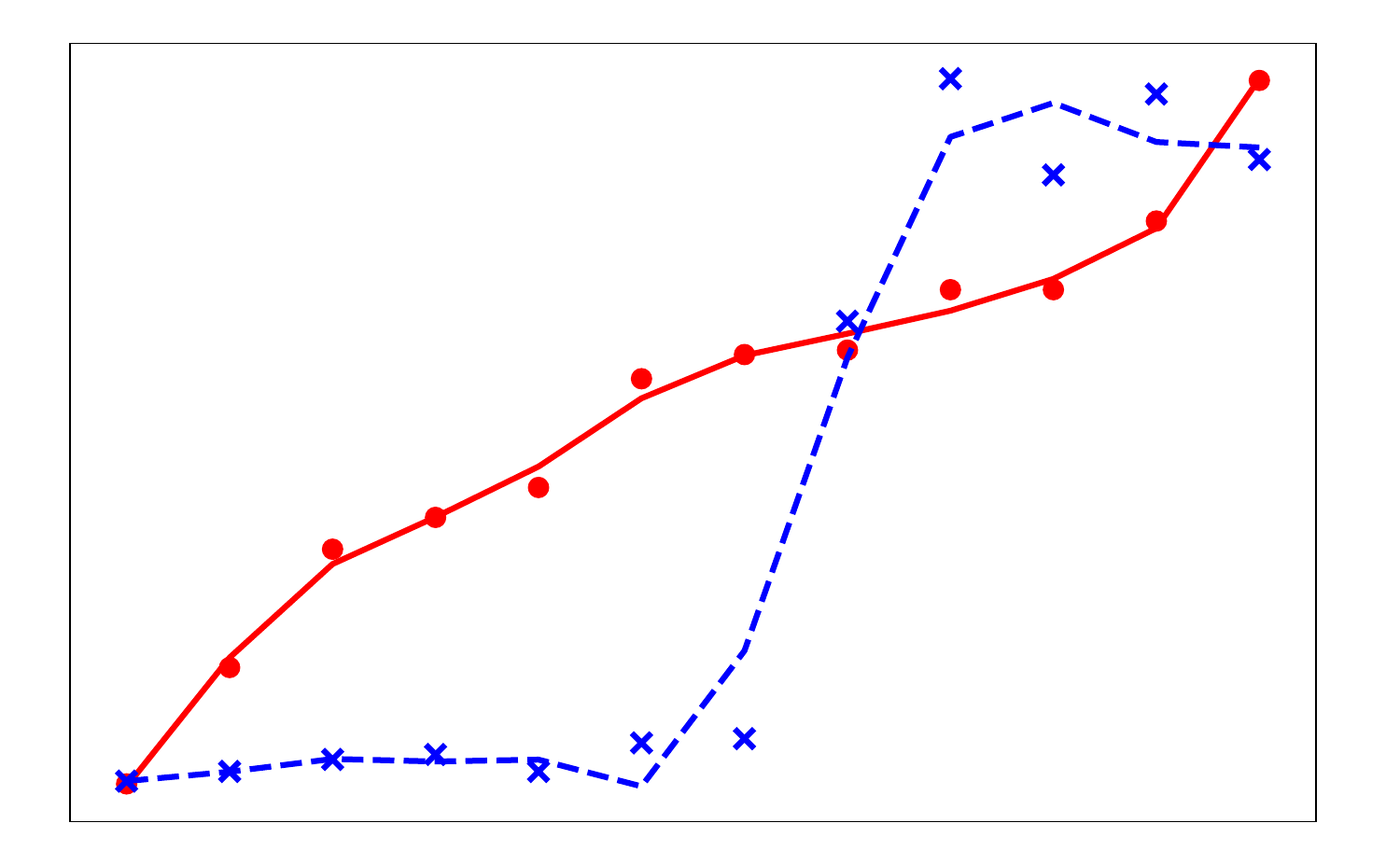}\includegraphics[width=0.12\textwidth]{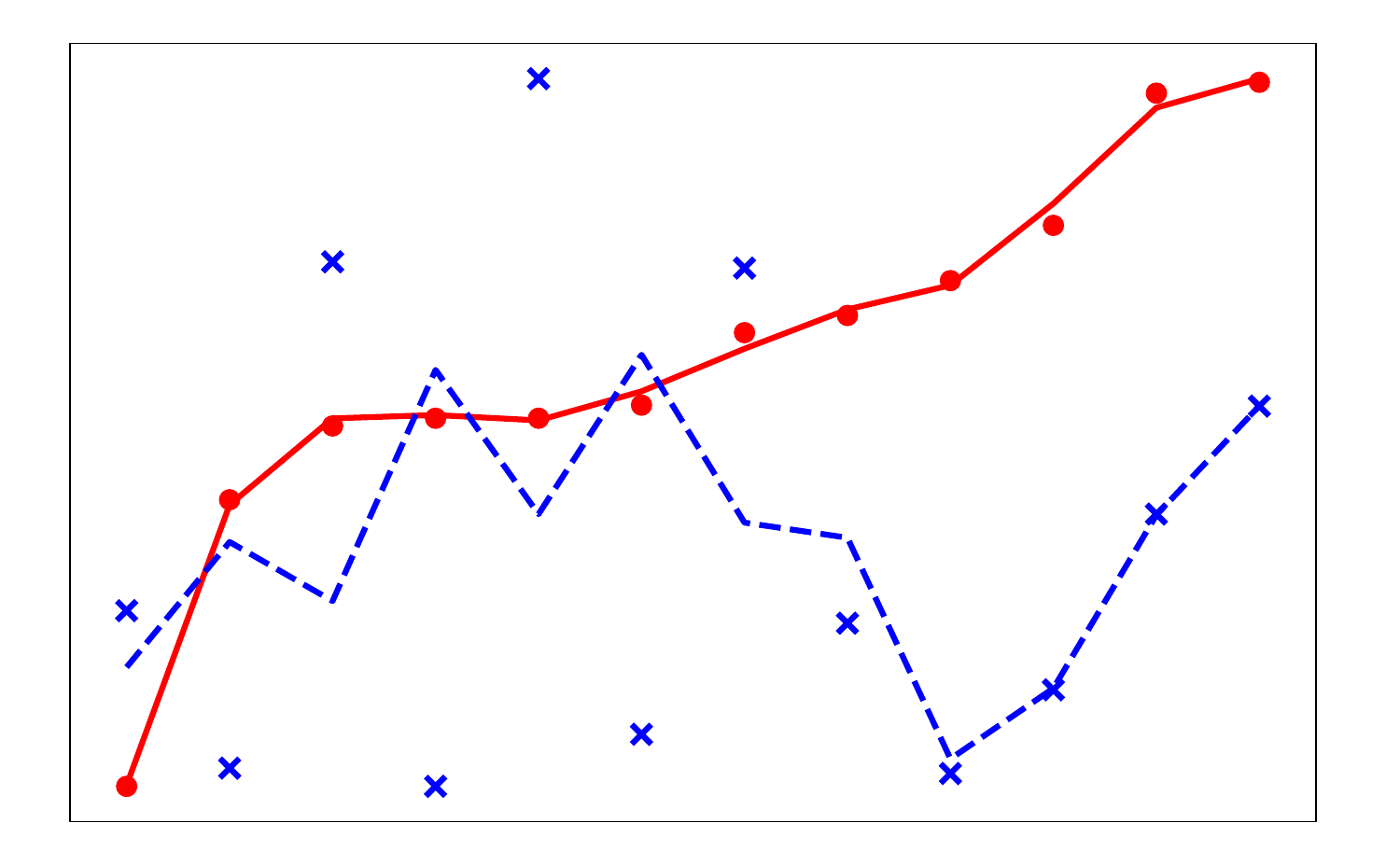}}\hspace{1mm}
\subfloat[\small{[4, 0.989, -0.091]}]{\includegraphics[width=0.12\textwidth]{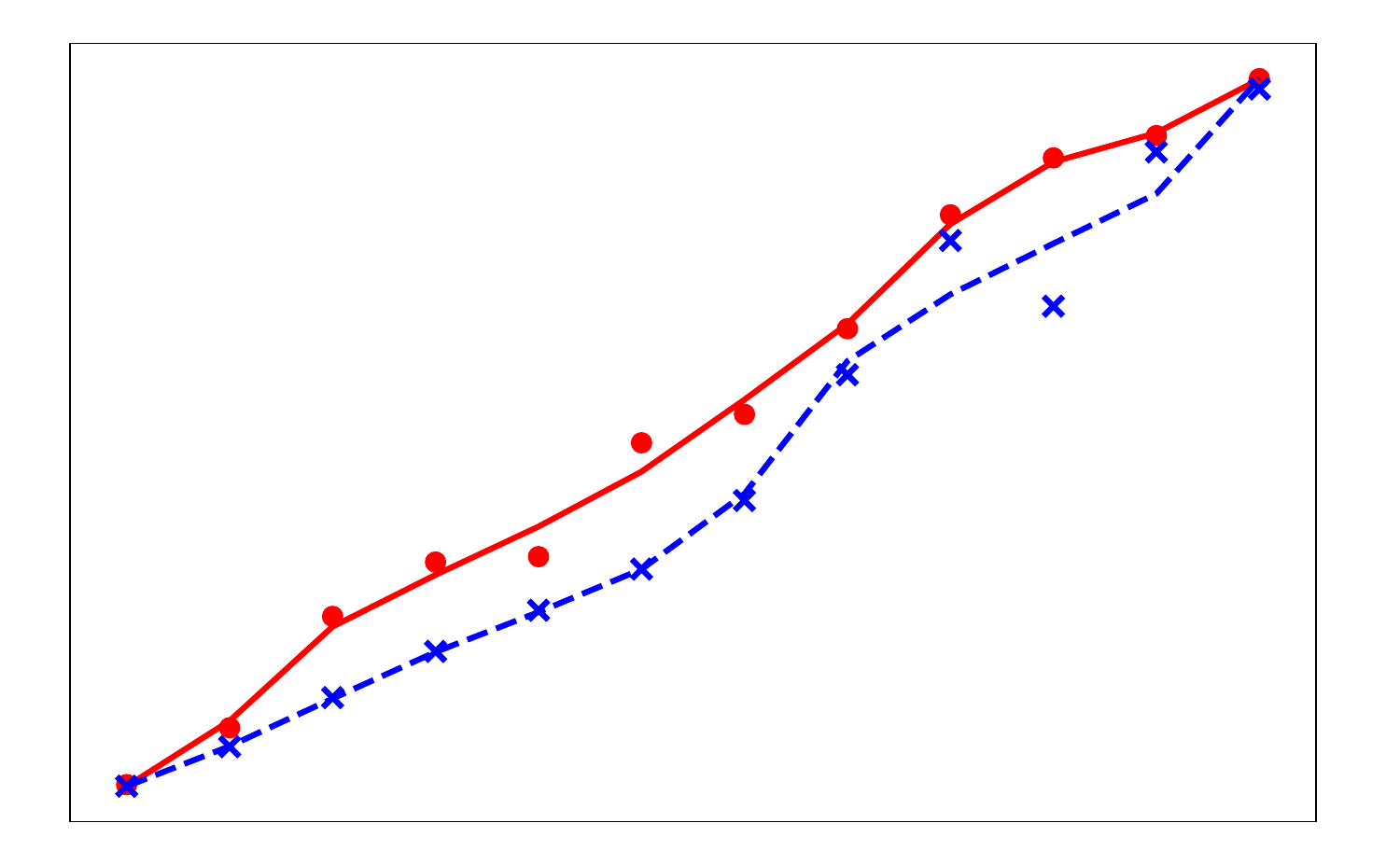}\includegraphics[width=0.12\textwidth]{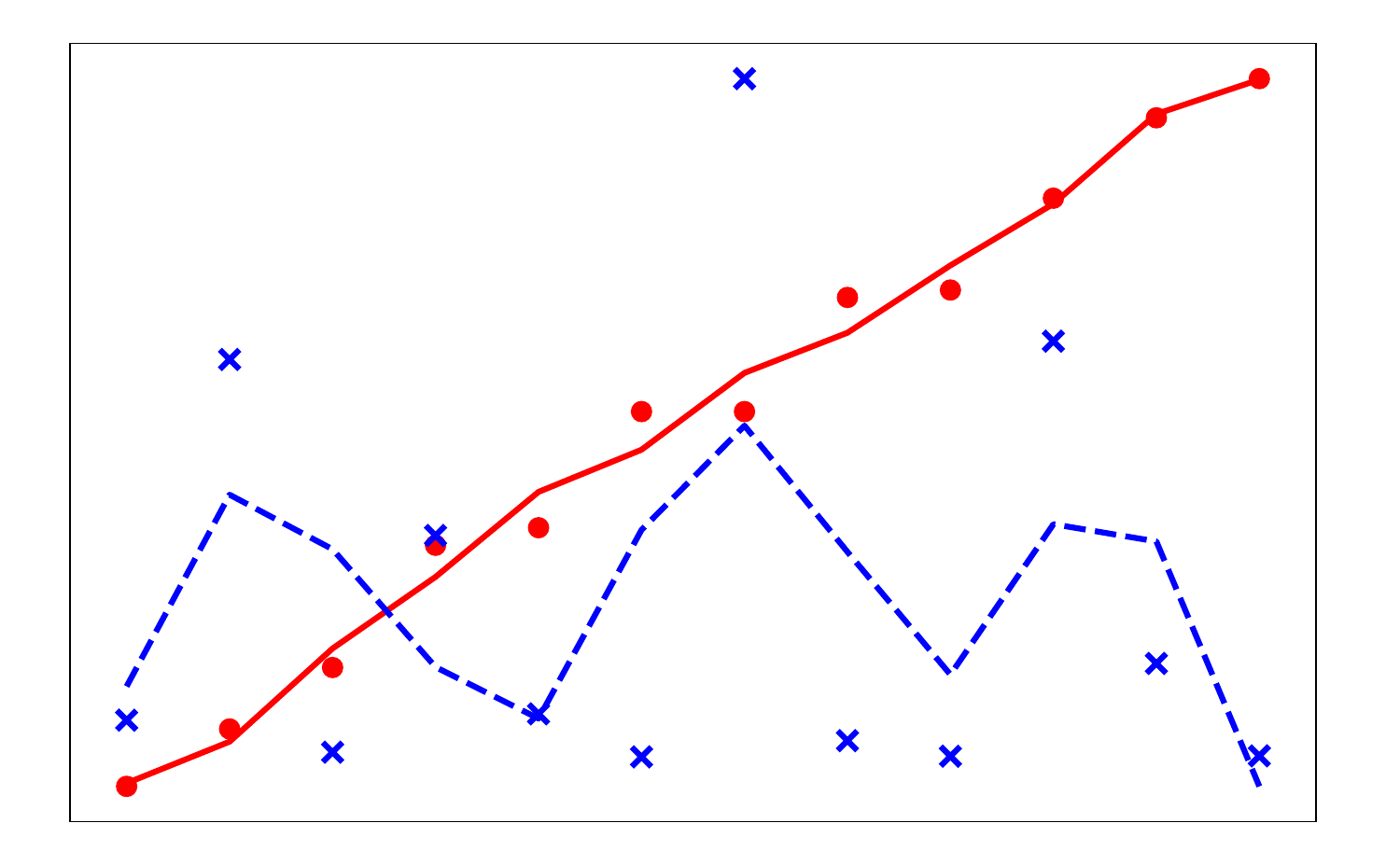}}\hspace{1mm}
\subfloat[\small{[6, 0.959, -0.247]}]{\includegraphics[width=0.12\textwidth]{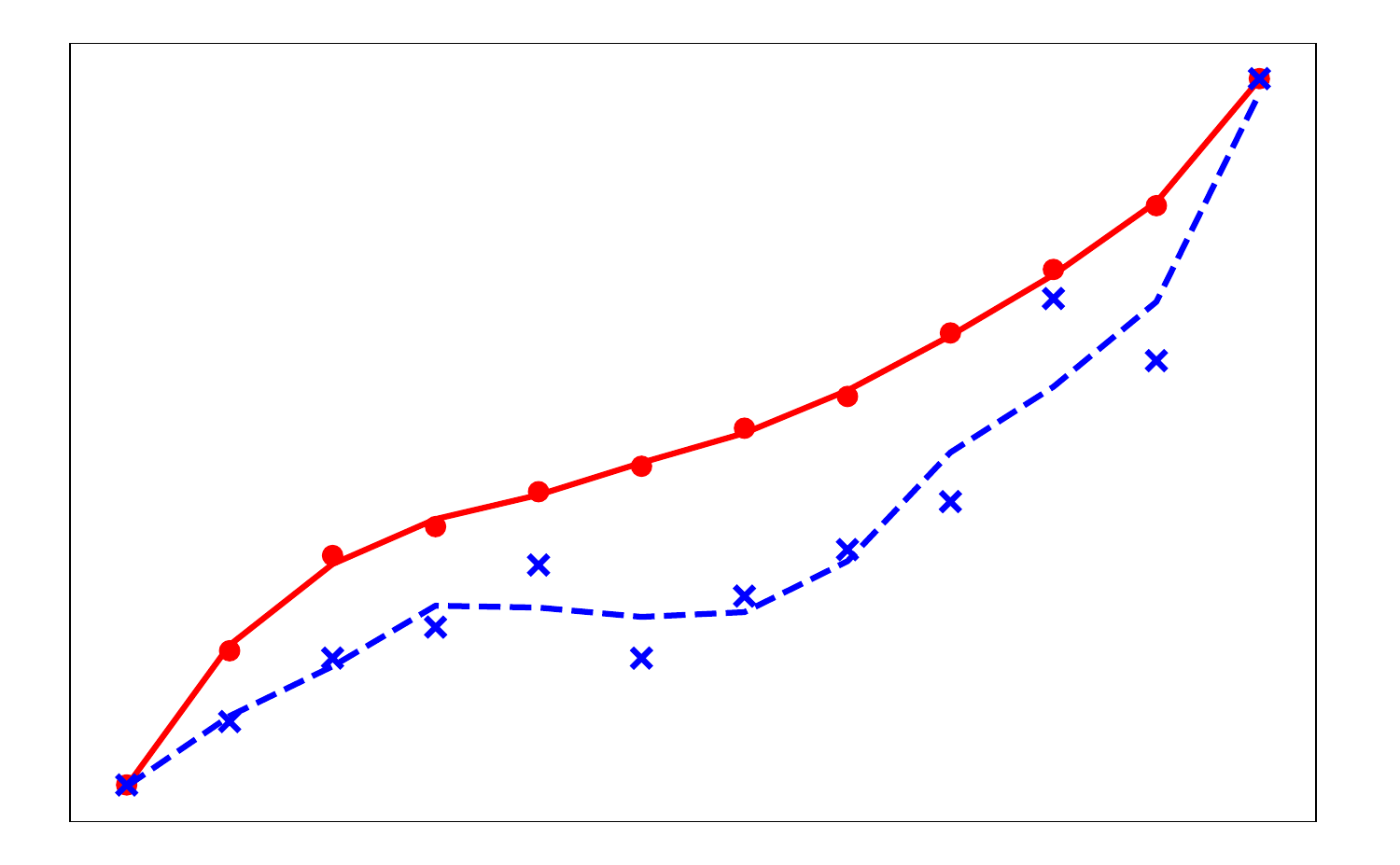}\includegraphics[width=0.12\textwidth]{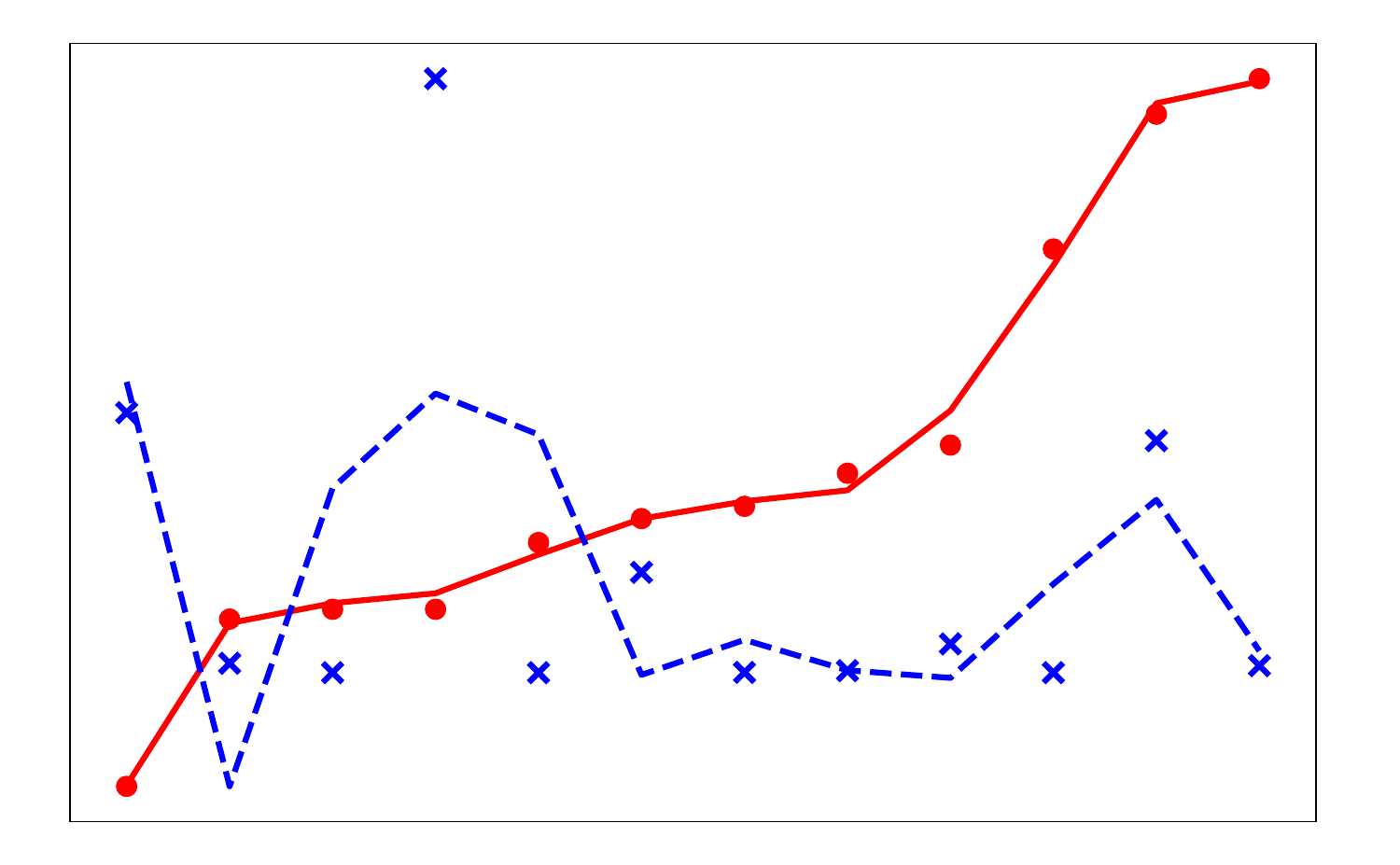}}\hspace{1mm}
\subfloat[\small{[8, 0.966, -0.547]}]{\includegraphics[width=0.12\textwidth]{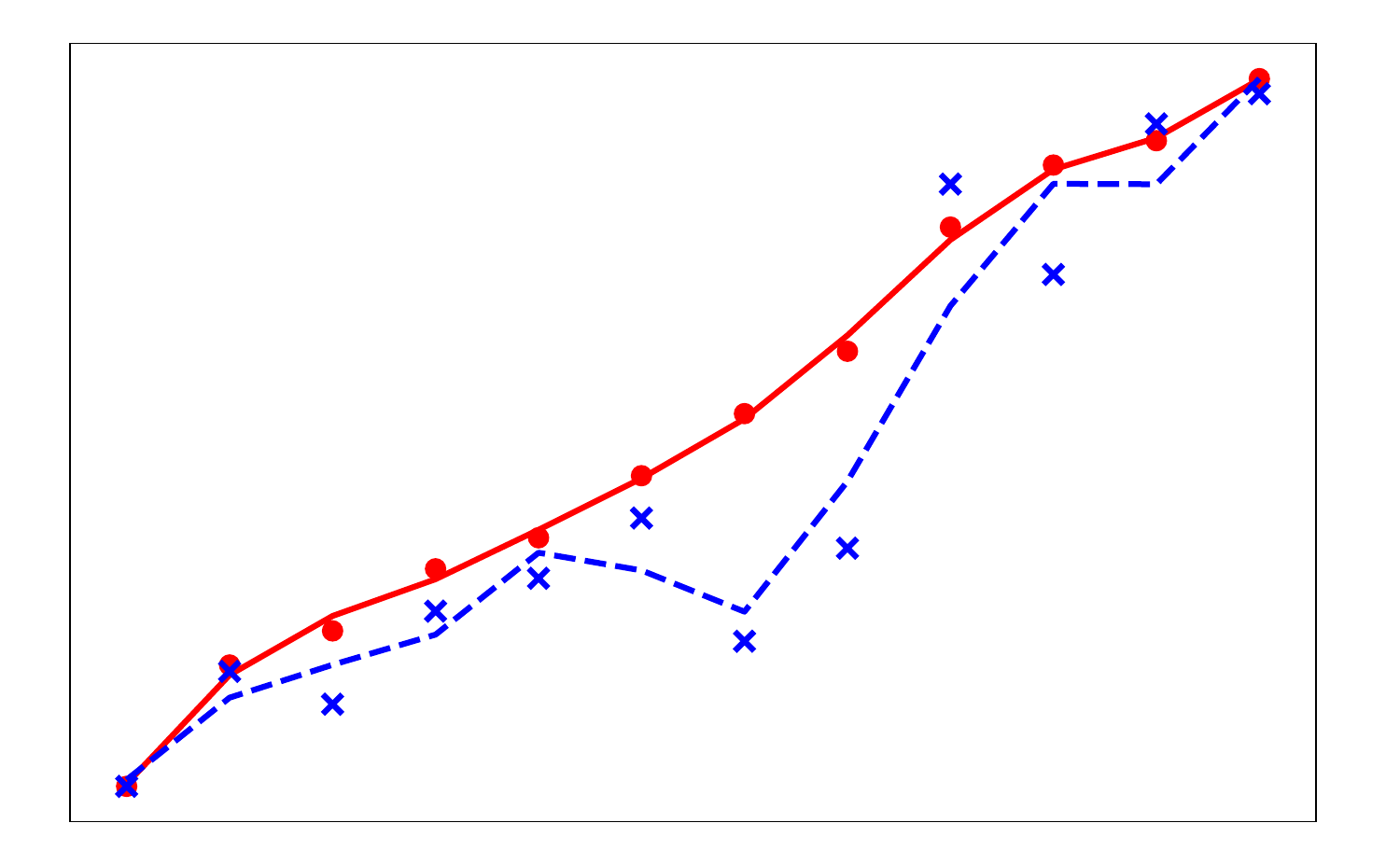}\includegraphics[width=0.12\textwidth]{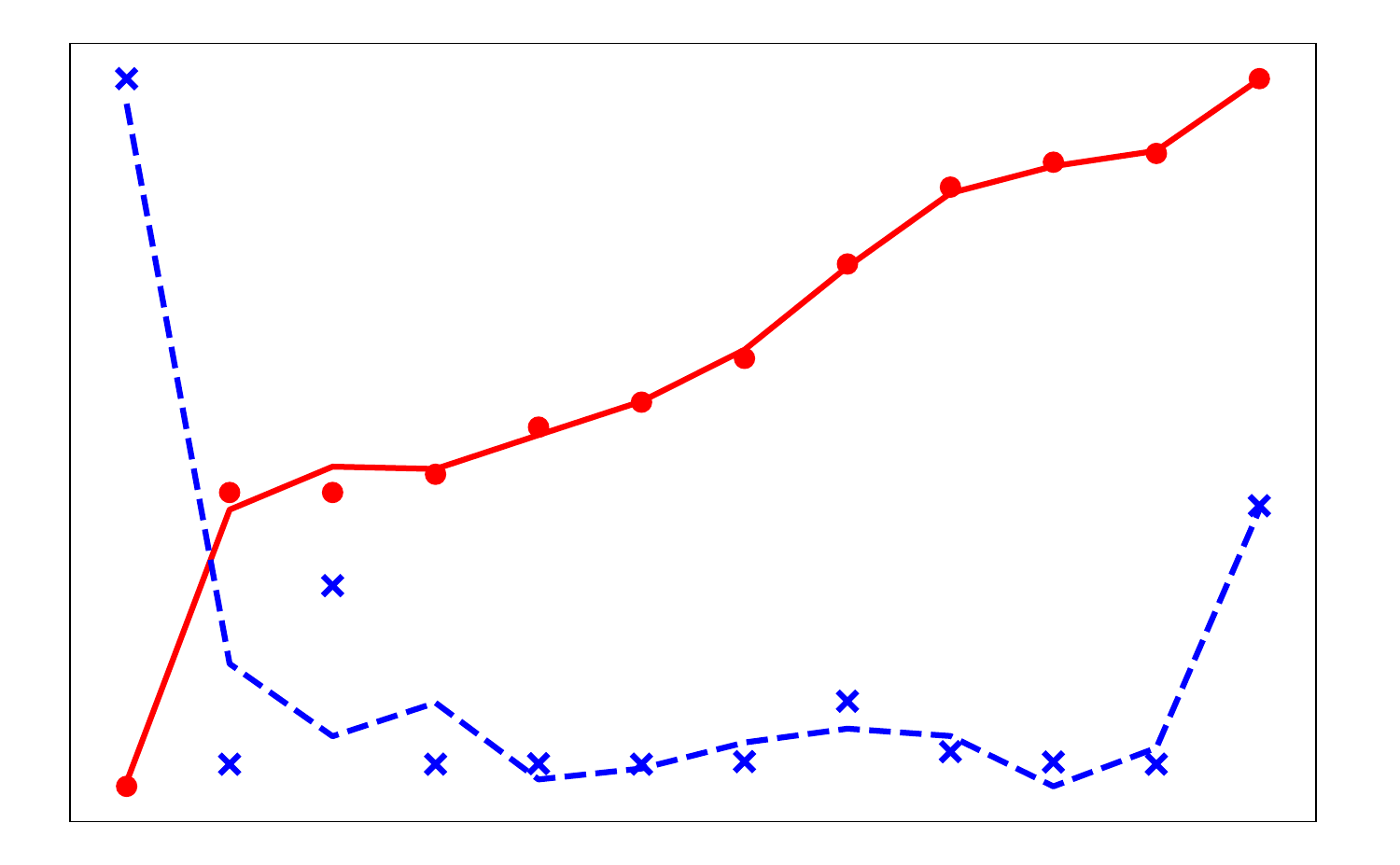}}\hspace{1mm}
\vspace{1mm}
\subfloat[\small{[2, 0.786, -0.891]}]{\includegraphics[width=0.12\textwidth]{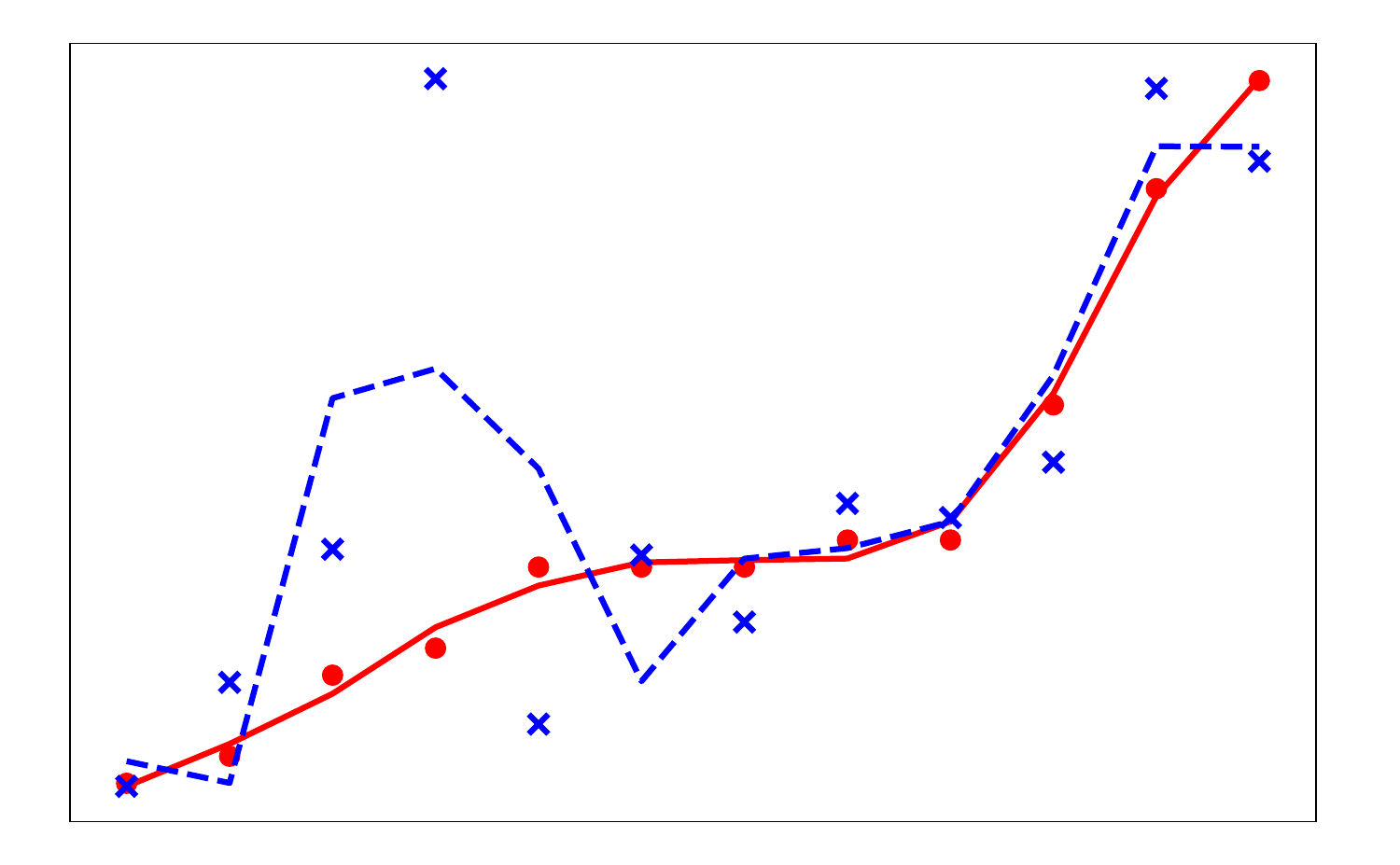}\includegraphics[width=0.12\textwidth]{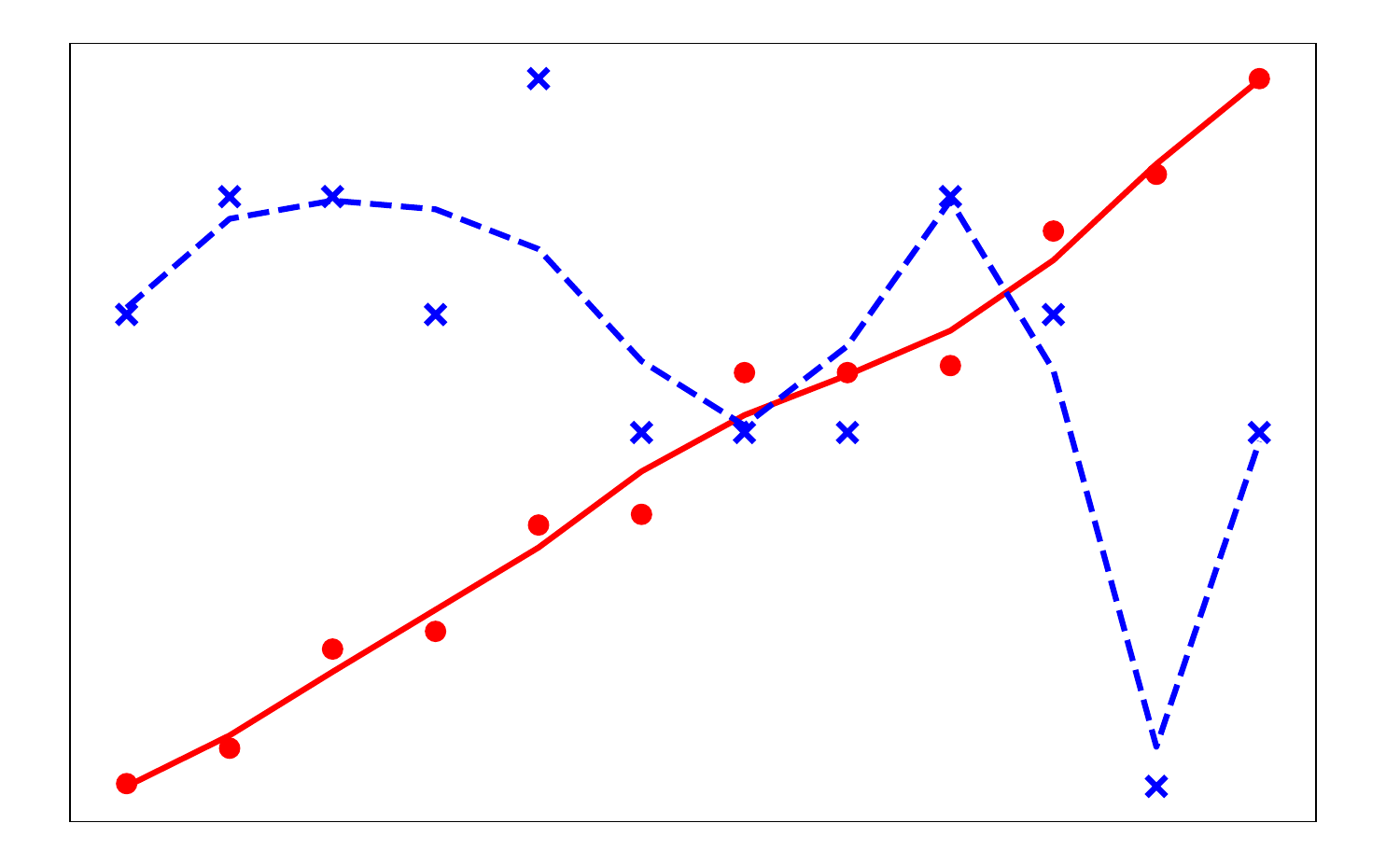}}\hspace{1mm}
\subfloat[\small{[4, 0.830, -0.083]}]{\includegraphics[width=0.12\textwidth]{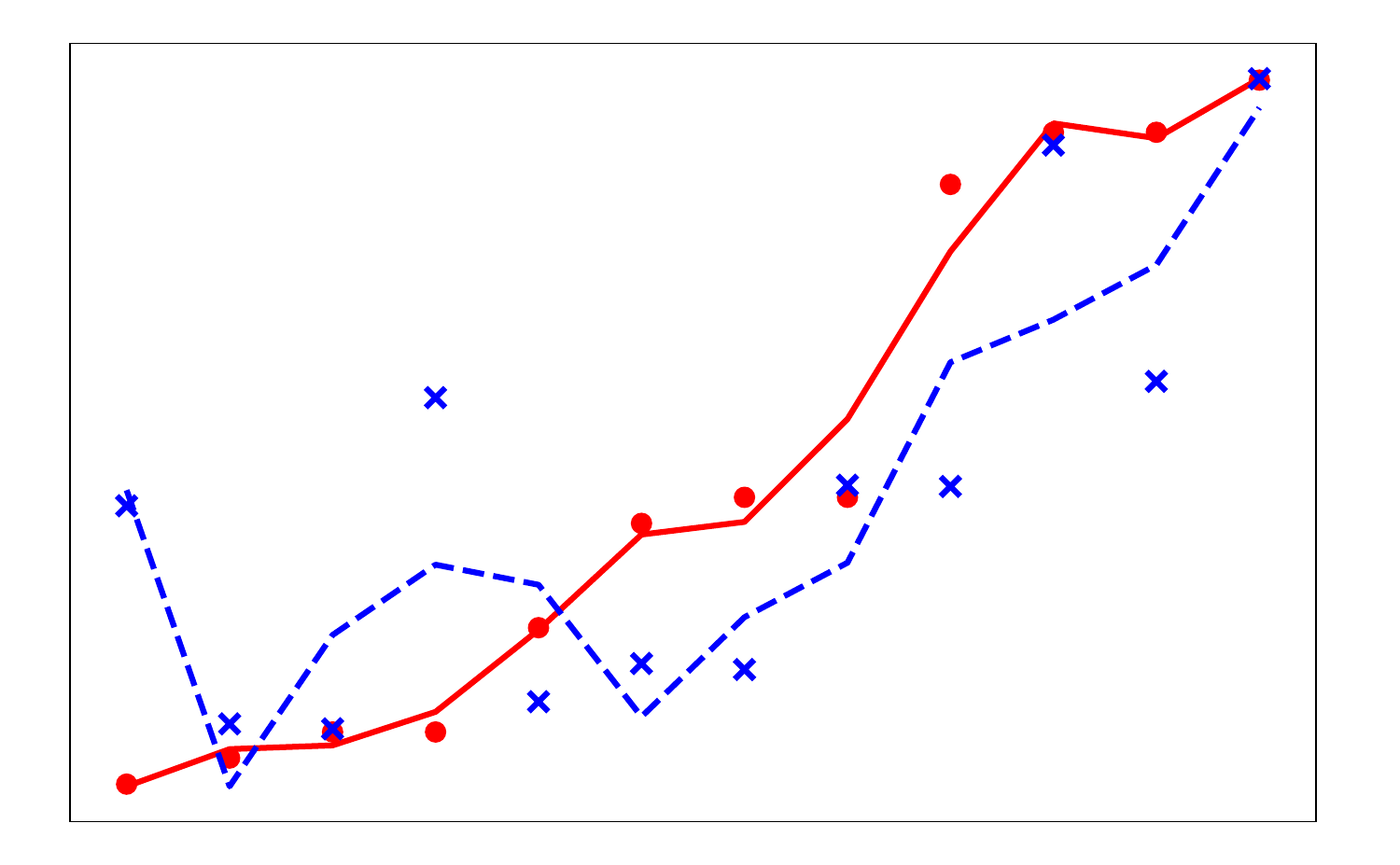}\includegraphics[width=0.12\textwidth]{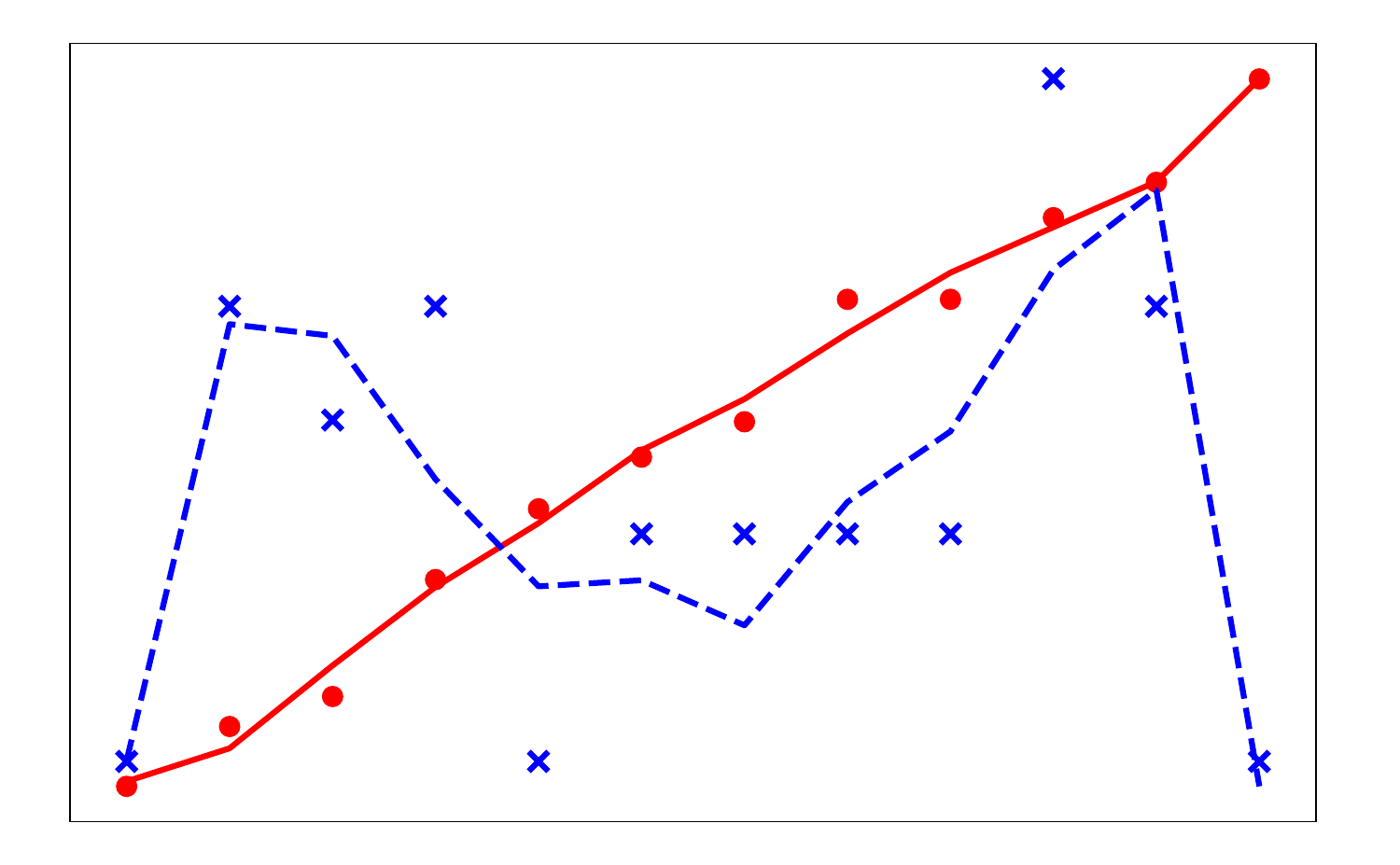}}\hspace{1mm}
\subfloat[\small{[6, 0.875, -0.147]}]{\includegraphics[width=0.12\textwidth]{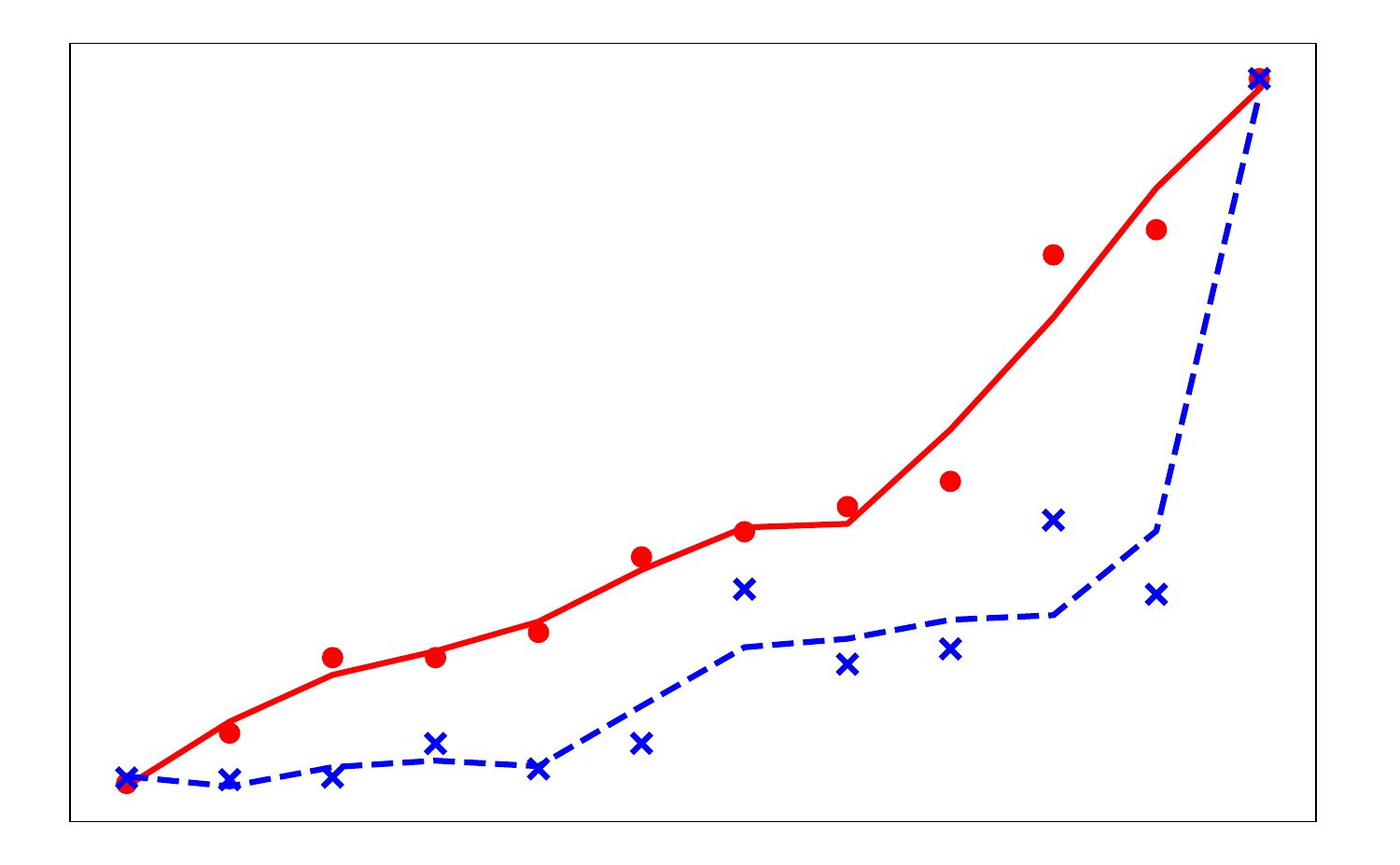}\includegraphics[width=0.12\textwidth]{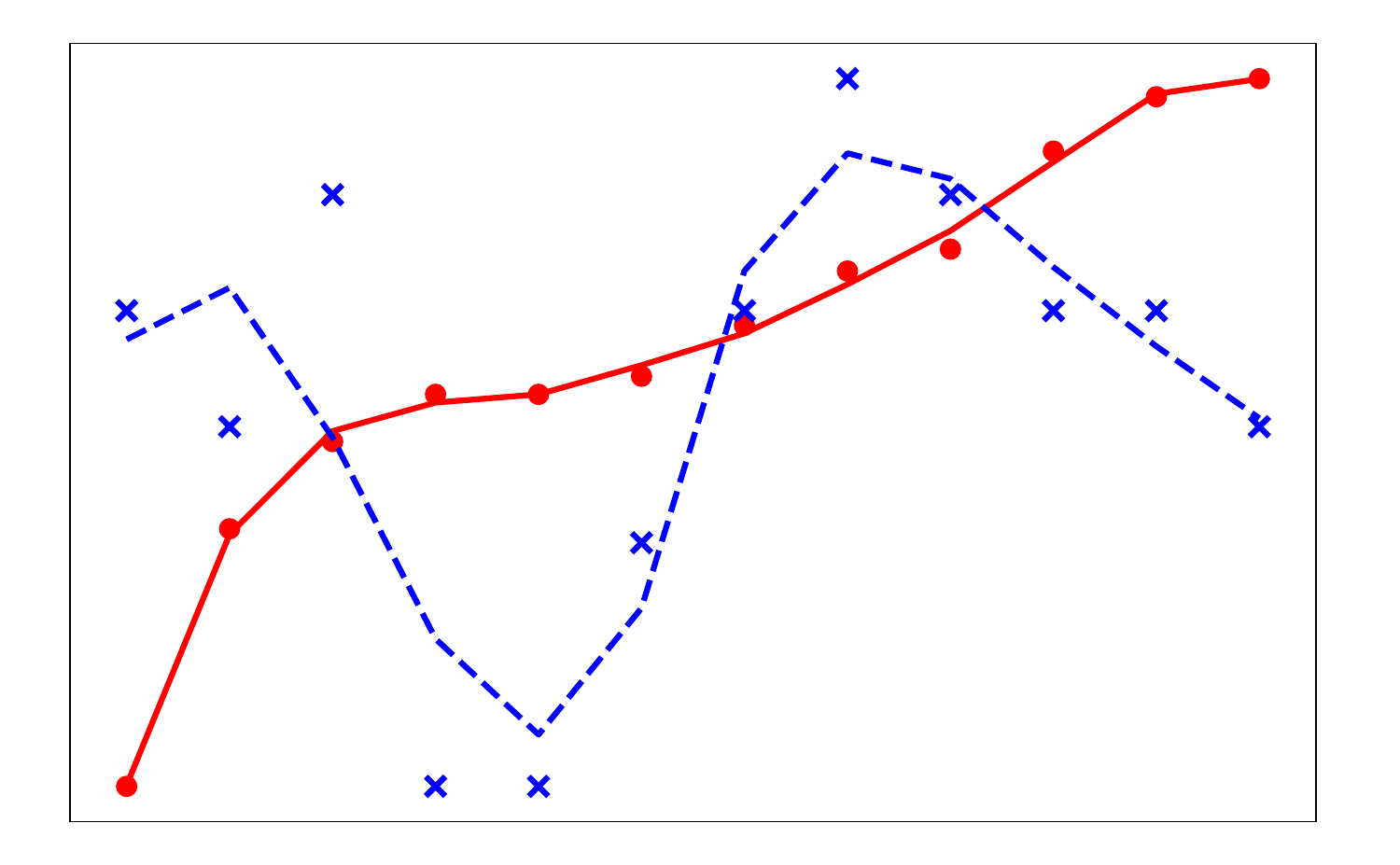}}\hspace{1mm}
\subfloat[\small{[8, 0.492, -0.192]}]{\includegraphics[width=0.12\textwidth]{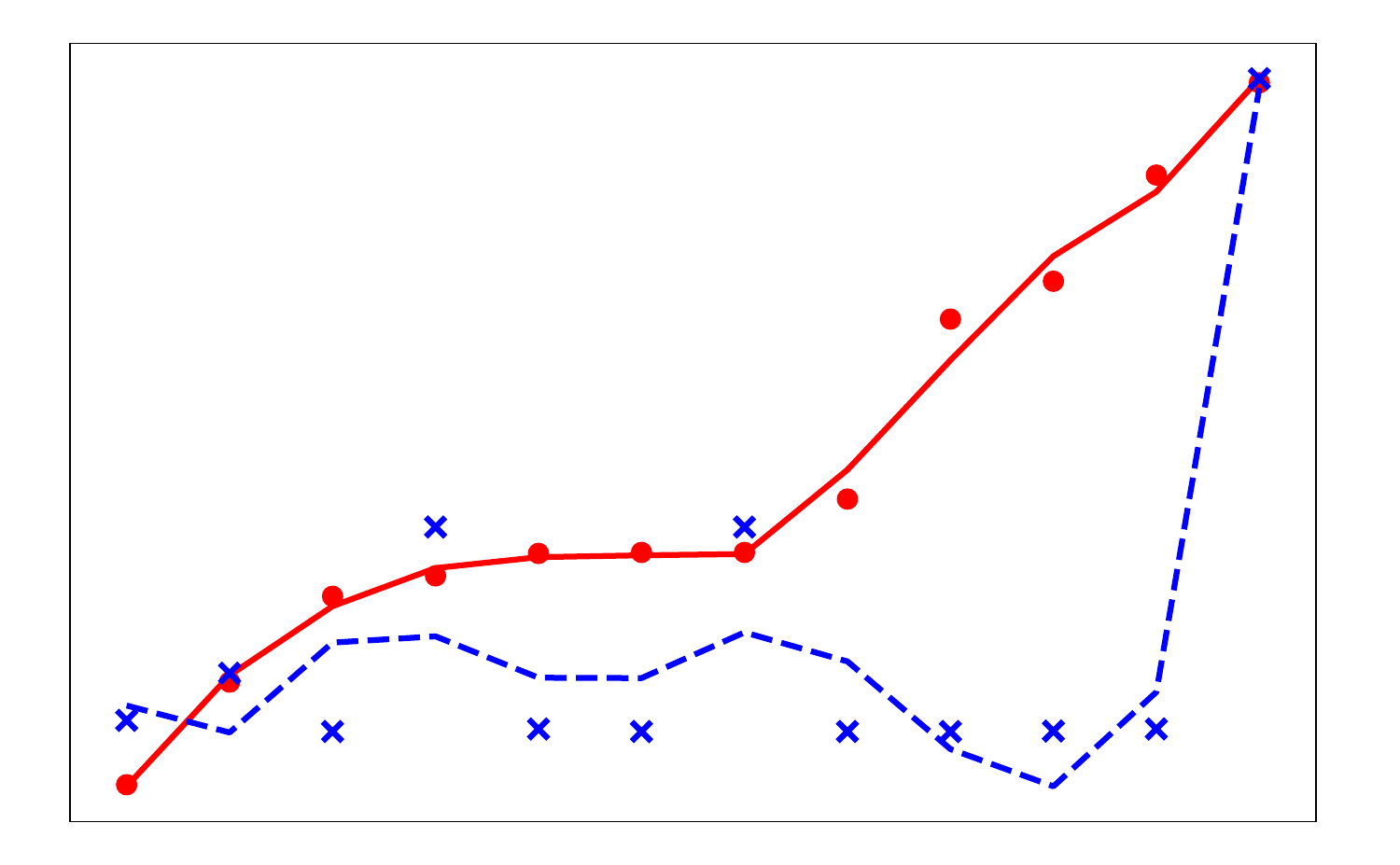}\includegraphics[width=0.12\textwidth]{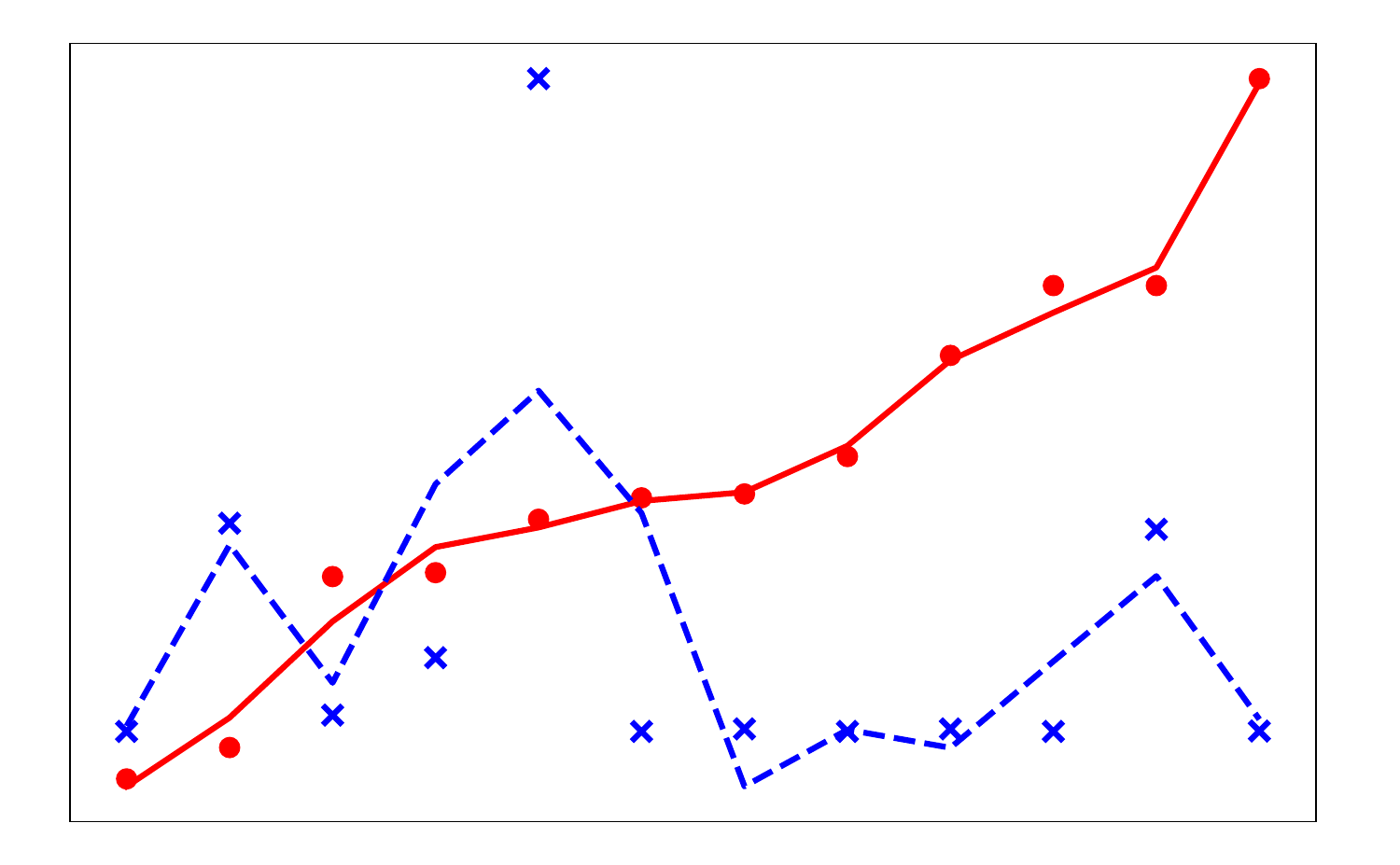}}\hspace{1mm}
% \subfloat[\small{[M-IGN, 0.475, 0.504]}]{\includegraphics[width=0.12\textwidth]{}\includegraphics[width=0.12\textwidth]{}}\hspace{1mm}
% \subfloat[\small{[AllDeepSets, 0.822, -0.083]}]{\includegraphics[width=0.12\textwidth]{}\includegraphics[width=0.12\textwidth]{}}\hspace{1mm}
% \subfloat[\small{[T-MPHN, 0.719, -0.247]}]{\includegraphics[width=0.12\textwidth]{}\includegraphics[width=0.12\textwidth]{}}\hspace{1mm}
\vspace{-1mm}
\small{\caption{Each subgroup, labeled by [$L$, $r_1$, $r_2$], presents the consistency results on ER datasets for UniGCN (top row) and AllDeepSets (bottom row) with trained (left with $r_1$) and random parameters (right with $r_2$).}
\label{fig:2random}
\vspace{-1mm}
\Description{figure1}}
\end{figure*}

\textbf{Consistency between theory and practice.}  Figure \ref{fig:2trained} together with Figures \ref{fig:2-1}, \ref{fig:3}, and \ref{fig:4} in Sec \ref{ap:details_of_exp} in the Appendix depicts the correlation between the empirical loss and theoretical bounds.  
% To our delight, we have promisingly seen that there indeed exists a non-trivial positive correlation; although we are currently not able to justify the degree of such correlation in theory, we hope that this is a starting point of such studies. 
% Figure \ref{fig:2} together with Figure \ref{fig:2-1,fig:3,fig:4} in Appendix \ref{ap:details_of_exp} 
We observe that the theoretical bounds indeed inform the empirical loss to a satisfactory extent with trained models; the Pearson correlation coefficients are mostly well above $0.0$ and even close to $1.0$ in many cases (e.g., Figures \ref{fig:2trained}(a)-mid, (b)-mid, and (c)-mid). Such an observation is promising in the sense that it is arguably overambitious to expect that the empirical loss matches perfectly with the theoretical bounds. However, we also observed corners where such a correlation is not strong (e.g., Figures \ref{fig:2-1} (a)-left and (b)-right); interestingly, such cases are mostly associated with T-MPHN, and one possible reason is that the row-wise normalization can introduce scale variations across different layers. This scaling effect may lead to more stable outputs but also makes it challenging for the theoretical bounds to accurately reflect the empirical loss.
% and one possible reason is that the tensor calculation involved in T-MPHN makes the perturbation analysis tighter, thereby rendering the generalization bound, as an upper bound accounting for worst-case performance, looser. In addition, while L2 regularization helps reduce model complexity by penalizing large weights, its impact might be less significant in T-MPHN due to its inherent structural complexity, leading to a smaller reduction in the complexity term of the bound.   

\textbf{The impact on training.} Figures \ref{fig:2random} and \ref{fig:2-1} show the correlation obtained of the considered models with both trained and random weights. The results reveal an improvement in the alignment between the theoretical bounds and the empirical loss of models with trained parameters, compared to those with random weights (e.g., Figures \ref{fig:2random} (d) and (e)). One possible reason for this improvement is the use of L2 regularization during training. For instance, the bound of UniGCN calculated by Equation \ref{equ:bound_gcn}, consists of two parts: the empirical loss (left term) and the complexity term (right term). The empirical loss (Equation \ref{equ:empirical_loss}), is typically smaller than the complexity term, which includes the KL divergence between the posterior and prior distributions over the model parameters and depends on the spectral norm of weights. L2 regularization reduces this complexity by penalizing large weights, thereby decreasing the KL divergence. This reduction enhances the alignment between the empirical loss and the theoretical bounds, resulting in a relatively small value of bounds that more accurately captures the model's generalization performance.

\textbf{Statistics on hypergraphs.} We examine the impact of hypergraph properties on the empirical performance and theoretical bounds. Table \ref{tab:main} reports the results on datasets associated with ER and SBM graphs. Results for other models and datasets can be found in Sec \ref{ap:additional_res} in the Appendix. In general, the empirical results indicate clear patterns where changes in the complexity of hypergraphs significantly impact the model performance, echoing the theoretical bounds. We observe that for each of $R, M$, and $D$, when these values are smaller, their variation has less impact on the loss. For instance, the empirical loss across the three models shows minimal fluctuation on the first two datasets in Table \ref{tab:main}, compared to the more significant variation observed on the last two datasets in Table \ref{tab:main_2}. Regarding $R$, the results show that its impact on M-IGN is less than that of other models. Finally, larger $D$ often leads to larger empirical loss on each model (e.g., the last column in Table \ref{tab:main}).

\textbf{The number of propagation steps.} We compare the performance of the considered HyperGNNs with different propagation steps. For datasets with smaller statistics, having more layers may result in larger loss increases (e.g., the results on UniGCN in the first dataset); in contrast, for datasets with larger statistics, complex models (i.e., larger $l$) produce better performance (e.g., the results on UniGCN in the last dataset in Table \ref{tab:main}). Sharing the same spirit of the principle of Occam's razor, we see that a shallow model is sufficient for simpler tasks but lacks the ability to deal with complex hypergraphs, which has also been observed by existing works \cite{bejani2021systematic}.

% \vspace{-1mm}
\section{Conclusion and Futher Discussions}\label{sec:limitation}
In this paper, we develop margin-based generalization bounds using the PAC-Bayes framework for four hypergraph models: UniGCNs, AllDeepSets, M-IGNs, and T-MPHN. These models were selected for their distinct approaches to leveraging hypergraph structures, enabling a comprehensive analysis of different architectures. Our empirical study reveals a positive correlation between the theoretical bounds and the empirical loss, suggesting that the bounds effectively capture the generalization behavior of these models. 

\textbf{Node classification task.} Our study primarily focused on deriving generalization bounds for the hypergraph classification problem. Besides, the node classification task \cite{wu2022hypergraph,duta2024sheaf} is important in hypergraph learning, with high relevance to web graph applications, i.e., user behavior prediction. While node classification focuses on predicting labels for individual nodes, it shares underlying principles with hypergraph classification, allowing our method to be naturally extended. In particular, we treat each node's output as a sub-neural network and analyze the generalization behavior of each node individually. The overall generalization bound for the model then follows from applying a union bound over all nodes and classes. More details including the problem statement and generalization bound on HyperGNNs can be found in Sec \ref{app:node_classification} in the Appendix. 

\textbf{Future works.} Several directions for future research remain to be explored. Our paper focuses on HyperGNNs based on ReLu activation. One interesting future direction is to explore margin-based bound for HyperGNNs with non-homogeneous activation functions. Another important future direction is to derive generalization bounds for HyperGNNs using classical frameworks like the Vapnik–Chervonenkis (VC) dimension and Rademacher complexity. The experiments reveal a positive correlation between the theoretical bound and empirical performance, we can further investigate the degree of such correlation in theory, to systematically analyze the varying levels of consistency between empirical loss and theoretical bounds across different models, such as T-MPHN.

\section*{Acknowledges}
This project is supported in part by National Science Foundation under IIS-2144285 and IIS-2414308.
% \newpage
\clearpage

\bibliographystyle{ACM-Reference-Format}
\bibliography{sample-base}

\newpage
\onecolumn
\appendix

\section*{Appendix}

\section{Notations and Definitions}\label{sec:notation}
We summarize the notations used throughout the paper in Table \ref{tab:notation}.

\begin{table}[ht]
\centering
\small{
\caption{Summary of notations.}
\label{tab:notation}
\begin{tabular}{@{}cl@{}}
\toprule
\textbf{Notations}            & \multicolumn{1}{c}{\textbf{Meaning}} \\ \midrule
\textbf{$\mathcal{G}$}                     &         the hypergraph \\
\textbf{$\mathcal{V}$}                     &         the node set \\
\textbf{$N$}                     &         the number of nodes \\
\textbf{$\mathcal{E}$}                     &         the hyperedge set \\
\textbf{$K$}                     &         the number of hyperedges \\
\textbf{$\mathcal{N}_i$}                     &         the neighbor set of node $v_i$ \\
\textbf{$\mathbf{X}$}                     &         the node features matrix \\
\textbf{$\mathbf{Z}$} &         the hyperedge features matrix \\
\textbf{$d$} &         the feature size of node and hyperedges  \\
\textbf{$B$} &           the $l_2$ norm bound of node and hyperedge features \\
\textbf{$M$} &             the maximum hyperedge size                 \\
\textbf{$R$}                     &            the maximum incident hyperedge set size                   \\
\textbf{$D$}                     &            the maximum node degree                   \\
\textbf{$\mathbf{X}[i,:]$}     &            the $i^{th}$ row of matrix $\mathbf{X}$                   \\ 
\textbf{$d_e$}                     &            the degree of hyperedge $e$                  \\
\textbf{$\epsilon$}                     &            the hyperparameter                    \\
\textbf{$e^M$}                     &            the $M^{th}$-order hyperedge                 \\
\textbf{$\mathcal{E}^M(v_i)$}    &            the $M^{th}$-order incident hyperedge of node $v_i$    \\
\textbf{$\mathcal{N}^M(v_i)$}    &            the $M^{th}$-order neighborhood of node $v_i$    \\
\textbf{$a_e$}                     &            the adjacency value of hyperedge $e$       \\ 
\textbf{$\mathbf{T}$}                     &            the matrix of the hyperedge weight       \\ 
\textbf{$\mathbf{J}$}                     &            the incident matrix       \\ 
\textbf{$S$}                     &            the set of training data                  \\ 
\textbf{$m$}                     &            the size of training data                   \\
\textbf{$C$}                     &            the number of classes                  \\
\textbf{$y$}                     &            the hypergraph label                  \\  
\textbf{$\mathcal{D}$}                     &            the distribution over sample space                  \\
\textbf{$A$}                     &            the input $A=(\mathcal{G}, \mathbf{X}, \mathbf{Z})$                  \\
\textbf{$F$}                     &            the hypothesis space                  \\  
\textbf{$\gamma$}                     &            the margin                   \\ 
\textbf{$\mathcal{L}_{\mathcal{D}}$}    &   the multiclass margin generalization loss        \\
\textbf{$\mathcal{L}_{S, \gamma}$}                     &            the multiclass margin empirical loss                   \\
\textbf{$P$}                     &            the prior distribution over the learned parameters                \\
\textbf{$Q$}                     &            the posterior distribution over the perturbed parameters            \\
\textbf{$h$}                     &            the maximum hidden dimension                  \\
\textbf{$L$}                     &            the number of propagation steps                   \\
\bottomrule
\end{tabular}%
}
\end{table}

\subsection{Terminologies used in T-MPHN} \label{s:t}
Recall that in the architecture of T-MPHN, each propagation step computes the hidden representation of nodes. We introduce the related terminologies in the T-MPHN in the following.
\begin{definition}[\textbf{$M^{th}$-order Hyperedge}\cite{wang2024t}]
    Given a hypergraph $\mathcal{G}= (\mathcal{V}, \mathcal{E})$ with the order $M$, for any hyperedge $e\in \mathcal{E}$, its $M^{th}$-order hyperedge set $e^M$ is given by
    \begin{equation*}\label{equ:e^m}
        e^M = 
        \begin{cases}
            \{e\}, & if |e|=M \\
            \text{span}^M(e), & if |e|< M
        \end{cases},
    \end{equation*}
where
    \begin{equation*}
        \text{span}^M(e) = \{e'|\text{unique}(e')=e, |e'|=M\},
    \end{equation*}
    where $\text{unique}(e')=e$ means the distinct elements in $e'$ is the same as $e$, and $|e'|$ is the number of (possibly nonunique) elements in $e'$. Notice that the size of $\text{span}^M(e)$ is equal to ${\binom{M-1}{|e|-1}}$.
\end{definition}

\begin{definition}[\textbf{$M^{th}$-order Neighborhood of A Node}\cite{wang2024t}]\label{def:neighborhood of a node}
    Given a hypergraph $\mathcal{G}= (\mathcal{V}, \mathcal{E})$ with the order $M$, for any node $v\in \mathcal{V}$, its $M^{th}$-order incidence edge set is
    \begin{equation*} \label{eqa:E^M}
        E^{M}(v) = \{e^M|e\in \mathcal{E}, v\in e\}.
    \end{equation*}
    Then the $M^{th}$-order neighborhood of $v$ is defined as 
    \begin{equation*}\label{eqa:neighborhood_of_node}
        \mathcal{N}^{M}(v) \define \{\pi(e^M(-v))|e^M \in E^M(v)\},
    \end{equation*}
    where $e^M(-v)$ deletes exactly one node of $v$ from each $M^{th}$-order hyperedge in $e^M$, and $\pi(\cdot)$ represents permutation of the remaining nodes.
\end{definition}

\begin{definition}[\textbf{Adjacency Value $a_e$}\cite{wang2024t}]
    Given an adjacency tensor of a hypergraph, the adjacency value $a_e$ associated with a hyperedge $e$ is a function of $(|e|, M)$:
    \begin{equation*} \label{equ:a_e}
        a_e = \frac{|e|}{\sum_{i=0}^{|e|}(-1)^i {\binom{|e|}{i}}(|e|-i)^M}.
    \end{equation*}
    % where 
    % \begin{equation*}
    %     \alpha = \sum_{i=0}^{|e|}(-1)^i {|e| \choose i}(|e|-i)^M
    % \end{equation*}
\end{definition}

\section{Analytical Framework of PAC-Bayes}\label{app:pac}
This section discusses the theoretical framework underlying the derivation of margin-based generalization bounds. The following lemma, adapted from the work of Neyshabur et al. \cite{neyshabur2018pac}, establishes a probabilistic upper bound on the generalization error of a predictor parameterized by weights, using the PAC-Bayes framework. This result quantifies how the empirical performance of the model relates to its expected performance on unseen data while incorporating the impact of weight perturbations. The bound leverages the KL divergence between the prior and posterior distributions over the model parameters, along with a condition that ensures stability under perturbations.
\begin{lemma}\cite{neyshabur2018pac} \label{lemma:Generalization_margin_bound}
    Consider a predictor $f_{\vec{w}}(A): \mathcal{A} \to \mathbb{R}^C$ parameterized by $\vec{w}$. Let $P$ be any distribution over $\vec{w}$ that is independent of the training data, and $Q$ be any perturbation distribution over $\vec{u}$. For each $\gamma, \delta >0$, with probability no less than $1-\delta$ over a training set $S$ of size $m$, for each fixed $\vec{w}^*$, we have 
    \begin{equation}{\label{equ:margin_based_bound}}
        \mathcal{L}_{\mathcal{D}}(f_{\vec{w}^*})\leq \mathcal{L}_{S, \gamma}(f_{\vec{w}^*}) + 4\sqrt{\frac{\KL(\vec{w}^*+\vec{u}||\vec{w})+\ln{\frac{6m}{\delta}}}{m-1}},
    \end{equation}
    where $\vec{u}\sim Q$ and $\vec{w}\sim P$, provided that we have the following perturbation condition for each $\vec{w}$
    \begin{align*}
     \Pr_{\vec{u}\sim Q}\Big[\max_{A\in \mathcal{A}}\norm{f_{\vec{w}+\vec{u}}(A)-f_{\vec{w}}(A)}_{\infty}<\frac{\gamma}{4}\Big]\geq \frac{1}{2}.
    \end{align*}
\end{lemma}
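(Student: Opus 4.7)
The plan is to execute the standard two-margin reduction from randomized to deterministic predictors within the PAC-Bayes framework, following McAllester and Neyshabur et al. I proceed in three stages: (i) apply a generic PAC-Bayes inequality for the $[0,1]$-bounded margin-$\gamma/2$ loss to the randomized predictor obtained by perturbing $\vec{w}^*$; (ii) use the perturbation hypothesis to transfer that bound onto the deterministic $f_{\vec{w}^*}$ at margins $0$ and $\gamma$; and (iii) consolidate the constants into the stated factor of $4$.

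For stage (i), I would invoke a McAllester-type PAC-Bayes theorem: for a $[0,1]$-bounded loss and a fixed prior $P$ independent of $S$, with probability at least $1-\delta$ over $S$, simultaneously for every posterior $Q'$ (possibly depending on $S$),
\begin{equation*}
\mathbb{E}_{\tilde{\vec{w}}\sim Q'}\!\bigl[\mathcal{L}_{\mathcal{D},\gamma/2}(\tilde{\vec{w}})\bigr] \;\leq\; \mathbb{E}_{\tilde{\vec{w}}\sim Q'}\!\bigl[\mathcal{L}_{S,\gamma/2}(\tilde{\vec{w}})\bigr] + c\sqrt{\frac{\KL(Q'\|P)+\ln(6m/\delta)}{m-1}},
\end{equation*}
with $c$ a small universal constant. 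Instantiating $Q'$ as the law of $\vec{w}^*+\vec{u}$ with $\vec{u}\sim Q$ makes the KL term coincide with $\KL(\vec{w}^*+\vec{u}\|\vec{w})$ in the statement.

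For stage (ii), let $G=\{\vec{u}:\max_{A\in\mathcal{A}}\|f_{\vec{w}^*+\vec{u}}(A)-f_{\vec{w}^*}(A)\|_\infty<\gamma/4\}$, so that $\Pr_{\vec{u}}[G]\geq 1/2$ by the perturbation hypothesis. A pointwise triangle inequality shows that on $G$, for every input $A$: (a) if $f_{\vec{w}^*}$ misclassifies $A$, then $f_{\vec{w}^*+\vec{u}}$ incurs a $\gamma/2$-margin loss on $A$; and (b) if $f_{\vec{w}^*+\vec{u}}$ incurs a $\gamma/2$-margin loss on $A$, then $f_{\vec{w}^*}$ incurs a $\gamma$-margin loss on $A$. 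Averaging (a) over $\mathcal{D}$ and (b) over $S$ and subtracting yields, on $G$,
\begin{equation*}
\mathcal{L}_{\mathcal{D}}(f_{\vec{w}^*}) - \mathcal{L}_{S,\gamma}(f_{\vec{w}^*}) \;\leq\; \mathcal{L}_{\mathcal{D},\gamma/2}(f_{\vec{w}^*+\vec{u}}) - \mathcal{L}_{S,\gamma/2}(f_{\vec{w}^*+\vec{u}}).
\end{equation*}
Since the left-hand side is deterministic and this inequality holds on an event of probability at least $1/2$, taking conditional expectation on $G$, then relating it to the unconditional expectation via $1/\Pr[G]\leq 2$ (with a boundary contribution coming from $G^c$), and substituting the PAC-Bayes bound from stage (i) gives the target inequality, with the leading constant $4$ absorbing the factor $2$ from $1/\Pr[G]$ and the constant $c$.

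The main obstacle I expect is the careful bookkeeping of constants in stage (iii): the factor $2$ from $1/\Pr[G]\leq 2$, the universal constant $c$ from the PAC-Bayes inequality, and the additive term coming from integrating the difference on $G^c$ must combine cleanly into the single leading coefficient $4$. A secondary subtlety is that the posterior $Q'$ depends on the training data through $\vec{w}^*$, which is permissible because the PAC-Bayes theorem in stage (i) holds uniformly over all posteriors. No HyperGNN-specific analysis is required here; this lemma is the abstract PAC-Bayesian engine that the model-specific results (Theorems~\ref{theorem:gcn}, \ref{theorem:allset}, \ref{theorem:gin}, \ref{theorem:t}) plug into via their respective perturbation bounds.
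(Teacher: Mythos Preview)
The paper does not prove this lemma itself; it is stated in the appendix with attribution to \cite{neyshabur2018pac} and used as a black box in the proofs of Theorems~\ref{theorem:gcn}--\ref{theorem:t}. Your proposal correctly reconstructs the standard argument from that reference---the two-margin reduction via the perturbation event $G$ of probability at least $1/2$, combined with a McAllester-type PAC-Bayes inequality applied at margin $\gamma/2$---so there is nothing in the paper to compare against and your sketch matches the cited source.
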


\section{Proofs}\label{ap:proof}
\subsection{Proof of Lemma \ref{lemma:p-gcn}}
The proof of Lemma \ref{lemma:p-gcn} includes two parts. We first analyze the maximum node representation among each layer except the readout layer. After adding the perturbation $\vec{u}$ to the weight $\vec{w}$, for each layer $l\in [L+1]$, we denote the perturbed weights $\mathbf{W}^{(l)} + \mathbf{U}^{(l)}$. We define $\boldsymbol{\theta} \in \mathbb{R}^{K\times d_{l-1}\times d_{l}}$ as the perturbation tensors. In particular $\boldsymbol{\theta}^{(l)}[k,:] = \boldsymbol{\theta}^{(l)}[j,:] = \mathbf{U}^{(l)} \in \mathbb{R}^{d_{l-1}\times d_{l}}$ when $j\neq k$. We can then derive an upper bound on the $l_2$ norm of the maximum node representation in each layer. Let $w_{l}^{*} = \argmax_{i\in [N]}\norm{H^{(l)}[i,:]}_2$ and $\Phi_{l} = \norm{H^{(l)}[w_{l}^*,:]}_2$.
\begin{align*}
    \Phi_{l} &= \norm{\mathbf{C}_4^{\intercal}\mathbf{C}_3^{\intercal}\text{ReLu}\big(\mathbf{C}_2^{\intercal}\big(\boldsymbol{\eta}^{(l)}\otimes(\mathbf{C}_1^{\intercal}H^{(l-1)})\big)\big)[w_{l}^{*},:]}_2\\
    & = \norm{\sum_{i=1}^N\mathbf{C}_4^{\intercal}[w_{l}^{*}, i]\big(\mathbf{C}_3^{\intercal}\text{ReLu}\big(\mathbf{C}_2^{\intercal}\big(\boldsymbol{\eta}^{(l)}\otimes(\mathbf{C}_1^{\intercal}H^{(l-1)})\big)\big)\big)[i,:]}_2 \\
    & = \norm{\sum_{i=1}^N\mathbf{C}_4^{\intercal}[w_{l}^{*}, i]\big(\sum_{j=1}^N\mathbf{C}_3^{\intercal}[i,j]\text{ReLu}\big(\mathbf{C}_2^{\intercal}\big(\boldsymbol{\eta}^{(l)}\otimes(\mathbf{C}_1^{\intercal}H^{(l-1)})\big)\big) [j,:]\big)[i,:]}_2\\
    & \leq \norm{\sum_{i=1}^N\mathbf{C}_4^{\intercal}[w_{l}^{*}, i]\big(\sum_{j=1}^N\mathbf{C}_3^{\intercal}[i,j]\big(\mathbf{C}_2^{\intercal}\big(\boldsymbol{\eta}^{(l)}\otimes(\mathbf{C}_1^{\intercal}H^{(l-1)})\big)\big)[j,:]\big)[i,:]}_2\\
    & = \norm{\sum_{i=1}^N\mathbf{C}_4^{\intercal}[w_{l}^{*}, i]\big(\sum_{j=1}^N\mathbf{C}_3^{\intercal}[i,j]\big(\sum_{k=1}^N\mathbf{C}_2^{\intercal}[j,k]\big(\boldsymbol{\eta}^{(l)}\otimes(\mathbf{C}_1^{\intercal}H^{(l-1)})\big)[k,:]\big)[j,:]\big)[i,:]}_2\\
    & = \norm{\sum_{i=1}^N\mathbf{C}_4^{\intercal}[w_{l}^{*}, i]\big(\sum_{j=1}^N\mathbf{C}_3^{\intercal}[i,j]\big(\sum_{k=1}^N\mathbf{C}_2^{\intercal}[j,k]\big((\mathbf{C}_1^{\intercal}H^{(l-1)})\mathbf{W}^{(l)}[k,:]\big)\big)[j,:]\big)[i,:]}_2\\
    & =  \norm{\sum_{i=1}^N\mathbf{C}_4^{\intercal}[w_{l}^{*}, i]\big(\sum_{j=1}^N\mathbf{C}_3^{\intercal}[i,j]\big(\sum_{k=1}^N\mathbf{C}_2^{\intercal}[j,k] 
     \big(\mathbf{W}^{(l)}(\sum_{m=1}^N\mathbf{C}_1^{\intercal}[k,m]H^{(l-1)}\mathbf{W}^{(l)}[m,:])\big)\big)[j,:]\big)[i,:]}_2
\end{align*}
Since $\mathbf{C}^{\intercal}_1[k,:]$ has at most $M$ of value $1$ entries. We have
\begin{equation*}
    \begin{split}
        \Phi_{l} & \leq \norm{\sum_{i=1}^N\mathbf{C}_4^{\intercal}[w_{l}^{*}, i]\big(\sum_{j=1}^N\mathbf{C}_3^{\intercal}[i,j]\big(\sum_{k=1}^N\mathbf{C}_2^{\intercal}[j,k]\big(\mathbf{W}^{(l)}M\Phi_{l-1}\big)\big)[j,:]\big)[i,:]}_2
    \end{split}
\end{equation*}
Since $\mathbf{C}^{\intercal}_2[j,:]$ has at most $R$ non-zero entries, which can be bounded by $1$ based on the definition of $\mathbf{C}_2$. We have
\begin{equation}\label{equ:gcn_r}
    \begin{split}
        \Phi_l & \leq \norm{\sum_{i=1}^N\mathbf{C}_4^{\intercal}[w_{l}^{*}, i]\big(R\big(\mathbf{W}^{(l)}M\Phi_{l-1}\big)\big)[i,:]}_2  \leq \norm{\mathbf{W}^{(l)}} DRM\Phi_{l-1}  \leq (DRM)^lB\prod_{i=1}^{l}\norm{\mathbf{W}^{(i)}}
    \end{split}
\end{equation}
% \leq \big|D\big(R\big(\mathbf{W}^{(l)}M\Phi_{l-1}\big)\big)\big|_2 
Let $\mathcal{C}=DRM$. The second step is calculating the upper bound of the variation in the model's output given the perturbed parameters. Let $H^{(l)}_{\vec{w}+\vec{u}}(A)$ and $H^{(l)}_{\vec{w}}(A)$ be the $l$-layer output with parameter $\vec{w}$ and perturbed parameter $\vec{w}+\vec{u}$, respectively.  For $l\in [L]$, let $\Delta_{l} = \norm{H^{(l)}_{\vec{w}+\vec{u}}(A) - H^{(l)}_{\vec{w}}(A)}_2$. We define $\Psi_{l}= \max_{i\in [N]}\norm{\Delta_{l}[i,:]}_2 = \max_i\norm{ \hat{H^{(l)}}[i,:] -  H^{(l)}[i,:]}_2$, where $\hat{H}^{(l)}$ is perturbed model. Let $v_{(l)}^* = \argmax_{i}\norm{\Delta_{l}[i,:]}_2$. Therefore, we have 
\begin{align}\label{equ:psi-gcn}
        \Psi_{(l)} &= \max_i\norm{\hat{H}^{(l)}[i,:] -  H^{(l)}[i,:]}_2 \nonumber \\
        & = \max_i\Big\|\mathbf{C}_4^{\intercal}\mathbf{C}_3^{\intercal}\text{ReLu}\big(\mathbf{C}_2^{\intercal}\big((\boldsymbol{\eta}^{(l)}+\boldsymbol{\theta}^{(l)})\otimes(\mathbf{C}_1^{\intercal}\hat{H}^{(l-1)})\big)\big)[i,:]  - \mathbf{C}_4^{\intercal}\mathbf{C}_3^{\intercal}\text{ReLu}\big(\mathbf{C}_2^{\intercal}\big(\boldsymbol{\eta}^{(l)}\otimes(\mathbf{C}_1^{\intercal}H^{(l-1)})\big)\big)[i,:]\Big\|_2\nonumber \\
        & = \max_i\Big\|\mathbf{C}_4^{\intercal}\mathbf{C}_3^{\intercal}\text{ReLu}\big(\mathbf{C}_2^{\intercal}\big((\boldsymbol{\eta}^{(l)}+\boldsymbol{\theta}^{(l)})\otimes(\mathbf{C}_1^{\intercal}\hat{H}^{(l-1)})\big) - \mathbf{C}_2^{\intercal}\big(\boldsymbol{\eta}^{(l)}\otimes(\mathbf{C}_1^{\intercal}H^{(l-1)})\big)\big)[i,:]\Big\|_2 \nonumber \\
        & \leq \max_i\Big\|\mathbf{C}_4^{\intercal}\mathbf{C}_3^{\intercal}\big(\mathbf{C}_2^{\intercal}\big((\boldsymbol{\eta}^{(l)}+\boldsymbol{\theta}^{(l)})\otimes(\mathbf{C}_1^{\intercal}\hat{H}^{(l-1)})\big)\nonumber  - \mathbf{C}_2^{\intercal}\big(\boldsymbol{\eta}^{(l)}\otimes(\mathbf{C}_1^{\intercal}H^{(l-1)})\big)\big)[i,:]\Big\|_2\nonumber \quad(\text{Lipschitz property of ReLu})\nonumber\\
        & = \max_i\Big\|\mathbf{C}_4^{\intercal}\mathbf{C}_3^{\intercal}\mathbf{C}_2^{\intercal}\big(\big((\boldsymbol{\eta}^{(l)}+\boldsymbol{\theta}^{(l)})\otimes(\mathbf{C}_1^{\intercal}\hat{H}^{(l-1)})\big) - \big(\boldsymbol{\eta}^{(l)}\otimes(\mathbf{C}_1^{\intercal}H^{(l-1)})\big)\big)[i,:]\Big\|_2 \nonumber \\
        & = \max_i\Big\|\mathbf{C}_4^{\intercal}\mathbf{C}_3^{\intercal}\mathbf{C}_2^{\intercal}\big(\big((\mathbf{C}_1^{\intercal}\hat{H}^{(l-1)})\big)(\mathbf{W}^{(l)}+\mathbf{U}^{(l)})-  \big((\mathbf{C}_1^{\intercal}H^{(l-1)})\mathbf{W}^{(l)}\big)\big)[i,:]\Big\|_2 \nonumber  \\
        & \leq \max_i \norm{\mathbf{C}_4^{\intercal}\mathbf{C}_3^{\intercal}\mathbf{C}_2^{\intercal} \mathbf{C}_1^{\intercal} \big( (\hat{H}^{(l-1)} - H^{(l-1)})(\mathbf{W}^{(l)} + \mathbf{U}^{(l)})[i,:]}_2  +\norm{\mathbf{C}_4^{\intercal}\mathbf{C}_3^{\intercal}\mathbf{C}_2^{\intercal} \mathbf{C}_1^{\intercal}\big(H^{(l-1)}\mathbf{U}^{(l)}\big)[i,:]}_2  \nonumber \\
        & \leq \max_i \norm{\mathbf{C}_4^{\intercal}\mathbf{C}_3^{\intercal}\mathbf{C}_2^{\intercal} \mathbf{C}_1^{\intercal} \big( \Delta_l(\mathbf{W}^{(l)} + \mathbf{U}^{(l)})[i,:]}_2  +  \mathcal{C}\Phi_{l-1}\norm{\mathbf{U}^{(l)}}\nonumber \\
        & \leq \mathcal{C}\Psi_{l-1}\norm{\mathbf{W}^{(l)} + \mathbf{U}^{(l)}} +\mathcal{C}\Phi_{l-1}\norm{\mathbf{U}^{(l)}}
\end{align}
We let $a_{l-1} =\mathcal{C}\norm{\mathbf{W}^{(l)} + \mathbf{U}^{(l)}}$ and $b_{l-1} =\mathcal{C} \Phi_{l-1}\norm{\mathbf{U}^{(l)}}$. Then $\Psi_{l}\leq a_{l-1}\Psi_{l-1} + b_{l-1}$ for $l\in [L]$. Using recursive, we have
\begin{equation*}
    \Psi_{l} \leq \prod_{i=1}^{l -1}a_i \Psi_1 + \sum_{m=1}^{l-1}b_m\big(\prod_{n=m+1}^{l-1}a_n\big)
\end{equation*}
Since $\Psi_0 = 0$, we can simplified the Equation \ref{equ:psi-gcn} as follows,
\begin{align*}\label{equ:B_GCN}
        \Psi_{l}& \leq \sum_{i=0}^{l-1}b_i\big(\prod_{j=i+1}^{l-1}a_j\big) \\
        &  = \sum_{i=0}^{l-1} \mathcal{C}\Phi_{i}\norm{\mathbf{U}^{(i+1)}}\big(\prod_{j=i+2}^{l}\mathcal{C}\norm{\mathbf{W}^{(j)} + \mathbf{U}^{(j)}}\big)\\
        & = \sum_{i=0}^{l-1} \mathcal{C}^{l - i}\Phi_{i}\norm{\mathbf{U}^{(i+1)}}\big(\prod_{j=i+2}^{l}\norm{\mathbf{W}^{(j)} + \mathbf{U}^{(j)}}\big) \\
        & \leq \sum_{i=0}^{l-1} \mathcal{C}^{l - i}\big(\mathcal{C}^iB\prod_{k=1}^{i}\norm{\mathbf{W}^{(k)}}\big)\norm{\mathbf{U}^{(i+1)}}\big(\prod_{j=i+2}^{l}\norm{\mathbf{W}^{(j)} + \mathbf{U}^{(j)}}\big)\\
        & \leq B \sum_{i=0}^{l-1} \mathcal{C}^{l}\big(\prod_{k=1}^{i}\norm{\mathbf{W}^{(k)}}\big)\norm{\mathbf{U}^{(i+1)}}\big(\prod_{j=i+2}^{l}\norm{\mathbf{W}^{(j)}}\big(1+\frac{1}{L+1}\big)\big)\\
        & \displaybreak[1] \\
        & = B \sum_{i=0}^{l-1} \mathcal{C}^{l}\big(\prod_{k=1}^{i+1}\norm{\mathbf{W}^{(k)}}\big)\frac{\norm{\mathbf{U}^{(i+1)}}}{\norm{\mathbf{W}^{(i+1)}}}\big(\prod_{j=i+2}^{l}\norm{\mathbf{W}^{(j)}}\big(1+\frac{1}{L+1}\big)\big)\\
        & = B\mathcal{C}^{l}\big(\prod_{i=1}^{l}\norm{\mathbf{W}^{(i)}}\big)\sum_{i=1}^{l}\frac{\norm{\mathbf{U}^{(i)}}}{\norm{\mathbf{W}^{(i)}}}\big(1+\frac{1}{L+1}\big)^{l - i}
\end{align*}

Finally, we need to consider the readout layer. We have
\begin{equation*}
    \begin{split}
        |\Delta_{L+1}|_2 &= \Big\|\frac{1}{N}\mathbf{1}_{N} \hat{H}^{(L)}(\mathbf{W}^{(L+1)} + \mathbf{U}^{(L+1)}) -  \frac{1}{N}\mathbf{1}_{N} H^{(L+1)}(\mathbf{W}^{(L+1)})\Big\|_2 \\
        & \leq  \frac{1}{N} \Big\| \mathbf{1}_{N}(H^{(L+1)'}- H^{(L+1)})(\mathbf{W}^{(L+1)}  +\mathbf{U}^{(L+1)})\Big\|_2 + \frac{1}{N} \norm{\mathbf{1}_{N}H^{(L+1)}\mathbf{U}^{L+1}}_2 \\
        & \leq \frac{1}{N} \norm{ \mathbf{1}_{N}\Delta_{L}(\mathbf{W}^{(L+1)} + \mathbf{U}^{(L+1)})}_2 + \frac{1}{N} \norm{\mathbf{1}_{N}H^{(L)}\mathbf{U}^{(L+1)}}_2 \\
        & \leq \frac{1}{N}\norm{\mathbf{W}^{(L+1)} + \mathbf{U}^{(L+1)}}\big(\sum_{i=1}^{N}|\Delta_{L+1}[i,:]|_2\big) + \frac{1}{N}\norm{\mathbf{U}^{(L+1)}}\big(\sum_{i=1}^{N}|H^{(L)}[i,:]|_2\big)\\
        & \leq \Psi_{L}\norm{\mathbf{W}^{(L+1)} + \mathbf{U}^{(L+1)}} + \Phi_{L}\norm{\mathbf{U}^{(L+1)}}\\
        & \leq B\mathcal{C}^{L} \big(\prod_{i=1}^{L+1}\norm{\mathbf{W}^{(i)}}\big)\big(1+\frac{1}{L+1}\big)^{L+1} \big[\sum_{i=1}^{L}\frac{\norm{\mathbf{U}^{(i)}}}{\norm{\mathbf{W}^{(i)}}}  \big(1+\frac{1}{L+1}\big)^{-i} + \frac{\norm{\mathbf{U}^{(L+1)}}}{\norm{\mathbf{W}^{(L+1)}}}\big(1+\frac{1}{L+1}\big)^{ - (L+1)}\big]\\
        & \leq eB\mathcal{C}^{L}\big(\prod_{i=1}^{L+1}\norm{\mathbf{W}^{(i)}}\big)\big[\sum_{i=1}^{L+1}\frac{\norm{\mathbf{U}^{(i)}}}{\norm{\mathbf{W}^{(i)}}}\big]
    \end{split}
\end{equation*}
Therefore, we conclude the bound in Lemma \ref{lemma:p-gcn}.

\subsection{Proofs of Theorem \ref{theorem:gcn}}

Given the perturbation bound in Lemma \ref{lemma:p-gcn}, to obtain the generalization bounds by applying Lemma \ref{lemma:Generalization_margin_bound}, we need to design the prior P and posterior Q by satisfying the perturbation condition for every possible $\vec{w}$. Due to the homogeneity of ReLu, the perturbation bound will not change after weight normalization. We consider a transformation of UniGCN with the normalized weights $\Tilde{\mathbf{W}}^{(i)}=\frac{\beta}{\norm{\mathbf{W}^{(i)}}}\mathbf{W}^{(i)}$, where $\beta = (\prod_{i=1}^{L+1}\norm{\mathbf{W}^{(i)}})^{1/(L+1)}$. Hence we have the norm equal across layers, i.e., $\norm{\mathbf{W}^{(i)}} = \beta$. Therefore, the space of $\vec{w}$ is presented by all possible values of $\beta$. According to the generalization bound in Lemma \ref{lemma:Generalization_margin_bound}, we find that the space of $\beta$ can be partitioned into three parts such that each part admits a finite design of $P$ and $Q$ that meets the perturbation condition. To see this, we first recall the generalization bound in Lemma \ref{lemma:Generalization_margin_bound} as follows.
    \begin{equation}
        \mathcal{L}_{\mathcal{D}}(f_{\vec{w}})\leq \mathcal{L}_{S, \gamma}(f_{\vec{w}}) + 4\sqrt{\frac{\KL(\vec{w}^*+\vec{u}||\vec{w})+\ln{\frac{6m}{\delta}}}{m-1}},
    \end{equation}
There are two terms in the right hand. Since the $\mathcal{L}_{\mathcal{D}}(f_{\vec{w}})$ is in range $[0,1]$, we then consider three cases: 1) $\mathcal{L}_{S, \gamma}(f_{\vec{w}}) = 1$, 2) $\sqrt{\frac{\KL(\vec{w}^*+\vec{u}||\vec{w})+\ln{\frac{6m}{\delta}}}{m-1}} > 1$, and 3) $\mathcal{L}_{S, \gamma}(f_{\vec{w}}) + 4\sqrt{\frac{\KL(\vec{w}^*+\vec{u}||\vec{w})+\ln{\frac{6m}{\delta}}}{m-1}} \in [0,1]$. Significantly, the equation will be directly satisfied for values of $\beta$ that meet the first two cases. Therefore, for these values of $\beta$, we only need to specify one group of $P$ and $Q$ to satisfy the perturbation condition outlined in Lemma \ref{lemma:Generalization_margin_bound}. However, when dealing with $\beta$ values that fall within the third case, it is necessary to determine the specific values of $P$ and $Q$ that satisfy the perturbation condition for each $\beta$. Altogether, we separate the proof into three parts for three ranges of $\beta$.

% {equ:gcnrange}
% \big(\frac{\gamma}{2B(DRM)^{L}}\big)^{1/L+1} \leq \beta \leq 
% \big(\frac{\gamma\sqrt{m}}{2B(DRM)^{L}}\big)^{1/L+1}

\textbf{First case.} We start from the first case with $\mathcal{L}_{S, \gamma}(f_{\vec{w}}) = 1$. In the proof of UniGCN's perturbation bound in Lemma \ref{lemma:p-gcn}, the maximum node representation can be bounded by $(DRM)^LB\prod_{i=1}^{L+1}\norm{\mathbf{W}^{(i)}}$. If $\beta < 
\big(\frac{\gamma}{2B(DRM)^{L}}\big)^{1/L+1}$, then for any input $A$ and any $j\in[N]$, we have $\unigcn(A)[j] \leq \frac{\gamma}{2}$. To see this, we have
\begin{equation*}
    \begin{split}
        \unigcn_{\vec{w}}(A) & = \mathbf{W}^{(L+1)}H^{(L)} \\
        & \leq \norm{\mathbf{W}^{(L+1)}}\max_{i}\norm{H^{(L)}[i,:]}_2 = \norm{\mathbf{W}^{(L+1)}}\Phi_{L} = B(DRM)^L\prod_{i=1}^{L+1}\norm{\mathbf{W}^{(i)}}  \leq B(DRM)^L\beta^{L+1} \leq \frac{\gamma}{2}
    \end{split}
\end{equation*}
Therefore, by the definition in Equation \ref{equ:empirical_loss}, we always have $\mathcal{L}_{S, \gamma} = 1$. Then, we design distributions $P$ and $Q$ to satisfy the perturbation condition specified in Lemma \ref{lemma:Generalization_margin_bound} by using the perturbation bound in Lemma \ref{lemma:p-gcn} such that
\begin{equation*}
    \Pr_{\vec{u}}\big[\max_{A\in \mathcal{A}}\norm{\unigcn_{\vec{w}+\vec{u}}(A) - \unigcn_{\vec{w}}(A)}_2<\frac{\gamma}{4}\big]\geq \frac{1}{2}.
\end{equation*}
Following that, we consider $P$ and $Q$ both follow the multivariate Gaussian distributions $\mathcal{N}(0, \sigma^2I)$. Based on the work in \cite{tropp2012user}, we have following bound for matrix $\mathbf{U}^i\in \mathbb{R}^{h\times h}$ where $\mathbf{U}^i \sim \mathcal{N}(0, \sigma^2I)$ with $h\in \mathbb{Z}^+$,
\begin{equation}
    \Pr_{\mathbf{U}^i \sim \mathcal{N}(0, \sigma^2I)}\big[\norm{\mathbf{U}^i}  > t\big] \leq 2he^{-t^2/2h\sigma^2}
\end{equation}
One can obtain the spectral norm bound as follows by taking the union bound over all layers.
\begin{equation*}
\begin{split}
    \Pr_{\mathbf{U}^{i}\sim Q}\big[\norm{\mathbf{U}^1} \leq t \ \& \ \norm{\mathbf{U}^2} \leq t \dots \& \norm{\mathbf{U}^{L+1}} \leq t\big] 
     \geq 1 - \sum_{i=1}^{L+1}\Pr\big[\norm{\mathbf{U}^i}\big] 
     \geq 1-2(L_1)he^{\frac{-t^2}{2h\sigma^2}}
\end{split}
\end{equation*}
Setting $1- 2(L+1)he^{\frac{-t^2}{2h\sigma^2}} = \frac{1}{2}$, we have $t=\sigma\sqrt{2h\ln{(4h(L+1))}}$. This implies that the probability that the spectral norm of the perturbation of any layer no larger than $\sigma\sqrt{2h\ln{(4h(L+1))}}$ is at least $\frac{1}{2}$. Combining with Lemma \ref{lemma:p-gcn}, let $\mathcal{I}=eB(DRM)^L$, we have
\begin{equation*}
\begin{split}
    \max_{A\in \mathcal{A}}\norm{\unigcn_{\vec{w}+\vec{u}}(A) - \unigcn_{\vec{w}}(A)}_2  \leq & \mathcal{I}\prod_{i=1}^{L_1}\norm{\mathbf{W}^{(i)}}\big(\sum_{i=1}^{L_1}\frac{\norm{\mathbf{U}^{(i)}}}{\norm{\mathbf{W}^{(i)}}}\big)  \leq \mathcal{I}\beta^{L+1}\big(\sum_{i=1}^{L+1}\frac{\norm{\mathbf{U}^{(i)}}}{\beta}\big)\\
     = &\mathcal{I}\beta^{L}\big(\sum_{i=1}^{L_1}\norm{\mathbf{U}^{(i)}}\big)  \mathcal{I}\beta^{L}(L+1)\sigma\sqrt{2h\ln{(4h(L+1))}} < \frac{\gamma}{4},
\end{split}
\end{equation*}
By letting $\sigma = \frac{\gamma}{4\mathcal{I}\beta^{L}(L+1)\sqrt{2h\ln{(4h(L+1))}}}$, we hold the perturbation condition in Lemma \ref{lemma:Generalization_margin_bound}. Remember that we have one conditions in Lemma \ref{lemma:p-gcn} which is  $\frac{\norm{\mathbf{U}^{(i)}}}{\norm{\mathbf{W}^{(i)}}} <\frac{1}{L_1}$ for $i\in [L_1]$. This assumption also holds when $\beta < 
\big(\frac{\gamma}{2B(DRM)^{L}}\big)^{1/L+1}$, where
\begin{align*}
        \norm{\mathbf{U}}^{(i)} \leq \sigma\sqrt{2h\ln{4h(L+1)}} \leq \frac{\beta}{L+1}
        \implies  \frac{\gamma }{4BD^{L}R^LM^{L}} \leq  \beta^{L+1}
\end{align*}
Therefore, we can calculate the KL divergence between $\Vec{w}+\vec{u}$ and $P$.
\begin{equation*}
    \begin{split}
        \KL(\Vec{w}+ \vec{u}\|P)  \leq \frac{|\vec{w}|^2}{2\sigma^2} \leq \mathcal{O}(\frac{B^2D^LR^{L}M^{L}L^2h\ln{(hL)}\prod_{i=1}^{L+1}\norm{\mathbf{W}^{(i)}}^2}{\gamma^2}\sum_{i=1}^{L+1}\frac{\norm{\mathbf{W}^{(i)}}_F^2}{\norm{\mathbf{W}^{(i)}}^2}).
    \end{split}
\end{equation*}
Therefore, following the Lemma \ref{lemma:Generalization_margin_bound}, we have
\begin{lemma}
Given a $\unigcn_{\vec{w}}(A):\mathcal{A} \to \mathbb{R}^C$ with $L+1$ layers parameterized by $\vec{w}$. Given the training set of size $m$, for each $\delta, \gamma >0$, for any $\vec{w}$ such that $\beta < \big(\frac{\gamma}{2B(DRM)^{L}}\big)^{1/L+1}$, we have
    \begin{align}\label{equ:temp_gcn_bound}
        \mathcal{L}_{\mathcal{D}}(\unigcn_{\vec{w}})\leq & \mathcal{L}_{S, \gamma}(\unigcn_{\vec{w}}) +  \mathcal{O}\big(\sqrt{\frac{L^2B^2h\ln{(Lh)}(RMD)^{L}\mathcal{W}_1\mathcal{W}_2 + \log\frac{m}{\sigma}}{\gamma^2m}}\big),        
    \end{align}
where $\mathcal{W}_1 = \prod_{i=1}^{L+1}\norm{\mathbf{W}^{(i)}}^2$ and $\mathcal{W}_2=\sum_{i=1}^{L+1}\frac{\norm{\mathbf{W}^{(i)}}^2_F}{\norm{\mathbf{W}^{(i)}}^2}$. 
\end{lemma}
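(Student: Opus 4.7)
The plan is to instantiate Lemma \ref{lemma:Generalization_margin_bound} with a Gaussian prior and posterior whose variance is calibrated via the perturbation bound of Lemma \ref{lemma:p-gcn}. Since ReLU is positively homogeneous, I first replace $\vec{w}$ by its layerwise normalization $\tilde{\mathbf{W}}^{(i)} = (\beta / \norm{\mathbf{W}^{(i)}}) \mathbf{W}^{(i)}$, where $\beta = (\prod_{i=1}^{L+1}\norm{\mathbf{W}^{(i)}})^{1/(L+1)}$. This preserves the network output and the perturbation bound, while collapsing the spectral-norm profile to a single scalar $\norm{\tilde{\mathbf{W}}^{(i)}} = \beta$. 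The case assumption $\beta < I_1 := (\gamma/(2B(DRM)^L))^{1/(L+1)}$ then places us in a regime where the network's node representations are controlled by $\Phi_L \leq B(DRM)^L \beta^{L+1}$ from Equation (\ref{equ:gcn_r}).

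Next I set $P = Q = \mathcal{N}(0,\sigma^2 I)$ and apply the standard matrix Gaussian tail bound of Tropp to each $\mathbf{U}^{(i)}$, followed by a union bound over the $L+1$ layers. Setting the aggregate failure probability to $1/2$ gives the event $\max_i \norm{\mathbf{U}^{(i)}} \leq \sigma\sqrt{2h\ln(4h(L+1))}$. Substituting this into Lemma \ref{lemma:p-gcn} and requiring the output shift to be at most $\gamma/4$ dictates the choice
\begin{equation*}
\sigma = \frac{\gamma}{4 e B (DRM)^L \beta^L (L+1) \sqrt{2h \ln(4h(L+1))}}.
\end{equation*}
With this $\sigma$, the perturbation precondition of Lemma \ref{lemma:Generalization_margin_bound} is met.

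The KL divergence between posterior and prior reduces to $\KL(\vec{w}+\vec{u}\|P) \leq \norm{\vec{w}}_F^2/(2\sigma^2)$; using $\norm{\mathbf{W}^{(i)}} = \beta$ after normalization, this factorizes into $\mathcal{W}_1 \mathcal{W}_2$, and plugging into Lemma \ref{lemma:Generalization_margin_bound} yields the complexity term $\mathcal{O}\bigl(L^2 B^2 h \ln(Lh) (DRM)^L \mathcal{W}_1 \mathcal{W}_2 / \gamma^2\bigr)$ inside the square root, matching Equation (\ref{equ:temp_gcn_bound}) exactly.

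The main obstacle will be reconciling the precondition $\norm{\mathbf{U}^{(i)}}/\norm{\mathbf{W}^{(i)}} \leq 1/(L+1)$ of Lemma \ref{lemma:p-gcn} with the case assumption $\beta < I_1$. Substituting the prescribed $\sigma$ into the Gaussian spectral-norm bound produces the requirement $\beta^{L+1} \geq \gamma/(4B(DRM)^L)$, which is compatible with $\beta^{L+1} < \gamma/(2B(DRM)^L)$ only on a narrow sub-interval. The resolution is that whenever $\beta$ is small enough to violate this compatibility, the readout output satisfies $\norm{\unigcn_{\vec{w}}(A)}_2 \leq \norm{\mathbf{W}^{(L+1)}}\Phi_L \leq B(DRM)^L \beta^{L+1} \leq \gamma/2$ for every $A$, which forces $\mathcal{L}_{S,\gamma}(\unigcn_{\vec{w}}) = 1$ by the definition of the margin loss. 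Since $\mathcal{L}_{\mathcal{D}} \leq 1$ always, the stated bound holds trivially in that sub-regime, and the Gaussian construction above dispatches the remaining sub-regime, giving the lemma in full.
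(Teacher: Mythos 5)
Your proposal is correct and follows essentially the same route as the paper's proof of this case: the same layerwise normalization to make $\norm{\tilde{\mathbf{W}}^{(i)}}=\beta$, the same choice $P=Q=\mathcal{N}(0,\sigma^2 I)$ with the Tropp spectral-norm tail plus a union bound over layers, the same calibration $\sigma = \gamma/\big(4eB(DRM)^L\beta^L(L+1)\sqrt{2h\ln(4h(L+1))}\big)$, and the same KL computation. Your explicit fallback for the sub-regime where the precondition $\norm{\mathbf{U}^{(i)}}/\norm{\mathbf{W}^{(i)}}\le 1/(L+1)$ of Lemma \ref{lemma:p-gcn} is incompatible with $\beta< I_1$ --- namely that $\Phi_L\le B(DRM)^L\beta^{L+1}\le\gamma/2$ forces $\mathcal{L}_{S,\gamma}=1$ so the bound is vacuous there --- is in fact a cleaner treatment than the paper's, which asserts compatibility via an implication that only yields the lower bound $\beta^{L+1}\ge\gamma/(4B(DRM)^L)$.
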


\textbf{Second Case.} We then consider the values of $\beta$ that satisfy $\sqrt{\frac{\KL(\vec{w}^*+\vec{u}||\vec{w})+\ln{\frac{6m}{\delta}}}{m-1}} > 1$. In order to obtain the KL term, we first need to construct the distribution of $P$ and $ Q$. Following the strategy used in the first case, We choose $P$ and $Q$ both following the multivariate Gaussian distributions $\mathcal{N}(0, \sigma^2I)$. And to satisfy the perturbation condition in Lemma \ref{lemma:Generalization_margin_bound}, we have 
\begin{align*}
    \sigma = \frac{\gamma}{4\mathcal{I}\beta^{L}(L+1)\sqrt{2h\ln{(4h(L+1))}}}.
\end{align*}
Therefore, when calculating the term inside the big-O notation in the Equation \ref{equ:temp_gcn_bound}, if $\beta > 
\big(\frac{\gamma\sqrt{m}}{2B(DRM)^{L}}\big)^{1/L+1}$ we have
\begin{align*}
\begin{split}
\sqrt{\frac{L^2B^2h\ln{(Lh)}(RMD)^{L}\prod_{i=1}^{L+1}\norm{\mathbf{W}^{(i)}}_2^2\sum_{i=1}^{L+1}\frac{\norm{\mathbf{W}^{(i)}}_F^2}{\norm{\mathbf{W}^{(i)}}^2} + \log\frac{mL}{\sigma}}{\gamma^2m}} >  \sqrt{\frac{L^2h\ln{(Lh)}(RMD)^{L}\sum_{i=1}^{L+1}\frac{\norm{\mathbf{w}^{(i)}}_F^2}{\norm{\mathbf{w}^{(i)}}^2}}{4}} \geq 1,
\end{split}
\end{align*}
where $\norm{\mathbf{W}^{(i)}}_F^2 > \norm{\mathbf{W}^{(i)}}^2$ and we typically choose $h\geq 2$ and $L\geq 2$. Therefore, we have
\begin{lemma}
    Given a $\unigcn_{\vec{w}}(A):\mathcal{A} \to \mathbb{R}^C$ with $L+1$ layers parameterized by $\vec{w}$. Given the training set of size $m$, for each $\delta, \gamma >0$, for any $\vec{w}$ such that $\beta > \big(\frac{\gamma\sqrt{m}}{2B(DRM)^{L}}\big)^{1/L+1}$, we have
\begin{align*}
    \mathcal{L}_{\mathcal{D}}(\unigcn_{\vec{w}})\leq & \mathcal{L}_{S, \gamma}(\unigcn_{\vec{w}}) + \mathcal{O}\big(\sqrt{\frac{L^2B^2h\ln{(Lh)}(RMD)^{L}\mathcal{W}_1\mathcal{W}_2 + \log\frac{m}{\sigma}}{\gamma^2m}}\big),        
    \end{align*}
where $\mathcal{W}_1 = \prod_{i=1}^{L+1}\norm{\mathbf{W}^{(i)}}^2$ and $\mathcal{W}_2=\sum_{i=1}^{L+1}\frac{\norm{\mathbf{W}^{(i)}}^2_F}{\norm{\mathbf{W}^{(i)}}^2}$. 
\end{lemma}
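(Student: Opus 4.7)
The plan is to exploit the trivial upper bound $\mathcal{L}_{\mathcal{D}}(\unigcn_{\vec{w}}) \leq 1$ and show that, in this large-$\beta$ regime, the square-root on the right-hand side of the claim already exceeds $1$, so the inequality holds vacuously with no PAC-Bayesian KL analysis required. Concretely, the strategy is a short algebraic verification that substituting the hypothesis $\beta > (\gamma\sqrt{m}/(2B(DRM)^L))^{1/(L+1)}$ into $\mathcal{W}_1$ forces the complexity term to dominate everything else.

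The first step I would take is to reuse the normalization reduction from the first case: by positive homogeneity of ReLU, one may assume without loss of generality that $\norm{\mathbf{W}^{(i)}} = \beta$ for every $i \in [L+1]$, with $\beta = \bigl(\prod_i \norm{\mathbf{W}^{(i)}}\bigr)^{1/(L+1)}$, so that $\mathcal{W}_1 = \beta^{2(L+1)}$. Squaring the hypothesis gives $\beta^{2(L+1)} > \gamma^2 m /(4B^2(DRM)^{2L})$, and plugging this into the $\mathcal{W}_1$ factor inside the square-root cancels the leading factors $B^2$, $\gamma^2 m$, and the hypergraph-structure exponentials, leaving a residual whose leading term has the form $L^2 h \ln(Lh)\,\mathcal{W}_2$ times an absolute constant. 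The second step is to bound $\mathcal{W}_2$ from below: since $\norm{\mathbf{W}^{(i)}}_F \geq \norm{\mathbf{W}^{(i)}}$ always holds, each summand in $\mathcal{W}_2$ is at least $1$, giving $\mathcal{W}_2 \geq L+1$. Under the mild standing assumptions $L \geq 2$ and $h \geq 2$ (standard for any nontrivial HyperGNN), the residual comfortably exceeds $1$, and hence the full square-root on the right-hand side is $\geq 1 \geq \mathcal{L}_{\mathcal{D}}(\unigcn_{\vec{w}})$, which together with $\mathcal{L}_{S,\gamma}(\unigcn_{\vec{w}}) \geq 0$ proves the claim.

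The main obstacle is conceptual rather than computational: one must flag clearly that the Gaussian prior/posterior $(P,Q)$ and the variance $\sigma$ constructed in the first case are recorded here only as placeholders to keep the three regimes of $\beta$ (small, intermediate $[I_1, I_2]$, and large) compatible, since the actual KL bound they generate is useless in this regime and is instead dominated by the trivial $\mathcal{L}_{\mathcal{D}} \leq 1$ argument. Only when all three regimes are merged by a union bound over $\beta$ does the full statement of Theorem \ref{theorem:gcn} follow; the role of the present lemma is to dispose of the $\beta > I_2$ tail cleanly. The algebraic bookkeeping -- in particular tracking which factors cancel between $\mathcal{W}_1$ and the lower bound on $\beta^{2(L+1)}$ -- is the only place where care is required.
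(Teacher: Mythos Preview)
Your proposal is correct and follows essentially the same approach as the paper: both argue that in the large-$\beta$ regime the complexity term inside the square root already exceeds $1$, so the bound holds trivially since $\mathcal{L}_{\mathcal{D}} \leq 1$. The paper carries out precisely the substitution you describe---plugging the lower bound $\beta^{2(L+1)} > \gamma^2 m /(4B^2(DRM)^{2L})$ into $\mathcal{W}_1$, invoking $\norm{\mathbf{W}^{(i)}}_F \geq \norm{\mathbf{W}^{(i)}}$ to lower-bound $\mathcal{W}_2$, and concluding under $h \geq 2$, $L \geq 2$---with the same understanding that the Gaussian $(P,Q,\sigma)$ from the first case serve only as formal placeholders here.
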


\textbf{Third Case.} We now consider the case where $\beta$ in the following range, denoted as $\mathcal{B}$.
\begin{align*}
\big\{\beta | \beta\in \big[\big(\frac{\gamma}{2B(DRM)^{L}}\big)^{1/L+1}, 
\big(\frac{\gamma\sqrt{m}}{2B(DRM)^{L}}\big)^{1/L+1}\big]\big\}.
\end{align*}
We observe that if $\beta$ falls in a grid of size $\frac{1}{2L+2}\big(\frac{\gamma}{2B(DRM)^{L}}\big)^{1/L+1}$, one group of $P$ and $Q$ is able to make the perturbation condition in Lemma \ref{lemma:Generalization_margin_bound} satisfied. To see this, for a $\beta \in \mathcal{B}$, we assume a $\Tilde{\beta}$ such that $\Tilde{\beta}\in \mathcal{B}$ and $|\beta - \Tilde{\beta}| \leq \frac{1}{L+1}\beta$. Then we have
\begin{equation}
    \begin{split}
        & |\beta - \Tilde{\beta}| \leq \frac{1}{L+1}\beta \\
        & \implies (1-\frac{1}{L+1})\beta \leq \Tilde{\beta} \leq (1+\frac{1}{L+1})\beta\\
        & \implies (1-\frac{1}{L+1})^{L+1}\beta^{L+1} \leq \Tilde{\beta}^{L+1} \leq (1+\frac{1}{L+1})^{L+1}\beta^{L+1}\\
        & \implies \frac{1}{e}\beta^{L+1}\leq \Tilde{\beta}^{L+1}\leq e\beta^{L+1}
    \end{split}
\end{equation}
Given $\Tilde{\beta}$, suppose that $\Tilde{\vec{w}}$ and $\Tilde{\vec{u}}$ are the corresponding parameters and perturbation in UniGCN. Therefore, to satisfy the perturbation condition, we have
\begin{equation*}
\begin{split}
     \max_{A\in \mathcal{A}}\norm{\unigcn_{\Tilde{\vec{w}}+\Tilde{\vec{u}}}(A) - \unigcn_{\Tilde{\vec{w}}}(A)}_2 
     \leq & \mathcal{I}\prod_{i=1}^{L+1}\norm{\Tilde{\mathbf{W}}^{(i)}}\big(\sum_{i=1}^{L_1}\frac{\norm{\Tilde{\mathbf{U}}^{(i)}}}{\norm{\Tilde{\mathbf{W}}^{(i)}}}\big)  \leq \mathcal{I}\Tilde{\beta}^{L+1}\big(\sum_{i=1}^{L+1}\frac{\norm{\Tilde{\mathbf{U}}^{(i)}}}{\Tilde{\beta}}\big)\\
     = &\mathcal{I}\Tilde{\beta}^{L}\big(\sum_{i=1}^{L+1}\norm{\Tilde{\mathbf{U}}^{(i)}}\big) \leq e\mathcal{I}\beta^{L}(L+1)\Tilde{\sigma}\sqrt{2h\ln{(4h(L+1))}} \leq \frac{\gamma}{4},
\end{split}
\end{equation*}
where we let $\Tilde{\sigma} \leq \frac{\gamma}{4e\mathcal{I}\beta^{L}(L+1)\sqrt{2h\ln{(4h(L+1))}}}$, to make the perturbation condition is satisfied. Same for $\beta$, we have 
\begin{equation*}
\begin{split}
    & \max_{A\in \mathcal{A}}\|\unigcn_{\Tilde{\vec{w}}+\Tilde{\vec{u}}}(A) - \unigcn_{\Tilde{\vec{w}}}(A)\| \\
     \leq & \mathcal{I}\prod_{i=1}^{L+1}\norm{\Tilde{\mathbf{W}}^{(i)}}\big(\sum_{i=1}^{L_1}\frac{\norm{\Tilde{\mathbf{U}}^{(i)}}}{\norm{\Tilde{\mathbf{W}}^{(i)}}}\big) \\ \leq & \mathcal{I}\beta^{L}(L+1)\sigma\sqrt{2h\ln{(4h(L+1))}} \leq \frac{\gamma}{4},
\end{split}
\end{equation*}
where we let $\sigma \leq \frac{\gamma}{4\mathcal{I}\beta^{L}(L+1)\sqrt{2h\ln{(4h(L+1))}}}$ to satisfy the perturbation condition. We find that when $\beta$ and $\Tilde{\beta}$ satisfying $|\beta - \Tilde{\beta}| \leq \frac{1}{L+1}\beta$, they can share same $\sigma$ with value $\frac{\gamma}{4e\mathcal{I}\beta^{L}(L+1)\sqrt{2h\ln{(4h(L+1))}}}$ and obtain same generalization bound by simply apply the Lemma \ref{lemma:Generalization_margin_bound}. Therefore, if we can find a covering size of $\mathcal{B}$ within the grid $\frac{1}{2L+2}\big(\frac{\gamma}{2B(DRM)^{L}}\big)^{1/L+1}$, then we can get a bound which holds for all $\beta \in \mathcal{B}$.The grid size is given by $|\beta - \Tilde{\beta}| \leq \frac{1}{L+1}\beta$ hold is $|\beta - \Tilde{\beta}| \leq \frac{1}{L+1}\big(\frac{\gamma}{2B(DRM)^{L}}\big)^{1/L+1}$. Hence, dividing the range of $\mathcal{B}$ by the grid size, we have the covering size $n = ((\sqrt{m}DR)^{\frac{1}{L+1}}-1)(2L+1)$. We denote the event of the generalization bound in one grid as $E_i$. We then have
\begin{align*}
\Pr[E_1 \& \dots \& E_{n}] = 1-\Pr[\exists_i, \neg E_i] \geq 1 - \sum_{i}^n\Pr[\neg E_i] \geq 1-n\delta . 
\end{align*} 
We obtain the following lemma within third case.
\begin{lemma}
        Given a $\unigcn_{\vec{w}}(A):\mathcal{A} \to \mathbb{R}^C$ with $L+1$ layers parameterized by $\vec{w}$. Given the training set of size $m$, for each $\delta, \gamma >0$, with probability at least $1-\delta$ over the training set of size $m$, for any $\vec{w}$ such that $\beta \in \mathcal{B}$, we have
\begin{align*}
    \mathcal{L}_{\mathcal{D}}&(\unigcn_{\vec{w}})\leq   \mathcal{L}_{S, \gamma}(\unigcn_{\vec{w}}) + \mathcal{O}\big(\sqrt{\frac{L^2B^2h\ln{(Lh)}(RMD)^{L}\mathcal{W}_1\mathcal{W}_2 + \log\frac{mLDR}{\sigma}}{\gamma^2m}}\big),      
    \end{align*}
where $\mathcal{W}_1 = \prod_{i=1}^{L+1}\norm{\mathbf{W}^{(i)}}^2$ and $\mathcal{W}_2=\sum_{i=1}^{L+1}\frac{\norm{\mathbf{W}^{(i)}}^2_F}{\norm{\mathbf{W}^{(i)}}^2}$. 
\end{lemma}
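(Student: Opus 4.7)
The plan is to handle the intermediate range $\beta \in \mathcal{B}$ by a covering argument, since neither the trivial bound of case 1 (where the left term already equals $1$) nor the trivial regularizer size of case 2 (where the complexity term already exceeds $1$) applies here, and a single Gaussian prior/posterior pair cannot simultaneously satisfy the perturbation condition of Lemma \ref{lemma:Generalization_margin_bound} uniformly over $\mathcal{B}$: the noise scale $\sigma$ required in Lemma \ref{lemma:p-gcn} scales with $\beta^L$. The idea is to discretize $\mathcal{B}$, apply Lemma \ref{lemma:Generalization_margin_bound} at each grid point with an appropriately calibrated Gaussian pair, and take a union bound.

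First, I would invoke the homogeneity of ReLU to normalize each layer to have spectral norm exactly $\beta = (\prod_{i=1}^{L+1}\norm{\mathbf{W}^{(i)}})^{1/(L+1)}$, after which the hypothesis is effectively indexed by $\beta$ alone. I would then partition $\mathcal{B}$ into grid points separated by $\frac{1}{L+1}\big(\gamma/(2B(DRM)^L)\big)^{1/(L+1)}$ so that every $\beta \in \mathcal{B}$ admits a grid point $\tilde{\beta}$ with $|\beta-\tilde{\beta}| \leq \tilde{\beta}/(L+1)$. This fractional closeness translates via $(1\pm 1/(L+1))^{L+1} \in [1/e, e]$ into the pivotal two-sided bound $\tilde{\beta}^{L+1} \in [\beta^{L+1}/e, e\beta^{L+1}]$. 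A direct count gives $n = \mathcal{O}(L(\sqrt{m}DR)^{1/(L+1)})$ grid points.

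Next, at each grid point $\tilde{\beta}$ I would set $P$ and $Q$ to be matched zero-mean isotropic Gaussians with covariance $\tilde{\sigma}^2 I$ on each weight matrix, and take $\tilde{\sigma} = \gamma/(4e\mathcal{I}\tilde{\beta}^{L}(L+1)\sqrt{2h\ln(4h(L+1))})$ with $\mathcal{I} = eB(DRM)^L$. The extra factor of $e$ compared to the calibration used in case 1 is precisely the slack absorbed by $\tilde{\beta}^L \leq e\,\beta^L$, so that for any true $\beta$ in the grid cell, Lemma \ref{lemma:p-gcn} combined with the matrix Gaussian spectral-norm tail bound of \cite{tropp2012user} guarantees the output-perturbation condition $\|\unigcn_{\vec{w}+\vec{u}}(A)-\unigcn_{\vec{w}}(A)\|_2<\gamma/4$ with probability at least $1/2$. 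Because $P$ and the perturbed posterior share covariance, the KL reduces to $\|\vec{w}\|^2/(2\tilde{\sigma}^2)$; expanding in the normalized parametrization yields exactly the $L^2B^2h\ln(Lh)(RMD)^L\mathcal{W}_1\mathcal{W}_2$ term appearing in the claim.

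Finally, I would apply Lemma \ref{lemma:Generalization_margin_bound} at each grid point with confidence $\delta/n$ and take the union bound, producing a statement that holds simultaneously over all $\beta \in \mathcal{B}$ with probability at least $1-\delta$. The union-bound cost of $\log n = \mathcal{O}(\log(mLDR))$ is absorbed into the additive $\log(mLDR/\sigma)$ term inside the square root. The main obstacle is verifying that the hypothesis $\norm{\mathbf{U}^{(i)}}/\norm{\mathbf{W}^{(i)}} \leq 1/(L+1)$ required by Lemma \ref{lemma:p-gcn} continues to hold down to the lower endpoint of $\mathcal{B}$, but this follows because that endpoint is chosen exactly so that $\tilde{\sigma}\sqrt{2h\ln(4h(L+1))} \leq \tilde{\beta}/(L+1)$, mirroring the check already carried out in case 1.
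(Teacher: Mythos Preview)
Your proposal is correct and follows essentially the same approach as the paper: discretize $\mathcal{B}$ into a grid of size $\mathcal{O}\big(L(\sqrt{m}DR)^{1/(L+1)}\big)$, at each grid point $\tilde\beta$ use matched isotropic Gaussians with noise scale calibrated by $\tilde\beta^L$ and an extra factor $e$ of slack to absorb the discrepancy $\tilde\beta^{L}\in[\beta^L/e,\,e\beta^L]$, verify the perturbation condition of Lemma~\ref{lemma:Generalization_margin_bound} via Lemma~\ref{lemma:p-gcn} and the Gaussian spectral tail bound, evaluate the KL as $\|\vec w\|^2/(2\tilde\sigma^2)$, and finish by a union bound over grid points. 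The only cosmetic differences are that the paper writes the multiplicative closeness as $|\beta-\tilde\beta|\le \beta/(L+1)$ rather than $\tilde\beta/(L+1)$, and expresses the shared $\sigma$ in terms of $\beta^L$ rather than $\tilde\beta^L$; both choices are equivalent up to the same factor $e$ you already budgeted for.
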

Altogether, combining three cases, we conclude the theorem \ref{theorem:gcn}.

\subsection{Proof of Lemma \ref{lemma:p-a}}
Similar to the proof of Lemma \ref{lemma:p-gcn}, this proof includes two parts. We add perturbation $u$ to the weight $w$ and denote the perturbed weights by $\mathbf{W}_i^{(l)} + \mathbf{U}_i^{(l)}$ where $i\in \{1, 2, 3, 4\}$. We can drive an upper bound on the $l_2$ norm of the maximum tuple representation among each node in the layer. For $l \in [L]$, let 
\begin{align*}
    \Bar{w}_{l}^{*} = \argmax_{i\in (N+K)}|\Bar{H}^{(l)}[i,:]|_2, \quad \ \ \  \quad \quad &\Bar{\Phi}_{l}  = \norm{H^{(l')}[\Bar{w}_{l}^{*},:]}_2,\\
    w_{l}^*  = \argmax_{i\in (N+K)}|H^{(l)}[i,:]|_2, \quad  \text{and} \quad   &\Phi_{l} = \norm{H^{(l)}[w_{l}^*,:]}_2.
\end{align*}
We then have
\begin{align}\label{equ:ads1}
\Bar{\Phi}_{l} &= \norm{\text{ReLu}\big(\boldsymbol{\eta}_2^{(l)} \otimes \big(\mathbf{C}_{e}\big(\text{ReLu}\big(\boldsymbol{\eta}_1^{(l)} \otimes H^{(l-1)}\big)\big)\big)\big)[\Bar{w}_{l}^{*},:]}_2\nonumber\\
& = \norm{\text{ReLu}\big(\boldsymbol{\eta}_2^{(l)} \otimes \big(\mathbf{C}_{e}\big(\text{ReLu}\big(\boldsymbol{\eta}_1^{(l)} \otimes H^{(l-1)}\big)\big)\big)[\Bar{w}_{l}^{*},:]\big)}_2 \nonumber\\
& \leq \norm{\boldsymbol{\eta}_2^{(l)} \otimes \big(\mathbf{C}_{e}\big(\text{ReLu}\big(\boldsymbol{\eta}_1^{(l)} \otimes H^{(l-1)}\big)\big)\big)[\Bar{w}_{l}^{*},:]}_2 \nonumber\\ 
& =\norm{\big(\mathbf{C}_{e}\big(\text{ReLu}\big(\boldsymbol{\eta}_1^{(l)} \otimes H^{(l-1)}\big)\big)\big)\mathbf{W}_2^{(l)}[\Bar{w}_{l}^{*},:]}_2\nonumber\\
& \leq \norm{\big(\mathbf{C}_{e}\big(\boldsymbol{\eta}_1^{(l)} \otimes H^{(l-1)}\mathbf{W}_2^{(l)}[\Bar{w}_{l}^{*}, :,:]\big)\big)}_2 \nonumber\\
& \leq   \norm{\mathbf{C}_{e}\big(\boldsymbol{\eta}_1^{(l)} \otimes H^{(l-1)}\mathbf{W}_2^{(l)}\big)[\Bar{w}_{l}^{*}, :,:]}_2\nonumber\\
& \leq \norm{\sum_{k=1}^{N+K}C_e[\Bar{w}_{l}^{*}, k]\big(\boldsymbol{\eta}_1^{(l)} \otimes H^{(l-1)}\mathbf{W}_2^{(l)}[k,:]\big)}_2 \nonumber\\
& = \norm{\sum_{k=1}^{N+K}C_e[\Bar{w}_{l}^{*}, k]\big( H^{(l-1)}\mathbf{W}_1^{(l)}\mathbf{W}_2^{(l)}[k,:]\big)}_2\nonumber\\
& \leq \norm{\sum_{k=1}^{N+K}C_e[\Bar{w}_{l}^{*}, k] \big(\Phi_{l-1}\mathbf{W}_1^{(l)}\mathbf{W}_2^{(l)}\big)}_2 \nonumber\\
& \leq \norm{(P+1)\big(\Phi_{l-1}\mathbf{W}_1^{(l)}\mathbf{W}_2^{(l)}\big)}_2\nonumber\\
& \leq (P+1)\norm{\mathbf{W}_1^{(l)}}\norm{\mathbf{W}_2^{(l)}} \Phi_{l-1}
\end{align}
Similarly, we have 
\begin{align}\label{equ:ads2}
        \Phi_{l} &= \norm{\text{ReLu}\big(\boldsymbol{\eta}_4^{(l)} \otimes \big(\mathbf{C}_{v}\big(\text{ReLu}\big(\boldsymbol{\eta}_3^{(l)} \otimes H^{(l-1)}\big)\big)\big)\big)[w_l^{*},:]}_2 \\
        & \leq \norm{\sum_{k=1}^{N+K}C_v[w_l^{*}, k]\big( H^{(l-1)}\mathbf{W}_3^{(l)}\mathbf{W}_4^{(l)}[k,:]\big)}_2 \nonumber\\
        & \leq \norm{\sum_{k=1}^{N+K}C_v[w_l^{*}, k] \big(\Phi_{l-1}\mathbf{W}_3^{(l)}\mathbf{W}_4^{(l)}\big)}_2\nonumber\\
        & \leq \norm{(M+1) \big(\Bar{\Phi}_{l-1}\mathbf{W}_3^{(l)}\mathbf{W}_4^{(l)} \big)}_2 \nonumber\\
        & \leq (M+1)\norm{\mathbf{W}_3^{(l)}}\norm{\mathbf{W}_4^{(l)}} \Bar{\Phi}_{l}
\end{align}
Combined with Equation \ref{equ:ads1}, we have
\begin{equation}\label{equ:ads-3}
    \begin{split}
        \Phi_{l} & \leq (M+1)(P+1)\norm{\mathbf{W}_4^{(l)}}\norm{\mathbf{W}_3^{(l)}} \norm{\mathbf{W}_2^{(l)}}\norm{\mathbf{W}_1^{(l)}} \Phi_{l-1}  \leq (\mathcal{C}_A)^{l}B\prod_{i=1}^l\norm{\mathbf{W}_4^{(i)}}\norm{\mathbf{W}_3^{(i)}} \norm{\mathbf{W}_2^{(i)}}\norm{\mathbf{W}_1^{(i)}},
    \end{split}
\end{equation}
where $\mathcal{C}_A =(M+1)(P+1)$. Let $\zeta_l = \norm{\mathbf{W}_4^{(l)}}\norm{\mathbf{W}_3^{(l)}}\norm{\mathbf{W}_2^{(l)}}\norm{\mathbf{W}_1^{(l)}}$. Given the input $A$, we use $\Delta_{l}$ to denote the change in the output. We define $\Psi_{l}= \max_{i\in (N+K)}\big|\Delta_{l}[i,:]\big|_2 = \max_i\big| \hat{H}^{(l)}[i,:] -  H^{(l)}[i,:]\big|_2$, where $\hat{H}^{(l)}$ is perturbed model. Let $v_{(l)}^* = \argmax_{i}|\Delta_{(l)}[i,:]|_2$. We define $\boldsymbol{\theta}^{(l)}_1, \boldsymbol{\theta}_2^{(l)}, \boldsymbol{\theta}_3^{(l)} \in \mathbb{R}^{(N+K)\times d_{l-1} \times d_{l-1}}$ and $\boldsymbol{\theta}_4^{(l)} \in \mathbb{R}^{(N+K)\times d_{l-1} \times d_{l}}$ where $\boldsymbol{\theta}^{(l)}_{i}[k,:] = \boldsymbol{\theta}^{(l)}_{i}[j,:] = \mathbf{U}^{(l)}_{i}$ when $j\neq k$, where $i\in\{1,2,3,4 \}$. We then bound the max change of the tuple representation before the readout layers. 
\begin{align*}
        \Psi_{l} &= \max_i\norm{\hat{H}^{(l)}[i,:] -  H^{(l)}[i,:]}_2 \\
        & = \Big\| \text{ReLu}\big(\big(\boldsymbol{\eta}_4^{(l)}+\boldsymbol{\theta}_4^{(l)}\big) \otimes \big(\mathbf{C}_v^{\intercal}\big(\text{ReLu}\big(\big(\boldsymbol{\eta}_3^{(l)}+\boldsymbol{\theta}_3^{(l)}\big)\otimes \hat{\Bar{H}}^{(l)}\big)\big)\big)\big)  [v_{l}^*,:]- \text{ReLu}\big(\boldsymbol{\eta}_4^{(l)} \otimes \big(\mathbf{C}_v^{\intercal}\big(\text{ReLu}\big(\boldsymbol{\eta}_3^{(l)}\otimes \bar{H}^{(l)}\big)\big)\big)\big)[v_{l}^*,:]\Big\|_2 \\
        & \leq \Big\| \big(\big(\boldsymbol{\eta}_4^{(l)}+\boldsymbol{\theta}_4^{(l)}\big) \otimes \big(\mathbf{C}_v^{\intercal}\big(\text{ReLu}\big(\big(\boldsymbol{\eta}_3^{(l)}+\boldsymbol{\theta}_3^{(l)}\big)\otimes \hat{\Bar{H}}^{(l)}\big)\big)\big)\big)[v_{l}^*,:]  - \big(\boldsymbol{\eta}_4^{(l)} \otimes \big(\mathbf{C}_v^{\intercal}\big(\text{ReLu}\big(\boldsymbol{\eta}_3^{(l)}\otimes \Bar{H}^{(l)}\big)\big)\big)\big)[v_{l}^*,:]\Big\|_2 \\
        & \quad \quad\quad \quad\quad \quad \quad \quad\quad \quad\quad \quad \quad \quad\quad \quad\quad \quad \quad \quad\quad \quad\quad \quad \quad\quad \quad \quad\quad \quad \quad \quad\quad \quad \quad\quad(\text{Lipschitz property of ReLu})\\
        & \leq \Big\|\big(\big(\mathbf{C}_v^{\intercal}\hat{H}^{(l')}[v_{l}^*,:]\big) (\mathbf{W}_2^{(l)} + \mathbf{U}_2^{(l)})(\mathbf{W}_1^{(l)} + \mathbf{U}_1^{(l)}) -   \big(\mathbf{C}_v^{\intercal}\Bar{H}^{(l)}[v_{l}^*,:]\big)\mathbf{W}_2^{(l)}\mathbf{W}_1^{(l)}  \big)\Big\|_2
\end{align*}
Let us replace the $\hat{\Bar{H}}^{(l)}$ to  $\hat{H}^{(l-1)}$ and $\Bar{H}^{(l)}$ to $H^{(l-1)}$, we have
\begin{align*}
        \Psi_{l}& \leq \Big\|\big(\big(\mathbf{C}_v^{\intercal}\mathbf{C}_e^{\intercal}\hat{H}^{(l-1)}[v_{l}^*,:]\big) (\mathbf{W}_4^{(l)} + \mathbf{U}_4^{(l)})(\mathbf{W}_3^{(l)} + \mathbf{U}_3^{(l)})(\mathbf{W}_2^{(l)}+\mathbf{U}_2^{(l)}) (\mathbf{W}_1^{(l)}+\mathbf{U}_1^{(l)})  -  \big(\mathbf{C}_v^{\intercal}\mathbf{C}_e^{\intercal}H^{(l-1)}[v_{l}^*,:]\big)\mathbf{W}_4^{(l)}\mathbf{W}_3^{(l)}\mathbf{W}_2^{(l)}\mathbf{W}_1^{(l)}  \big)\Big\|_2\\
        & \leq \Big\|\big(\big(\mathbf{C}_v^{\intercal}\mathbf{C}_e^{\intercal}\hat{H}^{(l-1)}[v_{l}^*,:]\big)-\big(\mathbf{C}_v^{\intercal}\mathbf{C}_e^{\intercal}H^{(l-1)}[v_{l}^*,:]\big)\big) (\mathbf{W}_4^{(l)} + \mathbf{U}_4^{(l)})(\mathbf{W}_3^{(l)} + \mathbf{U}_3^{(l)})(\mathbf{W}_2^{(l)}+\mathbf{U}_2^{(l)})(\mathbf{W}_1^{(l)}+\mathbf{U}_1^{(l)}) \\
        & \quad \quad \quad\quad \quad\quad \quad \quad\quad  \quad \quad \quad\quad \quad\quad \quad \quad \quad \quad\quad \quad\quad \quad \quad\quad \quad\quad +\big(\mathbf{C}_v^{\intercal}\mathbf{C}_e^{\intercal}H^{(l-1)}[v_{l}^*,:]\big)\mathbf{U}_4^{(l)}\mathbf{U}_3^{(l)}\mathbf{U}_2^{(l)}\mathbf{U}_1^{(l)}\Big\|_2 + \frac{14}{4L+1}\Phi_{l-1}\zeta_l\\
         & \leq (M+1)(P+1)\Psi_{l-1} \norm{\mathbf{W}_4^{(l)} + \mathbf{U}_4^{(l)}}\norm{\mathbf{W}_3^{(l)} + \mathbf{U}_3^{(l)}} \norm{\mathbf{W}_2^{(l)}+\mathbf{U}_2^{(l)}} 
          \norm{\mathbf{W}_1^{(l)}+\mathbf{U}_1^{(l)}} +(M+1)(P+1)\Phi_{l-1}\norm{\mathbf{U}_4^{(l)}}\norm{\mathbf{U}_3^{(l)}}\norm{\mathbf{U}_2^{(l)}}\norm{\mathbf{U}_1^{(l)}} \\
         & \quad\quad \quad\quad +\frac{14}{4L+1}(M+1)(P+1)\Phi_{l-1}\zeta_l
\end{align*}
To simplify, we let 
\begin{align*}
    \lambda_l & =\norm{\mathbf{W}_4^{(l)} + \mathbf{U}_4^{(l)}}\norm{\mathbf{W}_3^{(l)} + \mathbf{U}_3^{(l)}}\norm{\mathbf{W}_2^{(l)}+\mathbf{U}_2^{(l)}}\norm{\mathbf{W}_1^{(l)}+\mathbf{U}_1^{(l)}}\\
    \kappa_l & = \norm{\mathbf{U}_4^{(l)}}\norm{\mathbf{U}_3^{(l)}}\norm{\mathbf{U}_2^{(l)}}\norm{\mathbf{U}_1^{(l)}}
\end{align*}

We define $a_{l-1} =\mathcal{C}_A\lambda_l$ and $b_{l-1} =  \mathcal{C}_A\Phi_{l-1}\kappa_l + \frac{14}{4L+1}\mathcal{C}_A\Phi_{l-1}\zeta_l$. Then $\Psi_{l}\leq a_{l-1}\Psi_{l-1} + b_{l-1}$ for $l\in [L]$. Using recursive, we have $\Psi_{l} \leq \prod_{i=1}^{l -1}a_i \Psi_0 + \sum_{i=0}^{l-1}b_i\big(\prod_{j=i+1}^{l-1}a_j\big)$. In addition, we let $c=\frac{14}{4L+1}$. Since $\Psi_0 = 0$, we have
\begin{equation}
    \begin{split}
        \Psi_{l}& \leq \sum_{i=0}^{l-1}b_i\big(\prod_{j=i+1}^{l-1}a_j\big) \\
        & = B\sum_{i=0}^{l-1}\big(\mathcal{C}_A\kappa_{i+1}\Phi_i + c\mathcal{C}_A\zeta_{i+1}\Phi_i\big)\big(\prod_{j=i+1}^{l-1}\mathcal{C}_A\lambda_{j+1}\big) \\
        & = B\sum_{i=0}^{l-1}\big(\mathcal{C}_A^{i+1}\kappa_{i+1}(\prod_{j=1}^i\zeta_j) + c\mathcal{C}^{i+1}_A\zeta_{i+1}(\prod_{j=1}^i\zeta_j)\big)\big(\prod_{j=i+2}^{l}\mathcal{C}_A\lambda_{j}\big) \\
        & = B\sum_{i=0}^{l-1}\big(\mathcal{C}_A^{i+1}(\prod_{j=1}^i\zeta_j) + c\mathcal{C}^{i+1}_A\frac{\zeta_{i+1}}{\kappa_{i+1}}(\prod_{j=1}^i\zeta_j)\big)(\kappa_{i+1})\big(\prod_{j=i+2}^{l}\mathcal{C}_A\lambda_{j}\big) \\
        & =B \sum_{i=0}^{l-1}\big(\mathcal{C}_A^{i+1}(\prod_{j=1}^{i+1}\zeta_j) + c\mathcal{C}^{i+1}_A\frac{\zeta_{i+1}}{\kappa_{i+1}}(\prod_{j=1}^{i+1}\zeta_j)\big)(\frac{\kappa_{i+1}}{\zeta_{i+1}})\big(\prod_{j=i+2}^{l}\mathcal{C}_A\lambda_{j}\big)
    \end{split}
\end{equation}
Since $\norm{\mathbf{U}}\leq \frac{1}{4L+1}\norm{\mathbf{W}}$, we have $\lambda_l \leq (1+\frac{1}{4L+1})^4\zeta_l$. Therefore
\begin{equation*}
    \begin{split}
         \Psi_{l}&  \leq B \sum_{i=0}^{l-1}\big(\mathcal{C}_A^{i+1}(\prod_{j=1}^{i+1}\zeta_j) + c\mathcal{C}^{i+1}_A\frac{\zeta_{i+1}}{\kappa_{i+1}}(\prod_{j=1}^{i+1}\zeta_j)\big)(\frac{\kappa_{i+1}}{\zeta_{i+1}})\big(\prod_{j=i+2}^{l}\zeta_{j}\big) \big(\mathcal{C}_A^{l-i-2}(1+\frac{1}{4L+1})^{4(l-i-2)}\big)\\
        & = B\mathcal{C}_A^{l-1}(\prod_{i=1}^{l}\zeta_i)\sum_{i=1}^{l}\big(\frac{\kappa_{i}}{\zeta_{i}} + c\big)(1+\frac{1}{4L+1})^{4(l-i-1)}
    \end{split}
\end{equation*}
Since $\frac{\kappa_{i}}{\zeta_{i}} \leq (\frac{1}{4L+1})^4<c$ with $L \geq 1$, therefore we have
\begin{equation*}
    \begin{split}
         \Psi_{l}&  \leq 2cB\mathcal{C}_A^{l-1}(\prod_{i=1}^{l}\zeta_i)\sum_{i=1}^{l}(1+\frac{1}{4L+1})^{4(l-i-1)} \\
         & \leq 2ceBl\mathcal{C}_A^{l-1}(\prod_{i=1}^{l}\zeta_i) \quad\quad ((1+\frac{1}{4L+1})^{4L+1}\leq e\ \text{and}\ \mathcal{C}_A > 1)
    \end{split}
\end{equation*}

The last step is finding the readout layer's final perturbation bound. Let $N' = N+K$, then we have
\begin{align}\label{equ:pbound_ads}
    |\Delta_{L+1}|_2 &= \Big\|\frac{1}{N'}\mathbf{1}_{N'} \hat{H}^{(L)}(\mathbf{W}^{(L+1)} + \mathbf{U}^{(L+1)}) -  \frac{1}{N'}\mathbf{1}_{N'} H^{(L)}(\mathbf{W}^{(L+1)})\Big\|_2 \nonumber\\
    & \leq  \frac{1}{N'} \Big\| \mathbf{1}_{N'}(H^{(L+1)'}- H^{(L+1)})(\mathbf{W}^{(L+1)} \nonumber + \mathbf{U}^{(L+1)})\big|_2 + \frac{1}{N'} \big|\mathbf{1}_{N'}H^{(L+1)}\mathbf{U}^{L+1}\Big\|_2 \nonumber\\
    & \leq \frac{1}{N'} \Big\| \mathbf{1}_{N'}\Delta_{L}(\mathbf{W}^{(L+1)} + \mathbf{U}^{(L+1)})\big|_2  + \frac{1}{N'} \big|\mathbf{1}_{N'}H^{(L)}\mathbf{U}^{(L+1)}\Big\|_2 \nonumber\\
    & \leq \frac{1}{N'}\norm{\mathbf{W}^{(L+1)} + \mathbf{U}^{(L+1)}}\big(\sum_{i=1}^{N'}|\Delta_{L+1}[i,:]|_2\big) \nonumber+ \frac{1}{N'}\norm{\mathbf{U}^{(L)}}\big(\sum_{i=1}^{N'}|H^{(L)}[i,:]|_2\big)\nonumber\\
    & \leq \Psi_{L}\norm{\mathbf{W}^{(L+1)} + \mathbf{U}^{(L+1)}} + \Phi_{L}\norm{\mathbf{U}^{(L+1)}}\nonumber\\
    & \leq 2eBL(1+\frac{1}{4L+1})\mathcal{C}_A^{L}\big(\prod_{i=1}^{L}\zeta_i\big) \norm{\mathbf{W}^{(L+1)}}\big(c + \frac{\norm{\mathbf{U}^{(L+1)}}}{\norm{\mathbf{W}^{(L+1)}}}\big)\nonumber\\
    & \leq 2eBL(1+\frac{1}{4L+1})\mathcal{C}_A^{L}\big(\prod_{i=1}^{L}\zeta_i\big) (15(4L+1))\norm{\mathbf{U}^{(L+1)}}  \quad(\text{Equation \ref{equ:ads-3}})
\end{align}
Therefore, we can conclude the bound in Lemma \ref{lemma:p-a}.

\subsection{Proof of Theorem \ref{theorem:allset}}
We first normalize AllDeepSets with $\beta = (\prod_{i=j}^{L}\zeta_j \cdot \norm{\mathbf{W}^{L+1}})^{1/L_2}$, where $L_2=4L+1$. Again, we have the norm equal across layers as $\beta$. We consider the prior distribution $P$ following the normal distribution $\mathcal{N}(0, \sigma^2I)$. Similarly, random perturbation $\mathbf{U} \sim \mathcal{N}(0, \sigma^2I)$, denoted as distribution $Q$. We want the value of $\sigma$ to be based on $\beta$. We choose to use some approximation $\Tilde{\beta}$ of $\beta$ and guarantee that each value of $\beta$ can be covered by some $\Tilde{\beta}$. Specifically, we let $|\beta - \Tilde{\beta}| \leq \frac{1}{L_2}\beta$. According to Lemma \ref{lemma:Generalization_margin_bound}, we wish to have 
\begin{equation*}
    \Pr_{\vec{u}}\big[\max_{A\in \mathcal{A}}\norm{\ads_{\vec{w}+\vec{u}}(A) - \ads_{\vec{w}}(A)}_2\big]\geq \frac{1}{2}.
\end{equation*}
Based on the work in \cite{tropp2012user}, let $1- 2L_2he^{\frac{-t^2}{2h\sigma^2}} = 1 - \frac{1}{2}$. Then we have $t=\sigma\sqrt{2h\ln{(4hL_2)}}$. Combining with Lemma \ref{lemma:p-a}, we will show that with probability at least $\frac{1}{2}$, for any input, we aim to have $\max_{A\in \mathcal{A}}\norm{\ads_{\vec{w}+\vec{u}}(A) - \ads_{\vec{w}}(A)}_2<\frac{\gamma}{4}$. Let $\mathcal{I}=2eBL(1+\frac{1}{L_2})(15L_2)\mathcal{C}_A^{L}$, where $\mathcal{C}_A=(R+1)(M+1)$. Following Equation \ref{equ:pbound_ads} we have
\begin{equation*}
\begin{split}
    \quad  \max_{A\in \mathcal{A}}|\ads_{\vec{w}+\vec{u}}(A) - \ads_{\vec{w}}(A)|_2 
    & \leq \mathcal{I}(\prod_{j=1}^{L}\zeta_j\big) \norm{\mathbf{U}^{(L+1)}} \leq e\mathcal{I}\Tilde{\beta}^{L_2-1}\sigma\sqrt{2h\ln{(4hL_2)}} < \frac{\gamma}{4},
\end{split}
\end{equation*}
Then we let $\sigma = \frac{\gamma}{4e\mathcal{I}\Tilde{\beta}^{L_2-1}\sqrt{2h\ln{(4hL_2)}}}$ to satisfy the condition in Lemma \ref{lemma:Generalization_margin_bound}. We first consider the case of a fixed $\beta$. Given a $\beta$, we can calculate the KL divergence with the distribution $P$ and $Q$ and obtain the PAC-Bayes bound for $F$ as follows. 
\begin{equation*}
    \begin{split}
        \KL(\Vec{w}+\vec{u}\|P)  \leq \frac{|\mathbf{w}|^2}{2\sigma^2} \leq  \mathcal{O}\big(\frac{B^2{\mathcal{C}_A}^{2L}L^4h\ln{(hL_2)}\prod_{i=1}^{L_2}\norm{\mathbf{W}^{(i)}}^2}{\gamma^2}\sum_{i=1}^{L_2}\frac{\norm{\mathbf{W}^{(i)}}_F^2}{\norm{\mathbf{W}^{(i)}}^2}\big).
    \end{split}
\end{equation*}
Following the Lemma \ref{lemma:Generalization_margin_bound}, for a fixed $\Tilde{\beta}$ where $|\beta - \Tilde{\beta}| \leq \frac{1}{l}\beta$, given training data $S$ with size $R$, then with probability at least $1-\delta$, for $\delta, \gamma >0$ and any $\mathbf{w}$,  we have the following bound.
    \begin{align}\label{equ:temp_a_bound_a}
        & \mathcal{L}_{\mathcal{D},0}(F_{\mathbf{w}})\leq \hat{\mathcal{L}}_{S, \gamma}(F_{\mathbf{w}}) +  \mathcal{O}\big(\sqrt{\frac{L^4B^2h\ln{(hL_2)}\mathcal{C}_A^{2L}\prod_{i=1}^{L_2}\norm{\mathbf{W}^{(i)}}^2\sum_{i=1}^{L_2}\frac{\norm{\mathbf{W}^{(i)}}_F^2}{\norm{\mathbf{W}^{(i)}}^2} + \log\frac{m}{\sigma}}{\gamma^2m}}\big),
    \end{align}
As we discussed above, we need to consider all possible choices of $\Tilde{\beta}$ such that it can cover any value of $\beta$. Then we can obtain the generalization bound. We found that we only need to consider values of $\beta$ in the following range, 
\begin{equation}\label{equ:a-range_a}
    \big(\frac{\gamma}{2B\mathcal{C}_A^{L}}\big)^{1/L_2}\leq\beta\leq \big(\frac{\sqrt{m}\gamma}{2B\mathcal{C}^{L}_A}\big)^{1/L_2}
\end{equation}
If $\beta < \big(\frac{\gamma}{2B\mathcal{C}^{L}}\big)^{1/L_2}$, then for any input instance $A\in \mathcal{A}$, we have $\norm{F(A)[i]}_2\leq \frac{\gamma}{2}$ based on the Equation \ref{equ:ads-3}. We stated it in the following. 
\begin{equation*}
    \begin{split}
        \ads_{\vec{w}}(A) & = \mathbf{W}^{(L+1)}H^{(L)} \leq \norm{\mathbf{W}^{(L+1)}}\max_{i}|H^{(L)}[i,:]| \\ & = \norm{\mathbf{W}^{(L+1)}}\Phi_{L}  = B\mathcal{C}_A^{L}\norm{\mathbf{W}^{(L+1)}}\prod_{j=1}^{L}\zeta_j  \leq B\mathcal{C}_A^{L}\beta^{L_2} \leq \frac{\gamma}{2}
    \end{split}
\end{equation*}
Following the definition of margin loss in Equation \ref{equ:margin loss}, we will always have $\mathcal{L}_{S, \gamma}(f_{w})=1$. In addition, regarding the assumption in Lemma \ref{lemma:p-a} that $\frac{\norm{\mathbf{U}^{(i)}}}{\norm{\mathbf{W}^{(i)}}} <\frac{1}{L_2}$ for $i\in [L_2]$. This assumption will be satisfied when this lower bound holds, where
\begin{equation*}
        \norm{\mathbf{U}^{(i)}} \leq \sigma\sqrt{2h\ln{4h}} \leq \frac{\beta}{L_2}
        \implies  \frac{\gamma }{30eBL_2\mathcal{C}^{L}_A} \leq  \beta^{L_2}
\end{equation*}
Since we have the lower bound that $\frac{\gamma}{2B\mathcal{C}_A^{L}}\leq\beta^{L_2}$, the above statement will always be satisfied. Regarding the upper bound, if $\beta^{L_2} > \frac{\sqrt{m}\gamma}{2B\mathcal{C}_A^{L}}$, it is easily to get $\mathcal{L}_{S, \gamma}(f_{\vec{w}})\geq 1$ when calculates the term inside the big-O notation in Equation \ref{equ:temp_gcn_bound}.
\begin{equation*}
\begin{split}
\sqrt{\frac{L^4B^2h\ln{h}\mathcal{C}_A^{2L}\prod_{i=1}^{L_2}\norm{\mathbf{W}^{(i)}}^2\sum_{i=1}^{L_2}\frac{\norm{\mathbf{W}^{(i)}}_F^2}{\norm{\mathbf{W}^{(i)}}^2} + \log\frac{mL}{\sigma}}{\gamma^2m}} \geq \sqrt{\frac{L^4h\ln{h}\sum_{i=1}^{L_2}\frac{\norm{\mathbf{W}^{(i)}}_F^2}{\norm{\mathbf{W}^{(i)}}^2}}{4}} \geq 1,
\end{split}
\end{equation*}
where $\norm{\mathbf{W}^{(i)}}_F^2 > \norm{\mathbf{W}^{(i)}}^2$. Therefore, $\mathcal{L}_{\mathcal{D},0}(F_{\mathbf{w}})$ is always bounded by 1. As a result, we should only consider $\beta$ in the above range. Since  $|\beta - \Tilde{\beta}| \leq \frac{1}{l}\beta$, we have $|\beta - \Tilde{\beta}| \leq \frac{1}{L_2}\big(\frac{\sqrt{m}\gamma}{2B\mathcal{C}_A^{L}}\big)^{1/L_2}$. Thus, we use a cover of size $\frac{l}{2}(m^{1/2(L_2)}-1)$ with radius $\frac{1}{L_2}\big(\frac{\gamma}{2B\mathcal{C}_A^{L}}\big)^{1/L_2}$. Therefore, by taking a union bound over all possible $\Tilde{\beta}$, we conclude the bound in Theorem \ref{theorem:allset}.

\subsection{Proof of Lemma \ref{lemma:p-gin}}
Let $A_{e} \in \{0, 1\}^{K\times K}$ be the adjacency matrix between hyperedges. Let $B$ be a diagonal matrix with size $K$. 
\begin{equation*}
\begin{aligned}
\mathbf{A}_{e}[i,j] \define 
\begin{cases} 
1 & \text{if $e_i\in N(e_j)$} \\
0 & \text{otherwise} 
\end{cases}.
\end{aligned}
\quad \quad
\begin{aligned}
\mathbf{B}[i,j] \define 
\begin{cases} 
1 & \text{if $i=j$} \\
0 & \text{otherwise} 
\end{cases}.
\end{aligned}
\end{equation*}
For the layer $l \in[1, L]$ , we can rewrite $H^{(l)}$ as follows,
 \begin{equation}
     H^{(l)}  = \text{ReLu}\big( \mathbf{W}^{(l)}\big((1+\alpha^{(l)})B + A_{e}\big)H^{(l-1)}\big)
 \end{equation}
We can drive an upper bound on the $l_2$ norm of the maximum node representation. For $l\in [0, L]$, let $w_{l}^* = \argmax_{i\in [K]}|H^{(l)}[i,:]|_2$ and $\Phi_t = |H^{(l)}[w_{l}^*,:]|_2$. Let $L^{(l)} = (1+\alpha^{(l)})B + A_{e}$, we have
\begin{equation}\label{equ:gin_tuple_representation_gin}
    \begin{split}
        \Phi_{l} &= \norm{\text{ReLu}\big(L^{(l)} H^{(l-1)}\mathbf{W}^{(l)}\big)[w_{l}^*,:]}_2 \\
        & = \norm{\text{ReLu}\big( L^{(l)}H^{(l-1)}\mathbf{W}^{(l)}[w_{l}^*,:]\big)}_2 \\
        & \leq \norm{ L^{(l)}H^{(l-1)}\mathbf{W}^{(l)}[w_{l}^*,:]}_2 \leq MD (1+\alpha^{(l)}) \Phi_{l-1}\norm{\mathbf{W}^{(l)}} \\
        & \leq (MD)^{l-1}\Phi_1\prod _{i=1}^{l}(1+\alpha^{(i)}) \norm{\mathbf{W}^{(i)}} \\
        & \leq E^{(1,l)}M^lD^{l-1}B\prod _{i=0}^{l}\norm{\mathbf{W}^{(i)}},
    \end{split}
\end{equation}
where $E^{(i, j)} = \prod_{k=i}^j (1+\alpha^{k})$.  For $l\in [L]$, let 
    $v_{l}^* = \argmax_{i\in [K]}\norm{\hat{H}^{(l)}[i,:] -  H^{(l)}[i,:]}_2$, where $\hat{H}^{(l)}$ denotes the layer output with perturbed weights. Then let $\Psi_{l}= \norm{\hat{H}^{(l)}[v_{l}^*,:] -  H^{(l)}[v_{l}^*,:]}_2$. We then bound the max change of the node representation before the readout layers as 
\begin{align*}
        \Psi_{l} & = \Big\|\text{ReLu}\big( L^{(l)}\hat{H}^{(l-1)}\mathbf{W}^{(l)}\big)[v_{l}^*,:]-   \text{ReLu}\big( L^{(l)}H^{(l-1)}\mathbf{W}^{(l)}\big)[v_{l}^*,:]\Big\|_2 \\
        & =\Big\|\text{ReLu}\big( L^{(l)}\hat{H}^{(l-1)}\mathbf{W}^{(l)}[v_{l}^*,:]-  L^{(l)} H^{(l-1)}\mathbf{W}^{(l)}[v_{l}^*,:]\big)\Big\|_2 \\
        & \leq \Big\|\big(L^{(l)}\hat{H}^{(l-1)}[v_{l}^*,:] - L^{(l)}H^{(l-1)}[v_{l}^*,:]\big)\big(\mathbf{W}^{(l)} + \mathbf{U}^{(l)}\big)  + L^{(l)}H^{(l-1)}\mathbf{U}^{(l)}[v_t^*,:] \Big\|_2 \\
        & \leq MD(1+\alpha^{(l)})\Psi_{l-1}\norm{\mathbf{W}^{(l)} + \mathbf{U}^{(l)}} + MD(1+\alpha^{(l)})\Phi_{l-1}\norm{\mathbf{U}_2^{(l)}}
\end{align*}
To simplify, we let $a_{l-1} = MD(1+\alpha^{(l)})\norm{\mathbf{W}^{(l)} + \mathbf{U}^{(l)}}$ and $b_{l-1} = MD(1+\alpha^{(l)})\Phi_{l-1}\norm{\mathbf{U}^{(l)}}$. We have $\Psi_{l}\leq a_{l-1}\Psi_{l-1} + b_{l-1}$. For $l\in [0, L]$, $ \Psi_{l} \leq \prod_{i=0}^{l-1}a_i \Psi_0 + \sum_{i=0}^{l-1}b_i\big(\prod_{j=i+1}^{l-1}a_j\big)$. Since $\Psi_0 \leq \norm{\mathbf{U}^{(0)}\Bar{H}^{(0)}}_2 \leq BM\norm{\mathbf{U}^{(0)}}$ and input difference is 0, we have
\begin{align*}
\prod_{i=0}^{l-1}a_i \Psi_1 & \leq M^lD^{l-1}E^{(1,l)}B\norm{\mathbf{U}^{(0)}} \prod_{i=1}^{l-1}\norm{\mathbf{W}^{(i)}+\mathbf{U}^{(i)}} \leq M^lD^{l-1}E^{(1,l)}B\norm{\mathbf{U}^{(0)}} \prod_{i=1}^{l-1}(1+\frac{1}{l})\norm{\mathbf{W}^{(i)}}
\end{align*}
Since $1+\frac{1}{L+2} > 1$ and $\norm{\mathbf{U}} \leq (1+\frac{1}{L+2})\norm{\mathbf{W}}$, we have
\begin{align}\label{equ:A_}
\prod_{i=0}^{l-1}a_i \Psi_1  \leq eM^lD^{l-1}E^{(1,l)}B\prod_{i=0}^{l-1}\norm{\mathbf{W}^{(i)}}
\end{align}
For the second term in $\Psi_l$, we have 
\begin{align}\label{equ:B_GIN}
        & \quad \sum_{i=0}^{l-1}b_i\big(\prod_{j=i+1}^{l-1}a_j\big)  =\sum_{i=0}^{l-1}b_i\big(\prod_{j=i+1}^{l-1}a_j\big) \nonumber \\
        &= \sum_{i=0}^{l-1} MD(1+\alpha^{(i+1)})\Phi_{i}\norm{\mathbf{U}^{(i+1)}}\Big(\prod_{j=i+1}^{l-1}MD(1+\alpha^{(j+1)})\norm{\mathbf{W}^{(j+1)} + \mathbf{U}^{(j+1)}}\Big)\nonumber\\
        & \leq M^{l+1}D^{l}BE^{(1, l)} \sum_{i=0}^{l-1} \big(\prod_{j=0}^i\norm{\mathbf{W}^{(i)}}\big)\norm{\mathbf{U}^{(i+1)}}  \big(\prod_{j=i+1}^{l-1}\norm{\mathbf{W}^{(j+1)} + \mathbf{U}^{(j+1)}}\big) \nonumber\\
        & \leq M^{l+1}D^{l}BE^{(1, l)}\sum_{i=0}^{l-1}\big(\prod_{j=0}^{i+1}\norm{\mathbf{W}^{(j)}}\big)\frac{\norm{\mathbf{U}^{(i+1)}}}{\norm{\mathbf{W}^{(i+1)}}} \prod_{j=i+2}^{l}(1+\frac{1}{L+2})\norm{\mathbf{W}^{(j)}}\nonumber\\
        & = M^{l+1}D^{l}BE^{(1, l)}\prod_{i=0}^{l}\norm{\mathbf{W}^{(i)}}\sum_{i=1}^{l}\frac{\norm{\mathbf{U}^{(i)}}}{\norm{\mathbf{W}^{(i)}}}(1+\frac{1}{L+2})^{l-i}
\end{align}
All combined together, we have 
\begin{align*}
    \Psi_l \leq eM^{l+1}D^{l}BE^{(1, l)}\prod_{i=0}^{l}\norm{\mathbf{W}^{(i)}}\sum_{i=1}^{l}\frac{\norm{\mathbf{U}^{(i)}}}{\norm{\mathbf{W}^{(i)}}}(1+\frac{1}{L+2})^{l-i}
\end{align*}
Let $\mathcal{C} = eM^{L+1}D^{L}BE^{(1, L)}$. we can bound the change of M-IGN’s output with and without the weight perturbation as follows.
\begin{align*}
        \Psi_{L+1} &= \big|\frac{1}{K}\mathbf{1}_KA_e\hat{H}^{(L)}(\mathbf{W}^{l+1} + \mathbf{U}^{(L+1)}) -  \frac{1}{K}\mathbf{1}_KA_e H^{(L+1)}\mathbf{W}^{(L+1)}\big|_2 \\
        & \leq  \frac{1}{K} \norm{ \mathbf{1}_KA_e(\hat{H}^{(L+1)}- H^{(L+1)})(\mathbf{W}^{(L+1)} + \mathbf{U}^{(L+1)})}_2  + \frac{1}{K} \norm{\mathbf{1}_KA_eH^{(L+1)}\mathbf{U}^{(L+1)}}_2 \\
        & \leq MD\Psi_{L}\norm{\mathbf{W}^{(L+1)} + \mathbf{U}^{(L+1)}} + MD\Phi_{L}\norm{\mathbf{U}^{(L+1)}}\\
        & \leq eMD\mathcal{C}\prod_{i=0}^{L+1}\norm{\mathbf{W}^{(i)}}\sum_{i=0}^{L}\frac{\norm{\mathbf{U}^{(i)}}}{\norm{\mathbf{W}^{(i)}}}.
\end{align*}

\subsection{Proof of Theorem \ref{theorem:gin}}

Similar to the proof process of the Theorem \ref{theorem:gcn}, this proof involves two parts. First, we aim to establish the maximum permissible perturbation that fulfills the specified margin condition $\gamma$. Second, based on Lemma \ref{lemma:Generalization_margin_bound}, we then use the perturbation to calculate the KL-term and obtain the bound. We let $L_3=L+2$. We consider $\beta = (\prod_{i=0}^{L+1}\norm{\mathbf{W}^{(i)}})^{1/L_3}$. We normalize the weights as $\frac{\beta}{\norm{\mathbf{W}^{(i)}}}\mathbf{W}^{(i)} $ for $i\in[L_3]$. Then we assume the same spectral norm across layers, $i.e.,\ \norm{\mathbf{W}^{(i)}} = \beta$ for $i\in[L_3]$. We again consider the prior distribution $P$ following the normal distribution $\mathcal{N}(0, \sigma^2I)$. Similarly, random perturbation $\mathbf{U} \sim \mathcal{N}(0, \sigma^2I)$. According to Lemma \ref{lemma:Generalization_margin_bound}, we aim to have 
\begin{equation*}
    \Pr_{\vec{u}}\big[\max_{A\in \mathcal{A}}\norm{\mign_{\vec{w}+\vec{u}}(A) - \mign_{\vec{w}}(A)}_2<\frac{\gamma}{4}\big]\geq \frac{1}{2}.
\end{equation*}
Based on the work in \cite{tropp2012user}, we let $1- 2lhe^{\frac{-t^2}{2h\delta^2}} = \frac{1}{2}$, we have $t=\sigma\sqrt{2h\ln{4hl}}$. Combining with Lemma \ref{lemma:p-a}, we will show that with probability at least $\frac{1}{2}$, for any input, we aim to have 
\begin{align*}
    \max_{A\in \mathcal{A}}\norm{\mign_{\vec{w}+\vec{u}}(A) - \mign_{\vec{w}}(A)}_2<\frac{\gamma}{4}.
\end{align*}
Let $\mathcal{I}=e^2M^{L+2}D^{L+1}BE^{(1, L)}$, then we have
\begin{equation*}
\begin{split}
    \max_{A\in \mathcal{A}}\norm{\mign_{\vec{w}+\vec{u}}(A) - \mign_{\vec{w}}(A)}_2  & \leq \mathcal{I}\prod_{i=0}^{L_3}\norm{\mathbf{W}^{(i)}}\big(\sum_{i=0}^{L_3}\frac{\norm{\mathbf{U}^{(i)}}}{\norm{\mathbf{W}^{(i)}}}\big) \\
    & \leq \mathcal{I}\beta^{L_3}\big(\sum_{i=0}^{L_3}\frac{\norm{\mathbf{U}^{(i)}}}{\beta}\big) = \mathcal{I}\beta^{L_3-1}\big(\sum_{i=0}^{L_3}\norm{\mathbf{U}^{(i)}}\big) \\ & \leq \mathcal{I}\beta^{L_3-1}L_3\sigma\sqrt{2h\ln{4hL_3}}\\
    & \leq e\mathcal{I}\Tilde{\beta}^{L_3-1}L_3\sigma\sqrt{2h\ln{4hL_3}} < \frac{\gamma}{4},
\end{split}
\end{equation*}
Then we let $\sigma = \frac{\gamma}{4e\mathcal{I}\Tilde{\beta}^{L_3-1}L_3\sqrt{2h\ln{4hL_3}}}$ to satisfy the condition in Lemma \ref{lemma:Generalization_margin_bound}. We first consider the case of a fixed $\beta$. Given a $\beta$, we can calculate the KL divergence with the distribution $P$ and $\Vec{w}+\vec{u}$ and obtain the PAC-Bayes bound for $F$ as follows.
        % & = \frac{32e^4B^2M^{2L+4}D^{2L+2}(E^{(2,L)})^2\Tilde{\beta}^{2l-2}{L_3}^2h\ln{(4hL_3)}}{\gamma^2}\sum_{i=0}^{L_3}\norm{\mathbf{W}^{(i)}}_F^2\\
\begin{equation*}
    \begin{split}
        & \KL(\Vec{w}+\vec{u}\|P) \leq \frac{|\mathbf{w}|^2}{2\sigma^2} \leq \mathcal{O}(\frac{C\prod_{i=0}^{L_3}\norm{\mathbf{W}^{(i)}}^2}{\gamma^2}\sum_{i=0}^{L_3}\frac{\norm{\mathbf{W}^{(i)}}_F^2}{\norm{\mathbf{W}^{(i)}}^2}),
    \end{split}
\end{equation*}
where $C = e^4(BL)^2(MD)^{L}(E^{(2,L)})^2h\ln{(hL)}$.
Following the Lemma \ref{lemma:Generalization_margin_bound}, for a fixed $\Tilde{\beta}$ where $|\beta - \Tilde{\beta}| \leq \frac{1}{l}\beta$, given training data $S$ with size $R$, then with probability at least $1-\delta$, for $\delta, \gamma >0$ and any $\mathbf{w}$,  we have the following bound.
    \begin{equation}\label{equ:temp_a_bound_gin}
    \begin{split}
        &\mathcal{L}_{\mathcal{D},0}(F_{\mathbf{w}})\leq \hat{\mathcal{L}}_{S, \gamma}(F_{\mathbf{w}}) +\mathcal{O}\big(\sqrt{\frac{C\prod_{i=0}^{L_3}\norm{\mathbf{w}^{(i)}}^2\sum_{i=0}^{L_3}\frac{\norm{\mathbf{w}^{(i)}}_F^2}{\norm{\mathbf{w}^{(i)}}^2} + \log\frac{m}{\sigma}}{\gamma^2m}}\big),
    \end{split}
    \end{equation}
where $C = e^4(BL)^2(MD)^{L}(E^{(2,L)})^2h\ln{(hL)}$. As we discussed above, we need to consider all possible choices of $\Tilde{\beta}$ such that it can cover any value of $\beta$. Then we can obtain the PAC-Bayes bound. We found that we only need to consider values of $\beta$ in the following range, 
\begin{equation}\label{equ:a-range_gin}
    \big(\frac{\gamma}{2E^{(1,L)}M^{L+1}D^{L}B}\big)^{1/L_3}\leq\beta\leq \big(\frac{\sqrt{m}\gamma}{2E^{(1,L)}M^{L+1}D^{L}B}\big)^{1/L_3}
\end{equation}
If $\beta < \big(\frac{\gamma}{2E^{(1,L)}M^{L+1}D^{L}B}\big)^{1/L_3}$, then for any input instance $A\in \mathcal{A}$, we have $|F(A)[i]|_2\leq \frac{\gamma}{2}$ based on the Equation \ref{equ:gin_tuple_representation_gin}. We stated it in the following. 
\begin{equation*}
    \begin{split}
         H^{(L_3)}(A)  = A_eH^{(L_3-1)}\mathbf{W}^{(L_3)} & \leq \norm{\mathbf{W}^{(L_3)}}A_e\max_{i}\norm{H^{(L_3-1)}[i,:]}_2\\
        & \leq E^{(1,L)}M^{L+1}D^{L}B\prod _{i=0}^{l}\norm{\mathbf{W}^{(i)}}
         = E^{(1,L)}M^{L+1}D^{L}B\beta^{L_3} \leq \frac{\gamma}{2}
    \end{split}
\end{equation*}
Following the definition of margin loss in Equation \ref{equ:margin loss}, we will always have $\mathcal{L}_{S, \gamma}(f_{\vec{w}})=1$. In addition, regarding the assumption in Lemma \ref{lemma:p-a} that $\frac{\norm{\mathbf{U}^{(i)}}}{\norm{\mathbf{W}^{(i)}}} <\frac{1}{L_3}$ for $i\in [l]$. This assumption will be satisfied when this lower bound holds, where
\begin{equation*}
        \norm{\mathbf{U}^{(i)}}  \leq \sigma\sqrt{2h\ln{4hL_3}}  \leq \frac{\beta}{L_3}
\end{equation*}
Since we have the lower bound that $\frac{\gamma}{2E^{(1,L)}M^{L+1}D^{L}B}\leq\beta^{L_3}$, the above statement will always be satisfied. Regarding the upper bound, if $\beta^{L_3} > \frac{\sqrt{R}\gamma}{2E^{(1,L)}M^{L+1}D^{L}B}$, it is easily to get $\mathcal{L}_{S, \gamma}(f_{\vec{w}})\geq 1$ when calculates the term inside the big-O notation in Equation \ref{equ:temp_a_bound_gin}. Therefore, $\mathcal{L}_{\mathcal{D},0}(f_{\vec{w}})$ is always bounded by 1. As a result, we should only consider $\beta$ in the above range. To hold our assumption that $|\beta - \Tilde{\beta}| \leq \frac{1}{L_3}\beta$, we need to have $|\beta - \Tilde{\beta}| \leq \frac{1}{L_3}\big(\frac{\sqrt{m}\gamma}{2B(MD)^{L}E^{(2,L)}}\big)^{1/L_3}$. We use a cover of size $\frac{l}{2}(m^{1/2L_3}-1)$ with radius $\frac{1}{L_3}\big(\frac{\gamma}{2B(MD)^{L}E^{(2,L)}}\big)^{1/L_3}$. That is, given a fixed $\Tilde{B}$, we have an event in Equation $\ref{equ:temp_gcn_bound}$ that happened with probability $1-\delta$. We need to cover all possible $\beta$ that is number of $\frac{l}{2}(m^{1/2L_3}-1)$ such event happened. Therefore, by taking a union bound, we conclude the bound in Theorem \ref{theorem:gin}.

\subsection{Proof of Lemma \ref{lemma:p_t}}
Given the input $A$, for $l\in [L]$, let $H^{(l)}_{w+u}$ and $H^{(l)}_{w}(A)$ be the $l$-layer output with parameter $w$ and perturbed parameter $w+u$, respectively. Use $\Delta_{l}$ to denote the change in the output. We define $\Psi_{l}= \max_{i\in (N)}\norm{\Delta_{l}[i,:]}_2 = \max_i\norm{ \hat{H}^{(l)}[i,:] -  H^{(l)}[i,:]}_2$, where $\hat{H}^{(l)} :=  H^{(l)}_{\vec{w}+\vec{u}}(A)$ and $H^{(l)} :=  H^{(l)}_{\vec{w}}(A)$. 

Let $v_{(l)}^* = \argmax_{i}\norm{\Delta_{(l)}[i,:]}_2$. Since after each propagation step, T-MPHN includes a row-wise normalization. We can first bound the $\Psi_{l}$ by 2 for any $l\in [L]$. In particular, let \( A[i, :] \) and \( B[i, :] \) be two non-zero row vectors. We define the normalized vectors $\hat{A}[i, :] = \frac{A[i,:]}{\norm{A[i,:]}_2} $ and $\hat{B}[i, :] = \frac{B[i,:]}{\norm{B[i,:]}_2}$. The expression of interest is $\norm{\hat{A}[i, :] - \hat{B}[i,:]}_2$, which represents the Euclidean distance between these two unit vectors. The upper bound for this expression is 2, which is achieved when the vectors are completely opposite in direction. Therefore, considering the last readout layer, we have
\begin{align*}
    \begin{split}
        |\Delta_{L+1}|_2 &= \norm{\frac{1}{N}\mathbf{1}_{N} \hat{H}^{(L)}(\mathbf{W}^{(L+1)} + \mathbf{U}^{(L+1)}) -  \frac{1}{N}\mathbf{1}_{N} H^{(L)}(\mathbf{W}^{(L+1)})}_2 \\
        & \leq  \frac{1}{N} \Big\| \mathbf{1}_{N}(\hat{H}^{(L)}- H^{(L)})(\mathbf{W}^{(L+1)} + \mathbf{U}^{(L+1)}) + \frac{1}{N} \mathbf{1}_{N}H^{(L)}\mathbf{U}^{(L+1)}\Big\|_2 \\
        & \leq \frac{1}{N}\norm{\mathbf{W}^{(L+1)} + \mathbf{U}^{(L+1)}}\big(\sum_{i=1}^{N}\norm{\Delta_{L}[i,:]}_2\big)  + \frac{1}{N}\norm{\mathbf{U}^{(L+1)}}\big(\sum_{i=1}^{N}\norm{H^{(L)}[i,:]}_2\big)\\
        & \leq \Psi_{L}\norm{\mathbf{W}^{(L+1)} + \mathbf{U}^{(L+1)}} + \norm{\mathbf{U}^{(L+1)}}\\
        & \leq 2\norm{\mathbf{W}^{(L+1)}} + 3\norm{\mathbf{U}^{(L+1)}}
    \end{split}
\end{align*}

\subsection{Proof of Theorem \ref{theorem:t}}

The proof involves two parts. First, we aim to establish the maximum permissible parameter perturbation that fulfills the specified margin condition $\gamma$. Second, based on Lemma \ref{lemma:Generalization_margin_bound}, we use the perturbation to calculate the KL-term and obtain the bound. To simplify, we let $L_4=L+1$. We consider  $\beta = \big(\prod_{i=1}^{L+1}\norm{\mathbf{W}^{(i)}}\big)^{\frac{1}{L_4}}$. We normalize the weights as $\frac{\beta}{\norm{\mathbf{W}^i}}\mathbf{W}^i $, where $i\in[L_4]$. Therefore, we assume the norm is equal across layers, $i.e.,\ \norm{\mathbf{W}^i} = \beta$. 

Consider the prior distribution $P = \mathcal{N}(0, \sigma^2_{\mathbf{n}}I) $ and the random perturbation $\mathbf{U} \sim \mathcal{N}(0, \sigma^2_{\mathbf{n}}I)$, denoted as distribution $Q$. Notice that the prior and the perturbation are the same with the same $\sigma$. We want the value of $\sigma$ to be based on $\beta$. However, we cannot use the learned parameter $w$. We then choose to use some approximation $\Tilde{\beta}$ of $\beta$ and guarantee that each value of $\beta$ can be covered by some $\Tilde{\beta}$. Let $|\beta - \Tilde{\beta}| \leq \frac{1}{L_4}\beta$. According to Lemma \ref{lemma:Generalization_margin_bound}, for any input $A$, we have $\max_{A\in \mathcal{A}}|\max_{A\in \mathcal{A}}\norm{\tmphn_{\vec{w}+\vec{u}}(A) - \tmphn_{\vec{w}}(A)}_2<\frac{\gamma}{4}$. Then we have
\begin{equation*}
%\small
\begin{split}
    & \max_{A\in \mathcal{A}}|\tmphn_{\vec{w}+\vec{u}}(A) - \tmphn_{\vec{w}}(A)| \\  \leq & 2\norm{\mathbf{W}^{(L+1)}} + 3\norm{\mathbf{U}^{(L+1)}}   \leq \frac{2}{1-\frac{1}{L_4}}\Tilde{\beta} + 3\sigma\sqrt{2h\ln{4h}} < \frac{\gamma}{4},
\end{split}
\end{equation*}
Then we let $\sigma = \frac{\gamma L-8L_4\Tilde{\beta}}{12L\sqrt{2h\ln{4h}}}$ to satisfy the condition in Lemma \ref{lemma:Generalization_margin_bound}. Remember that we have one conditions in Lemma \ref{lemma:p_t} which is  $\norm{\mathbf{U}^{(i)}} \leq \gamma$ for $i\in [L_4]$. Given the value of $\sigma$, the assumption indicates that $\beta \geq 0$, which is always satisfied. We first consider the case of a fixed $\beta$. Given a $\beta$, we can calculate the KL divergence with the distribution $P$ and $\Vec{w}+\vec{u}$ and obtain the PAC-Bayes bound for $F$ as follows. 
\begin{equation*}
    \begin{split}
        \KL(\Vec{w}+\vec{u}\|P) & \leq \frac{|\mathbf{\vec{w}}|^2}{2\sigma^2} = \frac{144L^2h\ln(4h)}{(\gamma L + 8L_4\Tilde{\beta})^2}\sum_{i=1}^{L_4}\norm{\mathbf{W}^{(i)}}_F^2 \leq \mathcal{O}(\frac{h\ln h}{(\gamma - \norm{\mathbf{W}^{(L_4)}})^2}\sum_{i=1}^{L_4}\norm{\mathbf{W}^{(i)}}_F^2).
    \end{split}
\end{equation*}
Following the Lemma \ref{lemma:Generalization_margin_bound}, for a fixed $\Tilde{\beta}$ where $|\beta - \Tilde{\beta}| \leq \frac{1}{l}\beta$, given training data $S$ with size $R$, then with probability at least $1-\delta$, for $\delta, \gamma >0$ and any $\vec{w}$,  we have the following bound.
    \begin{equation}\label{equ:temp_t_bound}
        \mathcal{L}_{\mathcal{D},0}(f_{\vec{w}})\leq \hat{\mathcal{L}}_{S, \gamma}(f_{\vec{w}}) + \mathcal{O}\big(\sqrt{\frac{h\ln h\sum_{i=1}^{L_4}\norm{\mathbf{W}^{(i)}}_F^2 + \log\frac{m}{\sigma}}{m(\gamma - \norm{\mathbf{W}^{(L_4)}})^2}}\big),
    \end{equation}
As we discussed above, we need to consider all possible choices of $\Tilde{\beta}$ such that it can cover any value of $\beta$. Then we can obtain the PAC-Bayes bound. We found that we only need to consider values of $\beta$ in the following range.
\begin{equation}\label{equ:trange}
    \frac{\gamma}{2} \leq\beta\leq \frac{\gamma\sqrt{m}}{2}.
\end{equation}
If $\beta < \frac{\gamma}{2}$, then for any input instance $A$, we have $|F(A)[i]|_2\leq \frac{\gamma}{2}$ based on the Lemma \ref{lemma:p_t}. We stated it in the following. 
\begin{equation*}
    \begin{split}
        \norm{\tmphn(A)}_2 & = \norm{\mathbf{W}^{(L_4)}} = \beta \leq \frac{\gamma}{2}
    \end{split}
\end{equation*}
Following the definition of margin loss in Equation \ref{equ:margin loss}, we will always have $\hat{\mathcal{L}}_{S, \gamma}(f_{\vec{w}})=1$. Meanwhile, if $\beta \geq \frac{\gamma\sqrt{m}}{2}$, we can easily get $\mathcal{L}_{S, \gamma}(f_{\vec{w}})\geq 1$ when calculating the term inside the big-O notation in Equation \ref{equ:temp_gcn_bound}.
\begin{equation*}
\begin{split}
    \sqrt{\frac{h\ln h\sum_{i=1}^{L_4}\norm{\mathbf{W}^{(i)}}_F^2 + \log\frac{m}{\sigma}}{m(\gamma - \norm{\mathbf{W}^{(L_4)}})^2}} & \geq \sqrt{\frac{h\ln h\sum_{i=1}^{L_4}\norm{\mathbf{W}^{(i)}}^2 + \log\frac{m}{\sigma}}{m(\gamma)^2}}  \geq 1,
\end{split}
\end{equation*}
where $\norm{\mathbf{W}^{(i)}}_F^2 > \norm{\mathbf{W}^{(i)}}_2^2$. Therefore, $\mathcal{L}_{\mathcal{D},0}(f_{\vec{w}})$ is always bounded by 1. Since we have the upper bound, the above statement will always be satisfied. As a result, we should only consider $\beta$ in the above range, see Equation \ref{equ:trange}. We have an assumption that $|\beta - \Tilde{\beta}| \leq \frac{1}{L_4}\beta$. Thus, we use a cover of size $(\sqrt{m}-1)L_4$ with radius $\frac{\gamma}{2L_4}$. That is, given a fixed $\Tilde{\beta}$, we have an event in Equation $\ref{equ:temp_t_bound}$ that happened with probability $1-\delta$. We need to cover all possible $\beta$ that is a number of $(\sqrt{m}-1)L_4$ such event happened. Therefore, we conclude the bound in Theorem \ref{theorem:t} by taking a union bound.

\subsection{Main results on HGNN+}

HGNN+ performs spectral hypergraph convolution by using the hypergraph Laplacian, which models the relationships between vertices and hyperedges \cite{gao2022hgnn+}. The model adopts various strategies to generate Hyperedge groups that capture different types of relationships or correlations in the data, (i.e., $k$-Hop neighbor group or feature-based group). Specifically, given $z\in \mathbb{Z}^+$ hyperedge groups $\{\mathcal{E}_1, \dots, \mathcal{E}_z\}$ where $\mathcal{E}_i \subseteq \mathcal{E}$ for $i\in[z]$, the incident matrix $\mathbf{J}\in \{0, 1\}^{N\times p}$ is given by $\mathbf{J} = \mathbf{J}_1|\dots|\mathbf{J}_z$, where $|$ is matrix concatenation operation, $\mathbf{J}_i \in \{0, 1\}^{N\times p_i}$ and $p = \sum_{i\in[z]}p_i$. For each $\mathbf{J}_i$, its entries $\mathbf{J}_i(v, e)$ equal to $1$ if $v\in e$ for $v\in \mathcal{V}$ and $e \in \mathcal{E}_i$; Otherwise $0$. The model takes $\mathcal{G}$, the node features $\mathbf{X}$ and the diagonal matrix of hyperedge weights $\mathbf{T}\in \mathbb{R}^{K\times K}$ as input. The initial representation is $H^{(0)}\in \mathbb{R}^{N\times d_0}$. Suppose that there are $L\in \mathbb{Z}^+$ propagation steps. In each step $l \in[L]$, the hidden representation $H^{(l)}\in \mathbb{R}^{d_{l-1}\times d_{l}}$ is calculated by
\begin{align*}
    H^{(l)} = \text{ReLu}(\mathbf{D}_v^{-1}\mathbf{J}\mathbf{T}\mathbf{D}_e^{-1}\mathbf{J}^{\intercal}H^{(l)}\mathbf{W}^{(l)}),
\end{align*}
where $\mathbf{W}^{(l)} \in \mathbb{R}^{d_{l-1}\times d_{l}}$. $\mathbf{D}_v \in \mathbb{R}^{N\times N}$ and $\mathbf{D}_e\in \mathbb{R}^{K\times K}$ are diagonal matrices of the vertex and hyperedge degree, respectively. The readout layer is defined as $\hgnn+(A) = \frac{1}{N}\mathbf{1}_NH^{(L)}\mathbf{W}^{(L+1)}$,
where $\mathbf{W}^{(L+1)} \in \mathbb{R}^{d_{L}\times C}$ and $\mathbf{1}_N$ is an all-one vector. Let the maximum hidden dimension be $h\define\max_{l\in [L]}d_l$. We have the following results for HGNN+.

\begin{lemma}\label{lemma:hgnn+}
Consider the $\hgnn+$ with $L+1$ layers and parameters $\vec{w} = (\mathbf{W}^{(1)}, \dots, \mathbf{W}^{(L+1)})$. For each $\vec{w}$, each perturbation $\vec{u}=(\mathbf{U}^{(1)}, \dots, \mathbf{U}^{(L+1)})$ on $\vec{w}$ such that $\max_{i \in [L+1]} \frac{||{U}^{(i)}||}{||W^{(i)}||}\leq ~ \frac{1}{L+1}$, and each input $A \in \mathcal{S}$, we have 
\begin{align*}
     & \norm{\hgnn+_{\vec{w}+\vec{u}}(A) - \hgnn+_{\vec{w}}(A)}_2\leq 
   \leq eB(CD^{\frac{1}{2}}RM)^L(\prod_{i=1}^{L+1}||W^{(i)}||)(\sum_{i=1}^{L+1}\frac{||\mathbf{U}^{(i)}||}{||\mathbf{W}^{(i)}||}),
\end{align*}
where $C=\max_{i}\mathbf{T}_{ii}$.
\end{lemma}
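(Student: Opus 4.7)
The plan is to mirror the two-phase analysis used in the proof of Lemma \ref{lemma:p-gcn}. Write $\mathbf{P} \define \mathbf{D}_v^{-1}\mathbf{J}\mathbf{T}\mathbf{D}_e^{-1}\mathbf{J}^{\intercal}$ for the fixed propagation operator, so that $H^{(l)} = \text{ReLu}(\mathbf{P} H^{(l-1)} \mathbf{W}^{(l)})$, and let $\Phi_l \define \max_{i\in[N]}\norm{H^{(l)}[i,:]}_2$ be the maximum node representation and $\Psi_l \define \max_{i\in[N]}\norm{\hat{H}^{(l)}[i,:] - H^{(l)}[i,:]}_2$ be the maximum perturbed-vs-clean row discrepancy, where $\hat{H}^{(l)}$ uses weights $\mathbf{W}^{(l)}+\mathbf{U}^{(l)}$. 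The overall goal is to prove a recursion for each of $\Phi_l$ and $\Psi_l$ with linear factor $CD^{1/2}RM \cdot \norm{\mathbf{W}^{(l)}}$, unroll it, and compose with the mean readout.

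First I would bound $\Phi_l$. The right-hand factor $\mathbf{J}^{\intercal}H^{(l-1)}$ gathers up to $M$ node rows per hyperedge, then $\mathbf{D}_e^{-1}$ averages by $1/d_e$ and $\mathbf{T}$ contributes at most $C$; the left-hand factor $\mathbf{J}$ then aggregates over at most $R$ incident hyperedges per node, and $\mathbf{D}_v^{-1}$ combined with the operator-norm estimate $\norm{\mathbf{J}}\leq \sqrt{RM}$ for the bipartite incidence matrix (valid because $\mathbf{J}$ has at most $R$ ones per row and $M$ ones per column within each group) produces the extra $D^{1/2}$ factor. Feeding this through the ReLu, whose contraction property only tightens the bound, I obtain $\Phi_l \leq CD^{1/2}RM \cdot \norm{\mathbf{W}^{(l)}}\Phi_{l-1}$, and iterating from $\Phi_0 \leq B$ gives $\Phi_l \leq B(CD^{1/2}RM)^l \prod_{i=1}^l \norm{\mathbf{W}^{(i)}}$.

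Next, I would linearize the output difference using the Lipschitz property of ReLu and the identity
\begin{equation*}
\mathbf{P}\hat{H}^{(l-1)}(\mathbf{W}^{(l)}+\mathbf{U}^{(l)}) - \mathbf{P}H^{(l-1)}\mathbf{W}^{(l)} = \mathbf{P}(\hat{H}^{(l-1)}-H^{(l-1)})(\mathbf{W}^{(l)}+\mathbf{U}^{(l)}) + \mathbf{P}H^{(l-1)}\mathbf{U}^{(l)}.
\end{equation*}
Taking row-wise $\ell_2$ norms and reusing the $CD^{1/2}RM$ operator bound on $\mathbf{P}$ yields
\begin{equation*}
\Psi_l \leq (CD^{1/2}RM)\norm{\mathbf{W}^{(l)}+\mathbf{U}^{(l)}}\Psi_{l-1} + (CD^{1/2}RM)\norm{\mathbf{U}^{(l)}}\Phi_{l-1},
\end{equation*}
with $\Psi_0 = 0$. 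Solving this linear recursion exactly as in Lemma \ref{lemma:p-gcn}, substituting the closed form for $\Phi_{l-1}$, and using the hypothesis $\norm{\mathbf{U}^{(i)}}/\norm{\mathbf{W}^{(i)}}\leq 1/(L+1)$ to collapse the telescoping factors $(1+1/(L+1))^{L-i}\leq e$, gives a bound of the desired shape before the readout. The final step is to absorb the mean readout $\frac{1}{N}\mathbf{1}_N H^{(L)}\mathbf{W}^{(L+1)}$: the standard splitting into a $\Psi_L\norm{\mathbf{W}^{(L+1)}+\mathbf{U}^{(L+1)}}$ summand and a $\Phi_L\norm{\mathbf{U}^{(L+1)}}$ summand produces the stated inequality verbatim.

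The main obstacle is the $D^{1/2}$ factor in step one. In UniGCN, the four structural matrices $\mathbf{C}_1,\dots,\mathbf{C}_4$ factor row-by-row, so naive row-sum estimates combine cleanly into $DRM$. In HGNN+, however, the asymmetric normalization $\mathbf{D}_v^{-1}$ and the pair of incidence factors $\mathbf{J}\cdots\mathbf{J}^{\intercal}$ cannot all be bounded entrywise at once without loss, and the $D^{1/2}$ arises only once we switch to the spectral-norm estimate $\norm{\mathbf{J}}\leq\sqrt{RM}$ and pair it against the residual degree normalization $\norm{\mathbf{D}_v^{-1/2}}\leq 1$ after one half of $\mathbf{D}_v^{-1}$ is absorbed into the symmetric part. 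A secondary subtlety is that $\mathbf{J}$ is actually a horizontal concatenation $\mathbf{J}_1|\cdots|\mathbf{J}_z$ over hyperedge groups, so one must check that the column-count bound $M$ is preserved across groups (it is, since $M$ already denotes the maximum over all hyperedges) and that no $z$-dependent blow-up enters the constant.
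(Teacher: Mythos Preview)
Your two-phase skeleton (bound $\Phi_l$, derive the $\Psi_l$ recursion $\Psi_l \leq \mathcal{C}\norm{\mathbf{W}^{(l)}+\mathbf{U}^{(l)}}\Psi_{l-1} + \mathcal{C}\norm{\mathbf{U}^{(l)}}\Phi_{l-1}$, unroll, compose with readout) is exactly what the paper does, and after that point the algebra is identical to Lemma~\ref{lemma:p-gcn}. The only real content is the first step: extracting the per-layer amplification factor from $\mathbf{P}=\mathbf{D}_v^{-1}\mathbf{J}\mathbf{T}\mathbf{D}_e^{-1}\mathbf{J}^{\intercal}$, and here your approach diverges from the paper's and does not quite work as written.

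The paper does \emph{not} use any spectral-norm estimate on $\mathbf{J}$. It instead computes the row-sum of $\mathbf{P}$ combinatorially. Writing $\mathbf{P}_{ij}=\frac{1}{d_v(i)}\sum_{e\ni i,\,e\ni j}\frac{t(e)}{d_e(e)}$ and summing over $j$, the crucial observation is that for each fixed hyperedge $e\ni i$, the number of $j$'s contributing is exactly $d_e(e)$, so the $1/d_e(e)$ factor cancels \emph{exactly} and one obtains the closed form $\sum_j \mathbf{P}_{ij}=\frac{1}{d_v(i)}\sum_{e\ni i}t(e)$. This cancellation is the point of the $\mathbf{D}_e^{-1}$ normalization and is what prevents a naive double-counting blow-up; your row-by-row bound (``$\mathbf{J}^{\intercal}$ gathers $M$ rows, then $\mathbf{J}$ aggregates $R$ hyperedges'') misses it and over-counts.

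Your proposed mechanism for the $D^{1/2}$ factor---splitting $\mathbf{D}_v^{-1}=\mathbf{D}_v^{-1/2}\mathbf{D}_v^{-1/2}$ and ``absorbing one half into the symmetric part''---does not apply here: $\mathbf{P}$ is the \emph{asymmetrically} normalized operator (HGNN$+$), not the symmetric $\mathbf{D}_v^{-1/2}\mathbf{J}\mathbf{T}\mathbf{D}_e^{-1}\mathbf{J}^{\intercal}\mathbf{D}_v^{-1/2}$ (that is HGNN, Lemma~\ref{lemma:hgnn}). More fundamentally, $\Phi_l$ is a row-wise $\ell_2$ bound, so it is controlled by the max row-sum $\norm{\mathbf{P}}_\infty$; inserting the spectral bound $\norm{\mathbf{J}}\leq\sqrt{RM}$ into the middle of a row-wise chain is not a valid move without further justification, and in any case it does not manufacture a $D^{1/2}$. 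You correctly flagged this as the main obstacle; the resolution is the exact row-sum computation above, not a spectral/combinatorial hybrid.
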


% \begin{remark}
% The main part of this proof is to analyze $\Psi_l$ which is bounded by $\Psi_{l-1}$ and $\Phi_{l-1}$. Due to the usage of the hypergraph Laplacian in HGNN+, resolving the recursive inequality requires deriving a bound for the spectral norm of the hypergraph Laplacian. The perturbation bound is then obtained by incorporating the mean readout function in the final layer.
% \end{remark}

\begin{proof}
The proof includes two parts. We first analyze the maximum node representation among each layer except the readout layer. After adding the perturbation $\vec{u}$ to the weight $\vec{w}$, for each layer $l\in [L+1]$, we denote the perturbed weights $\mathbf{W}^{(l)} + \mathbf{U}^{(l)}$. We define $\boldsymbol{\theta} \in \mathbb{R}^{K\times d_{l-1}\times d_{l}}$ as the perturbation tensors. In particular $\boldsymbol{\theta}^{(l)}[k,:] = \boldsymbol{\theta}^{(l)}[j,:] = \mathbf{U}^{(l)} \in \mathbb{R}^{d_{l-1}\times d_{l}}$ when $j\neq k$. We then can derive an upper bound on the $l_2$ norm of the maximum node representation in each layer. Let $w_{l}^{*} = \argmax_{i\in [N]}\norm{H^{(l)}[i,:]}_2$ and $\Phi_{l} = \norm{H^{(l)}[w_{l}^*,:]}_2$.
\begin{align*}
    \Phi_{l} &= \norm{ \text{ReLu}(\mathbf{D}_v^{-1}\mathbf{J}\mathbf{T}\mathbf{D}_e^{-1}\mathbf{J}^{\intercal}H^{(l)}\mathbf{W}^{(l)})[w_{l}^{*},:]}_2\\
    & \leq \norm{ \mathbf{D}_v^{-1}\mathbf{J}\mathbf{T}\mathbf{D}_e^{-1}\mathbf{J}^{\intercal}H^{(l)}\mathbf{W}^{(l)}[w_{l}^{*},:]}_2\\
    & = \norm{ \mathbf{D}_v^{-1}\mathbf{J}\mathbf{T}\mathbf{D}_e^{-1}\mathbf{J}^{\intercal}H^{(l)}[w_{l}^{*},:]}_2\norm{\mathbf{W}^{(l)}}\\
    & = \norm{\sum_{i=1}^N\mathbf{D}_v^{-1}\mathbf{J}\mathbf{T}\mathbf{D}_e^{-1}\mathbf{J}^{\intercal}[w_{l}^{*},i]H^{(l-1)}[i,:]}_2 \norm{\mathbf{W}^{(l)}}\\
    & \leq \sum_{i=1}^N\mathbf{D}_v^{-1}\mathbf{J}\mathbf{T}\mathbf{D}_e^{-1}\mathbf{J}^{\intercal}[w_{l}^{*},i]\norm{H^{(l-1)}[i,:]}_2 \norm{\mathbf{W}^{(l)}}\\
    & \leq \sum_{i=1}^N\mathbf{D}_v^{-1}\mathbf{J}\mathbf{T}\mathbf{D}_e^{-1}\mathbf{J}^{\intercal}[w_{l}^{*},i] \Phi_{l-1}\norm{\mathbf{W}^{(l)}}
\end{align*}
Since we have
\begin{align*}
    \norm{\mathbf{D}_v^{-1}\mathbf{J}\mathbf{T}\mathbf{D}_e^{-1}\mathbf{J}^{\intercal}}_{\infty}& = \max_{i\in N}\sum_{j=1}^N|\mathbf{D}_v^{-1}\mathbf{J}\mathbf{T}\mathbf{D}_e^{-1}\mathbf{J}^{\intercal}[i, j]|
\end{align*}
We denote \(\mathbf{A} = \mathbf{D}_v^{-1}\mathbf{J}\mathbf{T}\mathbf{D}_e^{-1}\mathbf{J}^{\intercal}\) and for each entry, we have
\begin{align*}
\mathbf{A}_{ij} = \left[ \mathbf{D}_v^{-1} \mathbf{J} \mathbf{T} \mathbf{D}_e^{-1} \mathbf{J}^{\intercal} \right]_{ij} 
= \frac{1}{d_v(i)} \left[ \mathbf{J} \mathbf{T} \mathbf{D}_e^{-1} \mathbf{J}^{\intercal} \right]_{ij} = \frac{1}{d_v(i)} \sum_{e=1}^{p} \mathbf{J}_{ie} \left( \frac{t(e)}{d_e(e)} \right) \mathbf{J}_{je},
\end{align*}
where a) \(\mathbf{J}_{ie} = 1\) if vertex \(i\) is incident to hyperedge \(e\), and \(0\) otherwise; b) \(\mathbf{J}_{je} = 1\) if vertex \(j\) is incident to hyperedge \(e\), and \(0\) otherwise. The product \(\mathbf{J}_{ie} \mathbf{J}_{je}\) equals \(1\) if both vertices \(i\) and \(j\) are incident to hyperedge \(e\), and \(0\) otherwise. Thus,
\begin{align*}
\mathbf{A}_{ij} &= \frac{1}{d_v(i)} \sum_{e \in \mathcal{E}_{ij}} \frac{t(e)}{d_e(e)},
\end{align*}
where \(\mathcal{E}_{ij} = \{ e \in \mathcal{E} \mid i \in e \text{ and } j \in e \}\). We calculate the sum over all columns \(j\) for a fixed row \(i\).

\begin{align*}
\sum_{j=1}^N \mathbf{A}_{ij} &= \frac{1}{d_v(i)} \sum_{j=1}^N \sum_{e \in \mathcal{E}_{ij}} \frac{t(e)}{d_e(e)} \\
&= \frac{1}{d_v(i)} \sum_{e \ni i} \frac{t(e)}{d_e(e)} \cdot d_e(e) = \frac{1}{d_v(i)} \sum_{e \ni i} t(e).
\end{align*}
The infinity norm of the matrix \(\mathbf{A}\) is the maximum absolute row sum.

\begin{align*}
\norm{\mathbf{A}}_{\infty} &= \max_{i \in N} \sum_{j=1}^N \left| \mathbf{A}_{ij} \right| \\
&= \max_{i \in N} \sum_{j=1}^N \mathbf{A}_{ij} \quad \text{(since } \mathbf{A}_{ij} \geq 0 \text{)} \\
&= \max_{i \in N} \left( \frac{1}{d_v(i)} \sum_{e \ni i} t(e) \right) \leq D^{\frac{1}{2}}\max_{i}\mathbf{T}_{ii}.
\end{align*}
Let $\mathcal{C}=D^{\frac{1}{2}}RM\max_{i}\mathbf{T}_{ii}$. Therefore, we have

\begin{align*}
    \Phi_{l} & \leq \sum_{i=1}^N\mathbf{D}_v^{-1}\mathbf{J}\mathbf{T}\mathbf{D}_e^{-1}\mathbf{J}^{\intercal}[w_{l}^{*},i] \Phi_{l-1}\norm{\mathbf{W}^{(l)}} \leq (D^{\frac{1}{2}}RM\max_{i}\mathbf{T}_{ii})^lB\prod_{i=1}^{l}\norm{\mathbf{W}^{(i)}}
\end{align*}
Finally, we need to consider the readout layer. We have
\begin{align*}
        |\Delta_{L+1}|_2 &= \Big\|\frac{1}{N}\mathbf{1}_{N} \hat{H}^{(L)}(\mathbf{W}^{(L+1)} + \mathbf{U}^{(L+1)})  -  \frac{1}{N}\mathbf{1}_{N} H^{(L+1)}(\mathbf{W}^{(L+1)})\Big\|_2 \\
        & \leq  \frac{1}{N} \Big\| \mathbf{1}_{N}(H^{(L+1)'}- H^{(L+1)})(\mathbf{W}^{(L+1)}  +\mathbf{U}^{(L+1)})\Big\|_2 + \frac{1}{N} \norm{\mathbf{1}_{N}H^{(L+1)}\mathbf{U}^{L+1}}_2 \\
        & \leq \frac{1}{N} \norm{ \mathbf{1}_{N}\Delta_{L}(\mathbf{W}^{(L+1)} + \mathbf{U}^{(L+1)})}_2 + \frac{1}{N} \norm{\mathbf{1}_{N}H^{(L)}\mathbf{U}^{(L+1)}}_2 \\
        & \leq \frac{1}{N}\norm{\mathbf{W}^{(L+1)} + \mathbf{U}^{(L+1)}}\big(\sum_{i=1}^{N}|\Delta_{L+1}[i,:]|_2\big)   + \frac{1}{N}\norm{\mathbf{U}^{(L+1)}}\big(\sum_{i=1}^{N}|H^{(L)}[i,:]|_2\big)\\
        & \leq \Psi_{L}\norm{\mathbf{W}^{(L+1)} + \mathbf{U}^{(L+1)}} + \Phi_{L}\norm{\mathbf{U}^{(L+1)}}\\
        & \leq B\mathcal{C}^{L} \big(\prod_{i=1}^{L+1}\norm{\mathbf{W}^{(i)}}\big)\big(1+\frac{1}{L+1}\big)^{L+1} \big[\sum_{i=1}^{L}\frac{\norm{\mathbf{U}^{(i)}}}{\norm{\mathbf{W}^{(i)}}}  \big(1+\frac{1}{L+1}\big)^{-i} + \frac{\norm{\mathbf{U}^{(L+1)}}}{\norm{\mathbf{W}^{(L+1)}}}\big(1+\frac{1}{L+1}\big)^{ - (L+1)}\big]\\
        & \leq eB\mathcal{C}^{L}\big(\prod_{i=1}^{L+1}\norm{\mathbf{W}^{(i)}}\big)\big[\sum_{i=1}^{L+1}\frac{\norm{\mathbf{U}^{(i)}}}{\norm{\mathbf{W}^{(i)}}}\big]
\end{align*}
Therefore, we conclude the bound in Lemma \ref{lemma:hgnn+}.
\end{proof}

\begin{theorem}\label{theo:hgnn+}
    For $\hgnn+$ parametered by ${\vec{w}}$ with $L+1$ layers and each $\delta, \gamma >0$, with probability at least $1-\delta$ over a training set $S$ of size $m$, for any fixed $\vec{w}$, we have
    \begin{align*}
        \mathcal{L}_{\mathcal{D}}& (\hgnn+_{\vec{w}})\leq  \mathcal{L}_{S, \gamma}(\hgnn+_{\vec{w}}) +  \mathcal{O}(\sqrt{\frac{L^2B^2h\ln{(Lh)}(CD^{\frac{1}{2}}RM)^{L}\mathcal{W}_1\mathcal{W}_2 + \log\frac{mL}{\sigma}}{\gamma^2m}}),
    \end{align*}
    where $\mathcal{W}_1 = \prod_{i=1}^{L+1}||\mathbf{W}^{(i)}||^2$, $\mathcal{W}_2=\sum_{i=1}^{L+1}\frac{||\mathbf{W}^{(i)}||^2_F}{||\mathbf{W}^{(i)}||^2}$.
\end{theorem}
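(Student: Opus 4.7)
The plan is to mirror the three-case analysis used for Theorem \ref{theorem:gcn}, leveraging Lemma \ref{lemma:hgnn+} as the perturbation ingredient. Because ReLu is positively homogeneous, I first normalize each layer to share a common spectral norm $\beta = \bigl(\prod_{i=1}^{L+1}\|\mathbf{W}^{(i)}\|\bigr)^{1/(L+1)}$; the classifier $\hgnn+_{\vec{w}}$ is unchanged, so the proof reduces to controlling the generalization bound as a function of the single scalar $\beta$. I then take both the prior $P$ and the perturbation distribution $Q$ to be $\mathcal{N}(0,\sigma^{2} I)$ with $\sigma$ chosen as a function of $\beta$, and use the union-bound spectral estimate $\Pr[\|\mathbf{U}^{(i)}\|\le\sigma\sqrt{2h\ln(4h(L+1))}]\ge\tfrac12$ to make Lemma \ref{lemma:hgnn+} applicable with high probability. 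Combining that estimate with the perturbation bound and demanding the output shift stays below $\gamma/4$ yields the choice $\sigma = \Theta\bigl(\gamma/(B(CD^{1/2}RM)^{L}\beta^{L}(L+1)\sqrt{h\ln(hL)})\bigr)$.

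Next, I split the analysis according to where $\beta$ lies relative to
\[
 [I_1,I_2] \define \Bigl[\bigl(\tfrac{\gamma}{2B(CD^{1/2}RM)^{L}}\bigr)^{1/(L+1)},\ \bigl(\tfrac{\gamma\sqrt{m}}{2B(CD^{1/2}RM)^{L}}\bigr)^{1/(L+1)}\Bigr].
\]
If $\beta\le I_1$, the bound on $\Phi_L$ derived inside the proof of Lemma \ref{lemma:hgnn+} gives $\|\hgnn+_{\vec{w}}(A)\|_2\le B(CD^{1/2}RM)^{L}\beta^{L+1}\le\gamma/2$ for every input, forcing $\mathcal{L}_{S,\gamma}(\hgnn+_{\vec{w}})=1$ and making the claim trivial. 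If $\beta\ge I_2$, substituting $\sigma$ into the KL term shows the $\mathcal{O}(\cdot)$ complexity term on the right-hand side already exceeds $1$, so the bound holds vacuously since $\mathcal{L}_{\mathcal{D}}\le 1$. In both extremal cases, only the perturbation condition $\|\mathbf{U}^{(i)}\|/\|\mathbf{W}^{(i)}\|\le 1/(L+1)$ from Lemma \ref{lemma:hgnn+} needs checking, which follows from the lower bound on $\beta$.

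The main obstacle is the interior range $\beta\in[I_1,I_2]$, where a single $(P,Q)$ pair cannot serve all weights. I handle it by a covering argument: quantize $\beta$ to a grid of spacing $\tfrac{1}{L+1}(\tfrac{\gamma}{2B(CD^{1/2}RM)^{L}})^{1/(L+1)}$, so that for every admissible $\beta$ there is a grid point $\tilde\beta$ with $|\beta-\tilde\beta|\le\beta/(L+1)$; this ensures $\tilde\beta^{L+1}\in[e^{-1}\beta^{L+1},e\beta^{L+1}]$, so the $\sigma$ chosen for $\tilde\beta$ still satisfies the perturbation condition for $\beta$ up to a constant factor absorbed into the $\mathcal{O}$. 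Since the cover has size $\mathcal{O}((\sqrt{m}\,CD^{1/2}RM)^{1/(L+1)}\cdot L)$, applying Lemma \ref{lemma:Generalization_margin_bound} at each grid point with confidence $\delta/n$ and taking a union bound introduces only a $\log(mL)$ term inside the square root, matching the stated bound. Plugging the chosen $\sigma$ into $\mathrm{KL}(\vec{w}+\vec{u}\|P)\le\|\vec{w}\|^{2}/(2\sigma^{2})$ produces the factor $L^{2}B^{2}h\ln(Lh)(CD^{1/2}RM)^{L}\mathcal{W}_1\mathcal{W}_2$, completing the argument.
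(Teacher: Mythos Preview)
Your proposal is correct and follows essentially the same approach as the paper: the paper's own proof simply normalizes the weights via $\beta=(\prod_{i}\|\mathbf{W}^{(i)}\|)^{1/(L+1)}$, splits into the same three $\beta$-ranges relative to the interval $[I_1,I_2]$ (with $DRM$ replaced by $CD^{1/2}RM$), and defers every detail to the proof of Theorem~\ref{theorem:gcn}. You have in fact written out more of the mechanics (the choice of $\sigma$, the covering/grid argument, the KL computation) than the paper does, and all of it lines up with what the UniGCN argument would give after the substitution.
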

\begin{proof}
We consider a transformation of HGNN+ with the normalized weights $\Tilde{\mathbf{W}}^{(i)}=\frac{\beta}{\norm{\mathbf{W}^{(i)}}}\mathbf{W}^{(i)}$, where \[\beta = (\prod_{i=1}^{L+1}\norm{\mathbf{W}^{(i)}})^{1/(L+1)}.\] Therefore we have the norm equal across layers, i.e., $\norm{\mathbf{W}^{(i)}} = \beta$. The proof can be separated into three parts for three ranges of $\beta$.
\begin{itemize}
    \item \textbf{First case.} $\beta$ that satisfy $\mathcal{L}_{S, \gamma}(f_{\vec{w}}) = 1$.
    \item \textbf{Second case.} $\beta$ that satisfy $\sqrt{\frac{\KL(\vec{w}^*+\vec{u}||\vec{w})+\ln{\frac{6m}{\delta}}}{m-1}} > 1$.
    \item \textbf{Third case.} $\beta$ in the following range, denoted as $\mathcal{B}$.
\begin{align*}
\big\{\beta | \beta\in \big[\big(\frac{\gamma}{2B(D^{\frac{1}{2}}RMt_{\max})^{L}}\big)^{1/L+1}, 
\big(\frac{\gamma\sqrt{m}}{2B(D^{\frac{1}{2}}RMt_{\max})^{L}}\big)^{1/L+1}\big]\big\}.
\end{align*} 
\end{itemize}
Similar to the idea in the proof of the Theorem \ref{theorem:gcn}, combining three cases, we conclude the theorem \ref{theo:hgnn+}.
\end{proof}

\subsection{Main results on HGNN} \label{app:hgnn+}

HGNN performs spectral hypergraph convolution by using the hypergraph Laplacian. Given the hypergraph $\mathcal{G}$, the incident matrix $\mathbf{J}\in \{0, 1\}^{N\times K}$ is defined as $\mathbf{J}(v, e)$ equal to $1$ if $v\in e$ for $v\in \mathcal{V}$ and $e \in \mathcal{E}$; Otherwise $0$. The model takes $\mathcal{G}$, the node features $\mathbf{X}$ and the diagonal matrix of hyperedge weights $\mathbf{T}\in \mathbb{R}^{K\times K}$ as input. The initial representation is $H^{(0)}\in \mathbb{R}^{N\times d_0}$. Suppose that there are $L\in \mathbb{Z}^+$ propagation steps. In each step $l \in[L]$, the hidden representation $H^{(l)}\in \mathbb{R}^{d_{l-1}\times d_{l}}$ is calculated by
\begin{align*}
    H^{(l)} = \text{ReLu}(\mathbf{D}_v^{-\frac{1}{2}}\mathbf{J}\mathbf{T}\mathbf{D}_e^{-1}\mathbf{J}^{\intercal}\mathbf{D}_v^{-\frac{1}{2}}H^{(l)}\mathbf{W}^{(l)}),
\end{align*}
where $\mathbf{W}^{(l)} \in \mathbb{R}^{d_{l-1}\times d_{l}}$. $\mathbf{D}_v \in \mathbb{R}^{N\times N}$ and $\mathbf{D}_e\in \mathbb{R}^{K\times K}$ are diagonal matrices of the vertex and hyperedge degree, respectively. The readout layer is defined as $\hgnnr(A) = \frac{1}{N}\mathbf{1}_NH^{(L)}\mathbf{W}^{(L+1)}$,
where $\mathbf{W}^{(L+1)} \in \mathbb{R}^{d_{L}\times C}$ and $\mathbf{1}_N$ is an all-one vector. Let the maximum hidden dimension be $h\define\max_{l\in [L]}d_l$. We have the following results for HGNN.

\begin{lemma}\label{lemma:hgnn}
Consider the $\hgnnr$ with $L+1$ layers and parameters $\vec{w} = (\mathbf{W}^{(1)}, \dots, \mathbf{W}^{(L+1)})$. For each $\vec{w}$, each perturbation $\vec{u}=(\mathbf{U}^{(1)}, \dots, \mathbf{U}^{(L+1)})$ on $\vec{w}$ such that $\max_{i \in [L+1]} \frac{||{U}^{(i)}||}{||W^{(i)}||}\leq ~ \frac{1}{L+1}$, and each input $A \in \mathcal{S}$, we have 
\begin{align*}
     & \norm{\hgnnr_{\vec{w}+\vec{u}}(A) - \hgnnr_{\vec{w}}(A)}_2
   \leq eB(C^{1/2}D^{1/2}RM)^L(\prod_{i=1}^{L+1}||W^{(i)}||)(\sum_{i=1}^{L+1}\frac{||\mathbf{U}^{(i)}||}{||\mathbf{W}^{(i)}||}),
\end{align*}
where $C=\max_{i}\mathbf{T}_{ii}$.
\end{lemma}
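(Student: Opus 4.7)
The plan is to mirror the two-stage argument that proves Lemma \ref{lemma:hgnn+}. First I would bound the maximum per-node representation norm $\Phi_l \define \max_i \norm{H^{(l)}[i,:]}_2$ at every layer, and then bound the perturbation $\Psi_l \define \max_i \norm{\hat{H}^{(l)}[i,:] - H^{(l)}[i,:]}_2$ recursively, where $\hat{H}^{(l)}$ denotes the output produced with the perturbed weights $\mathbf{W}^{(l)} + \mathbf{U}^{(l)}$. After pushing both recursions from $l=0$ up to the last message-passing layer $L$, I would combine them through the mean readout, exactly as in Lemma \ref{lemma:hgnn+}, using $\norm{\mathbf{W}^{(l)} + \mathbf{U}^{(l)}} \leq (1 + \frac{1}{L+1})\norm{\mathbf{W}^{(l)}}$ and $(1 + \frac{1}{L+1})^{L+1} \leq e$ to collapse the product into the factor $e$ in the final bound.

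The main obstacle, and the only place the analysis departs substantively from Lemma \ref{lemma:hgnn+}, is controlling the propagation operator $\tilde{\mathbf{A}} \define \mathbf{D}_v^{-1/2}\mathbf{J}\mathbf{T}\mathbf{D}_e^{-1}\mathbf{J}^{\intercal}\mathbf{D}_v^{-1/2}$ that replaces the asymmetric operator used in HGNN+. Because $\tilde{\mathbf{A}}$ is symmetric positive semi-definite, I would factor it as $\tilde{\mathbf{A}} = \mathbf{M}^{\intercal}\mathbf{M}$ with $\mathbf{M} \define \mathbf{T}^{1/2}\mathbf{D}_e^{-1/2}\mathbf{J}^{\intercal}\mathbf{D}_v^{-1/2}$, so that the operator norm reduces to $\norm{\mathbf{M}}^2$. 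Then I would use the standard estimate $\norm{\mathbf{M}} \leq \sqrt{\norm{\mathbf{M}}_1 \norm{\mathbf{M}}_{\infty}}$ and bound each one-norm by walking over a single hyperedge (contributing at most $M$ terms, each scaled by $C^{1/2}$ from $\mathbf{T}^{1/2}$) and each infinity-norm by walking over the incident hyperedges of a node (contributing at most $R$ terms, scaled by the degree normalization). Using $d_v(i) \geq 1$ and $d_e(e) \geq 1$, this yields a per-layer multiplicative constant of order $C^{1/2}D^{1/2}RM$, matching the factor appearing in the claimed bound.

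With this bound in hand, the rest of the proof is mechanical. The representation bound becomes $\Phi_l \leq C^{1/2}D^{1/2}RM \cdot \norm{\mathbf{W}^{(l)}}\Phi_{l-1}$, which telescopes to $\Phi_l \leq B (C^{1/2}D^{1/2}RM)^l \prod_{i=1}^l \norm{\mathbf{W}^{(i)}}$. For the perturbation, applying Lipschitz continuity of ReLU and adding/subtracting $\tilde{\mathbf{A}} H^{(l-1)} \mathbf{W}^{(l)}$ separates the error into one term proportional to $\Psi_{l-1}$ and one proportional to $\Phi_{l-1}\norm{\mathbf{U}^{(l)}}$, giving the recursion $\Psi_l \leq C^{1/2}D^{1/2}RM \big(\norm{\mathbf{W}^{(l)} + \mathbf{U}^{(l)}}\Psi_{l-1} + \norm{\mathbf{U}^{(l)}}\Phi_{l-1}\big)$. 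Solving this recursion with $\Psi_0 = 0$ and telescoping under the assumption $\norm{\mathbf{U}^{(i)}}/\norm{\mathbf{W}^{(i)}} \leq 1/(L+1)$ produces a geometric factor bounded by $e$, and applying the mean readout layer in the same way as in the HGNN+ proof yields exactly the stated bound.
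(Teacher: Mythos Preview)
Your high-level plan (bound $\Phi_l$, then $\Psi_l$, then push through the mean readout using $(1+\tfrac{1}{L+1})^{L+1}\le e$) is exactly what the paper does for the sibling operator in Lemma~\ref{lemma:hgnn+}, and the factorization $\tilde{\mathbf{A}}=\mathbf{M}^{\intercal}\mathbf{M}$ with $\mathbf{M}=\mathbf{T}^{1/2}\mathbf{D}_e^{-1/2}\mathbf{J}^{\intercal}\mathbf{D}_v^{-1/2}$ is a correct and nice structural observation. The gap is in how you use it. You define $\Phi_l$ and $\Psi_l$ as \emph{row-wise} maxima of $\ell_2$ norms, and the recursion $\Phi_l\le(\text{const})\norm{\mathbf{W}^{(l)}}\Phi_{l-1}$ therefore needs a bound on $\max_i\sum_j|\tilde{\mathbf{A}}_{ij}|=\norm{\tilde{\mathbf{A}}}_{\infty}$, not on the spectral norm $\norm{\tilde{\mathbf{A}}}=\norm{\mathbf{M}}^2$ that your factorization delivers. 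The paper (for HGNN+) controls the propagation operator through its infinity norm by summing row entries combinatorially over the incidence structure; your spectral bound does not plug into the row-wise recursion without an extra $\sqrt{N}$-type loss.

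Relatedly, the constant you claim does not fall out of your own computation. Carrying out your steps with $d_v(i)\ge 1$ and $d_e(e)\ge 1$ gives $\norm{\mathbf{M}}_1\le R\sqrt{C}$, $\norm{\mathbf{M}}_{\infty}\le M\sqrt{C}$, hence $\norm{\tilde{\mathbf{A}}}\le CRM$, with no $D^{1/2}$ anywhere; the factor $D^{1/2}$ in the statement comes instead from the row-sum analysis of the symmetrically normalized operator, where one of the $1/\sqrt{d_v(\cdot)}$ factors cannot be collapsed and is bounded using the maximum node degree. To repair the argument, either (i) stay with the row-wise definitions of $\Phi_l,\Psi_l$ and bound $\norm{\tilde{\mathbf{A}}}_{\infty}$ directly as in the HGNN+ proof, or (ii) switch the entire analysis to $\norm{H^{(l)}}_F$ so that your spectral bound on $\tilde{\mathbf{A}}$ applies---but then rework the readout step and accept that the resulting constant may differ from the one stated.
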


\begin{theorem}
    For $\hgnnr$ parametered by ${\vec{w}}$ with $L+1$ layers and each $\delta, \gamma >0$, with probability at least $1-\delta$ over a training set $S$ of size $m$, for any fixed $\vec{w}$, we have
    \begin{align*}
        \mathcal{L}_{\mathcal{D}}& (\hgnnr_{\vec{w}})\leq  \mathcal{L}_{S, \gamma}(\hgnnr_{\vec{w}})   + \mathcal{O}(\sqrt{\frac{L^2B^2h\ln{(Lh)}(CD^{\frac{1}{2}}RM)^{L}\mathcal{W}_1\mathcal{W}_2 + \log\frac{mL}{\sigma}}{\gamma^2m}}),
    \end{align*}
    where $\mathcal{W}_1 = \prod_{i=1}^{L+1}||\mathbf{W}^{(i)}||^2$ and $\mathcal{W}_2=\sum_{i=1}^{L+1}\frac{||\mathbf{W}^{(i)}||^2_F}{||\mathbf{W}^{(i)}||^2}$. 
\end{theorem}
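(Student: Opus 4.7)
The plan is to mirror the proof architecture used for Theorem \ref{theo:hgnn+}, adapting it to the symmetrically normalized Laplacian operator $\mathbf{D}_v^{-1/2}\mathbf{J}\mathbf{T}\mathbf{D}_e^{-1}\mathbf{J}^{\intercal}\mathbf{D}_v^{-1/2}$ that distinguishes HGNN from HGNN+. The key input is the perturbation bound already provided in Lemma \ref{lemma:hgnn}, which plays the role of Lemma \ref{lemma:p-gcn}. Given this, Theorem \ref{theorem:t}-style PAC-Bayesian machinery applies essentially verbatim, so the proof reduces to (a) invoking ReLu homogeneity to normalize weights layerwise, (b) designing Gaussian prior $P$ and posterior $Q$ that satisfy the perturbation condition of Lemma \ref{lemma:Generalization_margin_bound}, and (c) performing the standard three-case split on the common spectral norm $\beta$.

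First, I would normalize the weights by setting $\widetilde{\mathbf{W}}^{(i)} = (\beta/\norm{\mathbf{W}^{(i)}})\mathbf{W}^{(i)}$ with $\beta = \bigl(\prod_{i=1}^{L+1}\norm{\mathbf{W}^{(i)}}\bigr)^{1/(L+1)}$, so that $\norm{\widetilde{\mathbf{W}}^{(i)}} = \beta$ for every $i$; since ReLu is positively homogeneous and the readout is linear, this leaves the network output and the perturbation bound in Lemma \ref{lemma:hgnn} unchanged. Then I would set $P = Q = \mathcal{N}(0,\sigma^2 I)$ and use the matrix Gaussian tail bound to guarantee $\norm{\mathbf{U}^{(i)}} \leq \sigma\sqrt{2h\ln(4h(L+1))}$ for every layer with probability at least $1/2$. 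Plugging this into Lemma \ref{lemma:hgnn} and requiring the output perturbation to stay below $\gamma/4$ determines $\sigma$ as a function of $\beta$, from which the KL term $\KL(\vec{w}+\vec{u}\|P)\leq \norm{\vec{w}}^2/(2\sigma^2)$ produces the desired $\mathcal{W}_1\mathcal{W}_2$ structure multiplied by $(C^{1/2}D^{1/2}RM)^{2L}$, which under the square root becomes $(CD^{1/2}RM)^L$ as stated.

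For the case analysis, I would partition the range of $\beta$ using the interval $[I_1,I_2]=\bigl[(\gamma/(2B(C^{1/2}D^{1/2}RM)^L))^{1/(L+1)},\,(\gamma\sqrt{m}/(2B(C^{1/2}D^{1/2}RM)^L))^{1/(L+1)}\bigr]$. For $\beta < I_1$, the maximum node representation bound from Lemma \ref{lemma:hgnn}'s proof gives $\norm{\hgnnr_{\vec{w}}(A)}_2 \leq \gamma/2$, forcing $\mathcal{L}_{S,\gamma}=1$ and making the bound trivial. For $\beta > I_2$, the complexity term under the square root already exceeds $1$, so the bound is vacuous. For $\beta \in [I_1,I_2]$, I would cover this range with a grid of width $\frac{1}{L+1}I_1$, giving a covering of size $\mathcal{O}((\sqrt{m})^{1/(L+1)}L)$, fix a representative $\widetilde{\beta}$ per cell, instantiate $\sigma$ based on $\widetilde{\beta}$, and apply Lemma \ref{lemma:Generalization_margin_bound} before taking a union bound over the cover (which contributes only a logarithmic factor absorbed into $\log(mL/\sigma)$).

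The main technical obstacle—already implicit in Lemma \ref{lemma:hgnn}—is bounding the infinity norm of the symmetrically normalized propagation matrix $\mathbf{A}=\mathbf{D}_v^{-1/2}\mathbf{J}\mathbf{T}\mathbf{D}_e^{-1}\mathbf{J}^{\intercal}\mathbf{D}_v^{-1/2}$. Unlike HGNN+, where the left factor $\mathbf{D}_v^{-1}$ gives a clean row-stochastic-like scaling, here the symmetric split produces row sums of the form $\tfrac{1}{\sqrt{d_v(i)}}\sum_{e\ni i}\tfrac{t(e)}{d_e(e)\sqrt{d_v(j)}}$, which I would bound by grouping the $d_e(e)$ terms with the hyperedge-size factor $M$, the incident-hyperedge count with $R$, and combining the two $\mathbf{D}_v^{-1/2}$ factors with the maximum degree $D$ to obtain the $C^{1/2}D^{1/2}RM$ factor per layer. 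Once this per-layer spectral constant is in hand, the remaining calculations (recursive bound on $\Phi_l$, linearization of $\Psi_l$, and treatment of the mean readout) proceed identically to the UniGCN and HGNN+ arguments, and the three-case PAC-Bayes argument closes the proof.
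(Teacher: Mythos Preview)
Your proposal is correct and follows essentially the same approach the paper uses (implicitly) for this theorem: the paper does not spell out a separate proof for HGNN but presents Lemma \ref{lemma:hgnn} and relies on the identical three-case PAC-Bayes argument already detailed for UniGCN and HGNN+, with the only change being the per-layer propagation constant. Your one slip is the reference to ``Theorem \ref{theorem:t}-style'' machinery---T-MPHN has a different (normalization-based, non-recursive) perturbation analysis; what you actually describe and should cite is the Theorem \ref{theorem:gcn}/\ref{theo:hgnn+} template, and indeed the rest of your write-up follows that template correctly.
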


\section{Additional materials for experiments}

\begin{table*}[]
\caption{Statistics on the synthetic dataset }
\renewcommand{\arraystretch}{1.2} 
\centering
\label{tab:syn-stat}
\begin{tabular}{@{}c @{\hspace{2mm}} c @{\hspace{2mm}} c @{\hspace{2mm}} c @{\hspace{2mm}} c @{\hspace{2mm}} c @{\hspace{2mm}} c @{\hspace{2mm}} c @{\hspace{2mm}} c @{\hspace{2mm}} c @{\hspace{2mm}} c @{\hspace{2mm}} c @{\hspace{2mm}} c@{}}
\toprule
\textbf{Dataset} & \textbf{ER1}  & \textbf{ER2}  & \textbf{ER3}  & \textbf{ER4}  & \textbf{ER5}  & \textbf{ER6}  & \textbf{ER7}  & \textbf{ER8}  & \textbf{ER9}  & \textbf{ER10}  & \textbf{ER11}  & \textbf{ER12}  \\ \midrule
\textbf{N}            & 200           & 200           & 200           & 200           & 400           & 400           & 400           & 400           & 600           & 600            & 600            & 600            \\
\textbf{max(K)}       & 200           & 200           & 200           & 200           & 400           & 399           & 399           & 400           & 600           & 600            & 600            & 600            \\
\textbf{max(M)}       & 20            & 20            & 40            & 40            & 40            & 40            & 60            & 60            & 60            & 60             & 80             & 80             \\
\textbf{max(R)}       & 20            & 40            & 20            & 40            & 40            & 60            & 40            & 60            & 60            & 80             & 60             & 80             \\
\textbf{max(D)}       & 166           & 166           & 199           & 199           & 375           & 376           & 399           & 399           & 587           & 584            & 599            & 599            \\ \midrule
\textbf{Dataset} & \textbf{SBM1} & \textbf{SBM2} & \textbf{SBM3} & \textbf{SBM4} & \textbf{SBM5} & \textbf{SBM6} & \textbf{SBM7} & \textbf{SBM8} & \textbf{SBM9} & \textbf{SBM10} & \textbf{SBM11} & \textbf{SBM12} \\ \midrule
\textbf{N}            & 200           & 200           & 200           & 200           & 400           & 400           & 400           & 400           & 600           & 600            & 600            & 600            \\
\textbf{max(K)}       & 200           & 200           & 200           & 200           & 400           & 399           & 400           & 400           & 600           & 600            & 597            & 600            \\
\textbf{max(M)}       & 20            & 20            & 40            & 40            & 40            & 40            & 60            & 60            & 60            & 60             & 80             & 80             \\
\textbf{max(R)}       & 20            & 40            & 20            & 40            & 40            & 60            & 40            & 60            & 60            & 80             & 60             & 80             \\
\textbf{max(D)}       & 165           & 163           & 199           & 199           & 374           & 377           & 399           & 399           & 586           & 587            & 599            & 599            \\ \bottomrule
\end{tabular}%
\end{table*}

% \begin{table*}
% \caption{Statistics on the real dataset}
% \label{tab:real-stat}
% \centering
% \begin{tabular}{@{}c @{\hspace{1mm}} c @{\hspace{1mm}} c @{\hspace{1mm}} c @{\hspace{1mm}} c @{\hspace{1mm}} c @{\hspace{1mm}} c @{\hspace{1mm}} c @{\hspace{1mm}} c@{}}
% \toprule
% \textbf{Dataset} & \textbf{max(N)} & \textbf{max(K)} & \textbf{max(M)} & \textbf{max(R)} & \textbf{max(D)}      & \textbf{features}    & \textbf{sample size} & \textbf{classes} \\ \midrule
% \textbf{DBLP\_v1}     & 39              & 39              & 9               & 16              & 32 & 5                    & 560                  & 2                \\
% \textbf{COLLAB}       & 76              & 37              & 17              & 19              & 53                   & 5  & 1000                 & 3                \\ \bottomrule
% \end{tabular}%
% \end{table*}

\subsection{Experiment settings}\label{ap:details_of_exp}

\textbf{Sample generation for real dataset.} Table \ref{tab:real-stat} shows the statistics on real datasets. DBLP-v1 consists of bibliography data in computer science. The papers in DBLP-v1 are represented as a graph, where each node denotes a paper ID or a keyword and each edge denotes the citation relationship between papers or keyword relations. COLLAB is a scientific collaboration dataset where in each graph, the researcher and its collaborators are nodes and an edge indicates collaboration between two researchers. Note that these datasets are benchmarks for graph classification tasks. To satisfy our experiment task, we construct the hyperedge using attribute-based hypergraph generation methods \cite{huang2015learning}. In particular, for DBLP-v1, we let each hyperedge include the node paper ID node and its related keywords nodes. For COLLAB, we let the hyperedge include all researchers who appear in the same work. 

\textbf{Sample generation for synthetic dataset.} Table \ref{tab:syn-stat} shows the statistics on synthetic datasets. The basic graphs are randomly generated using the Erdos–Renyi (ER) \cite{hagberg2008exploring} model and the Stochastic Block Model (SBM) \cite{abbe2018community} with 24 different settings, varying by the number of nodes, edge probability, and number of blocks.  We generate a pool of 1000 basic graphs for each setting and form hypergraphs using a variation of the HyperPA method with different statistics (i.e., $N, M, R$) \cite{do2020structural}. The number of classes is set to 3, and hypergraph labels are randomly assigned. Then we use the Wrap method to generate the label-specific features using the node structure information as input \cite{yu2021multi}.

\textbf{Training settings.} All the experiments are trained with 2 $\times$ NVIDIA RTX A4000. We use SGD as the optimizer for the model AllDeepSets and Adam as the optimizer for UniGCNs, M-GINs, T-MPHN, and HGNN+. For all datasets, the random train-test-valid split ratio is $0.5$-$0.3$-$0.2$. We set the hidden dimension $h = 64$. The learning rate is chosen from $\{0.002, 0.01\}$. The batch size is $20$ and the number of epochs is $100$. We evaluate our models in four different propagation steps: $2, 4$, $6$, and $8$. The $\gamma$ in margin loss is set to $0.25$.

\textbf{Testing settings.} The empirical loss for UniGCN, M-IGN, and AllDeepSets on synthetic datasets is computed using the optimal Monte Carlo algorithm \cite{dagum2000optimal}. On real datasets, their empirical loss is calculated by averaging the results over five runs, due to the limited sample size. For T-MPHN, the empirical loss is calculated by averaging over five runs on both synthetic and real datasets. The results of all models with random parameters on synthetic data are obtained by averaging across five test datasets randomly selected from the sample pool. Additionally, for each subgraph shown in Figures \ref{fig:2trained} and \ref{fig:2random}, the corresponding experiments are conducted including synthetic and real datasets, with each subfigure displaying the results from twelve different datasets (synthetic graph datasets with models of trained parameters) and ten repeated runs (real graph datasets with models with random parameters). In particular, in Figure \ref{fig:2random}, we report the optimal results over ten repeated experiments in each subgraph on models with both trained and random parameters. In Figures \ref{fig:3} and \ref{fig:4}, the experiments are performed on real datasets, with each subfigure depicting the results from a single dataset repeated ten times.
% \textbf{Bounds calculation.}  For all datasets, we compute the bound values for the learned model saved at the end of the training. In particular, for the considered models, we compute the following quantities.

% \begin{equation*}
%     \begin{split}
%         B_{\unigcn} & = \mathcal{L}_{S, \gamma}(F_{\vec{w}}) + \sqrt{\frac{32e^4B^2{DRM}^{L}(L+1)^2h\ln{(4h(L+1))}\mathcal{W}_1\mathcal{W}_2+\log\frac{m(\frac{L+1}{2}(m^{1/(L+1)}-1))}{\delta}}{\gamma^2m}} \\
%         B_{\text{DeepSet}} & = \mathcal{L}_{S, \gamma}(F_{\vec{w}})+ \sqrt{\frac{32e^4B^2\mathcal{C}^{2L}(2L+1)^2h\ln{(4h(2L+1))}\mathcal{W}_1\mathcal{W}_2+\log\frac{m(\frac{2L+1}{2}(m^{1/2(2L+1)}-1)}{\delta} }{\gamma^2m}} \\
%         B_{\text{M-GIN}} & = \mathcal{L}_{S, \gamma}(F_{\vec{w}})+ \sqrt{\frac{32e^8(MD)^{2L}B^2E^{(2,L)}\mathcal{W}_1\mathcal{W}_2+\log\frac{m(\frac{L+2}{2}(m^{1/2(L+2)}-1)}{\delta}}{\gamma^2m}},
%     \end{split}
% \end{equation*}
% where $\mathcal{C}=\max(M, R)$. $\mathcal{W}_1 = \prod_{i=1}^{L^*}\norm{\mathbf{W}^{(i)}}_2^2$, and $\mathcal{W}_2=\sum_{i=1}^{L^*}\frac{\norm{\mathbf{W}^{(i)}}^2_F}{\norm{\mathbf{W}^{(i)}}_2^2}$. $L^*$ equals to $L+1, 2L+2, L+2$ in UniGCN, AllDeepSet and M-IGN, respectively. 

\subsection{Bounds calculation} \label{app:bounds_cal}

We compute the bound values for the learned model saved at the end of the training. In particular, for the considered models, we compute the following quantities.
\small{
\begin{align*}
        &B_{\unigcn}  = \mathcal{L}_{S, \gamma}(\unigcn_{\vec{w}}) + 
         \sqrt{\frac{32e^4B^2{DRM}^{L}(L_1)^2h\ln{(4h(L_1))}\mathcal{W}_1\mathcal{W}_2+\log\frac{m(\frac{L_1}{2}(m^{1/(L_1)}-1))}{\delta}}{\gamma^2m}}\\
        &B_{\ads}  = \mathcal{L}_{S, \gamma}(\ads_{\vec{w}})+ 
         \sqrt{\frac{32e^4B^2\mathcal{C}^{2L}(L_2)^2h\ln{(4h(L_2))}\mathcal{W}_1\mathcal{W}_2+\log\frac{m(\frac{L_2}{2}(m^{1/2(L_2)}-1)}{\delta} }{\gamma^2m}} \\
        &B_{\mign}  = \mathcal{L}_{S, \gamma}(\mign_{\vec{w}})+ \sqrt{\frac{32e^8(MD)^{2L}B^2E^{(2,L)}\mathcal{W}_1\mathcal{W}_2+\log\frac{m(\frac{L+2}{2}(m^{1/2(L+2)}-1)}{\delta}}{\gamma^2m}} \\
        &B_{\tmphn}  = \mathcal{L}_{S, \gamma}(\tmphn_{\vec{w}})+ \sqrt{\frac{144L^2h\ln h\sum_{i=1}^{L+1}\norm{\mathbf{W}}_F^2 + \log\frac{mL}{\sigma}}{\gamma^2m + \norm{\mathbf{W}^{(L+1)}}^2m}},
\end{align*}}
where $L+1 = L+1, L_2= 2L+1$, $\mathcal{C}=\max(M, R)$. $\mathcal{W}_1 = \prod_{i=1}^{L^*}\norm{\mathbf{W}^{(i)}}_2^2$, and $\mathcal{W}_2=\sum_{i=1}^{L^*}\frac{\norm{\mathbf{W}^{(i)}}^2_F}{\norm{\mathbf{W}^{(i)}}_2^2}$.

\begin{table*}
\caption{Statistics on the real dataset}
\label{tab:real-stat}
\centering
\begin{tabular}{@{}c @{\hspace{1mm}} c @{\hspace{1mm}} c @{\hspace{1mm}} c @{\hspace{1mm}} c @{\hspace{1mm}} c @{\hspace{1mm}} c @{\hspace{1mm}} c @{\hspace{1mm}} c@{}}
\toprule
\textbf{Dataset} & \textbf{max(N)} & \textbf{max(K)} & \textbf{max(M)} & \textbf{max(R)} & \textbf{max(D)}      & \textbf{features}    & \textbf{sample size} & \textbf{classes} \\ \midrule
\textbf{DBLP\_v1}     & 39              & 39              & 9               & 16              & 32 & 5                    & 560                  & 2                \\
\textbf{COLLAB}       & 76              & 37              & 17              & 19              & 53                   & 5  & 1000                 & 3                \\ \bottomrule
\end{tabular}%
\end{table*}

\section{Further discussion on node classification task} \label{app:node_classification}
Consider the user behavior prediction problem in a social network, where the goal is to predict whether a user will take a specific action (e.g., share a post, like content, or make a purchase). Given pairs representing the initial status of each user, which refers to the set of features or attributes associated with each user at the beginning of the observation period, and their final behavior, we aim to learn a predictor from these pairs to forecast future user behavior based on new user status data. Formally, we follow the notation used in the hypergraph classification task, where we are given a hypergraph $\mathcal{G} = (\mathcal{V}, \mathcal{E})$ and node features $\mathbf{X}$, representing the initial status of the users. The input domain is defined as $\mathcal{A}$, containing all pairs $A = (\mathcal{G}, \mathbf{X})$, and the output domain is $\mathbb{R}^{N \times C}$, where $C\in \mathbb{Z}^+$ indicates the number of behavior types. Our objective is to learn a predictor $f: \mathcal{A} \to \mathbb{R}^{N \times C}$ from $m$ samples, such that for new user status, the true error $\mathcal{L}_{\mathcal{D}}$ is minimized, assuming the input pairs follow a distribution $\mathcal{D}$. We define the true error over distribution $\mathcal{D}$ as
\[
\mathcal{L}_{\mathcal{D}}(f_{\vec{w}}) = \mathbb{E}_{A \sim \mathcal{D}} \left[ \frac{1}{N}\sum_{i}^N\mathbbm{1}\left( f_{\vec{w}}(v)[y_v] \leq \max_{j \neq y_v} f_{\vec{w}}(v)[j] \right) \right],
\]
and the empirical margin loss over the labeled nodes $S = \{(v_i, y_{v_i})\}$ as
\[
\mathcal{L}_{S, \gamma}(f_{\vec{w}}) = \frac{1}{mN} \sum_{i=1}^m \sum_{i=1}^N\mathbbm{1}\left( f_{\vec{w}}(v_i)[y_{v_i}] \leq \gamma + \max_{j \neq y_{v_i}} f_{\vec{w}}(v_i)[j] \right),
\]

The following is the generalization bound on the behavior prediction task with $\unigcn$.
\begin{theorem}\label{theorem:all_nodes}
For $\unigcn_{\vec{w}}$ with $L+1$ layers, and for each $\delta, \gamma > 0$, with probability at least $1 - \delta$ over a training set $S$ of size $m$, for any fixed $\vec{w}$, we have
\begin{align*}\label{equ:all_nodes_bound}
    \mathcal{L}&_{\mathcal{D}}(f_{\vec{w}}) \leq \mathcal{L}_{S, \gamma}(f_{\vec{w}}) +  \mathcal{O}\bigg( \sqrt{\frac{L^2 B^2 h \ln(L h) (R M D)^{L} \mathcal{W}_1 \mathcal{W}_2 + \ln(m N) + \ln\left( \frac{m L}{\sigma} \right)}{\gamma^2 m}} \bigg),
\end{align*}
where $\mathcal{W}_1 = \prod_{i=1}^{L+1} \| \mathbf{W}^{(i)} \|^2$ and $\mathcal{W}_2 = \sum_{i=1}^{L+1} \frac{\| \mathbf{W}^{(i)} \|_F^2}{\| \mathbf{W}^{(i)} \|^2}$.
\end{theorem}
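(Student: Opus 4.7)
The plan is to adapt the proof of Theorem \ref{theorem:gcn} by treating each node's output as a separate predictor and then combining the per-node bounds via a union bound, exactly as suggested in the remark preceding the theorem. Specifically, for each fixed $v \in \mathcal{V}$ I would regard $f_{\vec{w}}(\cdot)[v,:] = H^{(L)}[v,:]\mathbf{W}^{(L+1)} : \mathcal{A} \to \mathbb{R}^C$ as an independent classifier and apply the PAC-Bayes machinery to its per-node margin loss $\mathcal{L}_{\mathcal{D}}^v$ and $\mathcal{L}_{S,\gamma}^v$.

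The first step is to establish a per-node perturbation bound. Because the proof of Lemma \ref{lemma:p-gcn} already controls the worst-case deviation $\Psi_L = \max_i \|\hat{H}^{(L)}[i,:] - H^{(L)}[i,:]\|_2$ together with the worst-case representation $\Phi_L$, the inequality
\begin{align*}
\|f_{\vec{w}+\vec{u}}(A)[v,:] - f_{\vec{w}}(A)[v,:]\|_2 &\leq \Psi_L \|\mathbf{W}^{(L+1)} + \mathbf{U}^{(L+1)}\| + \Phi_L \|\mathbf{U}^{(L+1)}\|
\end{align*}
holds for every $v$ with the same right-hand side that appears in the mean-readout case, namely $eB(DRM)^L(\prod_i \|\mathbf{W}^{(i)}\|)(\sum_i \|\mathbf{U}^{(i)}\|/\|\mathbf{W}^{(i)}\|)$. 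Consequently, a single Gaussian pair $(P, Q) = (\mathcal{N}(0, \sigma^2 I), \mathcal{N}(0, \sigma^2 I))$ with the $\sigma$ chosen as in Theorem \ref{theorem:gcn} simultaneously verifies the perturbation condition of Lemma \ref{lemma:Generalization_margin_bound} for every node.

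Next, I would replay the three-case weight-normalization argument of Theorem \ref{theorem:gcn} for a single node $v$: cases $\beta \leq I_1$ and $\beta \geq I_2$ make the bound trivially true for the per-node loss, and for $\beta \in [I_1, I_2]$ a finite covering of size $\mathcal{O}(L(\sqrt{m}DR)^{1/(L+1)})$ combined with the Gaussian tail bound of \cite{tropp2012user} produces the per-node generalization bound. Applying this bound with confidence parameter $\delta/N$ and taking a union bound over the $N$ nodes guarantees that all per-node inequalities hold simultaneously with probability at least $1 - \delta$, at the cost of an extra $\ln N$ inside the square root that, when absorbed into the existing $\log(mL/\sigma)$ term, produces the $\ln(mN)$ contribution in the stated bound. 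Finally, averaging the per-node bounds using $\mathcal{L}_{\mathcal{D}}(f_{\vec{w}}) = \frac{1}{N} \sum_v \mathcal{L}_{\mathcal{D}}^v(f_{\vec{w}})$ and $\mathcal{L}_{S,\gamma}(f_{\vec{w}}) = \frac{1}{N} \sum_v \mathcal{L}_{S,\gamma}^v(f_{\vec{w}})$ yields Theorem \ref{theorem:all_nodes}.

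The main obstacle I anticipate is showing that a single $(P, Q)$ and a single $\beta$-covering suffice for all $N$ nodes, rather than one covering per node which would inflate the bound by a factor of $N$. This is resolved by noting that both the prior/posterior distributions and the partition of $\beta$ depend only on the learned weights $\vec{w}$ and the hypergraph statistics $D, R, M$, all of which are shared across nodes; hence the covering and KL computation can be carried out once, and the only price for adding node classification is the union-bound $\ln N$ factor. A secondary technical point is that the per-node perturbation bound does not benefit from the $1/N$ averaging present in the mean readout, but as observed above this is immaterial since Lemma \ref{lemma:p-gcn} already bounds the quantity before averaging.
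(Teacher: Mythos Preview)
Your proposal is correct and follows essentially the same approach as the paper: view each node's output as an individual classifier, reuse the perturbation analysis of Lemma~\ref{lemma:p-gcn} (which already controls the worst-case node deviation), invoke the Theorem~\ref{theorem:gcn} machinery per node with a shared $(P,Q)$ and $\beta$-covering, then union-bound and average. The only cosmetic difference is that the paper's union bound is taken over $mN$ events (nodes $\times$ samples) with $\delta' = \delta/(mN)$, whereas you union-bound only over the $N$ nodes with $\delta' = \delta/N$; both land inside the same $\mathcal{O}(\cdot)$ since the resulting $\ln(mN)$ versus $\ln N$ discrepancy is absorbed by the existing $\ln m$ term.
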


\begin{proof}
The proof follows from extending the generalization bound for hypergraph classification to the node classification setting, accounting for the per-node analysis, and applying a union bound over all nodes and classes. 

First, consider a class $\mathcal{F}$ consisting of functions $\unigcn_{\vec{w}}(v): \mathcal{A} \rightarrow \mathbb{R}^C$. Each function in $\mathcal{F}$ corresponds to a node's prediction. Then we can apply the generalization bound results given by Theorem \ref{theorem:gcn}, we have
    \begin{align*}
        \mathcal{L}_{\mathcal{D}}&(\unigcn_{\vec{w}}(v)) \leq  \mathcal{L}_{S, \gamma}(\unigcn_{\vec{w}}(v))   +\mathcal{O}\big(\sqrt{\frac{L^2B^2h\ln{(Lh)}(RMD)^{L}\mathcal{W}_1\mathcal{W}_2 + \log\frac{mL}{\sigma}}{\gamma^2m}}\big),
    \end{align*}
where $\mathcal{W}_1 = \prod_{i=1}^{L+1}\norm{\mathbf{W}^{(i)}}^2$ and $\mathcal{W}_2=\sum_{i=1}^{L+1}\frac{\norm{\mathbf{W}^{(i)}}^2_F}{\norm{\mathbf{W}^{(i)}}^2}$.

Since we have $N$ nodes in each sample and $m$ samples, the total number of events (i.e., the bounds holding for each node in each sample) is $m N$. To ensure that the generalization bound holds simultaneously for all nodes across all samples with probability at least $1 - \delta$, we apply the union bound. We set the failure probability for each event to $\delta' = \frac{\delta}{m N}$. The failure probability $\delta'$ appears inside a logarithmic term in the concentration inequalities used to derive the generalization bound. Specifically, the $\ln\left( \frac{1}{\delta'} \right)$ term becomes
\begin{align*}
\ln\left( \frac{1}{\delta'} \right) = \ln\left( \frac{m N}{\delta} \right) = \ln(m N) + \ln\left( \frac{1}{\delta} \right).
\end{align*}

Since the bound in single now holds for all nodes in all samples with probability at least $1 - \delta$, we can sum over all nodes to obtain the overall bound. Therefore, we have our results.
\end{proof}

\section{Additional results} \label{ap:additional_res}
 Table \ref{tab:main-real-} shows the results on real datasets. Table \ref{tab:hgnn} report the results of HGNN models on synthetic datasets. Table \ref{tab:main_2} shows the additional results on synthetic datasets. Table \ref{tab:syn-er-T} and Table \ref{tab:syn-sbm-T} report the results of T-MPHN models on synthetic datasets. Table \ref{tab:syn-sbm-T} reports the results of T-MPHN models on synthetic datasets. Figure \ref{fig:2-1} displays the additional results on consistency between empirical loss and theoretical bounds. Figures \ref{fig:3} and \ref{fig:4} show the main results on the consistency of real datasets. We put all the tables in Sec

\begin{table*}[ht]
\renewcommand{\arraystretch}{1.36} 
\centering
\caption{Main results on real datasets.}
\label{tab:main-real-}
\begin{tabular}{@{}lccccc@{}}
\toprule
\multirow{2}{*}{\textbf{Model}}       & \multirow{2}{*}{\textbf{L}} & \multicolumn{2}{c}{\textbf{DBLP}}                                         & \multicolumn{2}{c}{\textbf{Collab}}                                           \\ \cmidrule(l){3-6} 
                                      &                             & \textbf{Emp}             & \textbf{Theory}                            & \textbf{Emp}             & \textbf{Theory}                            \\ \midrule
\multirow{3}{*}{\textbf{UniGCN}}      & \textbf{2}                  & 0.27   ± 0.07 & 1.54e+10   ± 6.76e+08 & 0.23   ± 0.20 & 1.31e+11   ± 5.90e+09 \\
                                      & \textbf{4}                  & 0.23   ± 0.14 & 5.19e+17   ± 7.94e+16 & 0.35   ± 0.04 & 4.07e+19   ± 5.06e+18 \\
                                      & \textbf{6}                  & 0.15   ± 0.06 & 1.29e+25   ± 2.00e+24 & 0.31   ± 0.13 & 1.25e+28   ± 2.35e+27 \\ \midrule
\multirow{3}{*}{\textbf{AllDeepSets}} & \textbf{2}                  & 0.24   ± 0.13 & 6.44e+09   ± 4.63e+08 & 0.17   ± 0.16 & 1.79e+10   ± 5.76e+08 \\
                                      & \textbf{4}                  & 0.37   ± 0.25 & 8.77e+16   ± 7.33e+15 & 0.25   ± 0.16 & 5.36e+17   ± 7.54e+16 \\
                                      & \textbf{6}                  & 0.27   ± 0.26 & 9.48e+23   ± 1.86e+26 & 0.18   ± 0.19 & 2.41e+25   ± 2.61e+24 \\ \midrule
\multirow{3}{*}{\textbf{M-GIN}}       & \textbf{2}                  & 0.29   ± 0.14 & 2.42e+06   ± 4.36e+05 & 0.09   ± 0.12 & 2.61e+06   ± 4.87e+05 \\
                                      & \textbf{4}                  & 0.09   ± 0.11 & 1.52e+10   ± 6.50e+09 & 0.01   ± 0.01 & 6.69e+09   ± 2.73e+09 \\
                                      & \textbf{6}                  & 0.09   ± 0.12 & 9.88e+12   ± 6.00e+12 & 0.18   ± 0.22 & 5.58e+12   ± 4.08e+12 \\ \midrule
\multirow{3}{*}{\textbf{T-MPHN}}      & \textbf{2}                  & 0.34  ± 0.04  & 1.00e+00   ± 1.38e-01 & 0.33  ± 0.05  & 1.03e+00   ± 6.73e-02 \\
                                      & \textbf{4}                  & 0.27  ± 0.06  & 4.41e+00   ± 4.90e-01 & 0.30  ± 0.05  & 3.82e+00   ± 1.67e-01 \\
                                      & \textbf{6}                  & 0.29  ± 0.01  & 2.65e+01   ± 7.06e-02 & 0.32  ± 0.04  & 2.63e+01   ± 6.09e-02 \\ \midrule
\multirow{3}{*}{\textbf{HGNN+}}       & \textbf{2}                  & 0.33 ± 0.03 & 1.00e+08   ± 7.85E+05 & 0.24 ± 0.16 & 9.84e+07   ± 8.18E+04 \\
                                      & \textbf{4}                  & 0.24 ± 0.16 & 1.37e+13   ± 1.23e+10 & 0.34 ± 0.22 & 9.84e+07   ± 1.65E+06 \\
                                      & \textbf{6}                  & 0.34 ± 0.23 & 1.46e+18   ± 6.62e+23 & 0.26 ± 0.07 & 9.84e+07   ± 2.65E+23 \\       \bottomrule
\end{tabular}
\end{table*}

% \newline
% Please add the following required packages to your document preamble:
% \usepackage{multirow}
% \usepackage{graphicx}
\begin{table*}[]
\renewcommand{\arraystretch}{1.36} 
\centering
\caption{Main results of HGNN+ on ER and SBM datasets with $L\in\{2, 4, 6\}$}
\label{tab:hgnn}
\begin{tabular}{crrrrrrrrrrrr}
\hline
\multicolumn{1}{l}{\multirow{2}{*}{L}} & \multicolumn{2}{c}{\textbf{ER1}}                                       & \multicolumn{2}{c}{\textbf{ER2}}                                       & \multicolumn{2}{c}{\textbf{ER3}}                                       & \multicolumn{2}{c}{\textbf{ER4}}                                       & \multicolumn{2}{c}{\textbf{ER5}}                                       & \multicolumn{2}{c}{\textbf{ER6}}                                       \\ \cline{2-13} 
\multicolumn{1}{l}{}                   & \multicolumn{1}{c}{\textbf{Emp}} & \multicolumn{1}{c}{\textbf{Theory}} & \multicolumn{1}{c}{\textbf{Emp}} & \multicolumn{1}{c}{\textbf{Theory}} & \multicolumn{1}{c}{\textbf{Emp}} & \multicolumn{1}{c}{\textbf{Theory}} & \multicolumn{1}{c}{\textbf{Emp}} & \multicolumn{1}{c}{\textbf{Theory}} & \multicolumn{1}{c}{\textbf{Emp}} & \multicolumn{1}{c}{\textbf{Theory}} & \multicolumn{1}{c}{\textbf{Emp}} & \multicolumn{1}{c}{\textbf{Theory}} \\ \hline
\textbf{2}                             & 0.22                             & 6.84E+07                            & 0.30                             & 1.56E+08                            & 0.36                             & 1.88E+08                            & 0.26                             & 3.58E+08                            & 0.19                             & 7.04E+08                            & 0.25                             & 9.84E+08                            \\
\textbf{4}                             & 0.26                             & 9.11E+12                            & 0.28                             & 3.52E+13                            & 0.15                             & 5.94E+13                            & 0.27                             & 2.11E+14                            & 0.26                             & 8.07E+14                            & 0.31                             & 1.64E+15                            \\
\textbf{6}                             & 0.29                             & 9.82E+17                            & 0.19                             & 7.40E+18                            & 0.27                             & 1.29E+19                            & 0.37                             & 1.08E+20                            & 0.41                             & 8.07E+20                            & 0.26                             & 2.48E+21                            \\ \hline
\multicolumn{1}{l}{\textbf{}}          & \multicolumn{2}{c}{\textbf{ER7}}                                       & \multicolumn{2}{c}{\textbf{ER8}}                                       & \multicolumn{2}{c}{\textbf{ER9}}                                       & \multicolumn{2}{c}{\textbf{ER10}}                                      & \multicolumn{2}{c}{\textbf{ER11}}                                      & \multicolumn{2}{c}{\textbf{ER12}}                                      \\ \cline{2-13} 
\multicolumn{1}{l}{\textbf{}}          & \multicolumn{1}{c}{\textbf{Emp}} & \multicolumn{1}{c}{\textbf{Theory}} & \multicolumn{1}{c}{\textbf{Emp}} & \multicolumn{1}{c}{\textbf{Theory}} & \multicolumn{1}{c}{\textbf{Emp}} & \multicolumn{1}{c}{\textbf{Theory}} & \multicolumn{1}{c}{\textbf{Emp}} & \multicolumn{1}{c}{\textbf{Theory}} & \multicolumn{1}{c}{\textbf{Emp}} & \multicolumn{1}{c}{\textbf{Theory}} & \multicolumn{1}{c}{\textbf{Emp}} & \multicolumn{1}{c}{\textbf{Theory}} \\ \hline
\textbf{2}                             & 0.20                             & 1.05E+09                            & 0.79                             & 1.55E+09                            & 0.79                             & 2.48E+09                            & 0.23                             & 3.26E+09                            & 0.32                             & 3.28E+09                            & 0.19                             & 3.88E+09                            \\
\textbf{4}                             & 0.24                             & 1.94E+15                            & 0.63                             & 4.49E+15                            & 0.54                             & 1.01E+16                            & 0.26                             & 1.78E+16                            & 0.25                             & 1.82E+16                            & 0.32                             & 3.32E+16                            \\
\textbf{6}                             & 0.26                             & 3.07E+21                            & 0.67                             & 1.04E+22                            & 0.57                             & 3.42E+22                            & 0.28                             & 7.82E+22                            & 0.21                             & 8.69E+22                            & 0.35                             & 1.91E+23                            \\ \hline
\multicolumn{1}{l}{\textbf{}}          & \multicolumn{2}{c}{\textbf{SBM1}}                                      & \multicolumn{2}{c}{\textbf{SBM2}}                                      & \multicolumn{2}{c}{\textbf{SBM3}}                                      & \multicolumn{2}{c}{\textbf{SBM4}}                                      & \multicolumn{2}{c}{\textbf{SBM5}}                                      & \multicolumn{2}{c}{\textbf{SBM6}}                                      \\ \cline{2-13} 
\multicolumn{1}{l}{\textbf{}}          & \multicolumn{1}{c}{\textbf{Emp}} & \multicolumn{1}{c}{\textbf{Theory}} & \multicolumn{1}{c}{\textbf{Emp}} & \multicolumn{1}{c}{\textbf{Theory}} & \multicolumn{1}{c}{\textbf{Emp}} & \multicolumn{1}{c}{\textbf{Theory}} & \multicolumn{1}{c}{\textbf{Emp}} & \multicolumn{1}{c}{\textbf{Theory}} & \multicolumn{1}{c}{\textbf{Emp}} & \multicolumn{1}{c}{\textbf{Theory}} & \multicolumn{1}{c}{\textbf{Emp}} & \multicolumn{1}{c}{\textbf{Theory}} \\ \hline
\textbf{2}                             & 0.20                             & 7.14E+07                            & 0.26                             & 1.47E+08                            & 0.30                             & 1.80E+08                            & 0.29                             & 3.74E+08                            & 0.21                             & 6.73E+08                            & 0.26                             & 1.03E+09                            \\
\textbf{4}                             & 0.29                             & 1.01E+13                            & 0.36                             & 3.66E+13                            & 0.26                             & 5.54E+13                            & 0.31                             & 2.19E+14                            & 0.32                             & 7.84E+14                            & 0.23                             & 1.76E+15                            \\
\textbf{6}                             & 0.26                             & 1.03E+18                            & 0.30                             & 8.37E+18                            & 0.28                             & 1.39E+19                            & 0.32                             & 1.15E+20                            & 0.36                             & 7.65E+20                            & 0.25                             & 2.67E+21                            \\ \hline
\multicolumn{1}{l}{\textbf{}}          & \multicolumn{2}{c}{\textbf{SBM7}}                                      & \multicolumn{2}{c}{\textbf{SBM8}}                                      & \multicolumn{2}{c}{\textbf{SBM9}}                                      & \multicolumn{2}{c}{\textbf{SBM10}}                                     & \multicolumn{2}{c}{\textbf{SBM11}}                                     & \multicolumn{2}{c}{\textbf{SBM12}}                                     \\ \cline{2-13} 
\multicolumn{1}{l}{\textbf{}}          & \multicolumn{1}{c}{\textbf{Emp}} & \multicolumn{1}{c}{\textbf{Theory}} & \multicolumn{1}{c}{\textbf{Emp}} & \multicolumn{1}{c}{\textbf{Theory}} & \multicolumn{1}{c}{\textbf{Emp}} & \multicolumn{1}{c}{\textbf{Theory}} & \multicolumn{1}{c}{\textbf{Emp}} & \multicolumn{1}{c}{\textbf{Theory}} & \multicolumn{1}{c}{\textbf{Emp}} & \multicolumn{1}{c}{\textbf{Theory}} & \multicolumn{1}{c}{\textbf{Emp}} & \multicolumn{1}{c}{\textbf{Theory}} \\ \hline
\textbf{2}                             & 0.33                             & 1.09E+09                            & 0.25                             & 1.56E+09                            & 0.32                             & 2.42E+09                            & 0.21                             & 2.91E+09                            & 0.80                             & 3.09E+09                            & 0.34                             & 4.35E+09                            \\
\textbf{4}                             & 0.34                             & 2.08E+15                            & 0.26                             & 4.09E+15                            & 0.35                             & 9.97E+15                            & 0.25                             & 1.75E+16                            & 0.71                             & 1.89E+16                            & 0.41                             & 3.28E+16                            \\
\textbf{6}                             & 0.30                             & 3.37E+21                            & 0.27                             & 1.13E+22                            & 0.35                             & 3.30E+22                            & 0.35                             & 7.86E+22                            & 0.74                             & 8.39E+22                            & 0.27                             & 2.01E+23                            \\ \hline
\end{tabular}%
\end{table*}

\begin{table*}[tp]
\renewcommand{\arraystretch}{1.2} 
\centering
\caption{Addition results of UniGCN, M-IGN, and AllDeepSet.}
\label{tab:main_2}
\begin{tabular}{@{}l @{\hspace{1mm}} ccccccccccccc@{}}
\toprule
\multirow{2}{*}{\textbf{Model}}      & \multirow{2}{*}{\textbf{L}} & \multicolumn{2}{c}{\textbf{ER3}}      & \multicolumn{2}{c}{\textbf{ER4}}      & \multicolumn{2}{c}{\textbf{ER7}}      & \multicolumn{2}{c}{\textbf{ER8}}      & \multicolumn{2}{c}{\textbf{ER11}}     & \multicolumn{2}{c}{\textbf{ER12}}     \\ \cmidrule(l){3-14} 
                                     &                             & \textbf{Emp} & \textbf{Theory} & \textbf{Emp} & \textbf{Theory} & \textbf{Emp} & \textbf{Theory} & \textbf{Emp} & \textbf{Theory} & \textbf{Emp} & \textbf{Theory} & \textbf{Emp} & \textbf{Theory} \\ \midrule
\multirow{3}{*}{\textbf{UniGCN}}     & \textbf{2}                  & 0.00                 & 1.52E+09       & 0.07                 & 3.65E+09       & 0.16                 & 9.40E+09       & 0.14                 & 1.13E+10       & 0.13                 & 2.44E+10       & 0.23                 & 3.25E+10       \\
                                     & \textbf{4}                  & 0.02                 & 1.61E+15       & 0.15                 & 5.01E+15       & 0.56                 & 5.43E+16       & 0.08                 & 1.07E+17       & 0.14                 & 5.77E+17       & 0.24                 & 8.89E+17       \\
                                     & \textbf{6}                  & 0.05                 & 1.79E+21       & 0.05                 & 1.34E+22       & 0.07                 & 2.41E+23       & 0.37                 & 5.75E+23       & 0.39                 & 4.08E+24       & 0.28                 & 1.07E+25       \\ \midrule
\multirow{3}{*}{\textbf{M-IGN}}      & \textbf{2}                  & 0.03                 & 1.26E+13       & 0.02                 & 1.13E+13       & 0.04                 & 1.08E+14       & 0.12                 & 1.09E+14       & 0.16                 & 4.19E+14       & 0.16                 & 4.42E+14       \\
                                     & \textbf{4}                  & 0.02                 & 2.18E+21       & 0.05                 & 2.44E+21       & 0.10                 & 2.04E+23       & 0.13                 & 1.86E+23       & 0.16                 & 2.70E+24       & 0.13                 & 2.68E+24       \\
                                     & \textbf{6}                  & 0.06                 & 1.21E+29       & 0.05                 & 1.68E+29       & 0.27                 & 1.25E+32       & 0.12                 & 1.02E+32       & 0.20                 & 9.16E+33       & 0.24                 & 1.02E+34       \\ \midrule
\multirow{3}{*}{\textbf{AllDeepSet}} & \textbf{2}                  & 0.04                 & 5.92E+08       & 0.08                 & 2.32E+08       & 0.08                 & 4.42E+08       & 0.09                 & 6.86E+08       & 0.14                 & 7.71E+08       & 0.22                 & 1.54E+09       \\
                                     & \textbf{4}                  & 0.02                 & 4.13E+13       & 0.04                 & 5.92E+13       & 0.12                 & 1.52E+13       & 0.13                 & 4.97E+13       & 0.27                 & 6.19E+13       & 0.29                 & 1.12E+14       \\
                                     & \textbf{6}                  & 0.08                 & 3.53E+16       & 0.08                 & 4.29E+17       & 0.14                 & 1.36E+19       & 0.15                 & 1.54E+18       & 0.24                 & 3.05E+18       & 0.31                 & 8.23E+18       \\ \midrule
\multirow{2}{*}{\textbf{Model}}      & \multirow{2}{*}{\textbf{L}} & \multicolumn{2}{c}{\textbf{SBM3}}     & \multicolumn{2}{c}{\textbf{SBM4}}     & \multicolumn{2}{c}{\textbf{SBM7}}     & \multicolumn{2}{c}{\textbf{SBM8}}     & \multicolumn{2}{c}{\textbf{SBM11}}    & \multicolumn{2}{c}{\textbf{SBM12}}    \\ \cmidrule(l){3-14} 
                                     &                             & \textbf{Emp} & \textbf{Theory} & \textbf{Emp} & \textbf{Theory} & \textbf{Emp} & \textbf{Theory} & \textbf{Emp} & \textbf{Theory} & \textbf{Emp} & \textbf{Theory} & \textbf{Emp} & \textbf{Theory} \\ \midrule
\multirow{3}{*}{\textbf{UniGCN}}     & \textbf{2}                  & 0.01                 & 1.80E+09       & 0.05                 & 3.58E+09       & 0.04                 & 9.57E+09       & 0.07                 & 1.26E+10       & 0.06                 & 3.40E+10       & 0.32                 & 3.77E+10       \\
                                     & \textbf{4}                  & 0.01                 & 1.92E+15       & 0.08                 & 5.30E+15       & 0.12                 & 7.46E+16       & 0.18                 & 1.28E+17       & 0.18                 & 3.99E+17       & 0.33                 & 6.03E+17       \\
                                     & \textbf{6}                  & 0.07                 & 1.13E+21       & 0.08                 & 1.16E+22       & 0.25                 & 2.56E+23       & 0.17                 & 1.30E+24       & 0.33                 & 7.74E+24       & 0.26                 & 1.85E+25       \\ \midrule
\multirow{3}{*}{\textbf{M-IGN}}      & \textbf{2}                  & 0.03                 & 1.05E+13       & 0.08                 & 1.19E+13       & 0.16                 & 1.01E+14       & 0.17                 & 1.21E+14       & 0.11                 & 4.58E+14       & 0.12                 & 4.58E+14       \\
                                     & \textbf{4}                  & 0.02                 & 2.60E+21       & 0.13                 & 3.00E+21       & 0.15                 & 2.02E+23       & 0.13                 & 1.64E+23       & 0.25                 & 4.00E+24       & 0.24                 & 3.74E+24       \\
                                     & \textbf{6}                  & 0.23                 & 1.92E+29       & 0.17                 & 1.95E+29       & 0.13                 & 7.90E+31       & 0.19                 & 8.56E+31       & 0.13                 & 7.69E+33       & 0.29                 & 7.61E+33       \\ \midrule
\multirow{3}{*}{\textbf{AllDeepSet}} & \textbf{2}                  & 0.02                 & 5.93E+08       & 0.03                 & 2.82E+08       & 0.11                 & 3.93E+08       & 0.12                 & 5.79E+08       & 0.24                 & 8.56E+08       & 0.27                 & 7.74E+08       \\
                                     & \textbf{4}                  & 0.02                 & 2.58E+13       & 0.05                 & 8.56E+13       & 0.16                 & 2.40E+13       & 0.22                 & 4.25E+13       & 0.33                 & 7.31E+13       & 0.26                 & 1.34E+14       \\
                                     & \textbf{6}                  & 0.02                 & 1.62E+16       & 0.14                 & 1.70E+17       & 0.25                 & 3.04E+17       & 0.21                 & 1.59E+18       & 0.25                 & 4.69E+18       & 0.33                 & 1.35E+19       \\ \bottomrule
\end{tabular}%
\end{table*}

\begin{table*}[tp]
\centering
\renewcommand{\arraystretch}{1.1}
\caption{Results of T-MPHN on ER datasets with $h \in \{64, 128\}$}
\label{tab:syn-er-T}
\begin{tabular}{@{}c @{\hspace{1mm}} c @{\hspace{3mm}} c @{\hspace{1mm}} c @{\hspace{3mm}} c @{\hspace{1mm}} c @{\hspace{3mm}} c @{\hspace{1mm}} c@{}}
\toprule
\multirow{2}{*}{\textbf{h}}   & \multirow{2}{*}{\textbf{L}} & \multicolumn{2}{c}{\textbf{ER1}}                                 & \multicolumn{2}{c}{\textbf{ER2}}                                 & \multicolumn{2}{c}{\textbf{ER3}} \\   
\cmidrule(l){3-8} 
&       & \textbf{Emp} & \textbf{Theory}  & \textbf{Emp} & \textbf{Theory}  & \textbf{Emp} & \textbf{Theory}  \\ 
\midrule
\multirow{3}{*}{\textbf{64}}  & \textbf{2}                  & 0.34 ± 0.04          & 1.00e+00   ± 1.38e-01 & 0.30 ± 0.03          & 9.53e-01   ± 3.32e-02 & 0.28 ± 0.03          & 1.02e+00   ± 7.85e-02 \\
%& 0.36 ± 0.02          & 1.08e+00   ± 1.03e-01 & 0.35 ± 0.04          & 9.64e-01   ± 2.86e-02 & 0.29 ± 0.02          & 1.01e+00   ± 6.24e-02 \\
& \textbf{4}                  & 0.27 ± 0.07          & 4.41e+00   ± 4.90e-01 & 0.28 ± 0.04          & 6.40e+00   ± 1.20e+00 & 0.27 ± 0.06          & 4.19e+00   ± 2.74e-01 \\
%& 0.35 ± 0.05          & 4.45e+00   ± 1.02e-01 & 0.37 ± 0.05          & 5.30e+00   ± 1.20e+00 & 0.34 ± 0.03          & 3.97e+00   ± 1.47e-01 \\
& \textbf{6}                  & 0.29 ± 0.02          & 2.65e+01   ± 7.06e-02 & 0.34 ± 0.04          & 2.63e+01   ± 4.84e-02 & 0.33 ± 0.02          & 2.65e+01   ± 1.73e-01 \\
%& 0.39 ± 0.03          & 2.64e+01   ± 7.20e-02 & 0.36 ± 0.07          & 2.65e+01   ± 1.49e-01 & 0.28 ± 0.07          & 2.64e+01   ± 7.97e-02 \\ 
\midrule
\multirow{3}{*}{\textbf{128}} & \textbf{2}                  & 0.28 ± 0.03          & 6.22e-01   ± 3.90e-03 & 0.37 ± 0.02          & 6.99e-01   ± 1.03e-02 & 0.30 ± 0.00          & 5.84e-01   ± 1.32e-02 \\
%& 0.27 ± 0.04          & 6.31e-01   ± 3.82e-03 & 0.35 ± 0.05          & 7.06e-01   ± 2.78e-03 & 0.34 ± 0.04          & 7.16e-01   ± 5.10e-03 \\
& \textbf{4}                  & 0.32 ± 0.02          & 7.69e+00   ± 1.37e+00 & 0.32 ± 0.03          & 3.78e+00   ± 1.84e-01 & 0.36 ± 0.01          & 3.07e+00   ± 1.63e-01 \\
%& 0.37 ± 0.01          & 3.00e+00   ± 1.57e-01 & 0.33 ± 0.02          & 3.56e+00   ± 6.21e-02 & 0.32 ± 0.01          & 3.07e+00   ± 2.06e-01 \\
& \textbf{6}                  & 0.36 ± 0.03          & 2.77e+01   ± 3.22e-02 & 0.32 ± 0.04          & 2.74e+01   ± 1.09e-01 & 0.30 ± 0.03          & 2.75e+01   ± 3.09e-02 \\
%& 0.32 ± 0.02          & 2.75e+01   ± 7.15e-02 & 0.26 ± 0.03          & 2.74e+01   ± 2.56e-02 & 0.29 ± 0.10          & 2.78e+01   ± 1.74e-02 \\ 
                              
\midrule
\midrule

\multirow{2}{*}{\textbf{h}}   & \multirow{2}{*}{\textbf{L}} & \multicolumn{2}{c}{\textbf{ER4}}  & \multicolumn{2}{c}{\textbf{ER5}}                                 & \multicolumn{2}{c}{\textbf{ER6}}   \\ 
\cmidrule{3-8} 
&  & \textbf{Emp} & \textbf{Theory}  & \textbf{Emp} & \textbf{Theory} & \textbf{Emp} & \textbf{Theory}               \\ \midrule
\multirow{3}{*}{\textbf{64}}  & \textbf{2}                  & 0.36 ± 0.02          & 1.08e+00   ± 1.03e-01 & 0.35 ± 0.04          & 9.64e-01   ± 2.86e-02 & 0.29 ± 0.02          & 1.01e+00   ± 6.24e-02 \\
& \textbf{4}                  & 0.35 ± 0.05          & 4.45e+00   ± 1.02e-01 & 0.37 ± 0.05          & 5.30e+00   ± 1.20e+00 & 0.34 ± 0.03          & 3.97e+00   ± 1.47e-01 \\
& \textbf{6}                  &  0.39 ± 0.03          & 2.64e+01   ± 7.20e-02 & 0.36 ± 0.07          & 2.65e+01   ± 1.49e-01 & 0.28 ± 0.07          & 2.64e+01   ± 7.97e-02 \\ \midrule
\multirow{3}{*}{\textbf{128}} & \textbf{2}                  &  0.27 ± 0.04          & 6.31e-01   ± 3.82e-03 & 0.35 ± 0.05          & 7.06e-01   ± 2.78e-03 & 0.34 ± 0.04          & 7.16e-01   ± 5.10e-03 \\
& \textbf{4}                   & 0.37 ± 0.01          & 3.00e+00   ± 1.57e-01 & 0.33 ± 0.02          & 3.56e+00   ± 6.21e-02 & 0.32 ± 0.01          & 3.07e+00   ± 2.06e-01 \\
& \textbf{6}                  & 0.32 ± 0.02          & 2.75e+01   ± 7.15e-02 & 0.26 ± 0.03          & 2.74e+01   ± 2.56e-02 & 0.29 ± 0.10          & 2.78e+01   ± 1.74e-02 \\ 
                              
\midrule
\midrule

\multirow{2}{*}{\textbf{h}}   & \multirow{2}{*}{\textbf{L}} & \multicolumn{2}{c}{\textbf{ER7}}                                 & \multicolumn{2}{c}{\textbf{ER8}}                                 & \multicolumn{2}{c}{\textbf{ER9}}     \\ 
\cmidrule{3-8} 
&   & \textbf{Emp} & \textbf{Theory} & \textbf{Emp} & \textbf{Theory}  & \textbf{Emp} & \textbf{Theory}                            \\ 
\midrule
\multirow{3}{*}{\textbf{64}}  & \textbf{2}                  & 0.30 ± 0.03          & 9.54e-01   ± 3.70e-02 & 0.37 ± 0.04          & 1.17e+00   ± 4.99e-02 & 0.40 ± 0.04          & 1.17e+00   ± 5.13e-02 \\
& \textbf{4}                  & 0.24 ± 0.09          & 4.16e+00   ± 3.19e-01 & 0.34 ± 0.04          & 3.74e+00   ± 3.80e-02 & 0.32 ± 0.05          & 4.58e+00   ± 2.83e-02 \\
 & \textbf{6}                  & 0.27 ± 0.02          & 2.63e+01   ± 2.40e-02 & 0.35 ± 0.06          & 2.64e+01   ± 8.39e-02 & 0.27 ± 0.08          & 2.63e+01   ± 1.96e-02  \\ 
 \midrule
\multirow{3}{*}{\textbf{128}} & \textbf{2}                  & 0.32 ± 0.06          & 6.45e-01   ± 3.66e-02 & 0.32 ± 0.05          & 6.76e-01   ± 1.82e-02 & 0.32 ± 0.03          & 6.91e-01   ± 4.52e-02  \\
& \textbf{4}                  & 0.25 ± 0.08          & 7.21e+00   ± 2.77e-01 & 0.36 ± 0.02          & 2.92e+00   ± 1.76e-01 & 0.34 ± 0.02          & 2.62e+00   ± 2.22e-02 \\
& \textbf{6}                  & 0.38 ± 0.05          & 2.77e+01   ± 3.82e-02 & 0.27 ± 0.08          & 2.77e+01   ± 1.01e-02 & 0.29 ± 0.06          & 2.75e+01   ± 2.96e-02  \\ 

\midrule
\midrule

\multirow{2}{*}{\textbf{h}}   & \multirow{2}{*}{\textbf{L}} & \multicolumn{2}{c}{\textbf{ER10}}                                & \multicolumn{2}{c}{\textbf{ER11}}                                & \multicolumn{2}{c}{\textbf{ER12}}                                \\ 
\cmidrule{3-8} 
&    & \textbf{Emp} & \textbf{Theory}                            & \textbf{Emp} & \textbf{Theory}                            & \textbf{Emp} & \textbf{Theory}                            \\ 
\midrule
\multirow{3}{*}{\textbf{64}}  & \textbf{2}                  &  0.32 ± 0.04          & 1.08e+00   ± 1.42e-01 & 0.33 ± 0.04          & 1.12e+00   ± 1.26e-01 & 0.28 ± 0.05          & 1.57e+00   ± 3.66e-01 \\
& \textbf{4}                  & 0.36 ± 0.05          & 4.27e+00   ± 3.10e-01 & 0.39 ± 0.05          & 4.53e+00   ± 1.06e+00 & 0.39 ± 0.06          & 4.40e+00   ± 7.55e-01 \\
 & \textbf{6}                  &  0.29 ± 0.04          & 2.63e+01   ± 2.14e-02 & 0.36 ± 0.02          & 2.65e+01   ± 1.77e-01 & 0.34 ± 0.03          & 2.63e+01   ± 1.13e-01 \\ 
 \midrule
\multirow{3}{*}{\textbf{128}} & \textbf{2}                   & 0.31 ± 0.04          & 6.93e-01   ± 3.15e-03 & 0.31 ± 0.04          & 6.81e-01   ± 3.12e-03 & 0.31 ± 0.03          & 6.58e-01   ± 1.76e-03 \\
& \textbf{4}                   & 0.32 ± 0.04          & 3.16e+00   ± 4.55e-02 & 0.27 ± 0.07          & 2.87e+00   ± 2.34e-01 & 0.29 ± 0.03          & 3.22e+00   ± 1.32e-02 \\
& \textbf{6}                  &  0.28 ± 0.05          & 2.76e+01   ± 6.81e-02 & 0.37 ± 0.02          & 2.76e+01   ± 6.87e-02 & 0.29 ± 0.02          & 2.73e+01   ± 7.03e-02 \\ 
\bottomrule
\end{tabular}%
\end{table*}

\begin{table*}[t]
\centering
\renewcommand{\arraystretch}{1.1}
\caption{Main results of T-MPHN on SBM datasets with $h\in\{64, 128\}$}
\label{tab:syn-sbm-T}
\begin{tabular}{@{}c @{\hspace{1mm}} c @{\hspace{3mm}} c @{\hspace{1mm}} c @{\hspace{3mm}} c @{\hspace{1mm}} c @{\hspace{3mm}} c @{\hspace{1mm}} c@{}}
\toprule
\multirow{2}{*}{\textbf{h}}   & \multirow{2}{*}{\textbf{L}} & \multicolumn{2}{c}{\textbf{SBM1}}                                & \multicolumn{2}{c}{\textbf{SBM2}}                                & \multicolumn{2}{c}{\textbf{SBM3}}             \\ 
\cmidrule{3-8} 
  &  &  \textbf{Emp} & \textbf{theory}                            & \textbf{Emp} & \textbf{theory}                            & \textbf{Emp} & \textbf{theory}                            \\ \midrule
\multirow{3}{*}{\textbf{64}}  & \textbf{2}                  & 0.38 ± 0.036         & 1.18e+00   ± 2.25e-01 & 0.40 ± 0.03          & 1.15e+00   ± 8.52e-02 & 0.36 ± 0.07          & 1.11e+00   ± 1.23e-01 \\
& \textbf{4}                  & 0.36 ± 0.023         & 6.65e+00   ± 3.88e+00 & 0.35 ± 0.03          & 4.75e+00   ± 8.55e-01 & 0.31 ± 0.06          & 4.65e+00   ± 5.28e-01 \\

& \textbf{6}                  & 0.38 ± 0.047         & 2.65e+01   ± 1.21e-01 & 0.35 ± 0.02          & 2.64e+01   ± 7.61e-02 & 0.30 ± 0.04          & 2.63e+01   ± 4.05e-02  \\ 

\midrule

\multirow{3}{*}{\textbf{128}} & \textbf{2}                  & 0.30 ± 0.024         & 6.10e-01   ± 4.13e-02 & 0.29 ± 0.05          & 6.61e-01   ± 1.93e-03 & 0.35 ± 0.04          & 6.93e-01   ± 2.14e-02\\

& \textbf{4}                  & 0.35 ± 0.026         & 7.15e+00   ± 9.78e-01 & 0.28 ± 0.02          & 3.15e+00   ± 1.86e-01 & 0.37 ± 0.06          & 3.12e+00   ± 1.39e-01  \\

& \textbf{6}                  & 0.32 ± 0.022         & 2.75e+01   ± 6.77e-02 & 0.30 ± 0.02          & 2.75e+01   ± 2.83e-02 & 0.40 ± 0.06          & 2.73e+01   ± 1.62e-02 \\ 
                              
\midrule
\midrule

\multirow{2}{*}{\textbf{h}}   & \multirow{2}{*}{\textbf{L}} &  \multicolumn{2}{c}{\textbf{SBM4}}  & \multicolumn{2}{c}{\textbf{SBM5}}& \multicolumn{2}{c}{\textbf{SBM6}}                                \\ \cmidrule{3-8} 
  &  &  \textbf{Emp} & \textbf{theory}                            & \textbf{Emp} & \textbf{theory}                            & \textbf{Emp} & \textbf{theory}                            \\ \midrule
\multirow{3}{*}{\textbf{64}}  & \textbf{2}                  & 0.34 ± 0.02          & 1.12e+00   ± 1.08e-01 & 0.39 ± 0.04          & 1.15e+00   ± 1.11e-01 & 0.32 ± 0.05          & 1.14e+00   ± 1.22e-01 \\
& \textbf{4}                  &  0.34 ± 0.03          & 6.86e+00   ± 4.78e+00 & 0.38 ± 0.05          & 6.63e+00   ± 3.91e+00 & 0.35 ± 0.03          & 6.38e+00   ± 3.95e+00 \\

& \textbf{6}                  &  0.35 ± 0.08          & 2.64e+01   ± 3.05e-02 & 0.32 ± 0.04          & 2.64e+01   ± 1.57e-01 & 0.36 ± 0.03          & 2.65e+01   ± 1.68e-01 \\ 

\midrule

\multirow{3}{*}{\textbf{128}} & \textbf{2}               & 0.27 ± 0.03          & 6.51e-01   ± 3.84e-03 & 0.28 ± 0.03          & 6.46e-01   ± 2.57e-03 & 0.26 ± 0.06          & 5.90e-01   ± 4.96e-02 \\

& \textbf{4}                   & 0.30 ± 0.03          & 7.13e+00   ± 3.29e-01 & 0.35 ± 0.04          & 3.10e+00   ± 1.79e-01 & 0.34 ± 0.04          & 2.90e+00   ± 1.56e-01 \\

& \textbf{6}                   & 0.28 ± 0.05          & 2.76e+01   ± 4.48e-02 & 0.35 ± 0.02          & 2.76e+01   ± 6.08e-02 & 0.36 ± 0.03          & 2.77e+01   ± 7.68e-02 \\ 
                              
\midrule
\midrule

\multirow{2}{*}{\textbf{h}}   & \multirow{2}{*}{\textbf{L}} & \multicolumn{2}{c}{\textbf{SBM7}}                                & \multicolumn{2}{c}{\textbf{SBM8}}                                & \multicolumn{2}{c}{\textbf{SBM9}}       \\ 
\cmidrule{3-8} 
&  &  \textbf{Emp} & \textbf{Theory}  & \textbf{Emp} & \textbf{Theory} & \textbf{Emp} & \textbf{Theory}                           \\ \midrule
\multirow{3}{*}{\textbf{64}}  & \textbf{2}                  & 0.33 ± 0.034         & 1.19e+00   ± 2.03e-01 & 0.31 ± 0.03          & 1.02e+00   ± 4.36e-02 & 0.27 ± 0.06          & 1.17e+00   ± 1.76e-01 \\
& \textbf{4}                  & 0.41 ± 0.030         & 4.85e+00   ± 9.76e-01 & 0.31 ± 0.03          & 5.08e+00   ± 1.06e+00 & 0.37 ± 0.04          & 6.80e+00   ± 3.79e+00 \\
& \textbf{6}                  & 0.34 ± 0.042         & 2.64e+01   ± 1.13e-01 & 0.31 ± 0.03          & 2.64e+01   ± 7.67e-02 & 0.32 ± 0.03          & 2.63e+01   ± 7.24e-02 \\ 
\midrule
\multirow{3}{*}{\textbf{128}} & \textbf{2}                  & 0.30 ± 0.034         & 6.68e-01   ± 4.19e-03 & 0.29 ± 0.02          & 6.63e-01   ± 1.08e-02 & 0.34 ± 0.08          & 6.72e-01   ± 2.97e-02  \\
& \textbf{4}                  & 0.36 ± 0.048         & 2.95e+00   ± 1.57e-01 & 0.34 ± 0.03          & 3.17e+00   ± 1.65e-01 & 0.33 ± 0.02          & 2.97e+00   ± 2.70e-01  \\
& \textbf{6}                  & 0.32 ± 0.029         & 2.77e+01   ± 6.00e-02 & 0.35 ± 0.02          & 2.75e+01   ± 7.27e-02 & 0.31 ± 0.01          & 2.75e+01   ± 4.20e-02  \\

\midrule
\midrule

\multirow{2}{*}{\textbf{h}}   & \multirow{2}{*}{\textbf{L}} & \multicolumn{2}{c}{\textbf{SBM10}} & \multicolumn{2}{c}{\textbf{SBM11}}  & \multicolumn{2}{c}{\textbf{SBM12}}  \\ 

\cmidrule{3-8} 
&   & \textbf{Emp} & \textbf{Theory}                            & \textbf{Emp} & \textbf{Theory}                           & \textbf{Emp} & \textbf{Theory}                           \\ 
\midrule
\multirow{3}{*}{\textbf{64}}  & \textbf{2}                  & 0.28 ± 0.02          & 1.07e+00   ± 5.45e-02 & 0.30 ± 0.08   & 1.22e+00   ± 2.06e-01 & 0.33 ± 0.05          & 1.23e+00   ± 1.99e-01 \\
& \textbf{4}                   & 0.37 ± 0.06          & 4.41e+00   ± 1.09e+00 & 0.34 ± 0.03          & 4.49e+00   ± 7.39e-01 & 0.32 ± 0.03          & 5.07e+00   ± 8.51e-01 \\
& \textbf{6}   & 0.33 ± 0.05          & 2.64e+01   ± 1.08e-01 & 0.27 ± 0.04          & 2.63e+01   ± 8.27e-02 & 0.32 ± 0.04          & 2.64e+01   ± 1.35e-01 \\ \midrule
\multirow{3}{*}{\textbf{128}} & \textbf{2}   & 0.41 ± 0.05          & 7.45e-01   ± 8.52e-03 & 0.33 ± 0.05          & 6.78e-01   ± 1.53e-02 & 0.30 ± 0.03          & 6.68e-01   ± 1.45e-03 \\
& \textbf{4}                  & 0.37 ± 0.03          & 3.26e+00   ± 1.55e-01 & 0.32 ± 0.02          & 3.29e+00   ± 8.72e-02 & 0.32 ± 0.02          & 3.13e+00   ± 2.26e-01 \\
& \textbf{6}                  & 0.34 ± 0.02          & 2.74e+01   ± 5.96e-02 & 0.32 ± 0.05          & 2.76e+01   ± 3.67e-02 & 0.31 ± 0.04          & 2.75e+01   ± 5.64e-02 \\                               
\bottomrule
\end{tabular}%
\end{table*}

\begin{figure*}[ht]
\centering
%\vspace{1 mm}
\subfloat{\label{fig: lagend-3}\setlength{\fboxsep}{0pt}\fbox{\includegraphics[width=0.5\textwidth]{Styles/figure_2/legend_2.pdf}}}
\addtocounter{subfigure}{-1}

% \subfloat[\small{[ER, Trained AllDeepSets, 0.761, 0.868, 0.615]}]{\includegraphics[width=0.16\textwidth]{}\includegraphics[width=0.16\textwidth]{}\includegraphics[width=0.16\textwidth]{}}\hspace{2mm}
% %\vspace{1mm}
% \subfloat[\small{[SBM, Trained AllDeepSets, 0.882, 0.159, 0.706]}]{\includegraphics[width=0.16\textwidth]{Styles/figure_2/smoothed_AllDeepSets_2_sbm.pdf}\includegraphics[width=0.16\textwidth]{Styles/figure_2/smoothed_AllDeepSets_4_sbm.pdf}\includegraphics[width=0.16\textwidth]{Styles/figure_2/smoothed_AllDeepSets_6_sbm.pdf}}
% %\vspace{-2mm}
% \vspace{-1mm}
% \subfloat[\small{[ER, Random AllDeepSets, -0.796, -0.532, -0.497]}]{\includegraphics[width=0.16\textwidth]{}\includegraphics[width=0.16\textwidth]{Styles/rand/smoothed_ads_4_10_er.pdf}\includegraphics[width=0.16\textwidth]{}}\hspace{2mm}
% %\vspace{1mm}
% \subfloat[\small{[SBM, Random AllDeepSets, -0.083, -0.806, -0.918]}]{\includegraphics[width=0.16\textwidth]{}\includegraphics[width=0.16\textwidth]{}\includegraphics[width=0.16\textwidth]{}}
%\vspace{-2mm}
\vspace{-1mm}
\subfloat[\small{[ER, T-MPHN, Trained, -0.695, -0.422, 0.319]}]{\includegraphics[width=0.16\textwidth]{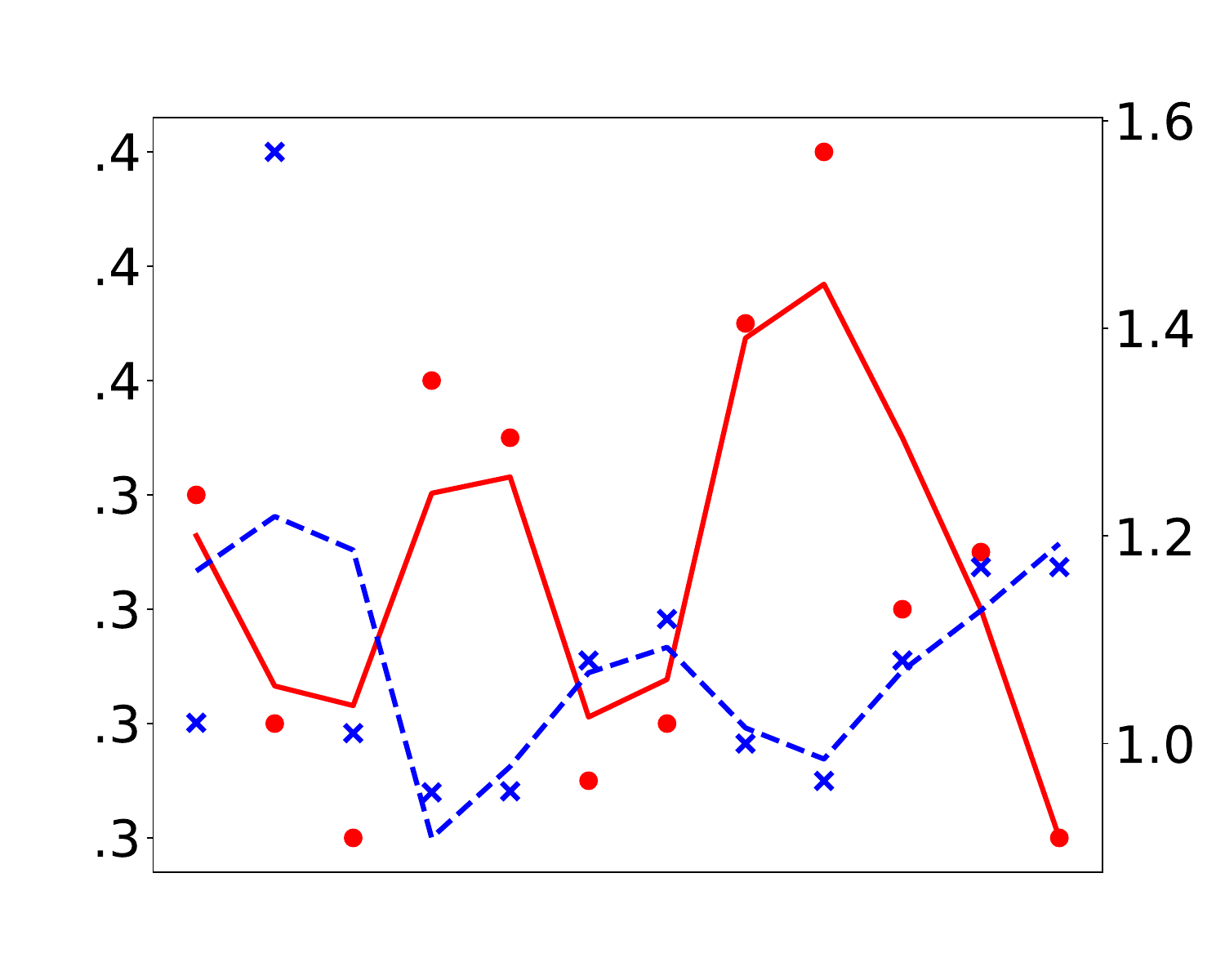}\includegraphics[width=0.16\textwidth]{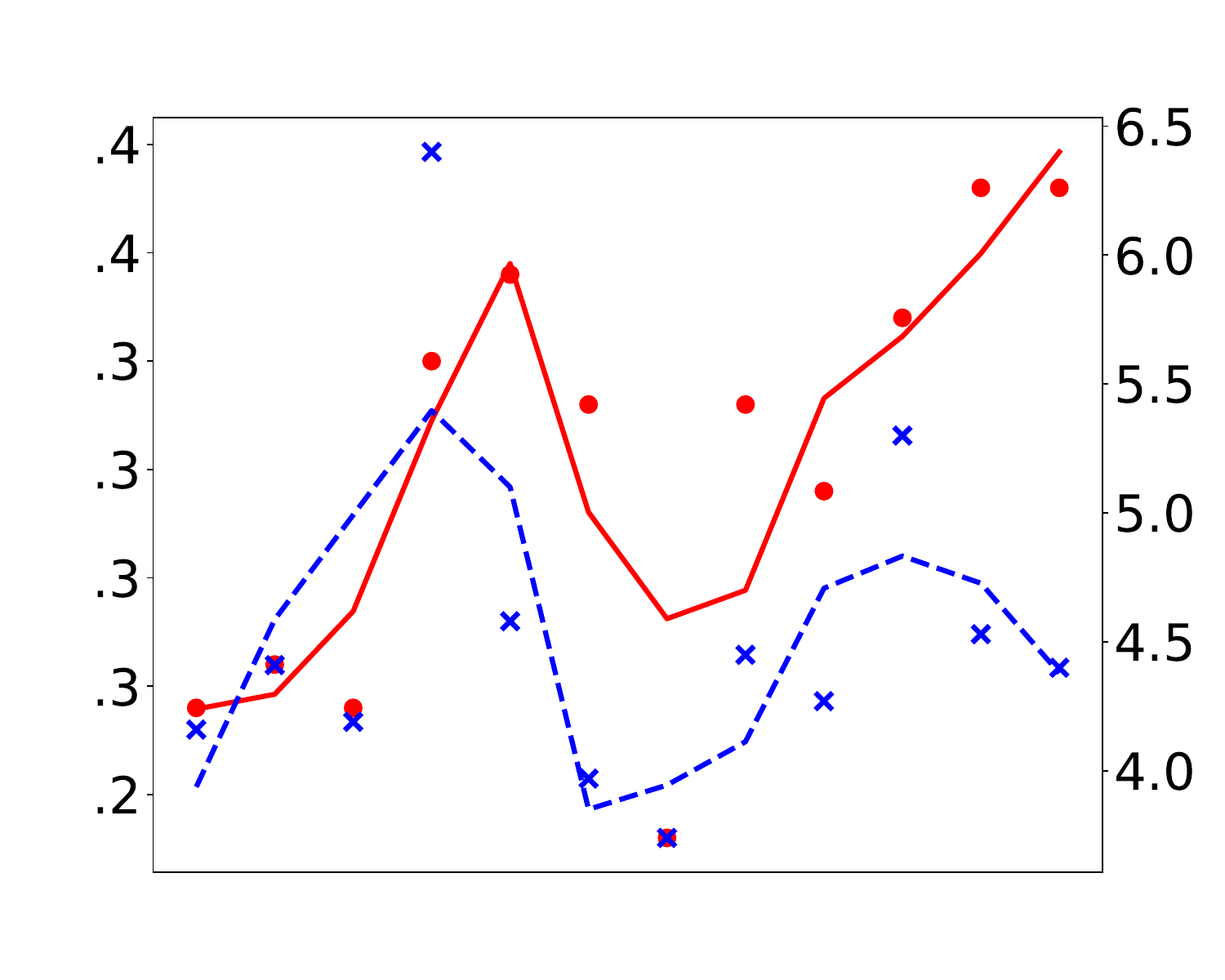}\includegraphics[width=0.16\textwidth]{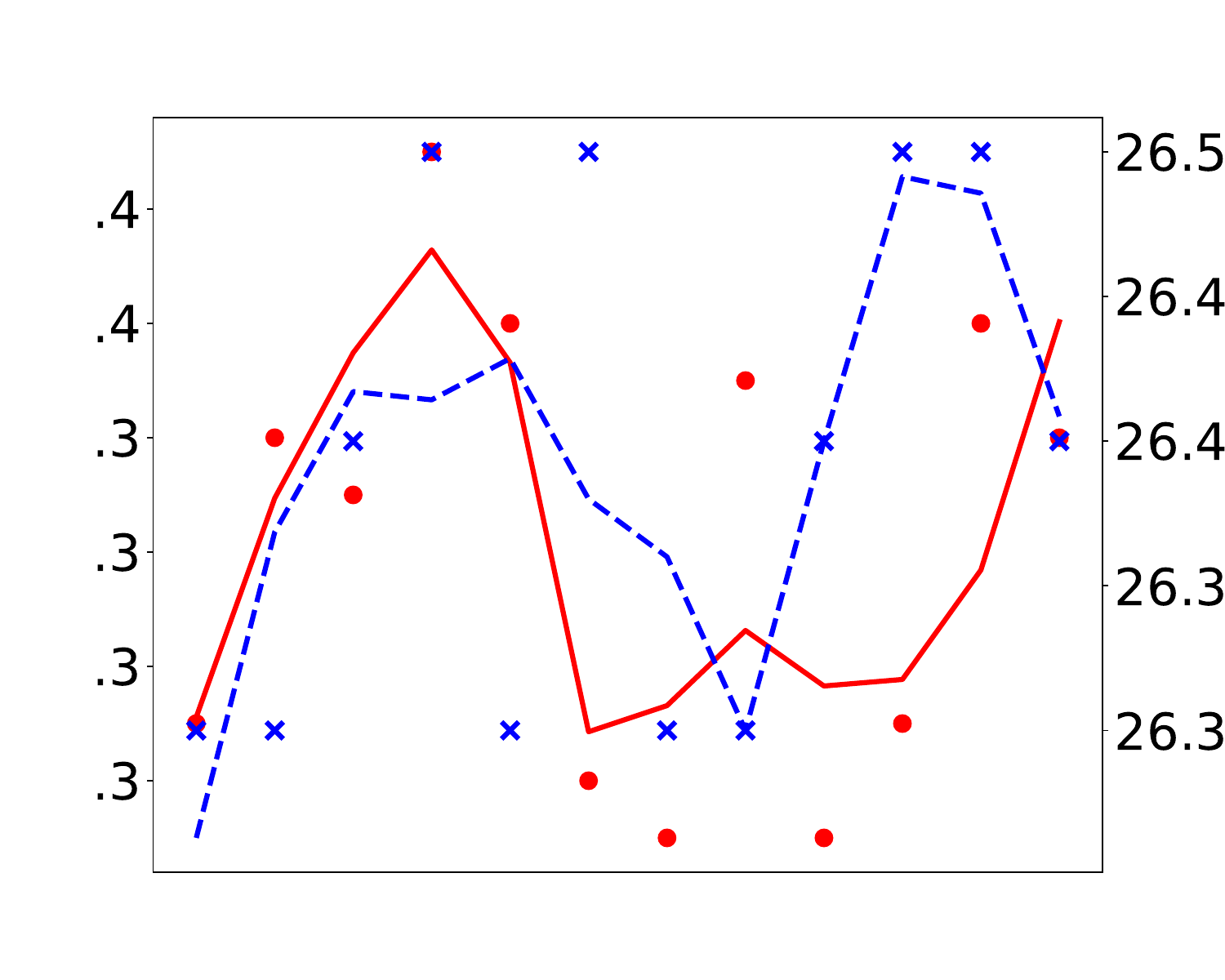}}\hspace{2mm}
%\vspace{1mm}
\subfloat[\small{[SBM, T-MPHN, Trained, -0.172, -0.144, -0.879]}]{\includegraphics[width=0.16\textwidth]{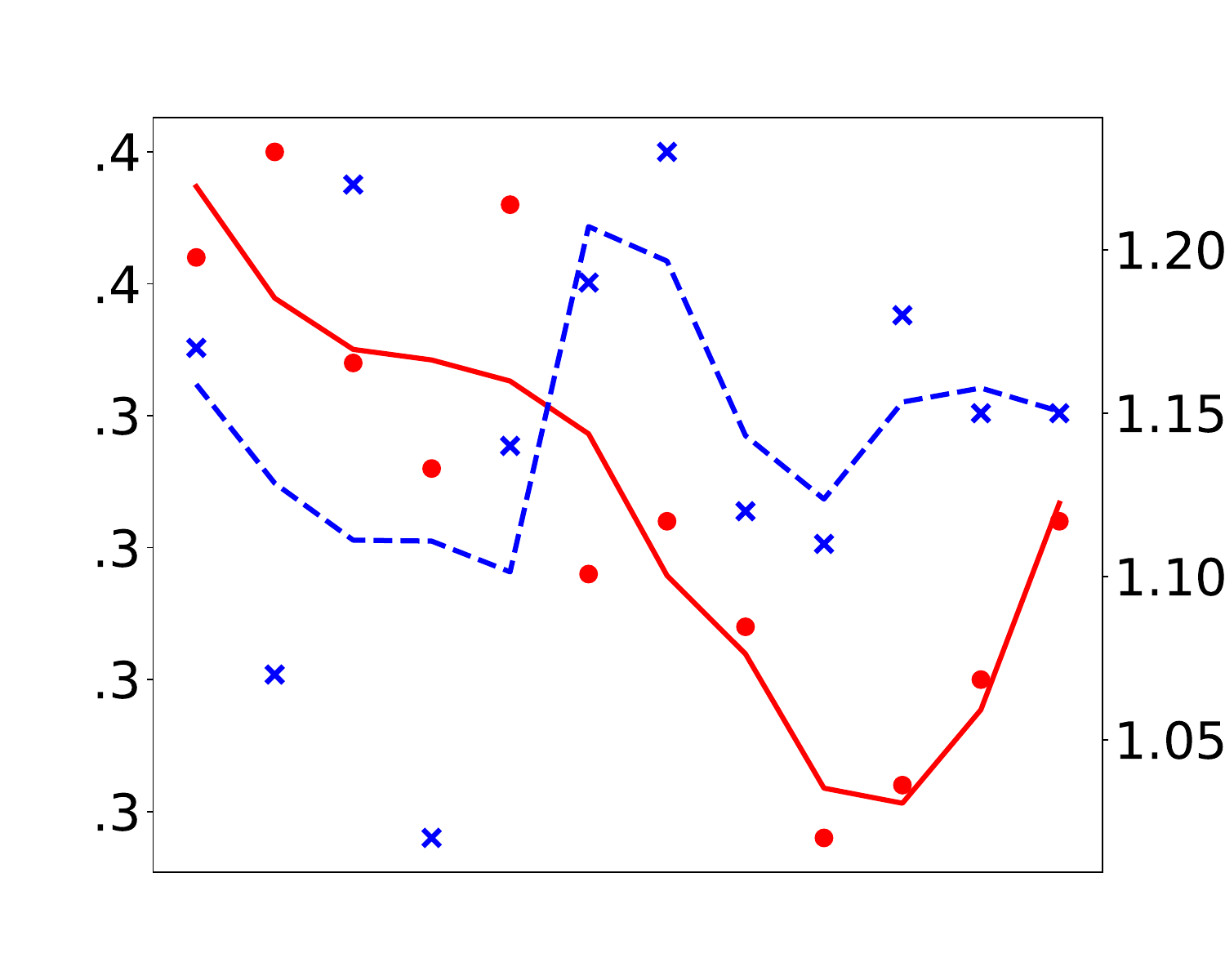}\includegraphics[width=0.16\textwidth]{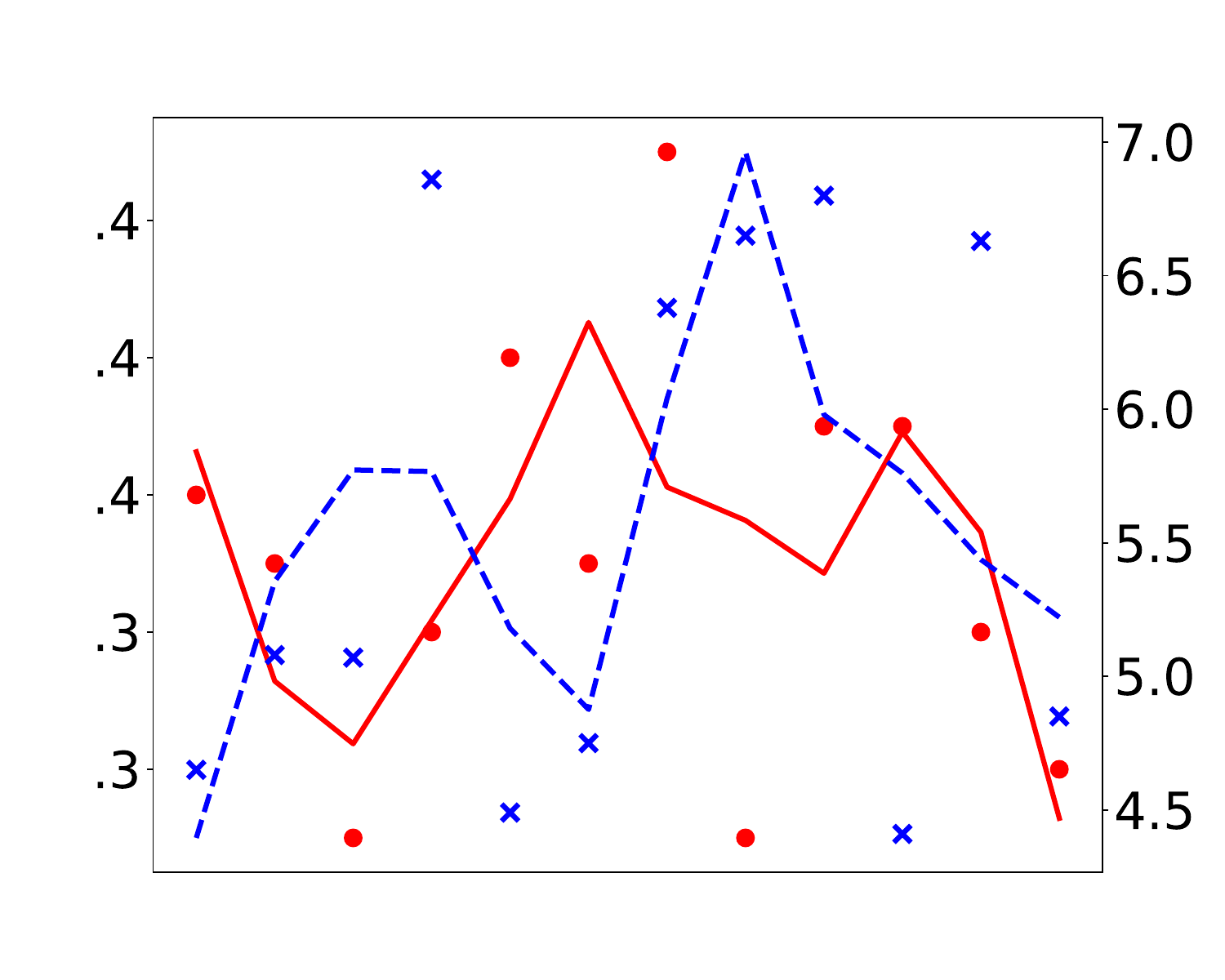}\includegraphics[width=0.16\textwidth]{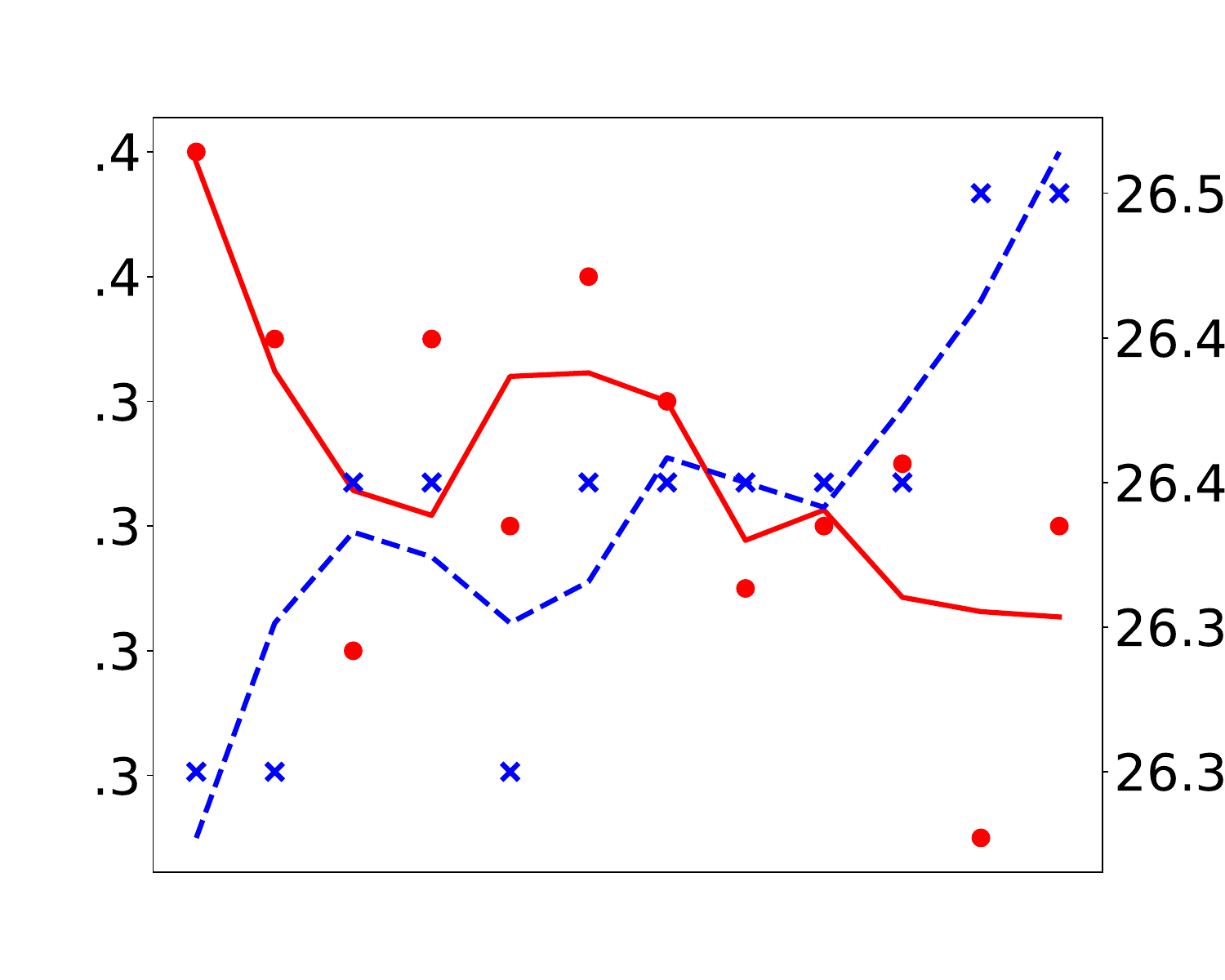}}
\vspace{-1mm}
\subfloat[\small{[ER, T-MPHN, Random, 0.737, 0.391, -0.125]}]{\includegraphics[width=0.16\textwidth]{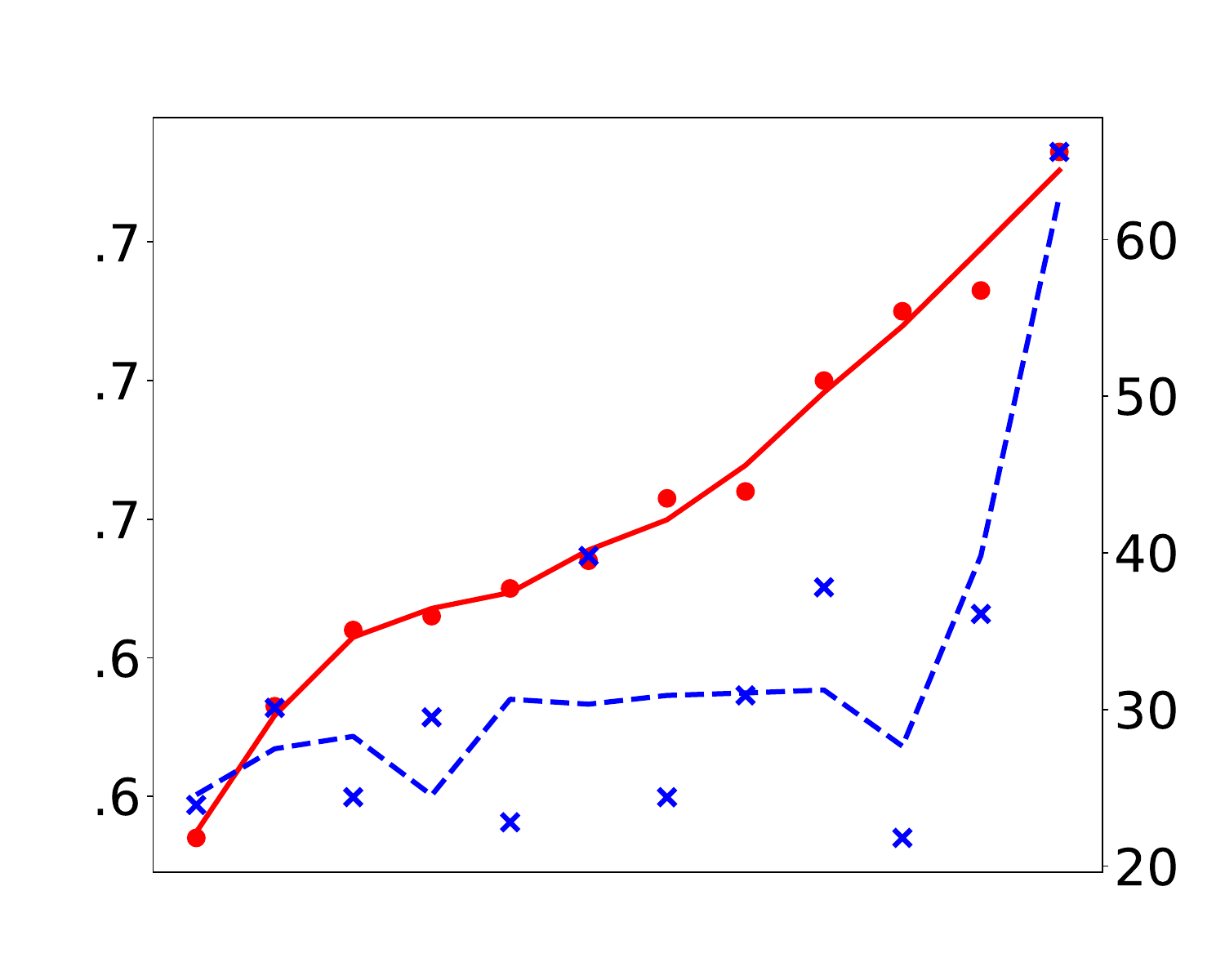}\includegraphics[width=0.16\textwidth]{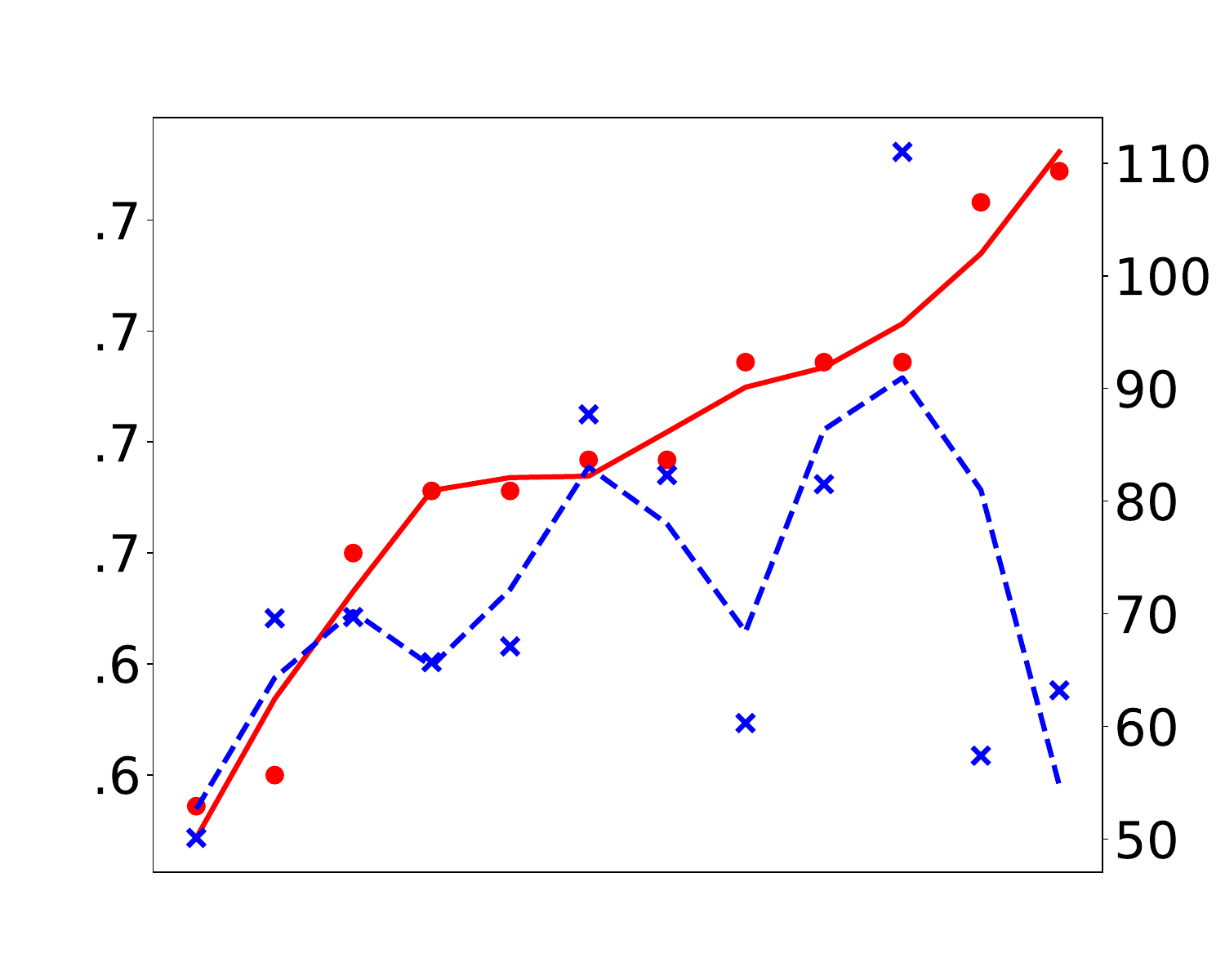}\includegraphics[width=0.16\textwidth]{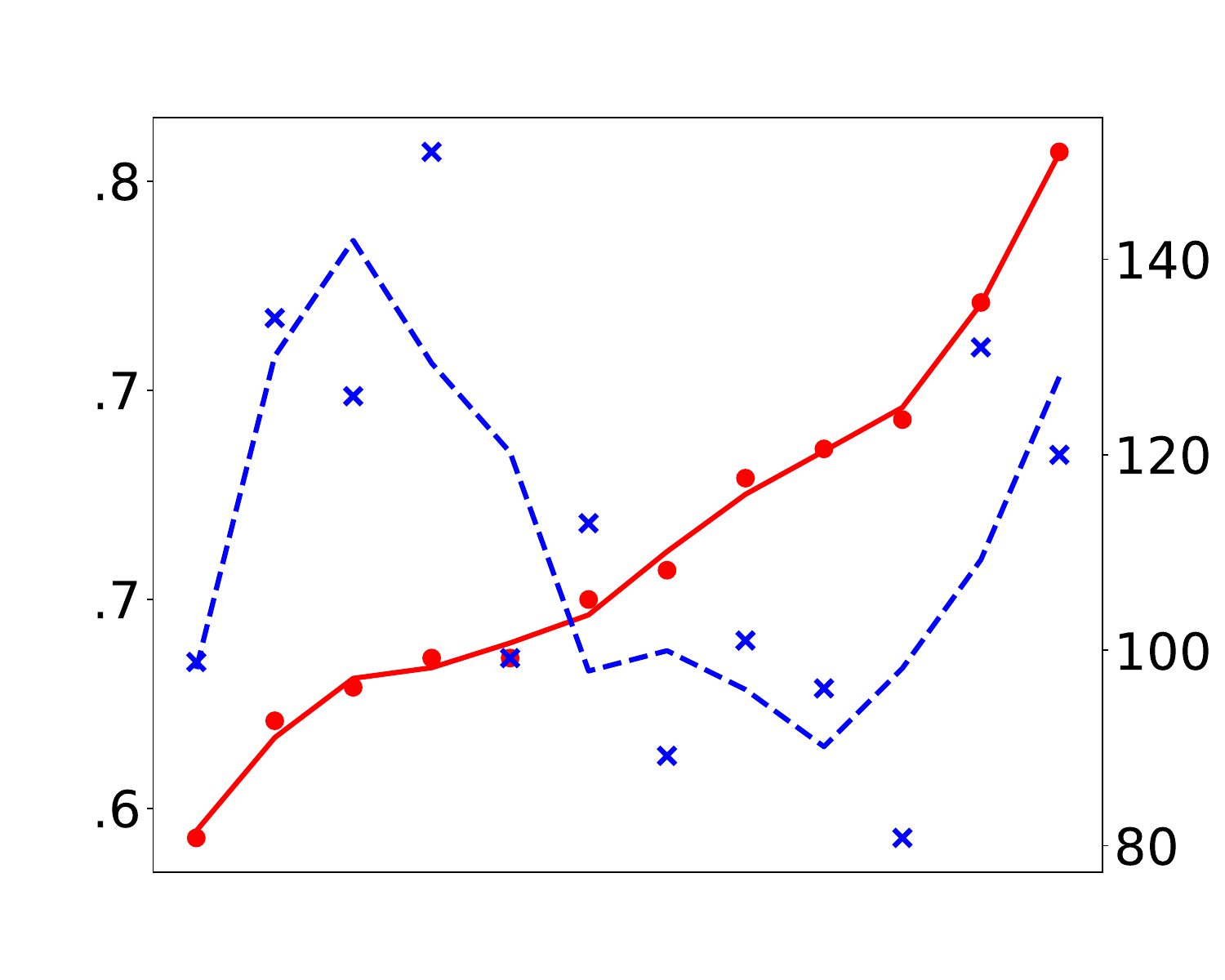}}\hspace{2mm}
%\vspace{1mm}
\subfloat[\small{[SBM, T-MPHN, Random, -0.321, -0.703, 0.789]}]{\includegraphics[width=0.16\textwidth]{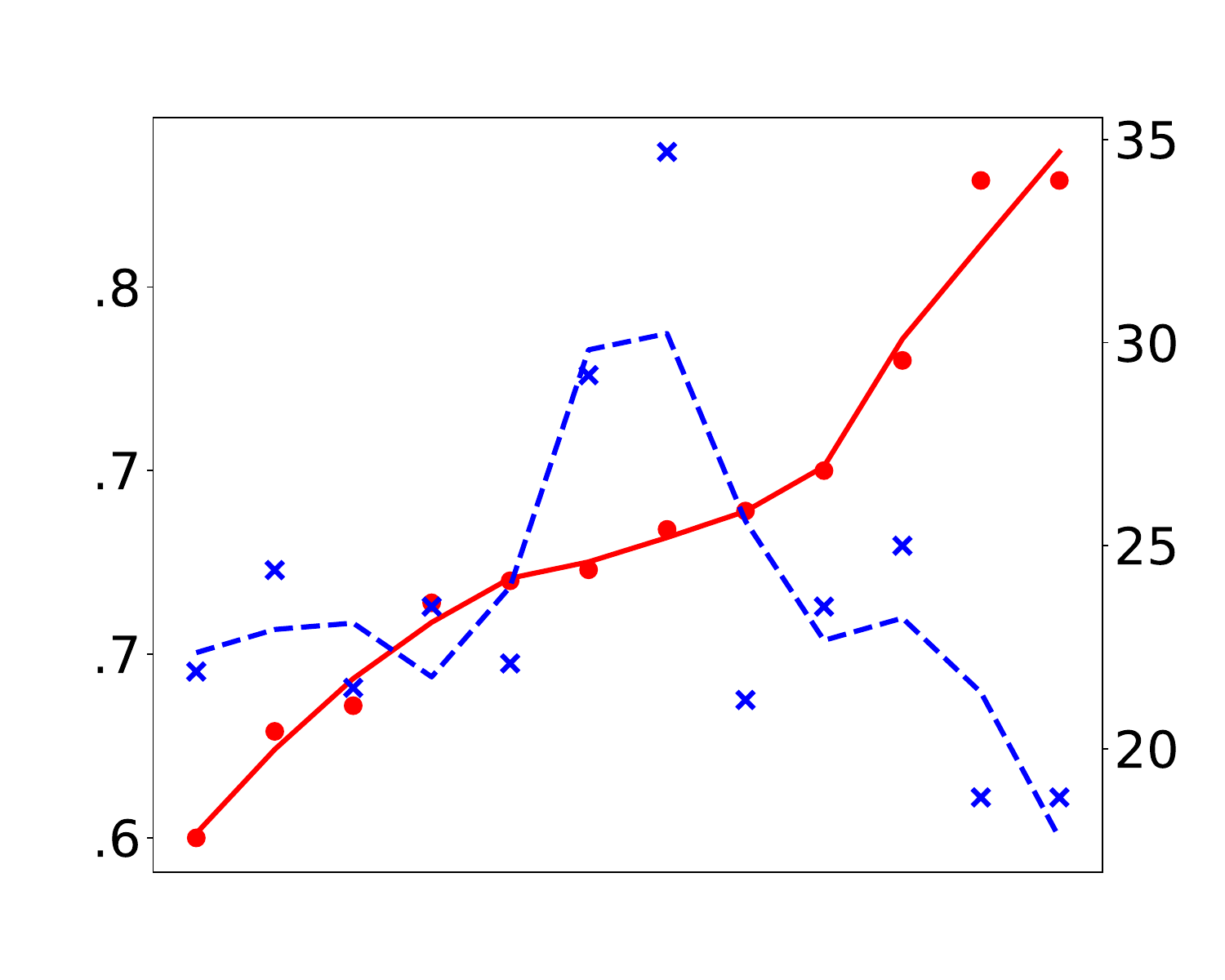}\includegraphics[width=0.16\textwidth]{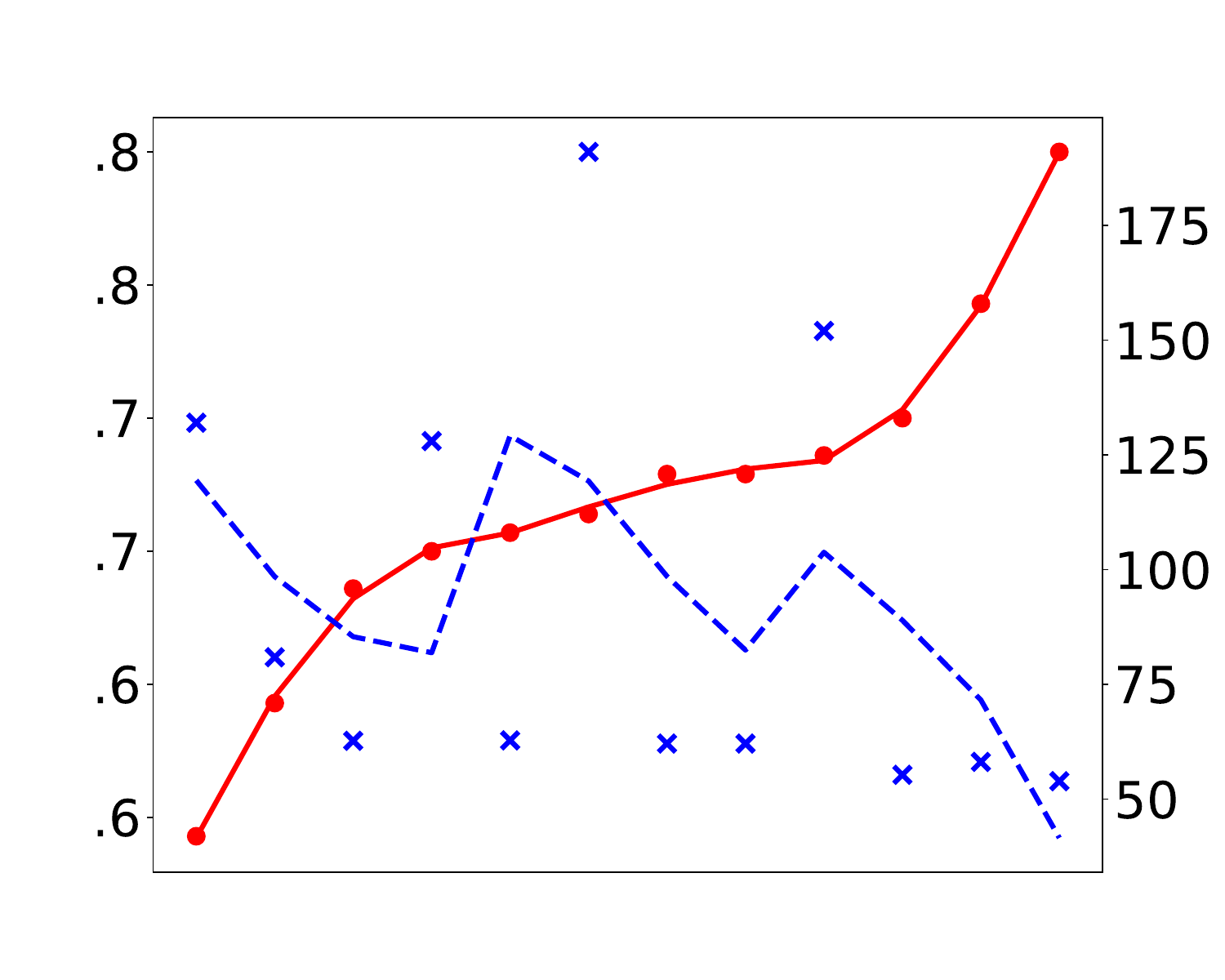}\includegraphics[width=0.16\textwidth]{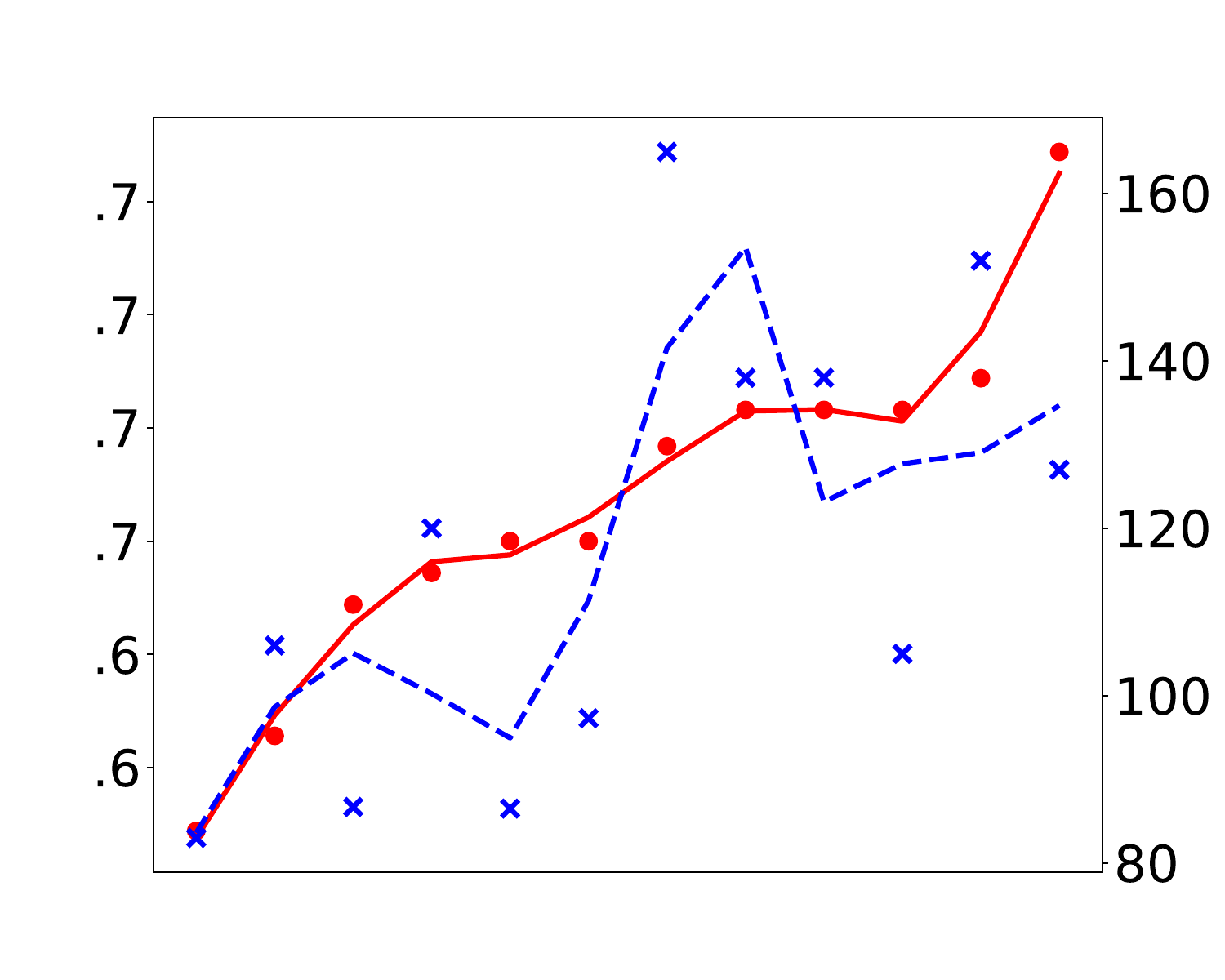}}
% \vspace{-1mm}
% \subfloat[\small{[ER, HGNN+, Trained, 0.159, -0.172, 0.395]}]{\includegraphics[width=0.16\textwidth]{}\includegraphics[width=0.16\textwidth]{}\includegraphics[width=0.16\textwidth]{}}\hspace{2mm}
% %\vspace{1mm}
% \subfloat[\small{[SBM, HGNN+, Trained, 0.114, 0.290, 0.535]}]{\includegraphics[width=0.16\textwidth]{}\includegraphics[width=0.16\textwidth]{}\includegraphics[width=0.16\textwidth]{}}
% \vspace{-1mm}
% \subfloat[\small{[ER, HGNN+, Random, -0.043, -0.435, 0.555]}]{\includegraphics[width=0.16\textwidth]{}\includegraphics[width=0.16\textwidth]{}\includegraphics[width=0.16\textwidth]{}}\hspace{2mm}
% %\vspace{1mm}
% \subfloat[\small{[SBM, HGNN+, Random, -0.394, -0.675, 0.617]}]{\includegraphics[width=0.16\textwidth]{}\includegraphics[width=0.16\textwidth]{}\includegraphics[width=0.16\textwidth]{}}
\vspace{-3mm}
\caption{\textbf{Consistency between empirical loss (Emp) and theoretical bounds (Theory).} Each subgroup labeled by [graph type, model, parameter conditions, $r_2$, $r_4$, $r_6$] shows the empirical loss, theoretical bound, and their curves via Savitzky-Golay filter \cite{savitzky1964smoothing} of one graph type (i.e., ER or SBM) and one model (i.e., T-MPHN, and HGNN+) with trained ((a), (b), (e), and (f)) and random parameters ((c), (d), (g), and (h)), where each figure plots the results of twelve datasets; the figures, from left to right, show the results with 2, 4 and 6 propagation steps, where $r_2$, $r_4$, and $r_6 \in [-1,1]$ are the Pearson correlation coefficients between the two sets of points in each figure -- higher $r$ indicating stronger positive correlation.} 
\label{fig:2-1}
\vspace{-3mm}
\Description{figure1}
\end{figure*}

\begin{figure*}[ht]
\centering
%\vspace{1 mm}
\subfloat{\label{fig: lagend_2}\setlength{\fboxsep}{0pt}\fbox{\includegraphics[width=0.5\textwidth]{Styles/figure_2/legend_2.pdf}}}
\addtocounter{subfigure}{-1}

\vspace{-1mm}

\subfloat[\small{[UniGCN, 0.770, 0.005, 0.157, 0.587, 0.471, 0.163]}]{\includegraphics[width=0.16\textwidth]{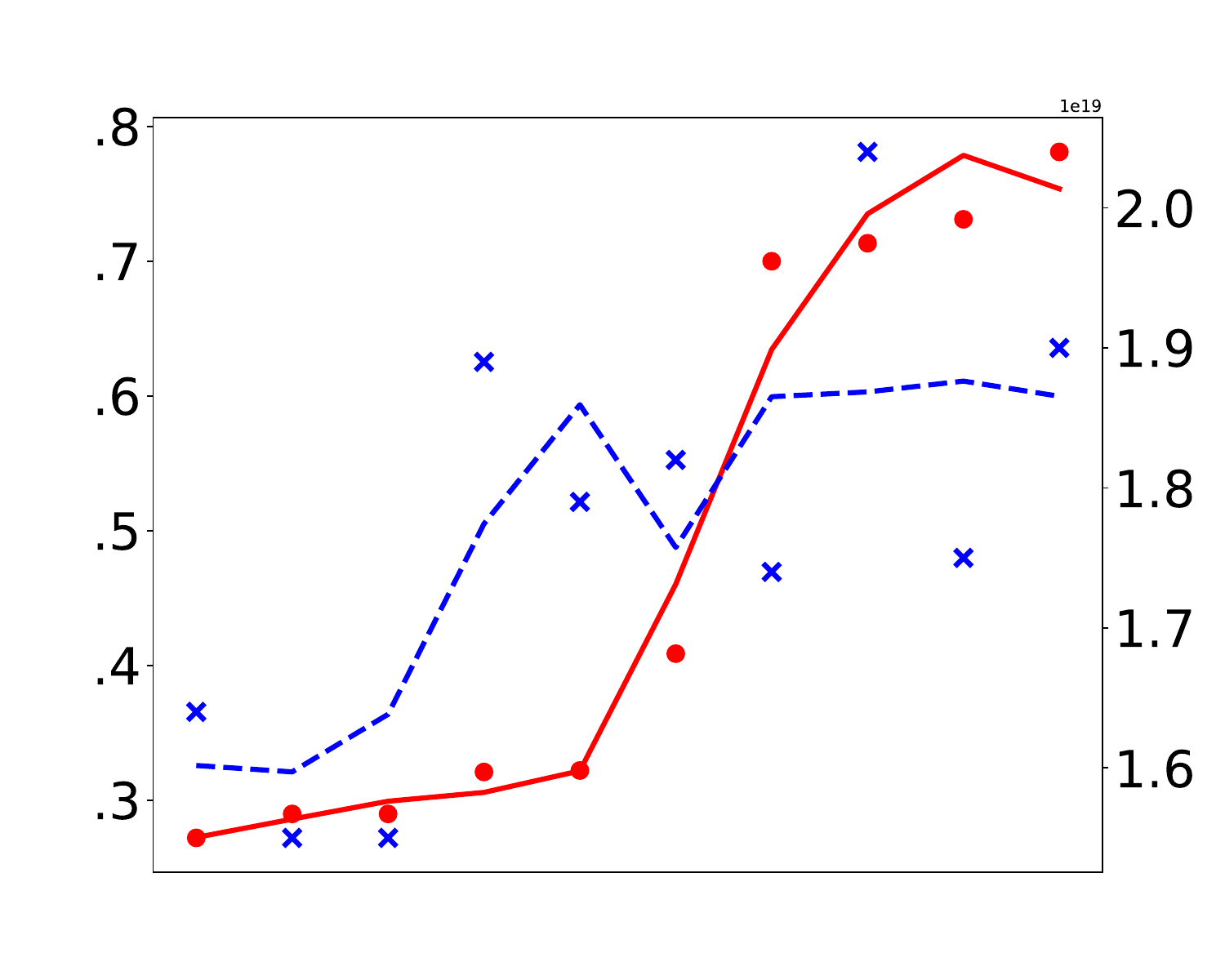}\includegraphics[width=0.16\textwidth]{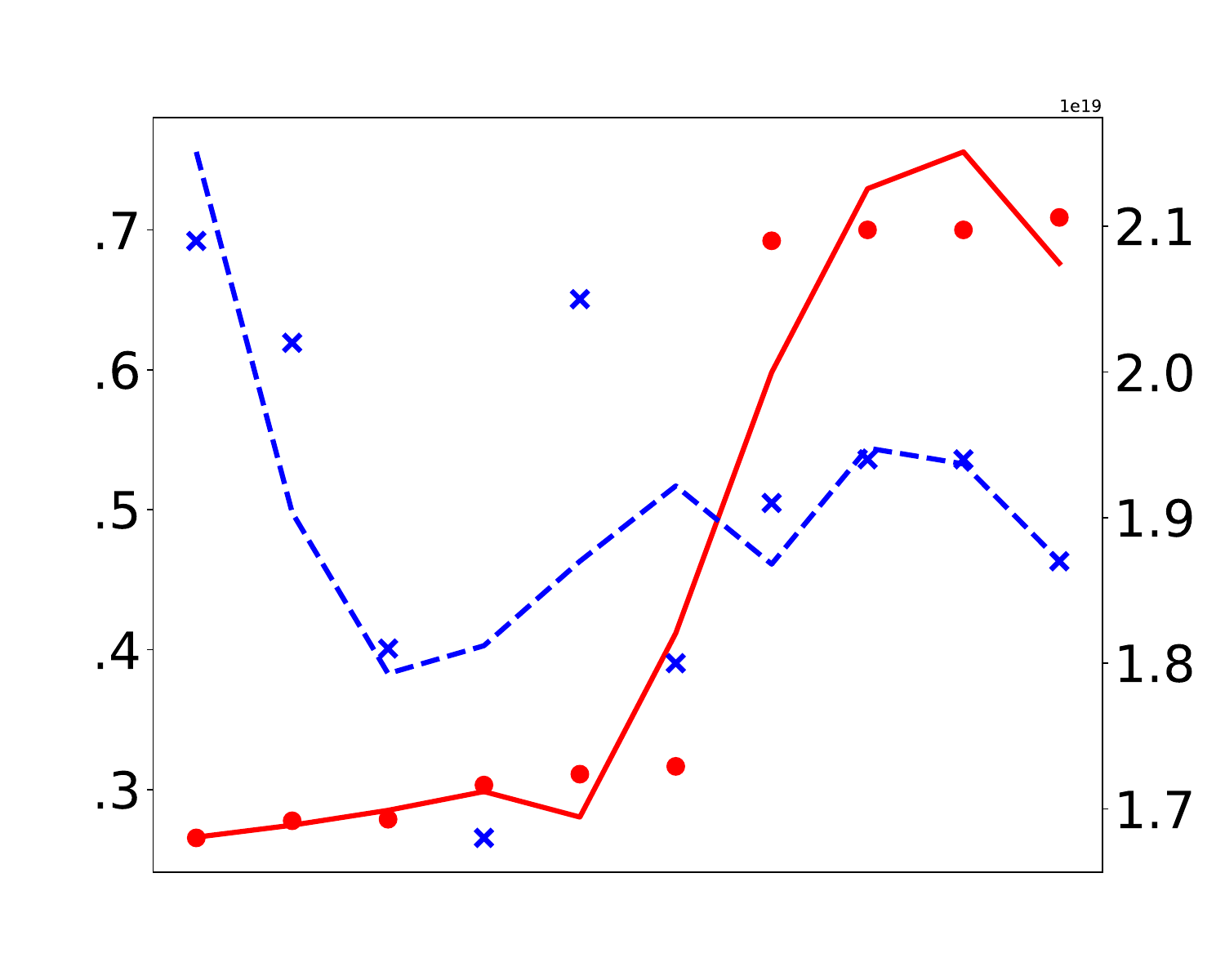}\includegraphics[width=0.16\textwidth]{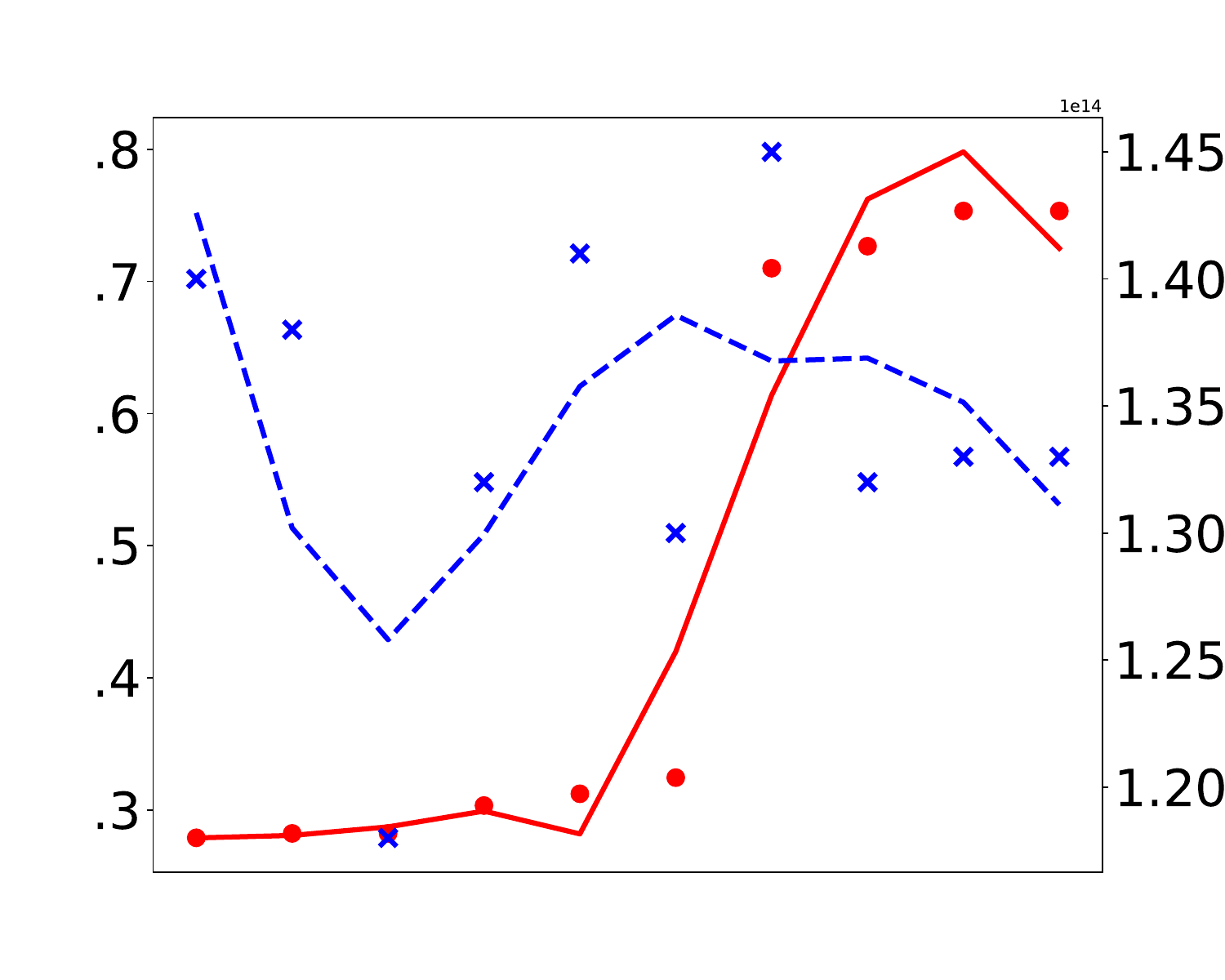}\includegraphics[width=0.16\textwidth]{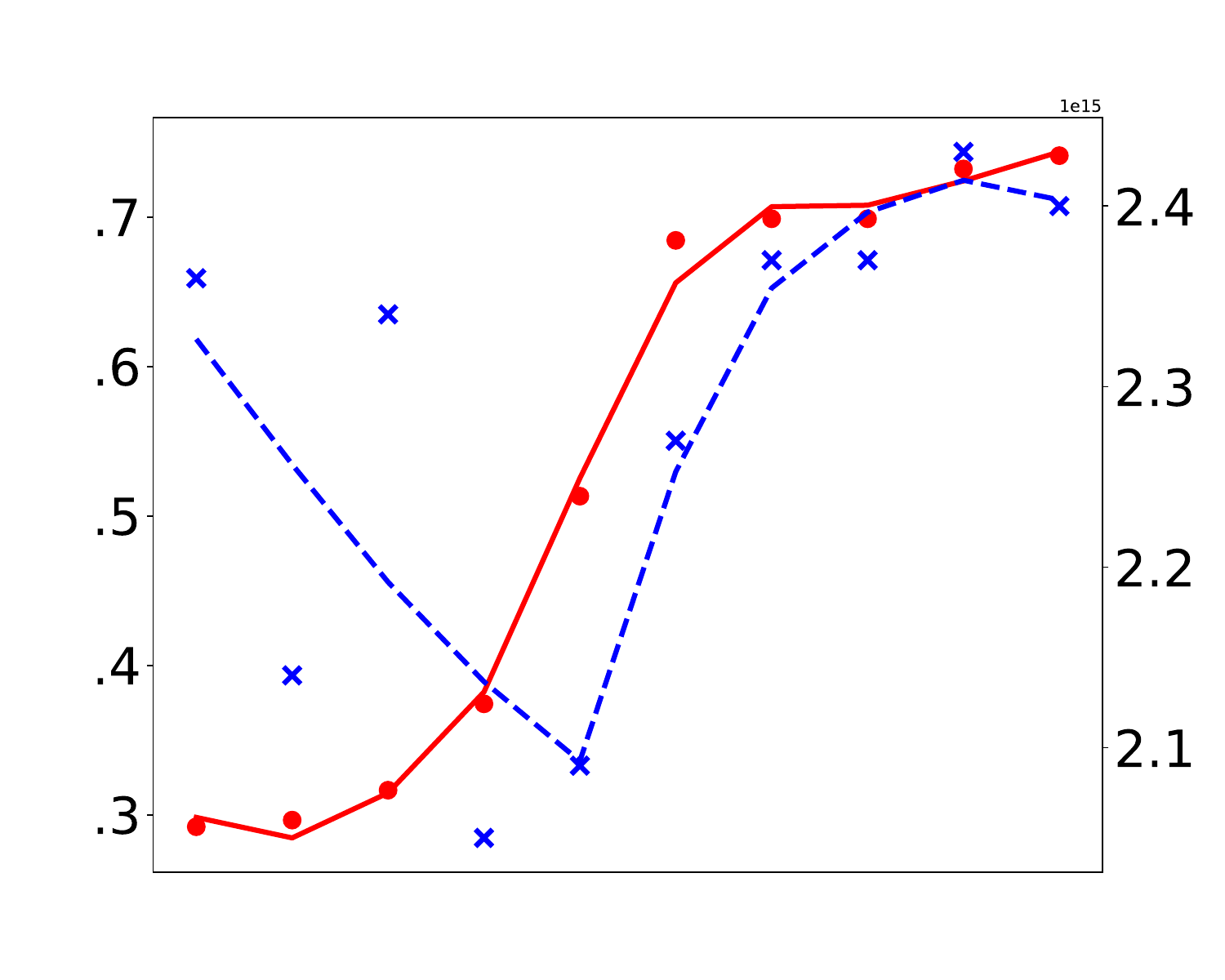}\includegraphics[width=0.16\textwidth]{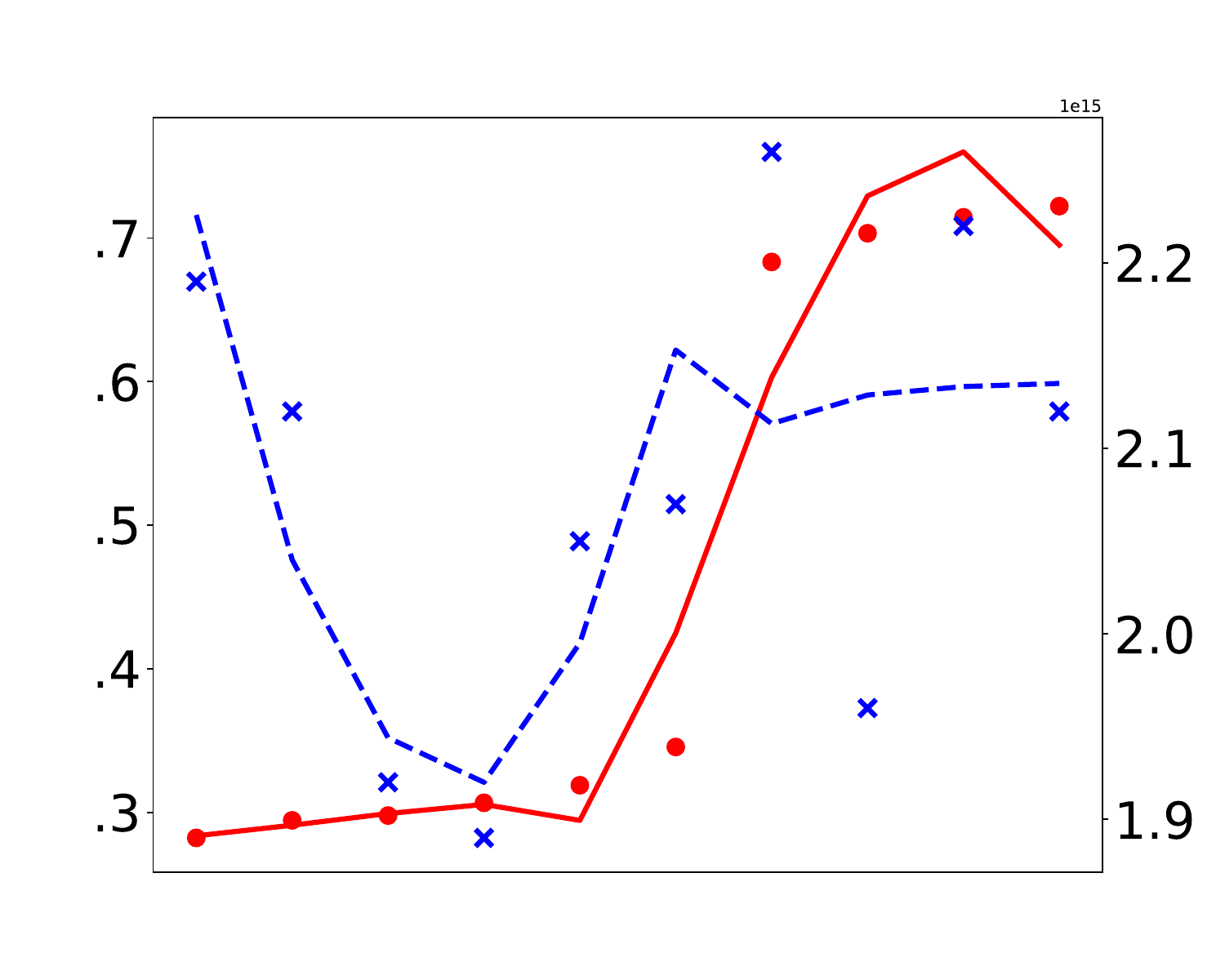}\includegraphics[width=0.16\textwidth]{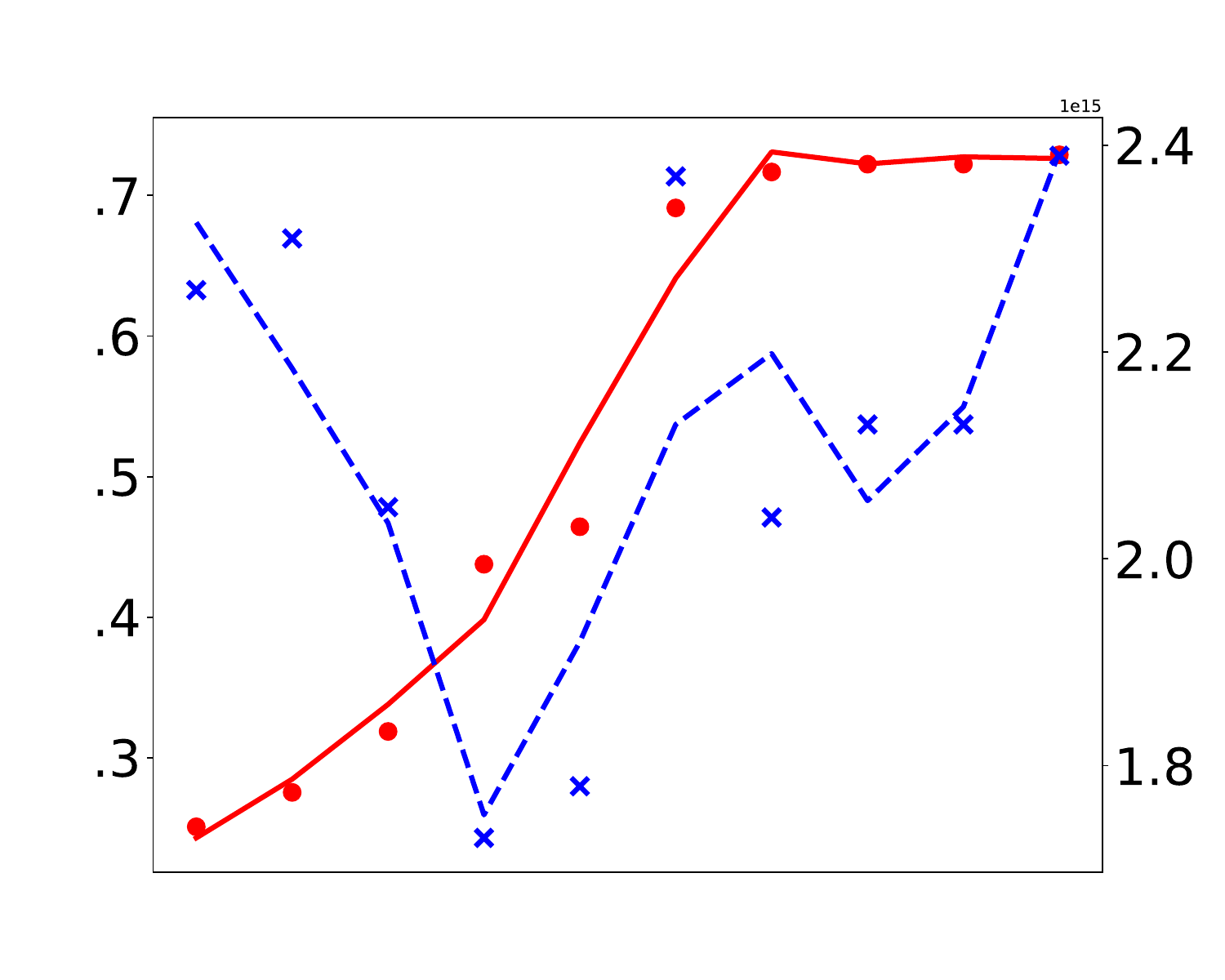}}
\vspace{-1mm}
\subfloat[\small{[M-IGN, 0.362, 0.639, 0.674, 0.873, 0.455, 0.487]}]{\includegraphics[width=0.16\textwidth]{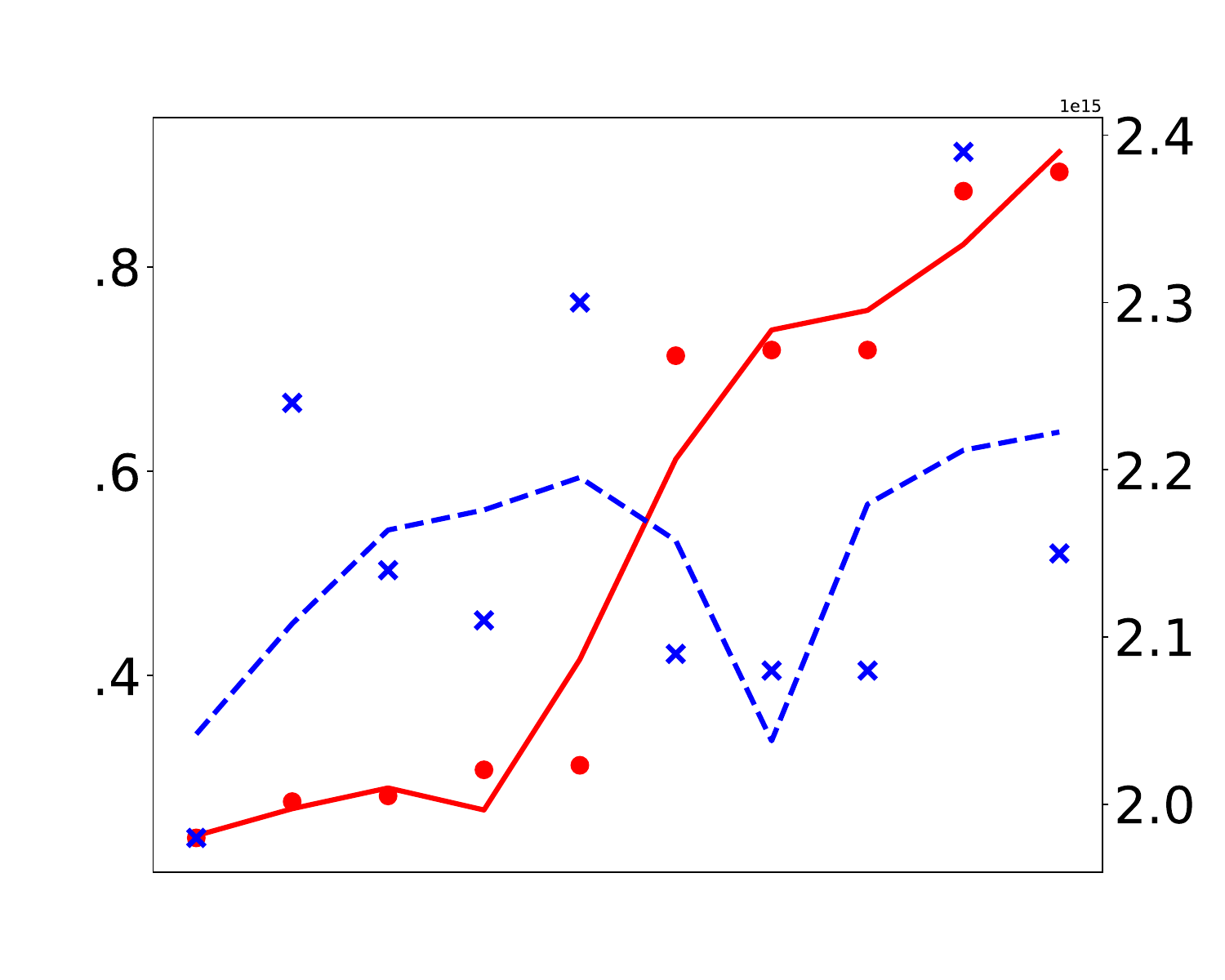}\includegraphics[width=0.16\textwidth]{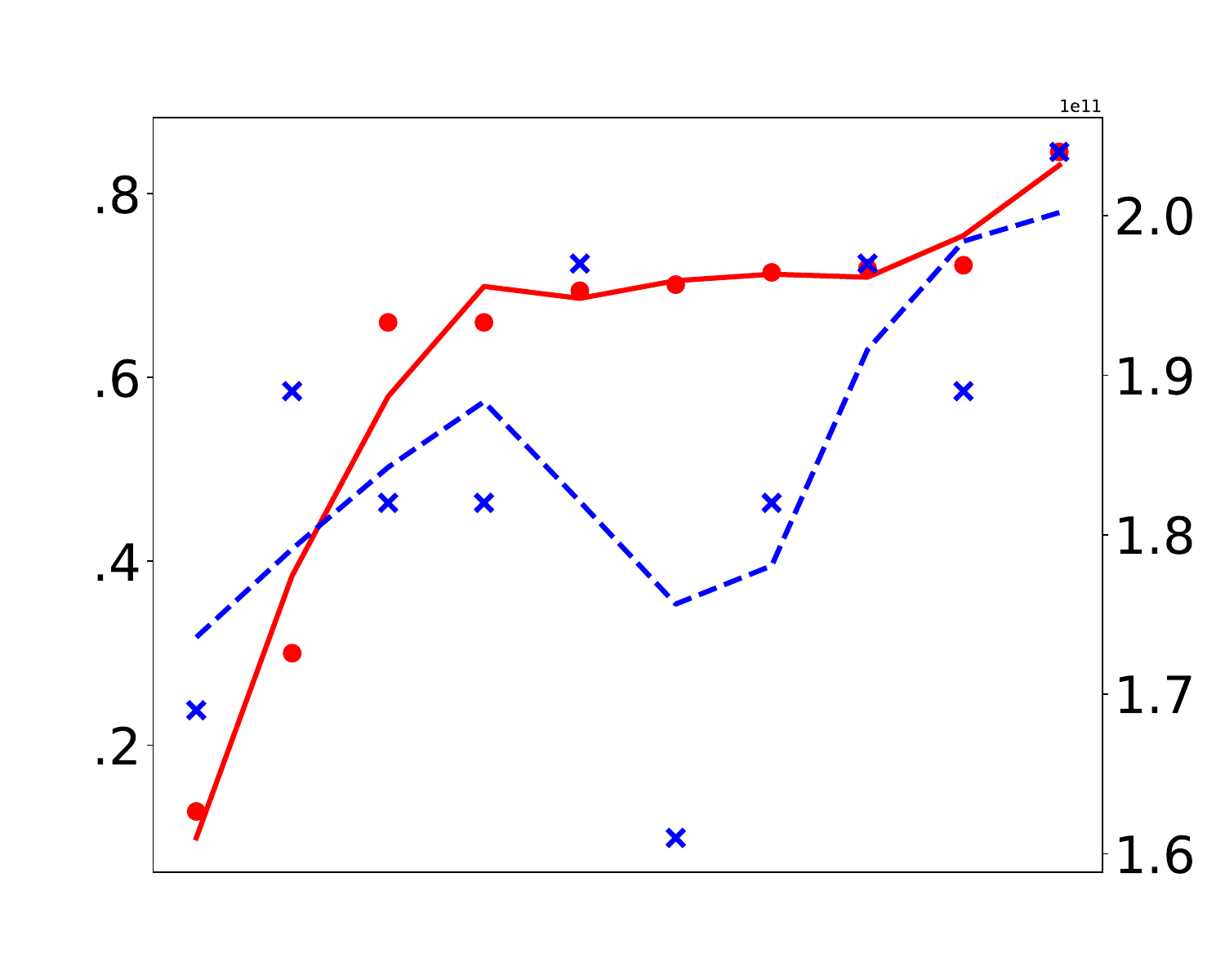}\includegraphics[width=0.16\textwidth]{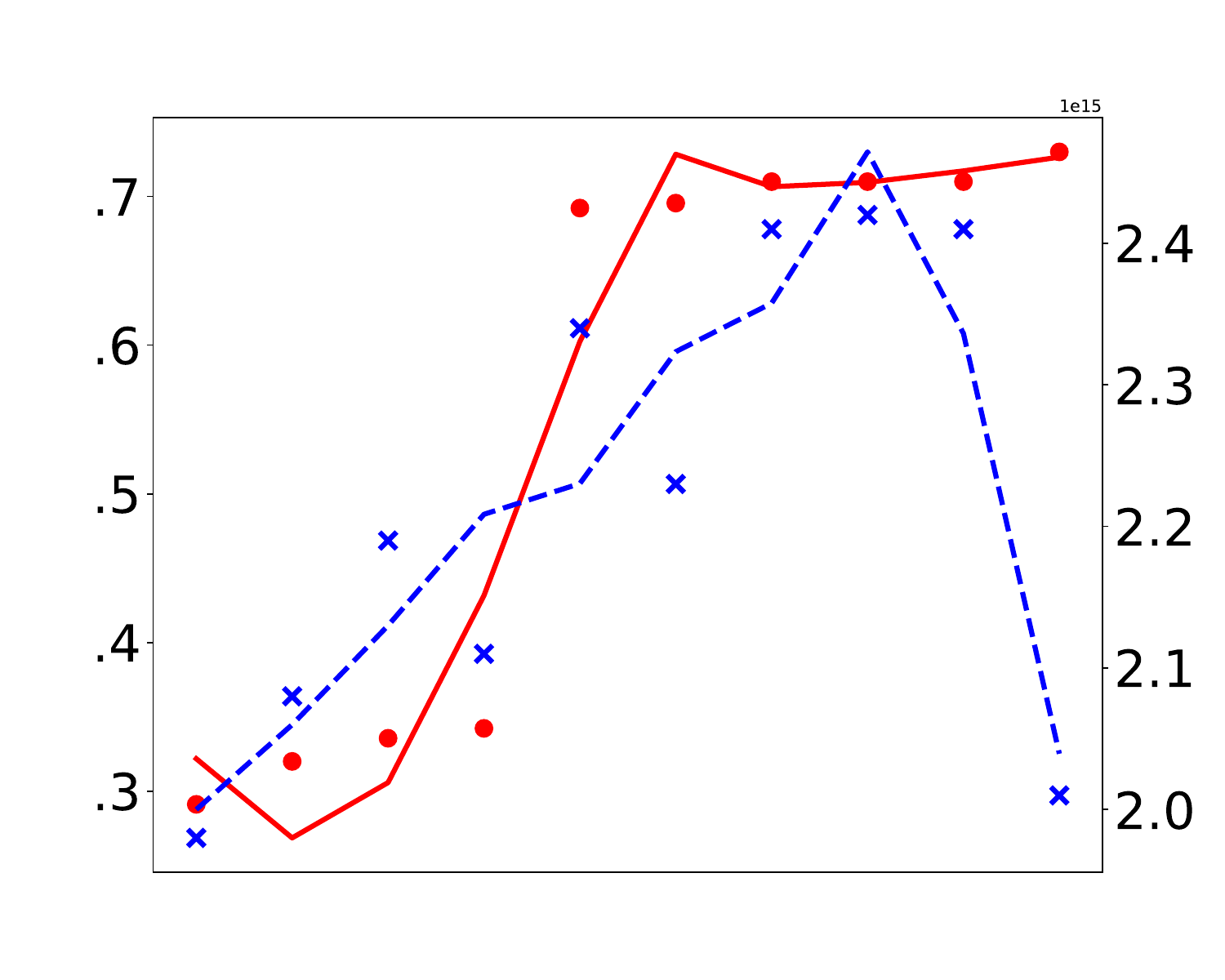}\includegraphics[width=0.16\textwidth]{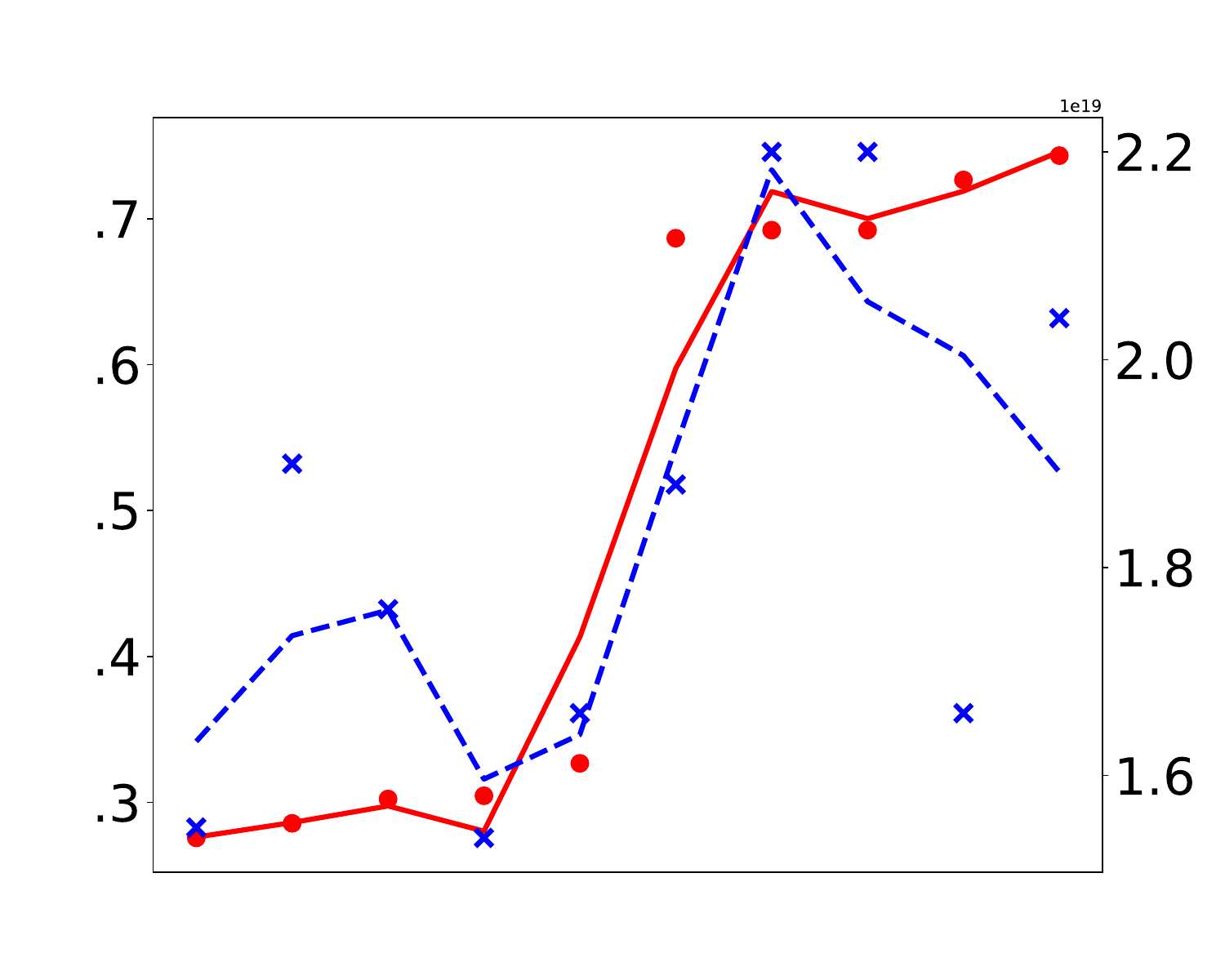}\includegraphics[width=0.16\textwidth]{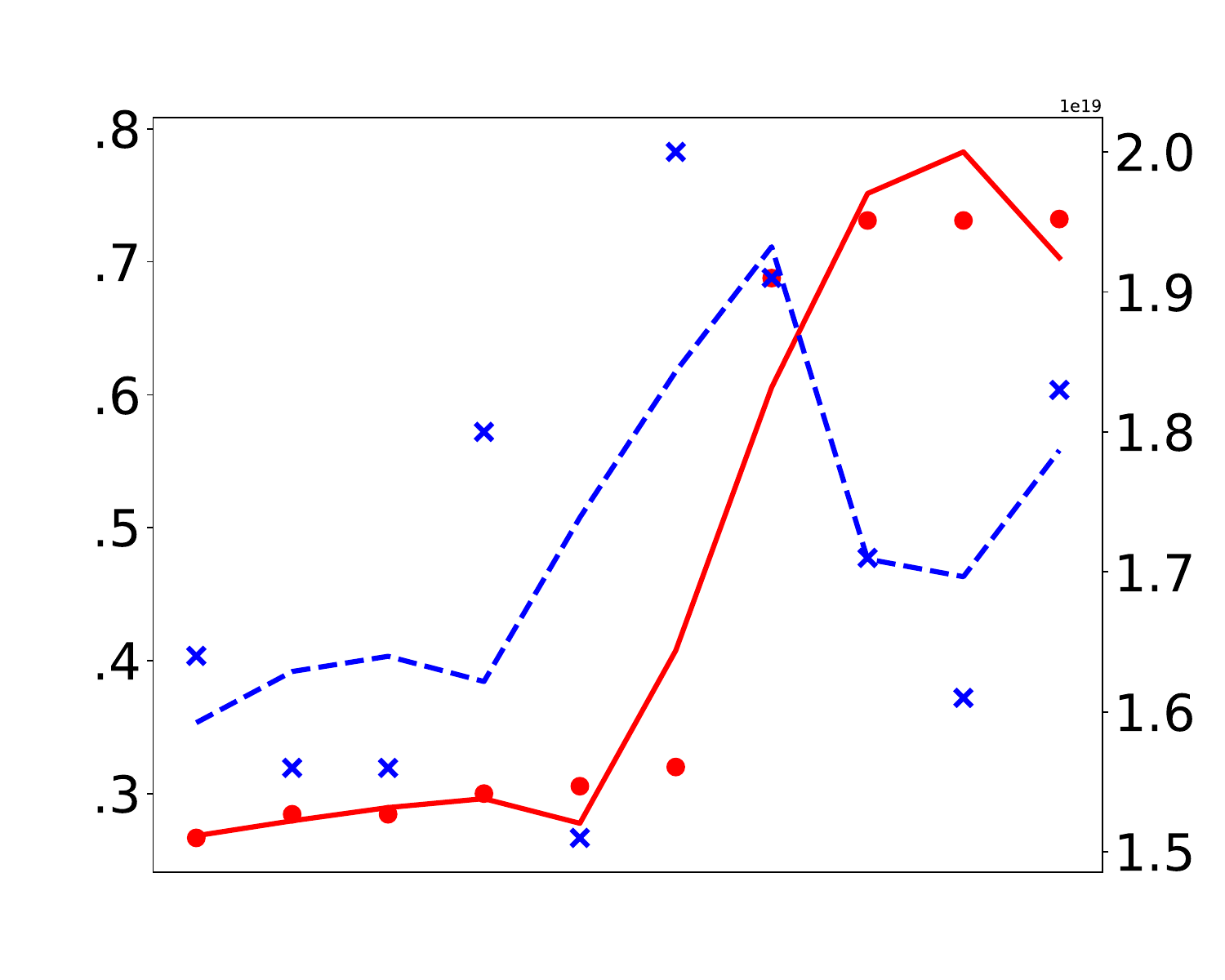}\includegraphics[width=0.16\textwidth]{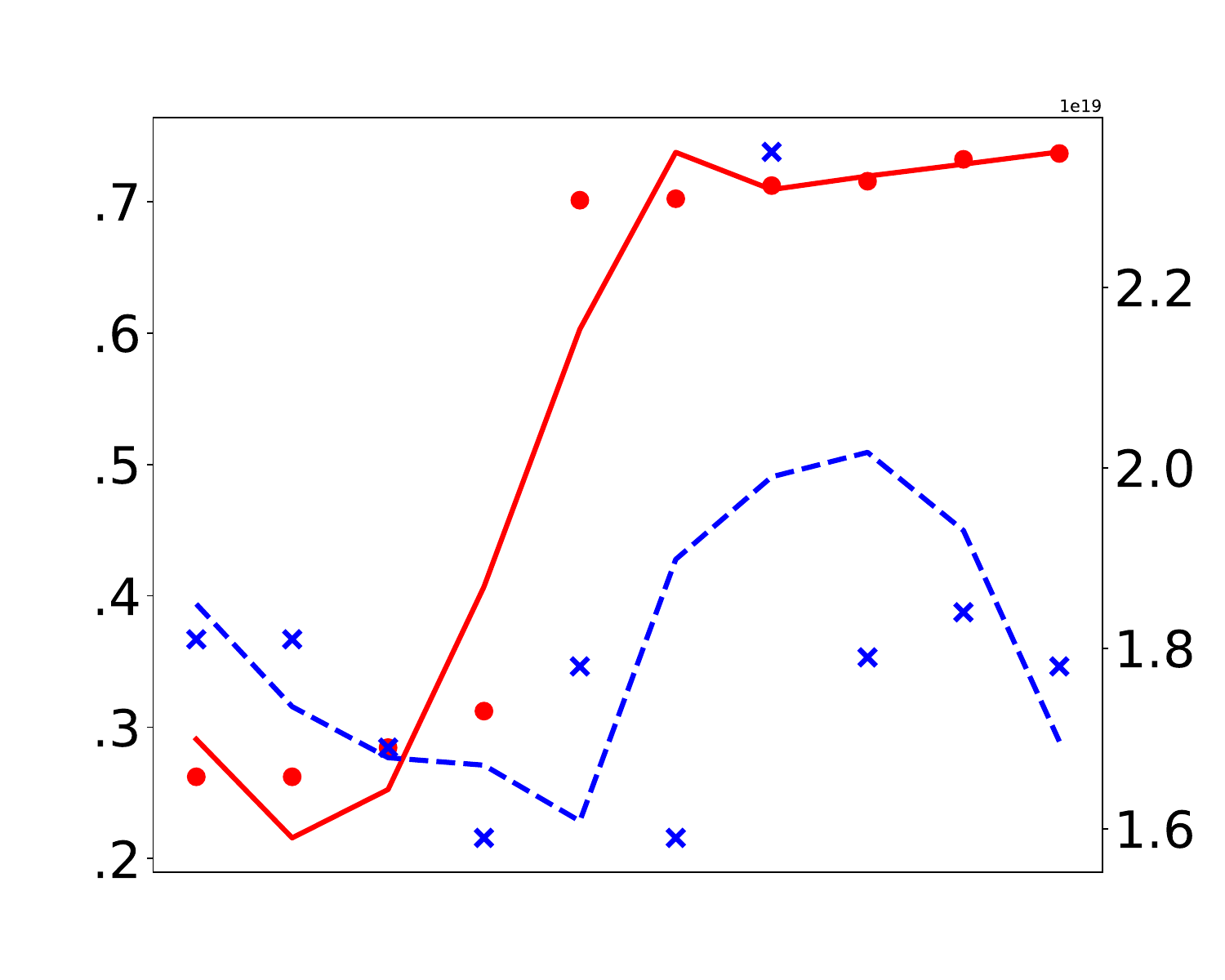}}
\vspace{-1mm}
\subfloat[\small{[AllDeepSets, 0.973, 0.985, 0.978, 0.971, 0.996, 0.965]}]{\includegraphics[width=0.16\textwidth]{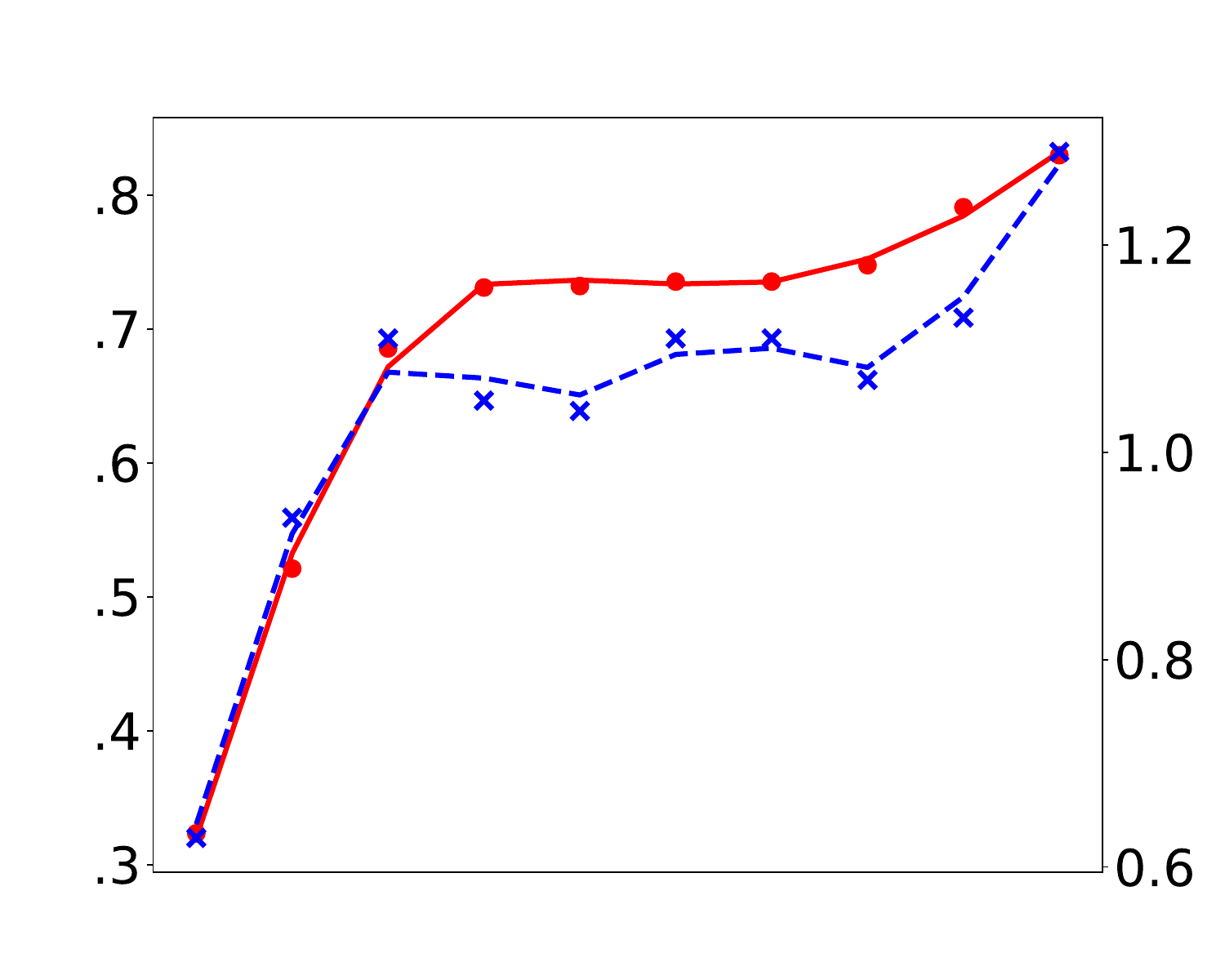}\includegraphics[width=0.16\textwidth]{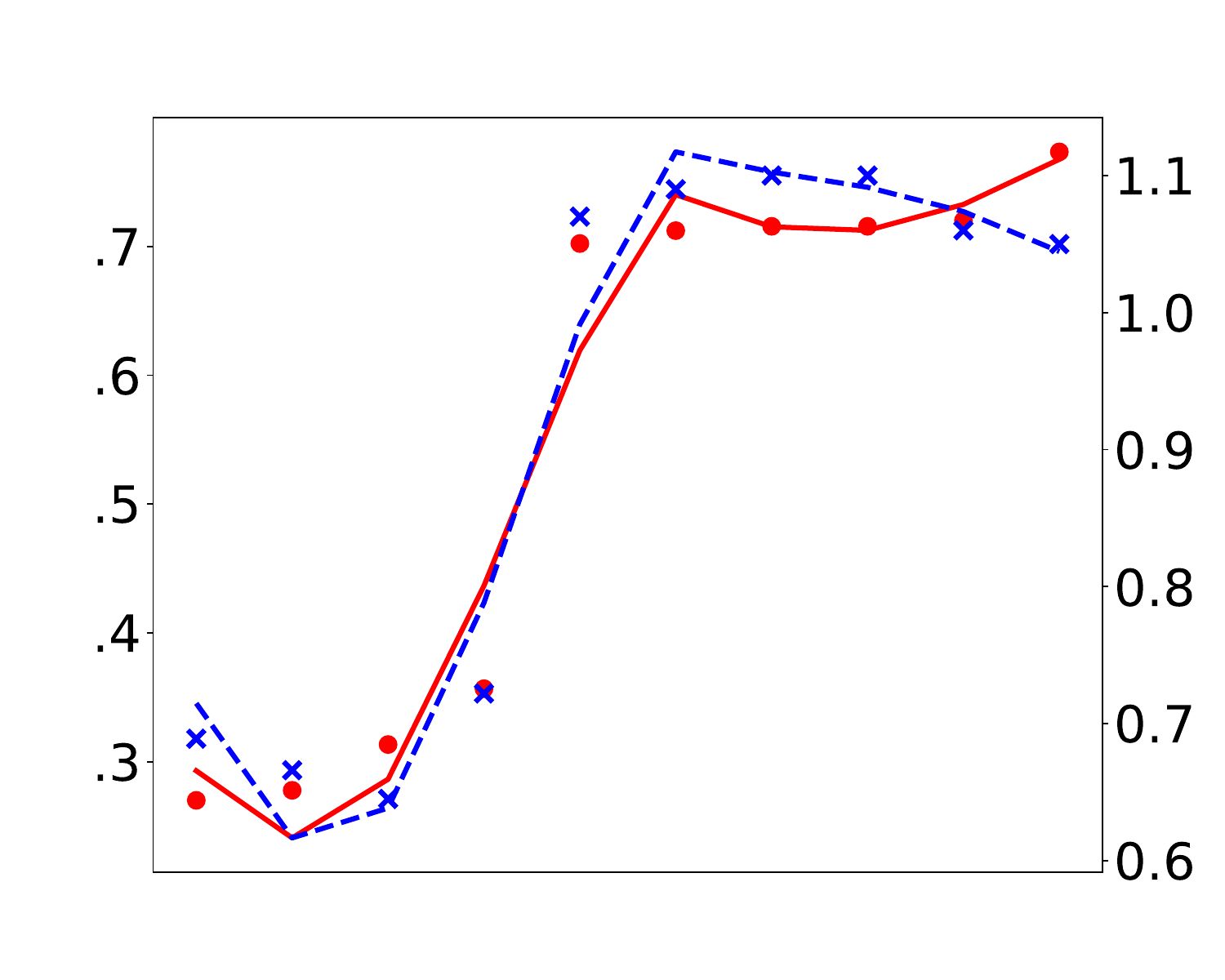}\includegraphics[width=0.16\textwidth]{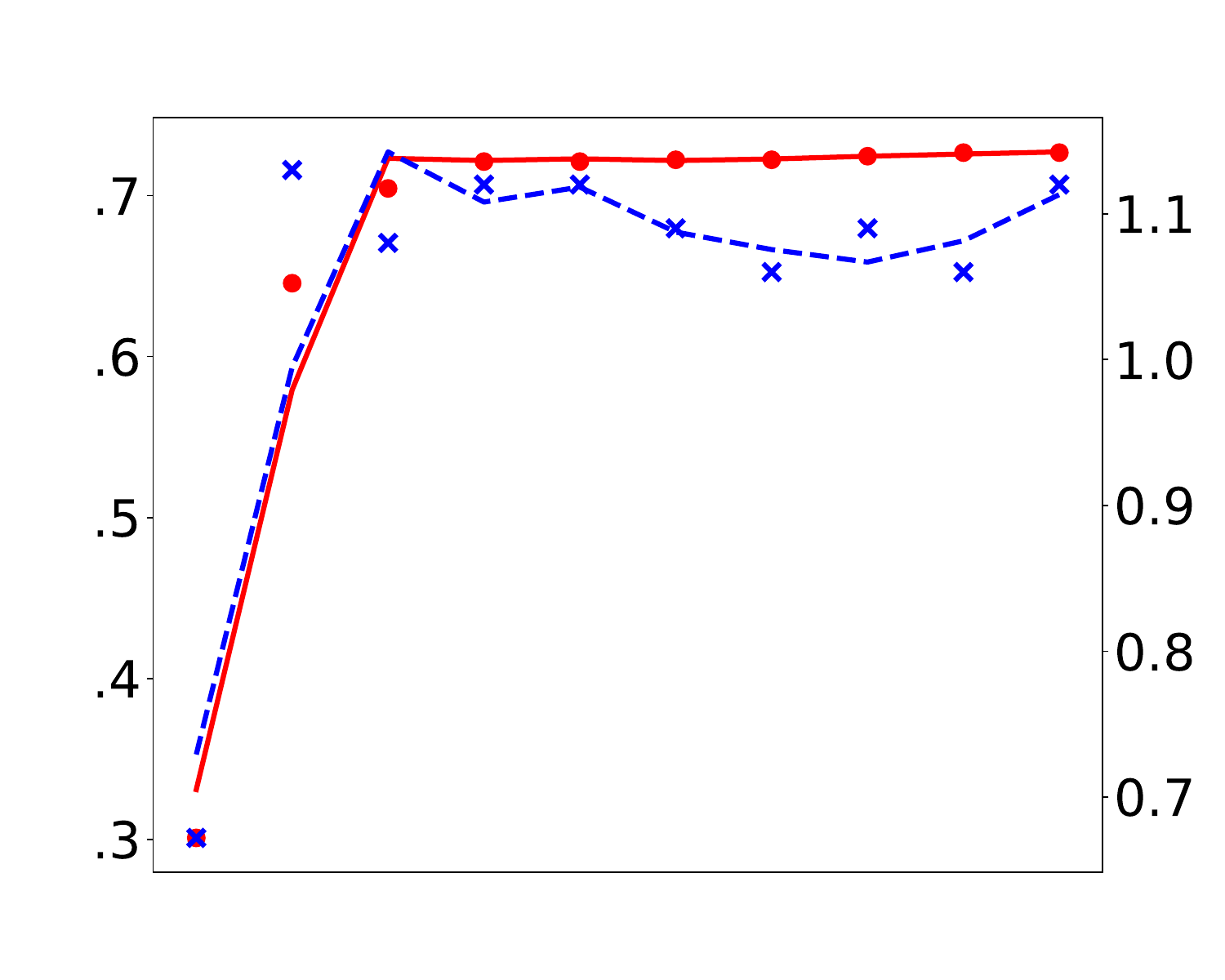}\includegraphics[width=0.16\textwidth]{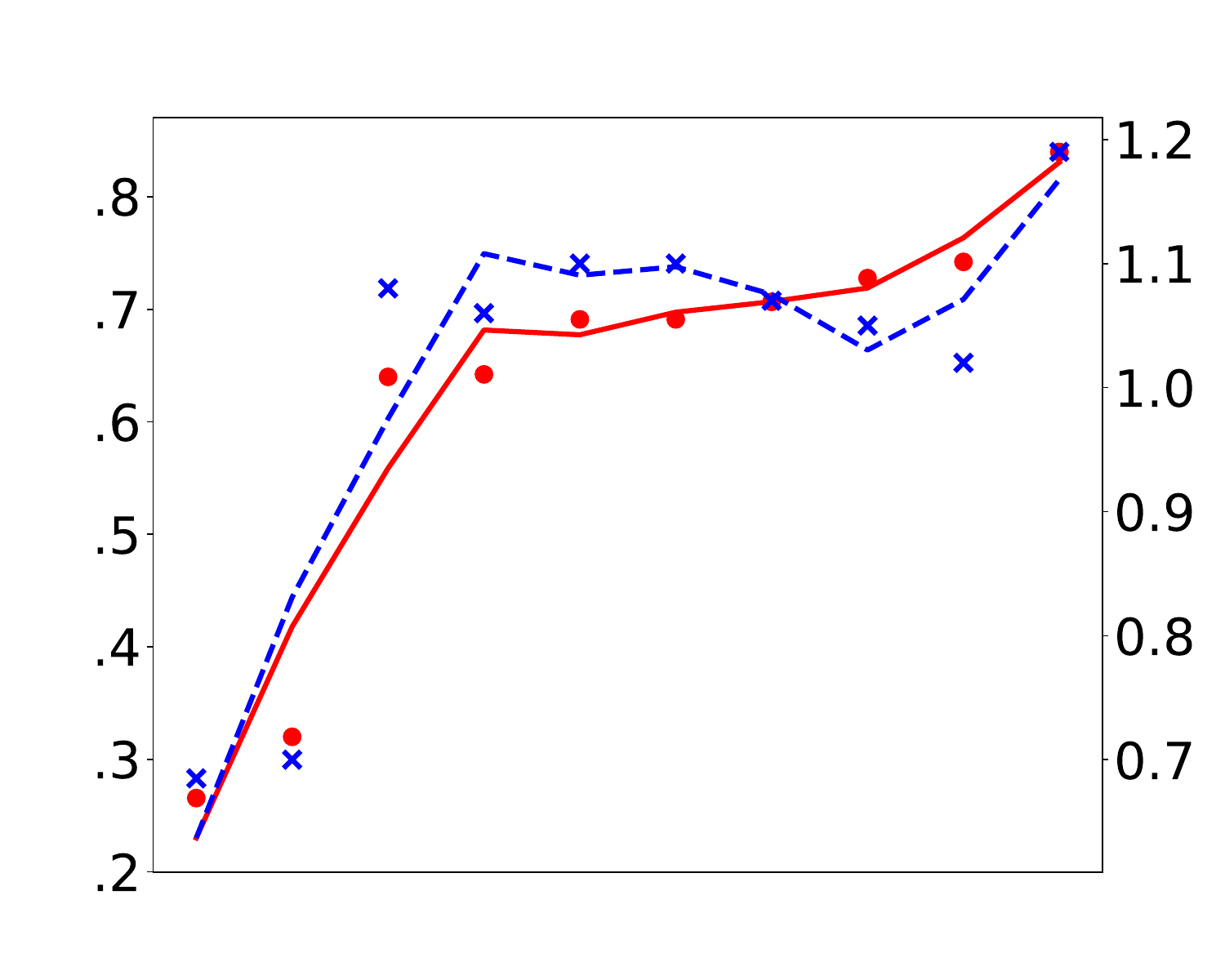}\includegraphics[width=0.16\textwidth]{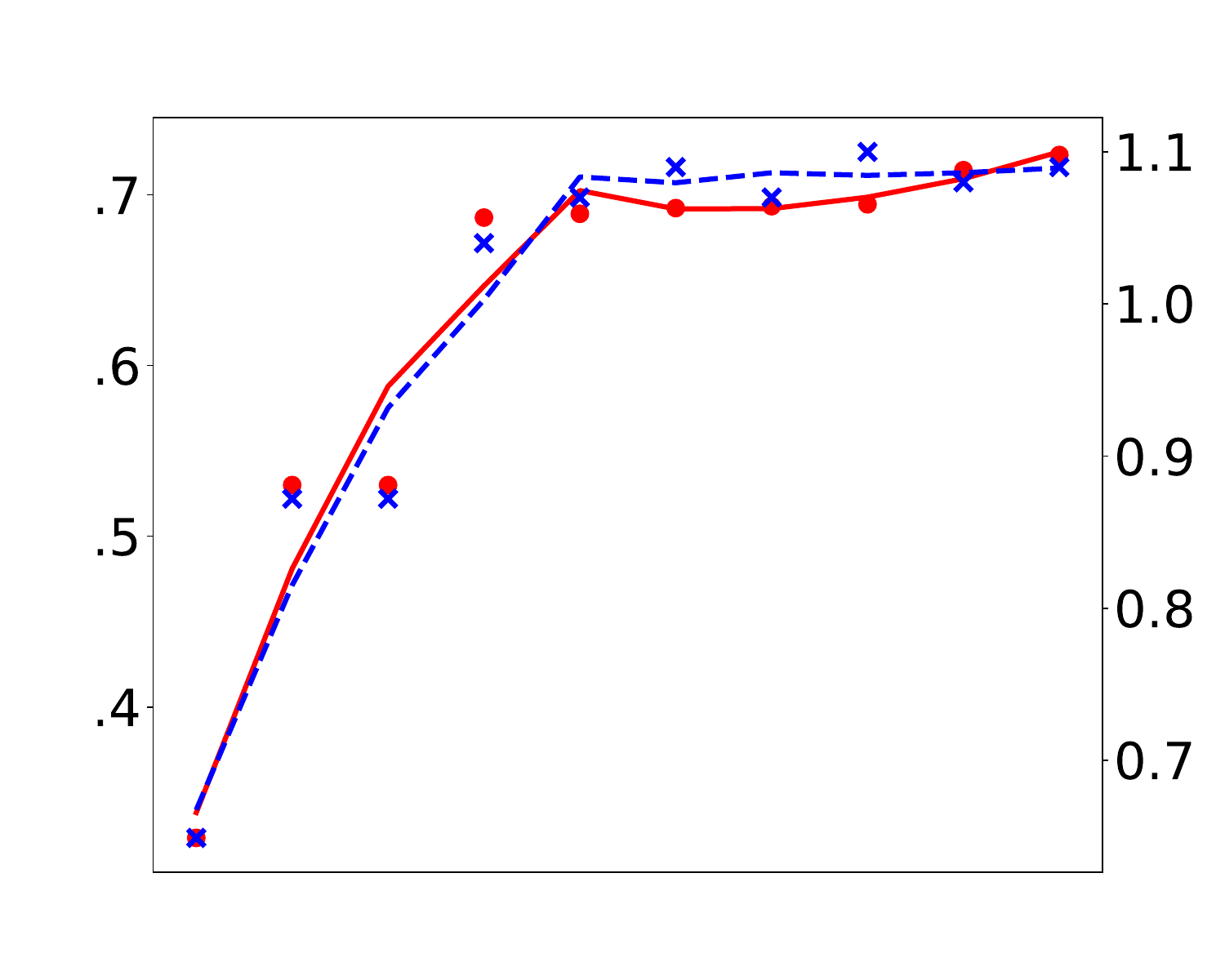}\includegraphics[width=0.16\textwidth]{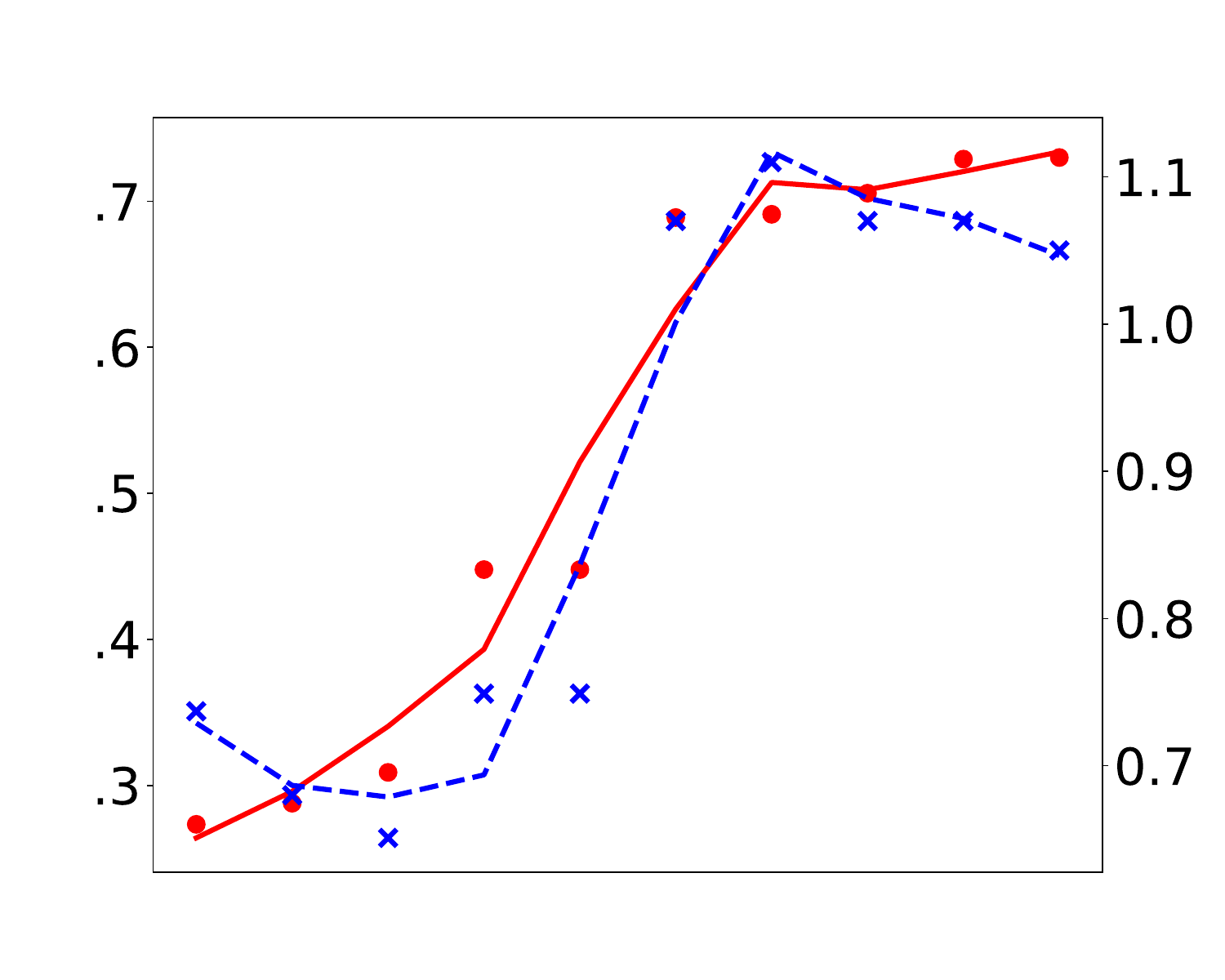}}
\vspace{-1mm}
\subfloat[\small{[T-MPHN, 0.830, 0.575, 0.574, 0.734, 0.940, 0.896]}]{\includegraphics[width=0.16\textwidth]{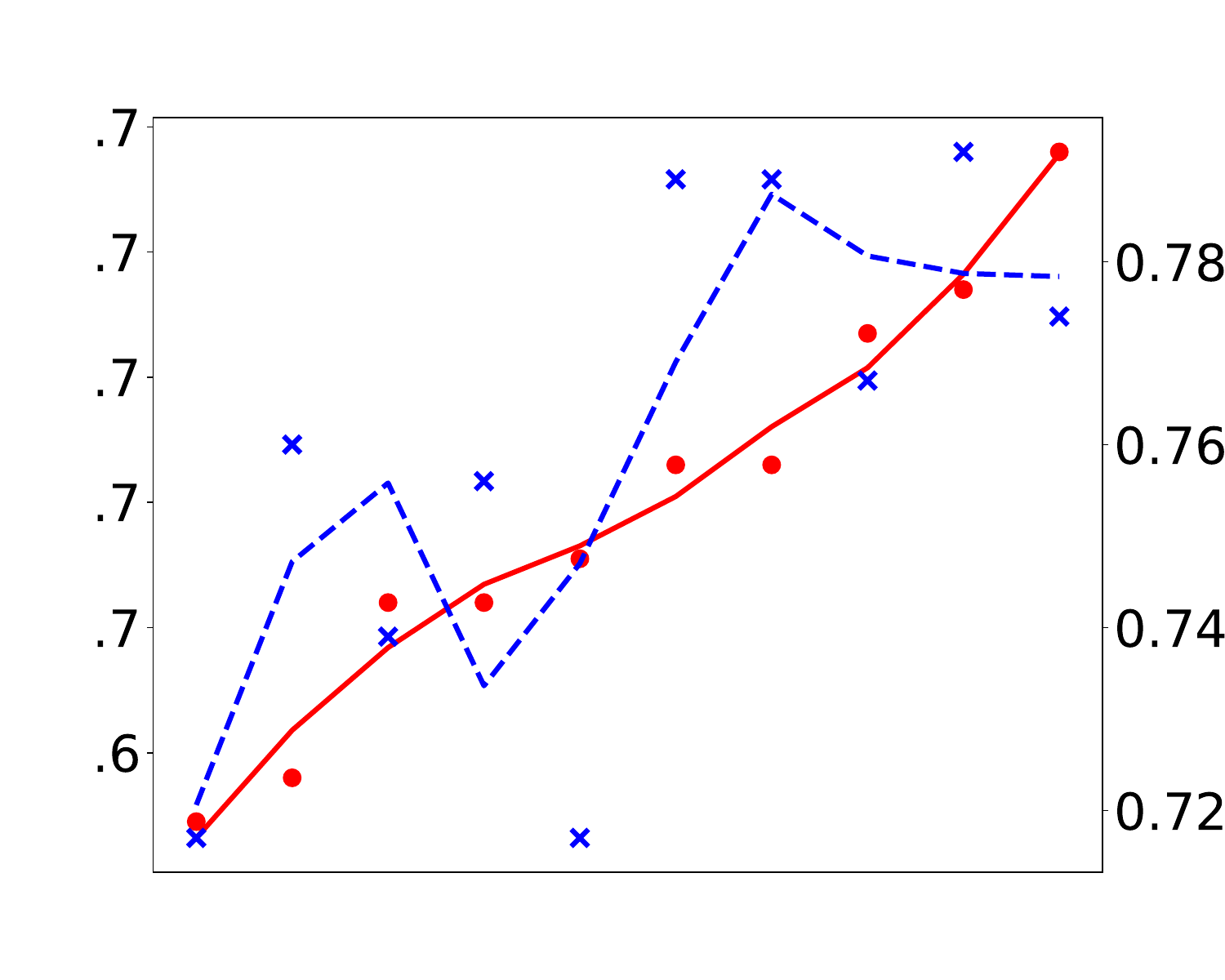}\includegraphics[width=0.16\textwidth]{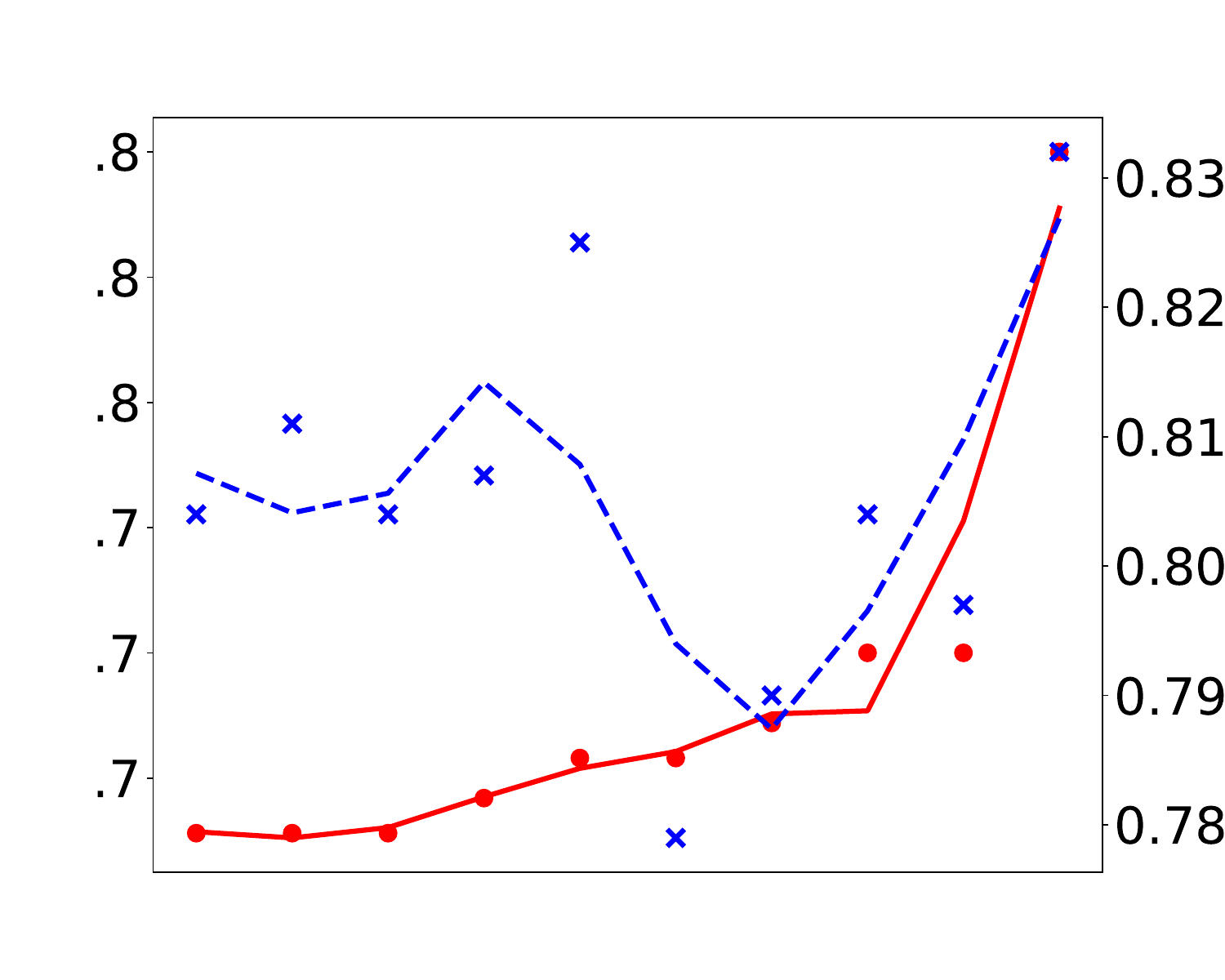}\includegraphics[width=0.16\textwidth]{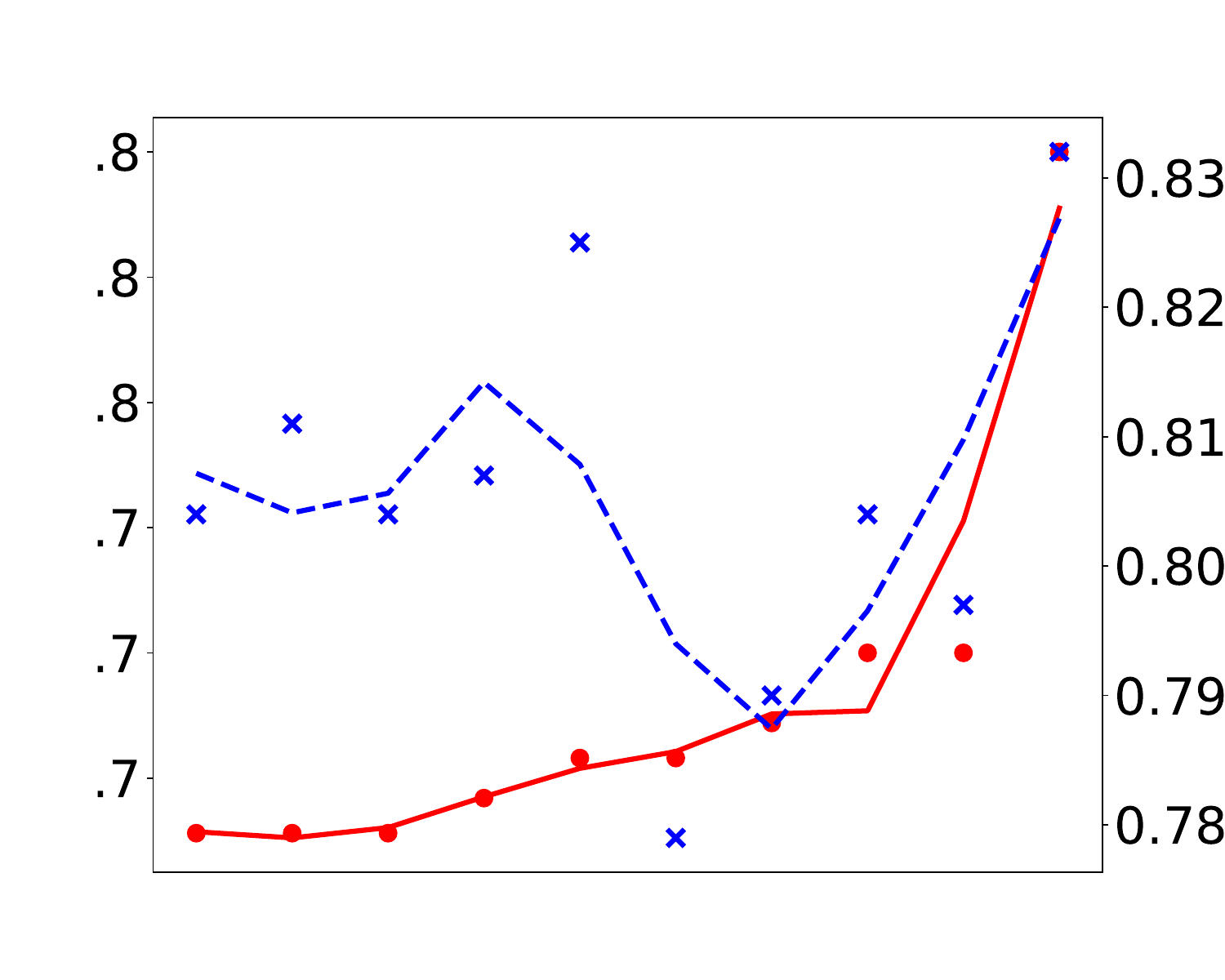}\includegraphics[width=0.16\textwidth]{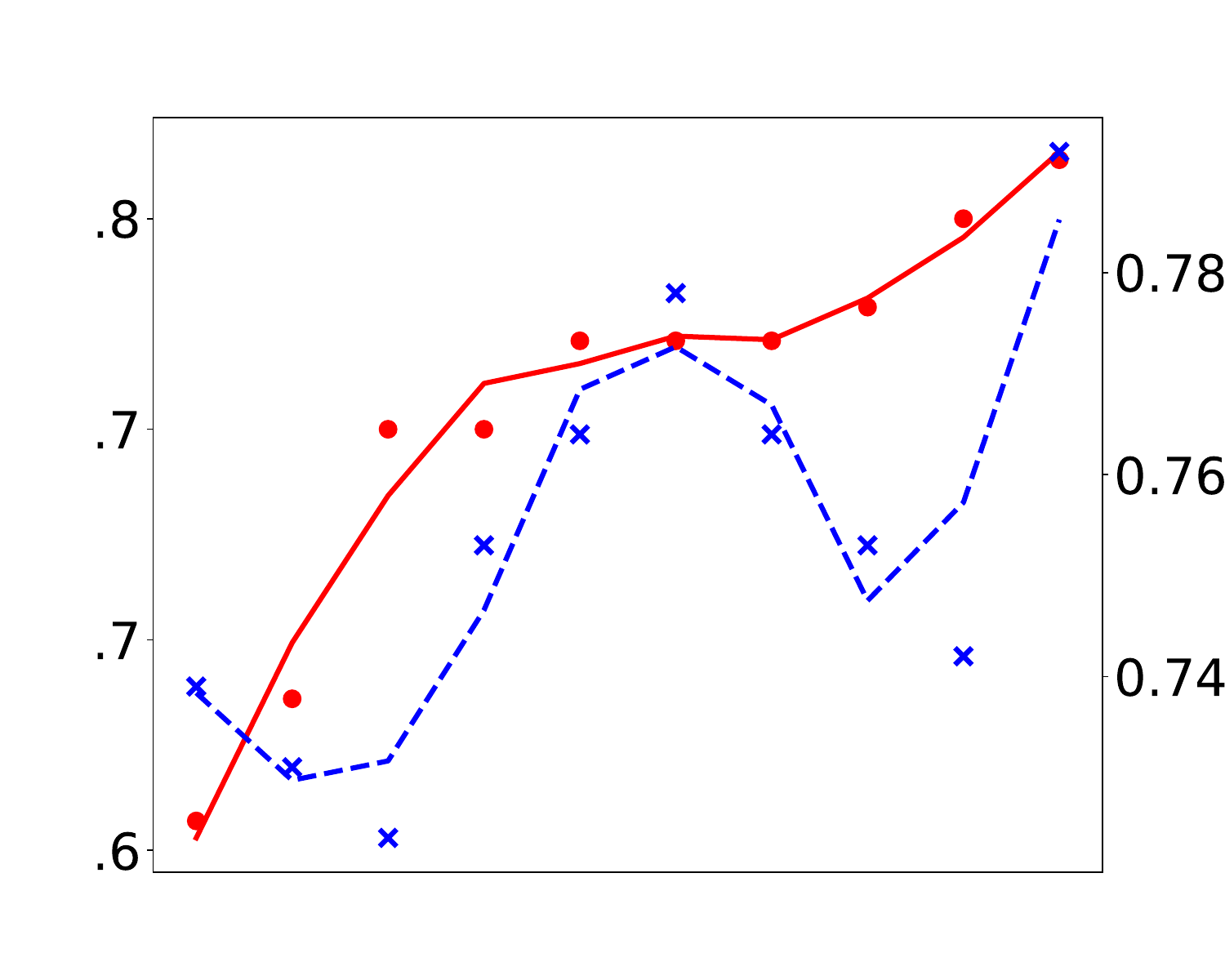}\includegraphics[width=0.16\textwidth]{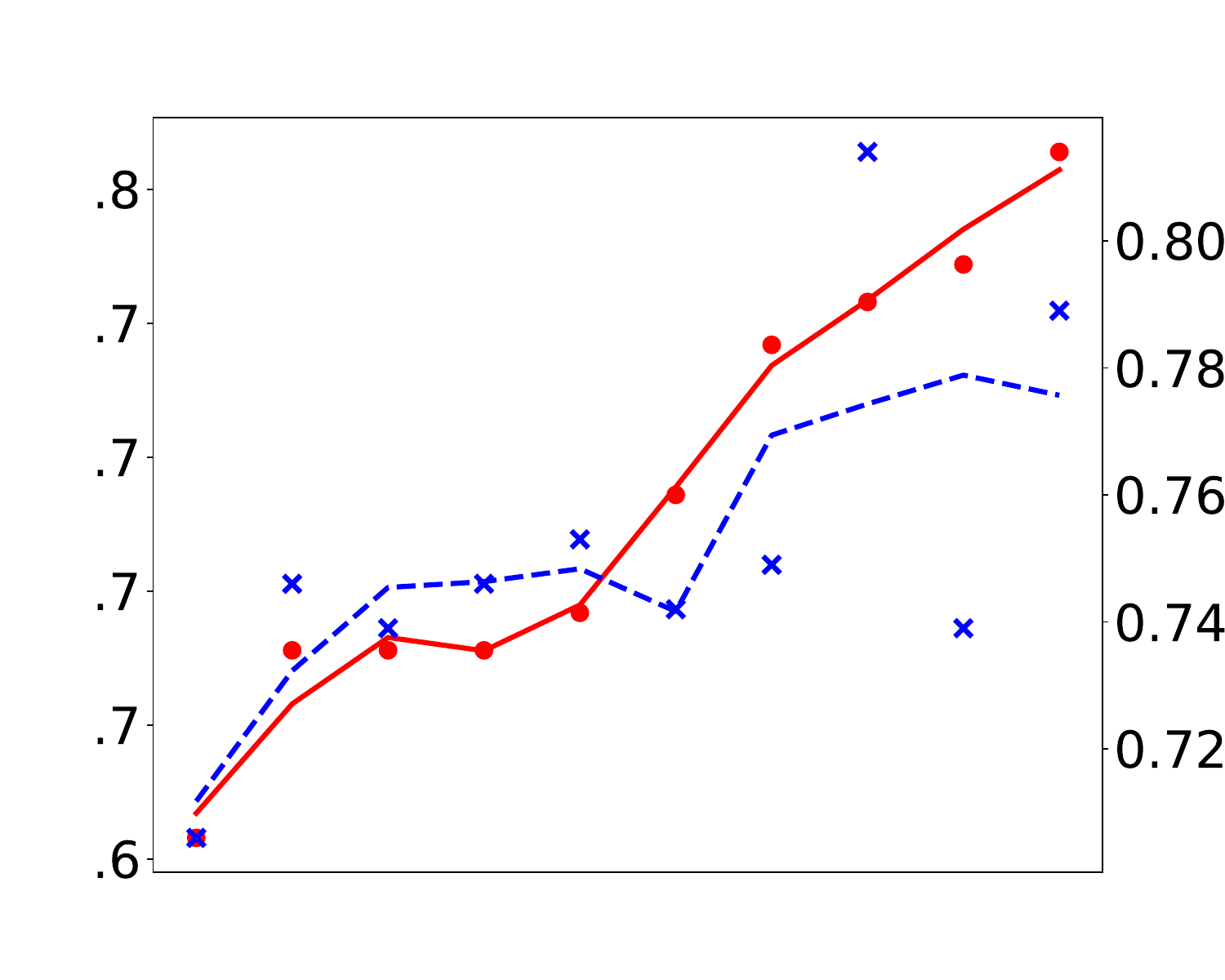}\includegraphics[width=0.16\textwidth]{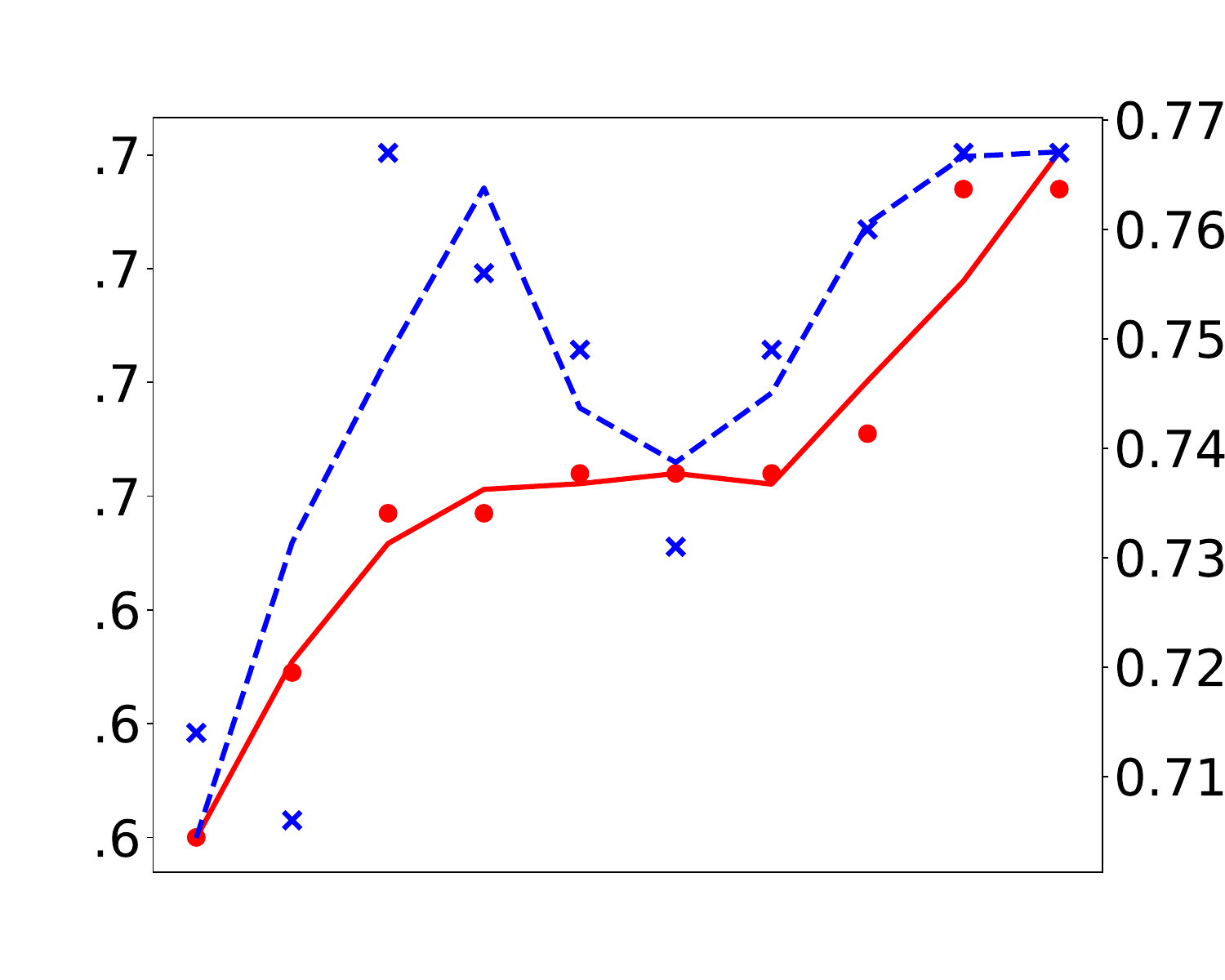}}
\vspace{-1mm}
\caption{\textbf{Results on DBLP.} Each subgroup labeled by [model, $r_1$, $r_2$, $r_3$, $r_4$, $r_5$, $r_6$] shows the empirical loss, theoretical bound, and their curves via Savitzky-Golay filter; each figure plots the results over ten independent experiments with random initializations and such process is repeated for six times, where $r_i \in [-1,1]$ for $i\in[6]$ is the Pearson correlation coefficient for the $i$-th figure (from left to right) -- higher $r$ indicating stronger positive correlation.} 
\label{fig:3}
\vspace{-3mm}
\Description{figure1}
\end{figure*}

\begin{figure*}[ht]
\centering
%\vspace{1 mm}
\subfloat{\label{fig: lagend_3}\setlength{\fboxsep}{0pt}\fbox{\includegraphics[width=0.5\textwidth]{Styles/figure_2/legend_2.pdf}}}
\addtocounter{subfigure}{-1}
\vspace{-1mm}
\subfloat[\small{[UniGCN, 0.288, 0.423, 0.042, 0.618, 0.775, 0.476]}]{\includegraphics[width=0.16\textwidth]{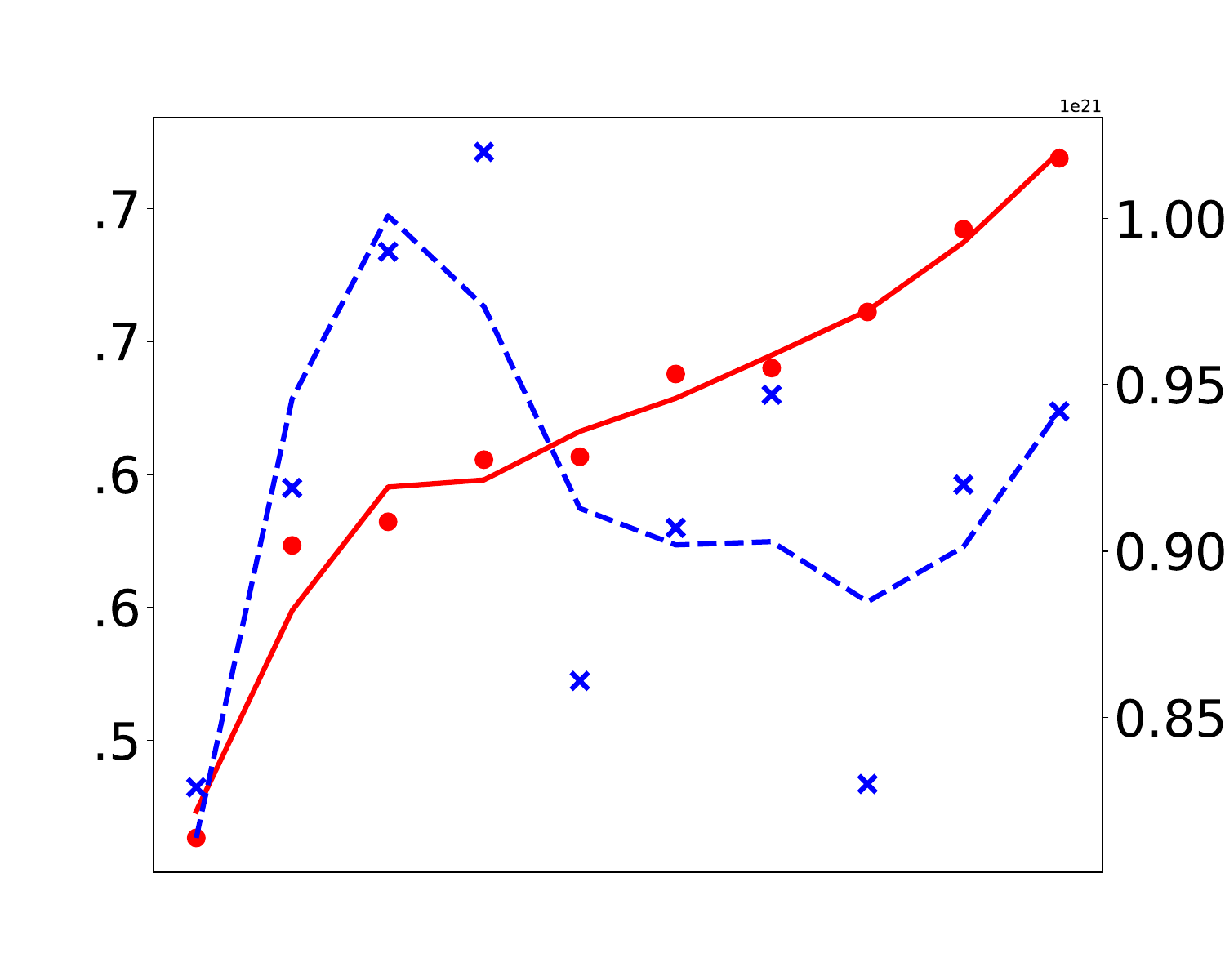}\includegraphics[width=0.16\textwidth]{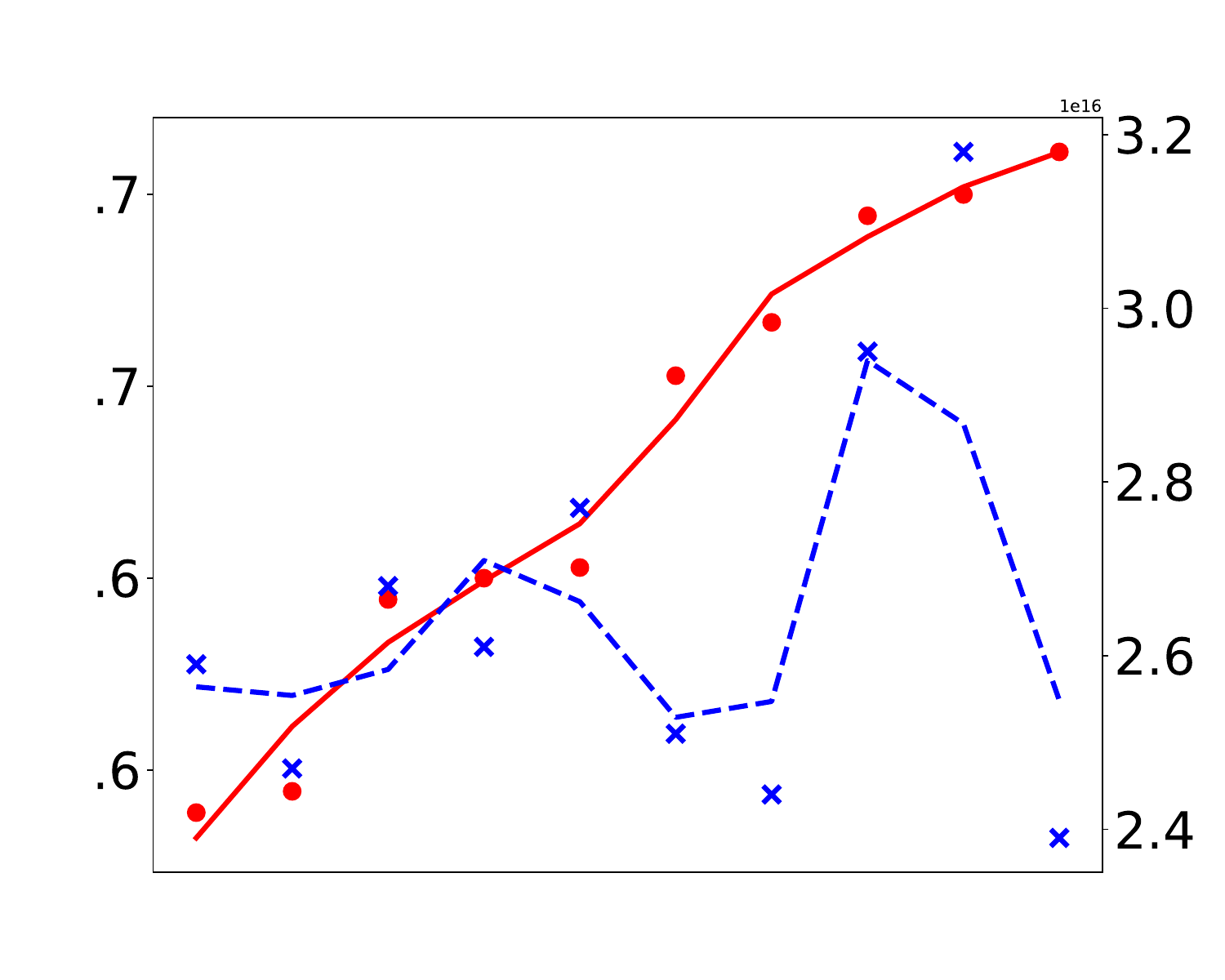}\includegraphics[width=0.16\textwidth]{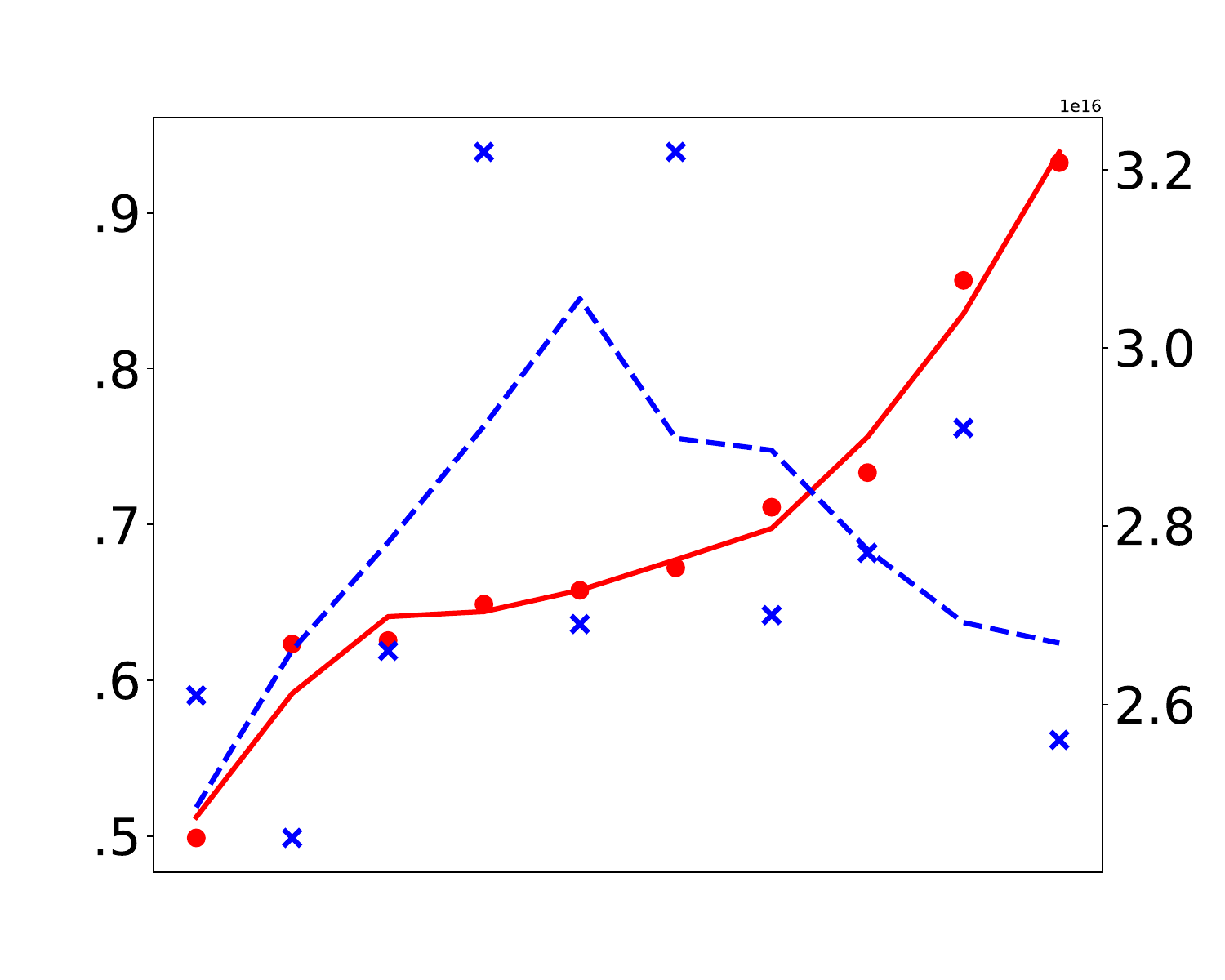}\includegraphics[width=0.16\textwidth]{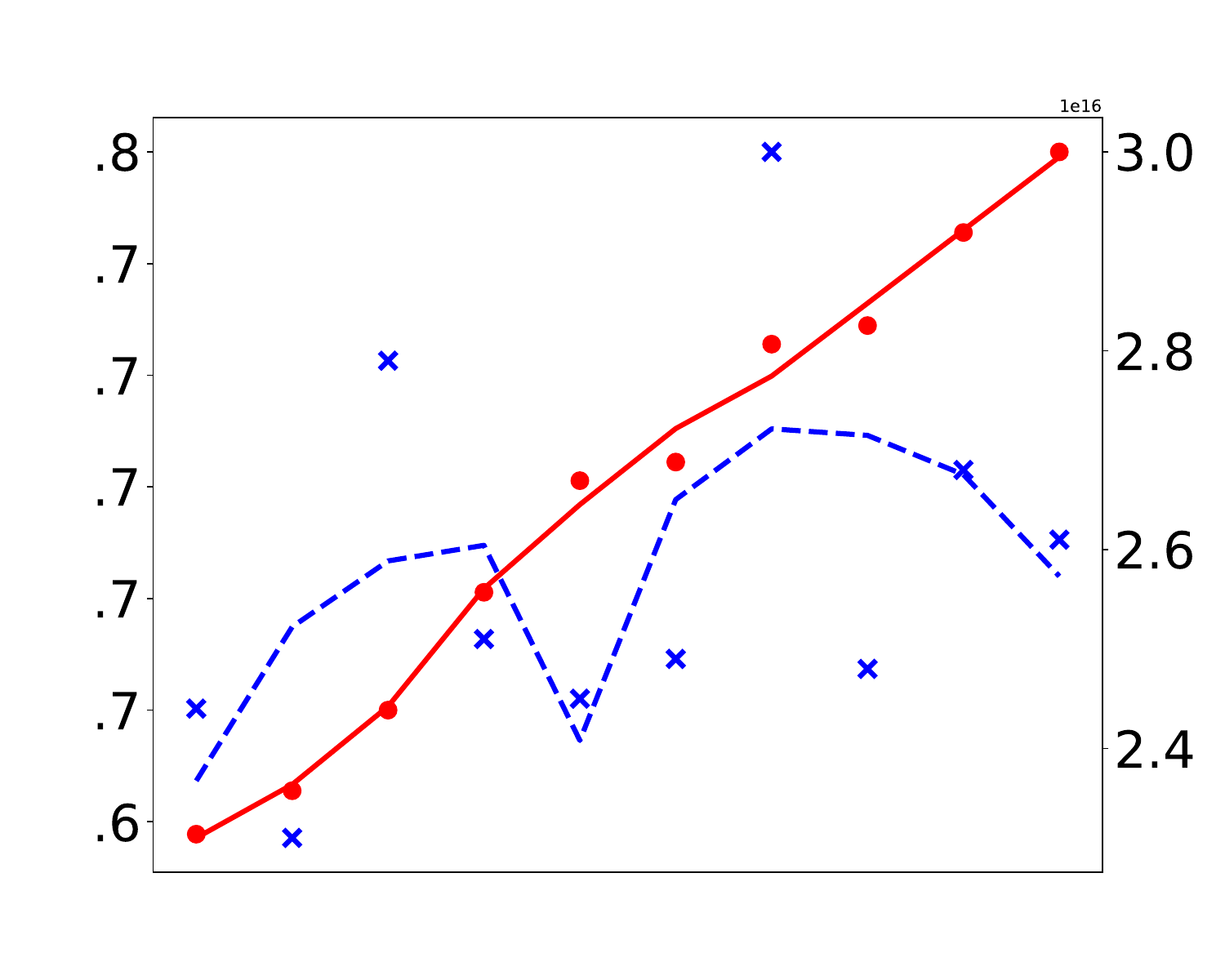}\includegraphics[width=0.16\textwidth]{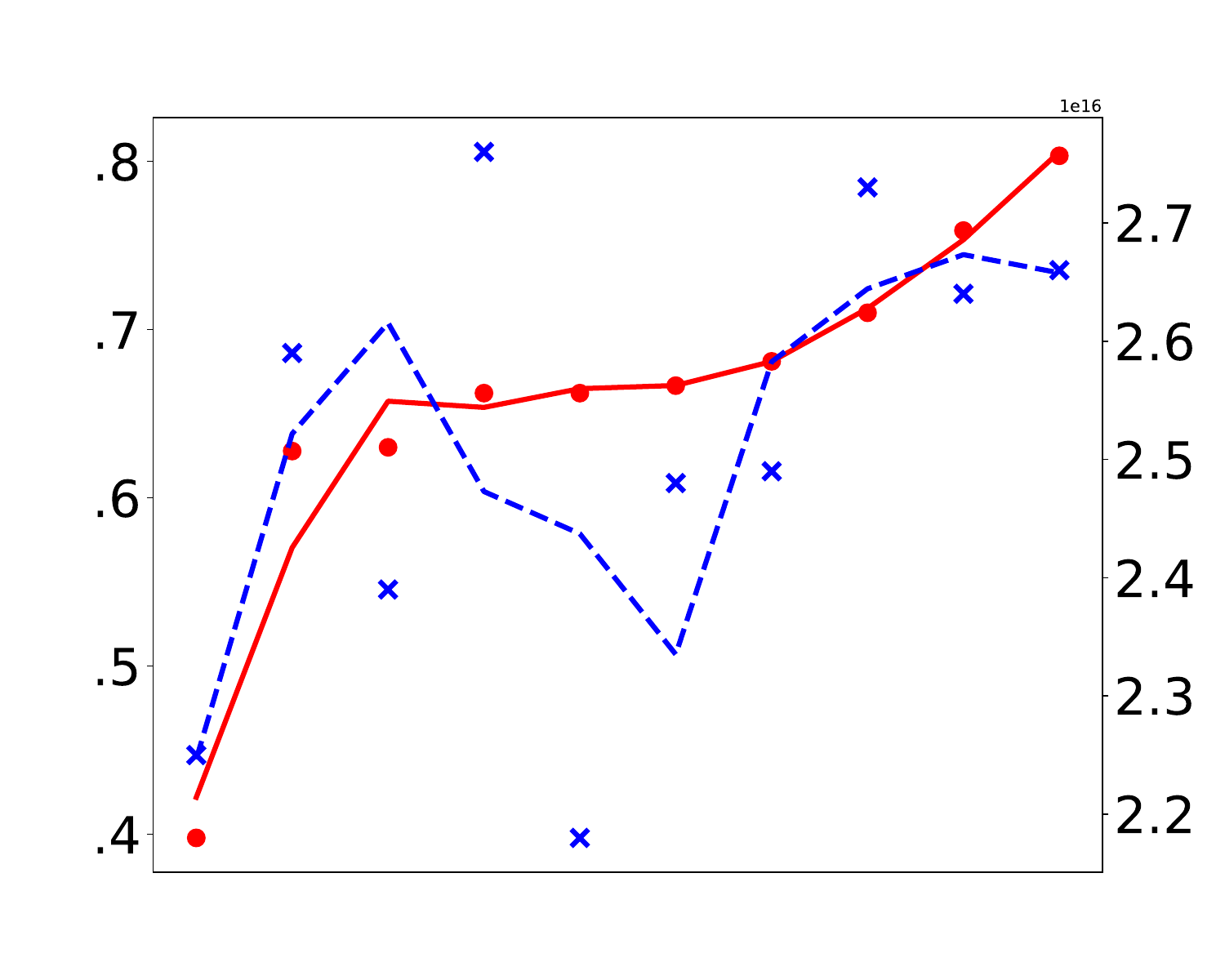}\includegraphics[width=0.16\textwidth]{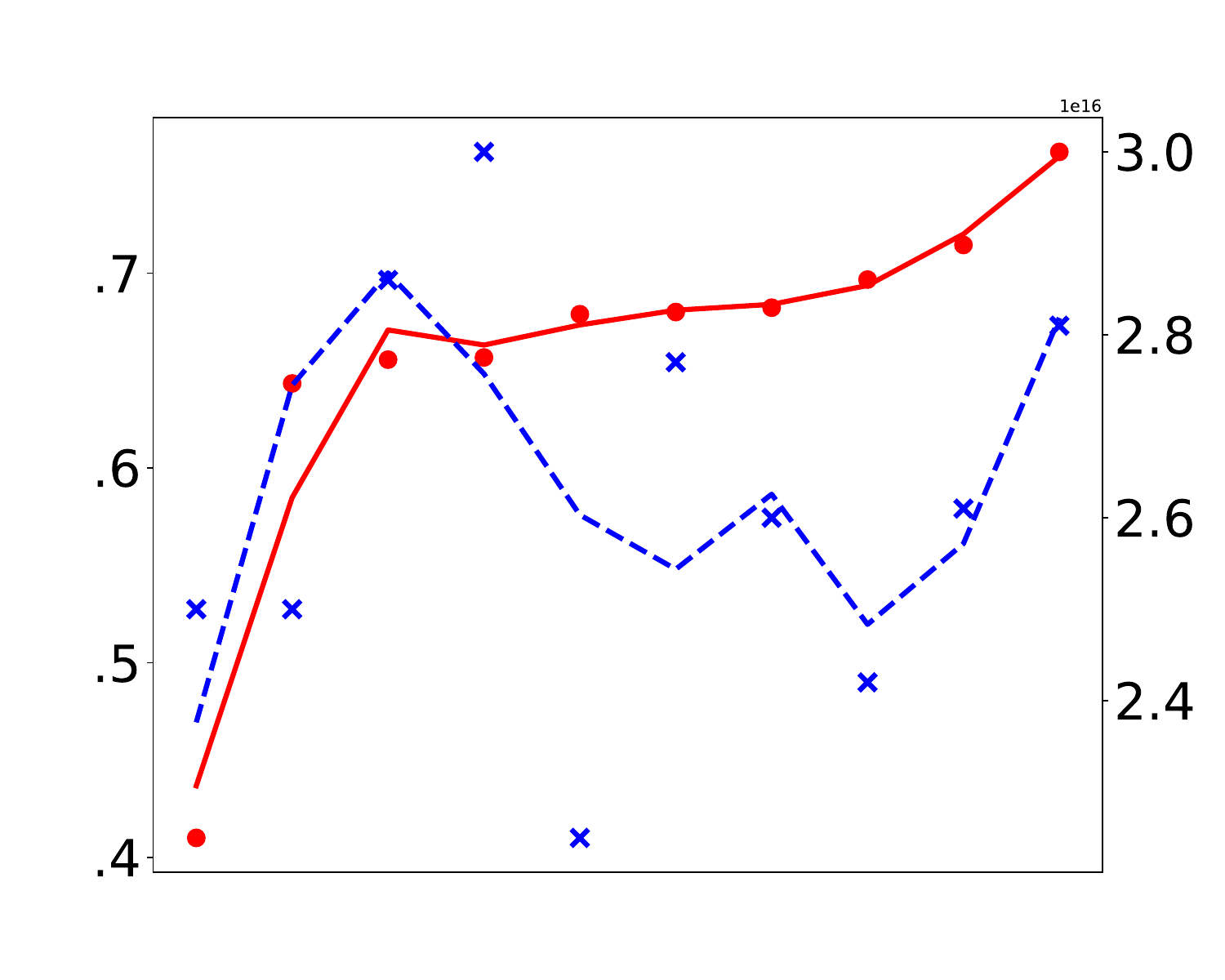}}
\vspace{-1mm}
\subfloat[\small{[M-IGN, 0.627, 0.420, 0.051, 0.072, 0.349, 0.567]}]{\includegraphics[width=0.16\textwidth]{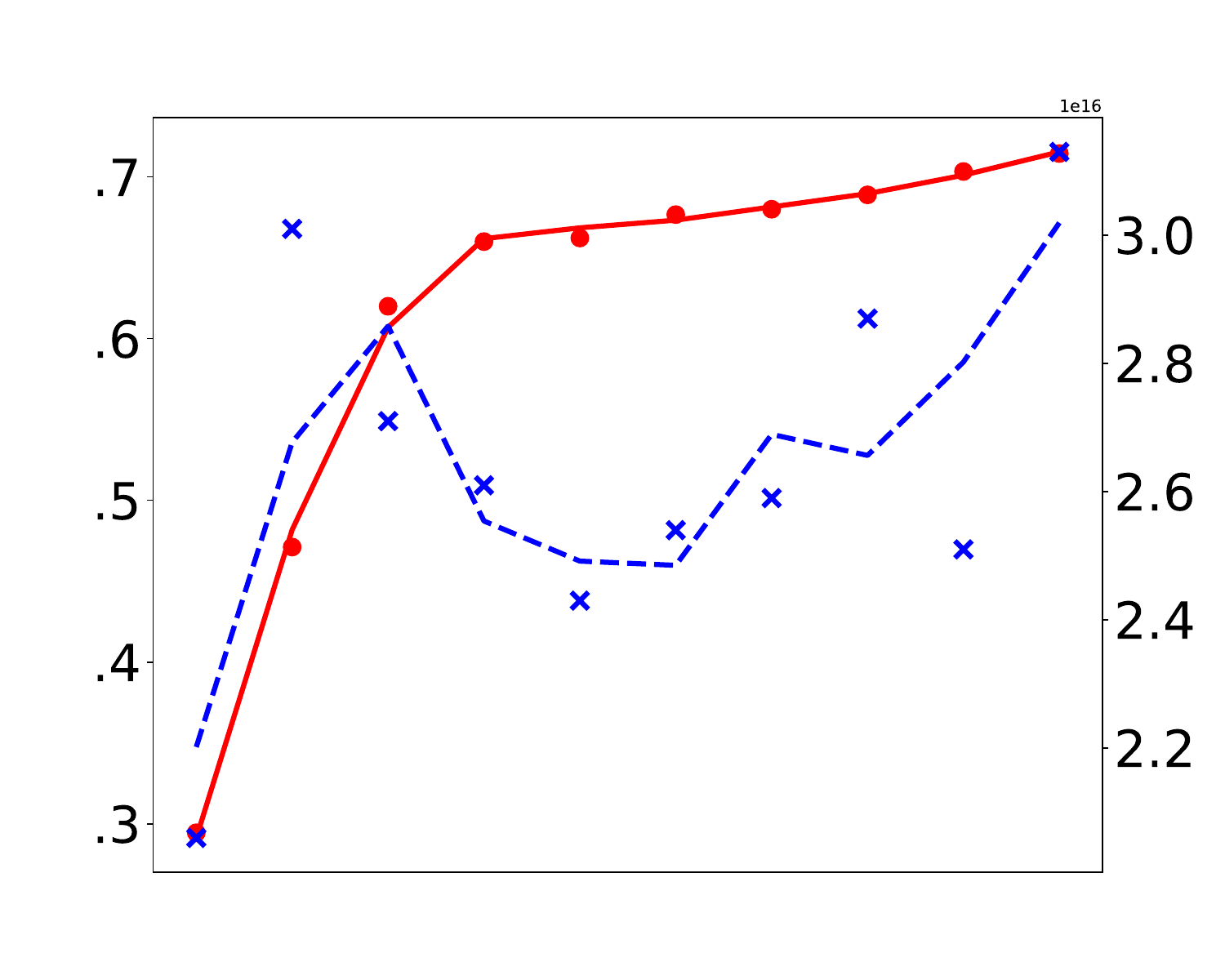}\includegraphics[width=0.16\textwidth]{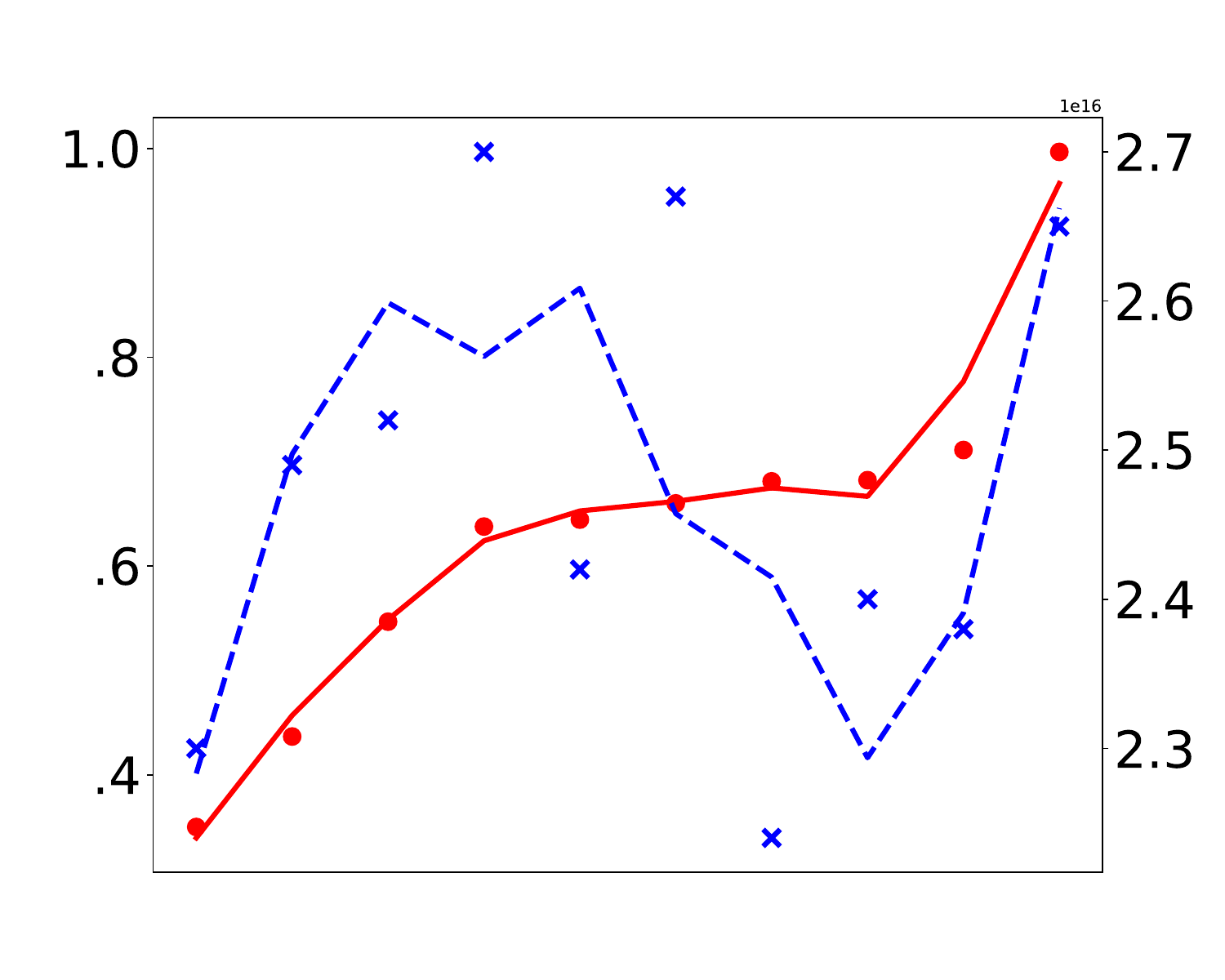}\includegraphics[width=0.16\textwidth]{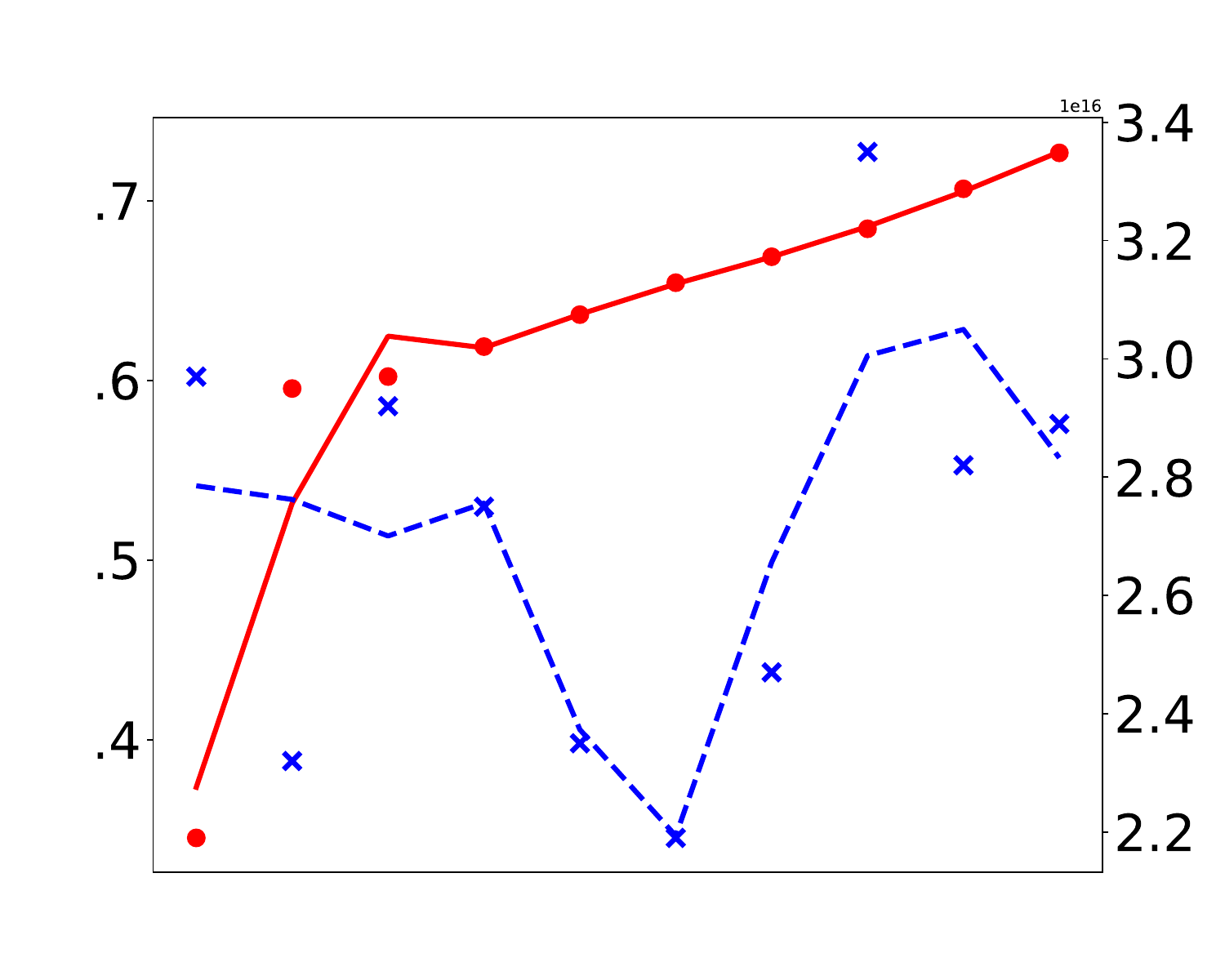}\includegraphics[width=0.16\textwidth]{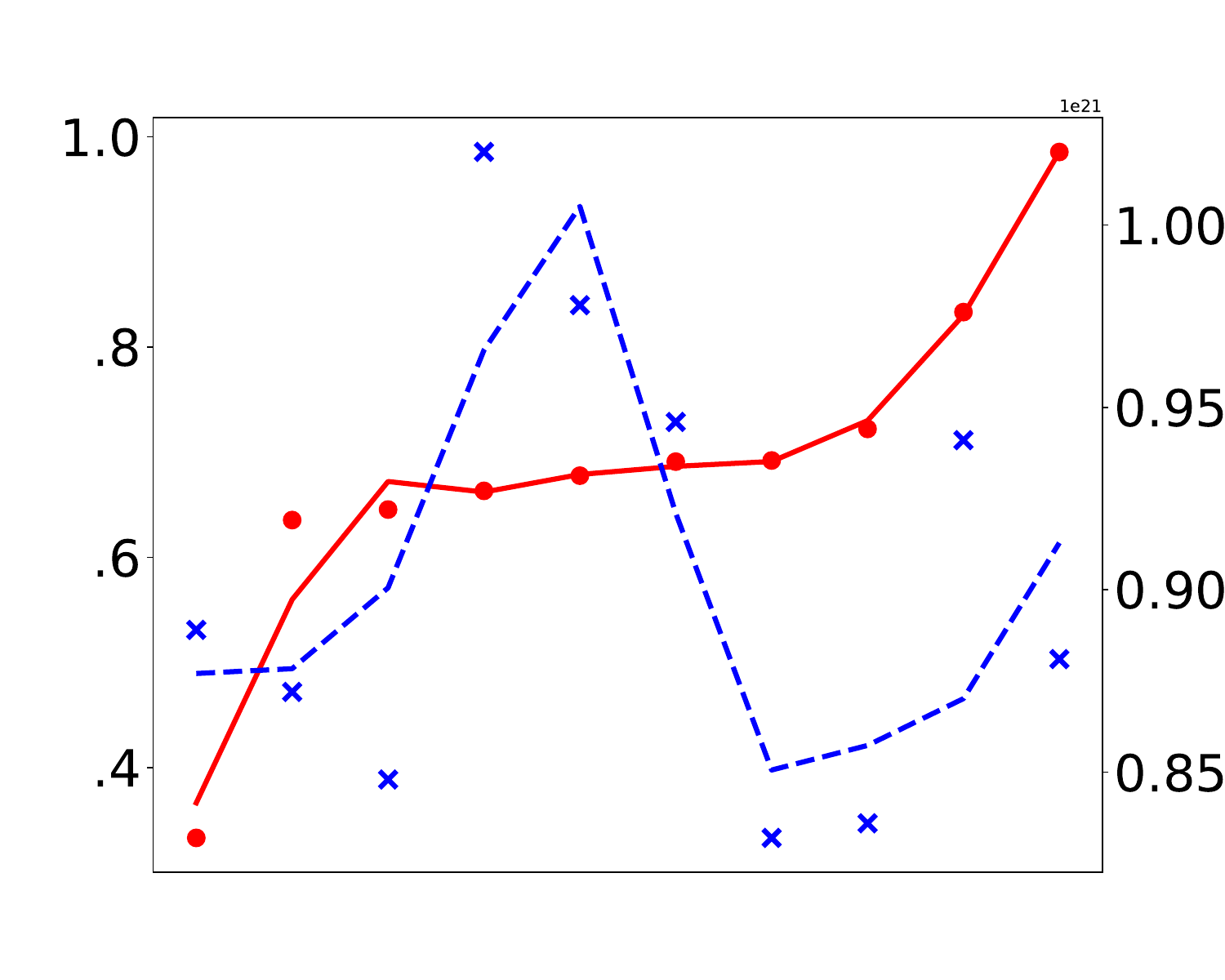}\includegraphics[width=0.16\textwidth]{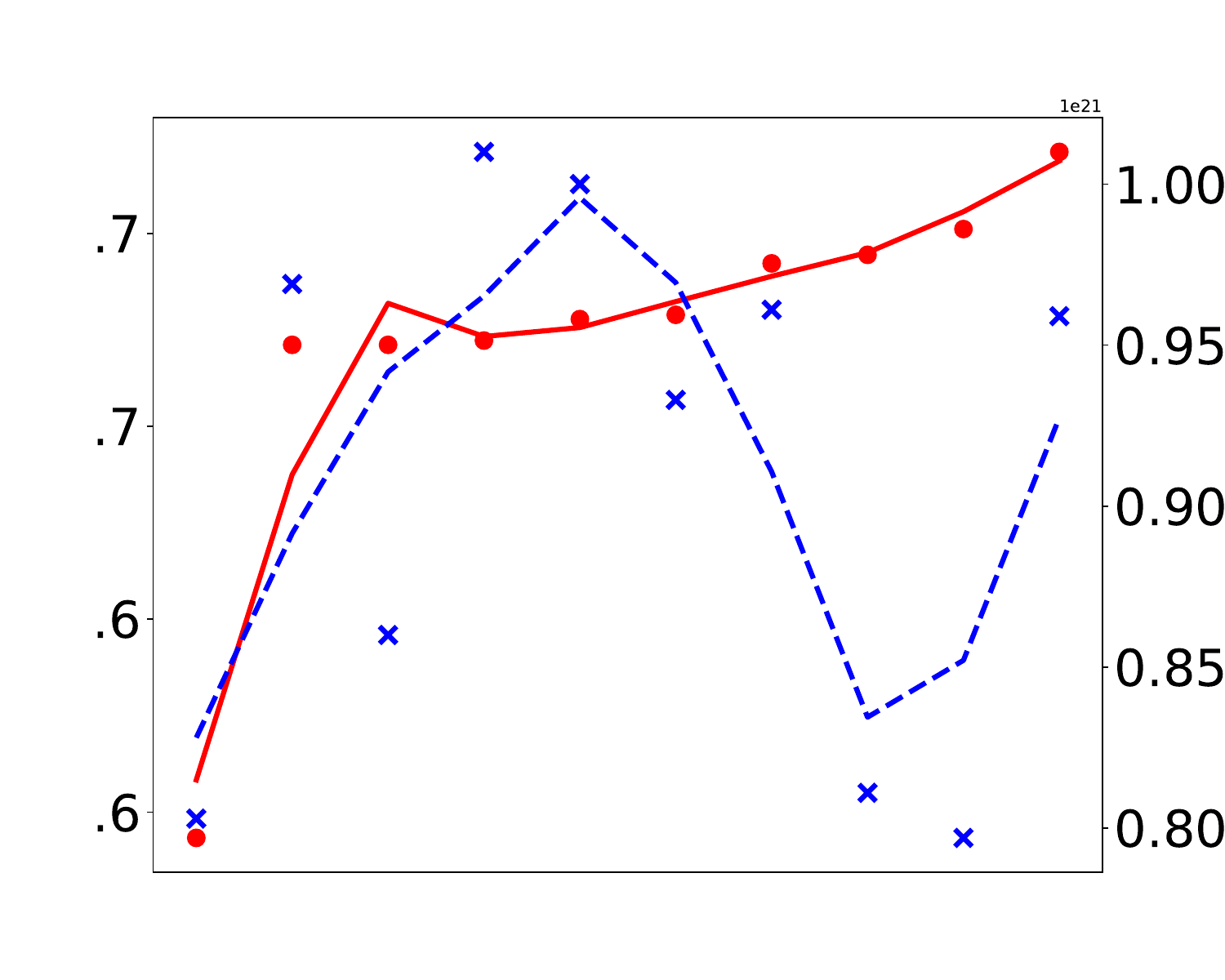}\includegraphics[width=0.16\textwidth]{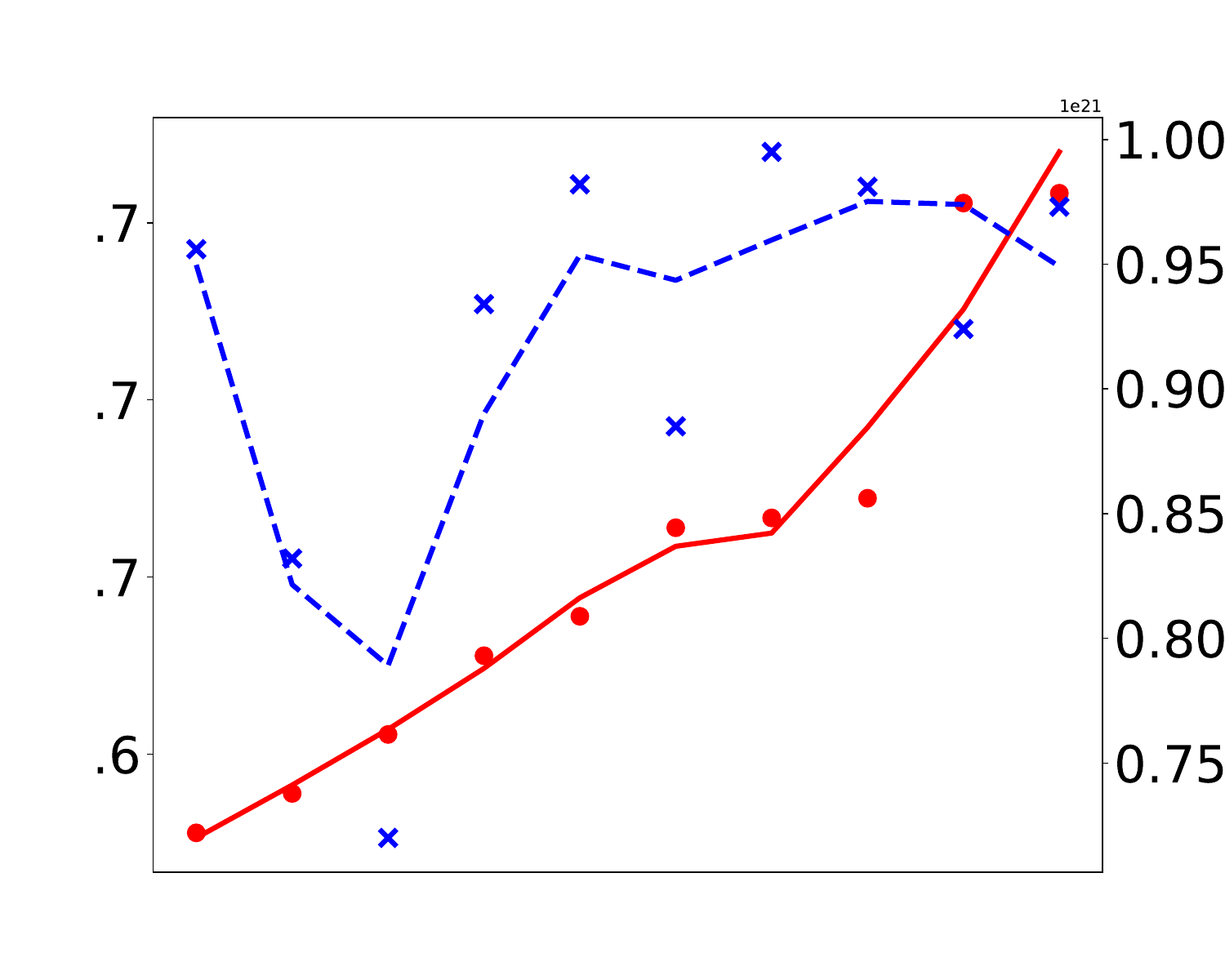}}
\vspace{-1mm}
\subfloat[\small{[AllDeepSets, 0.721, 0.984, 0.998, 0.916, 0.768, 0.537]}]{\includegraphics[width=0.16\textwidth]{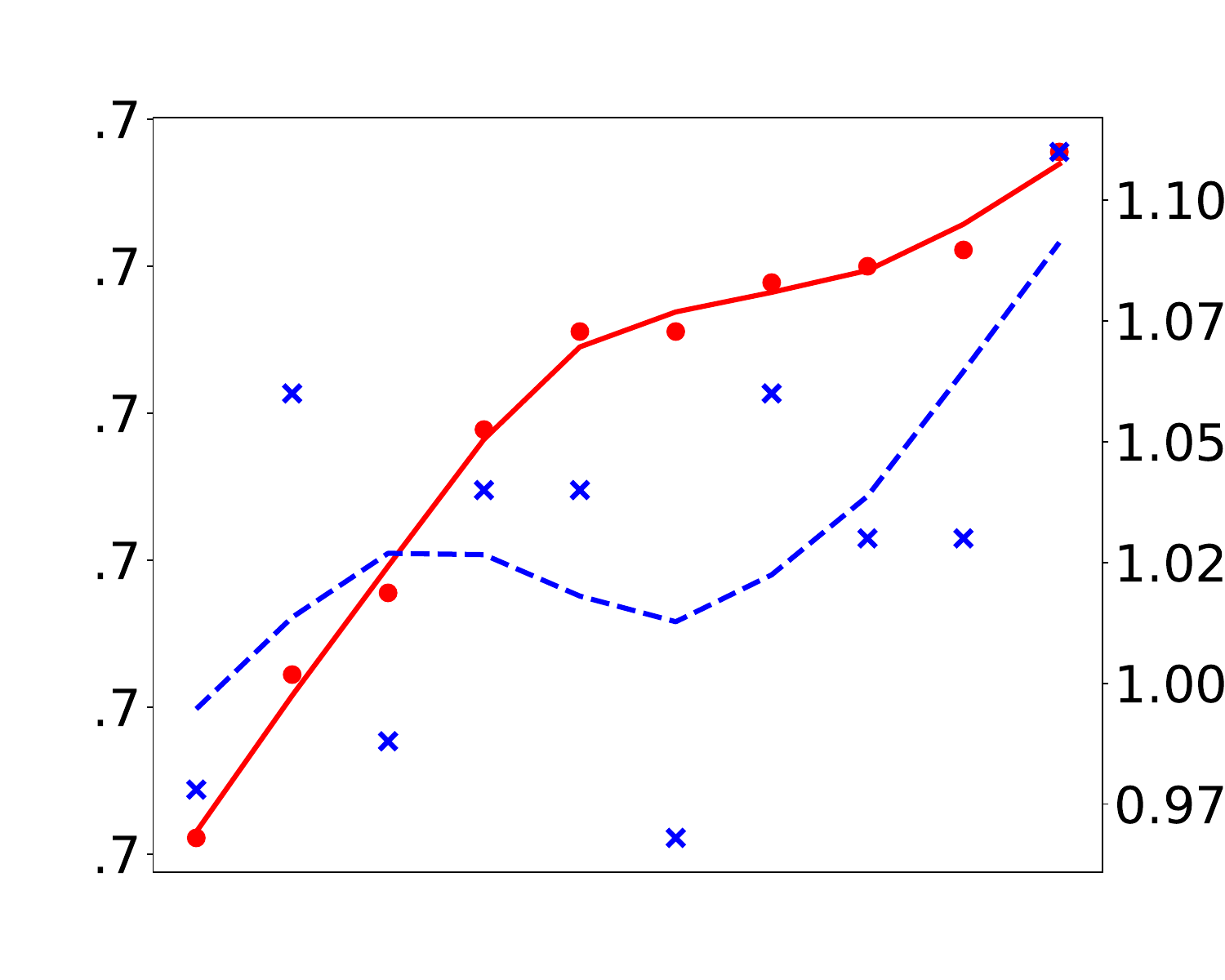}\includegraphics[width=0.16\textwidth]{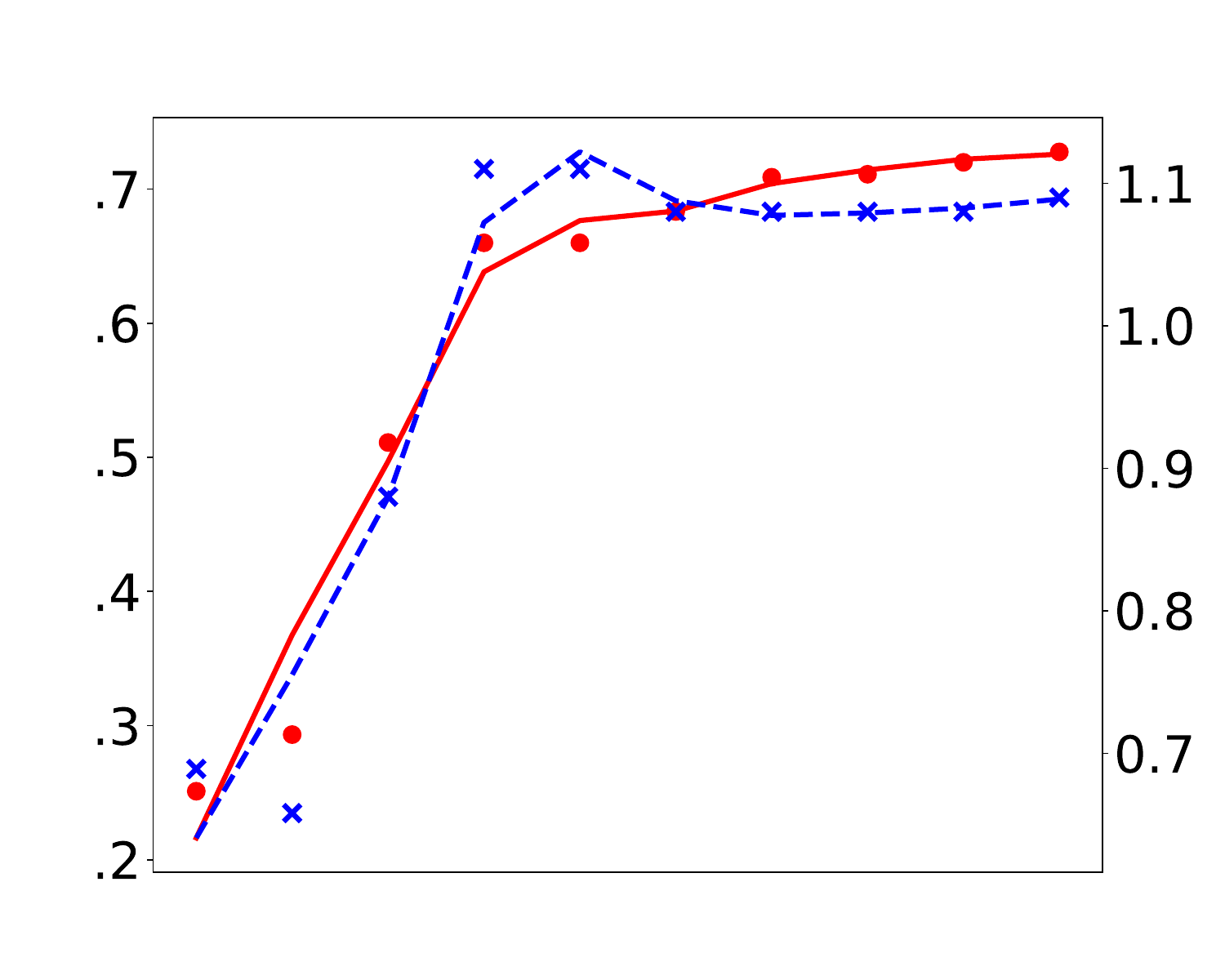}\includegraphics[width=0.16\textwidth]{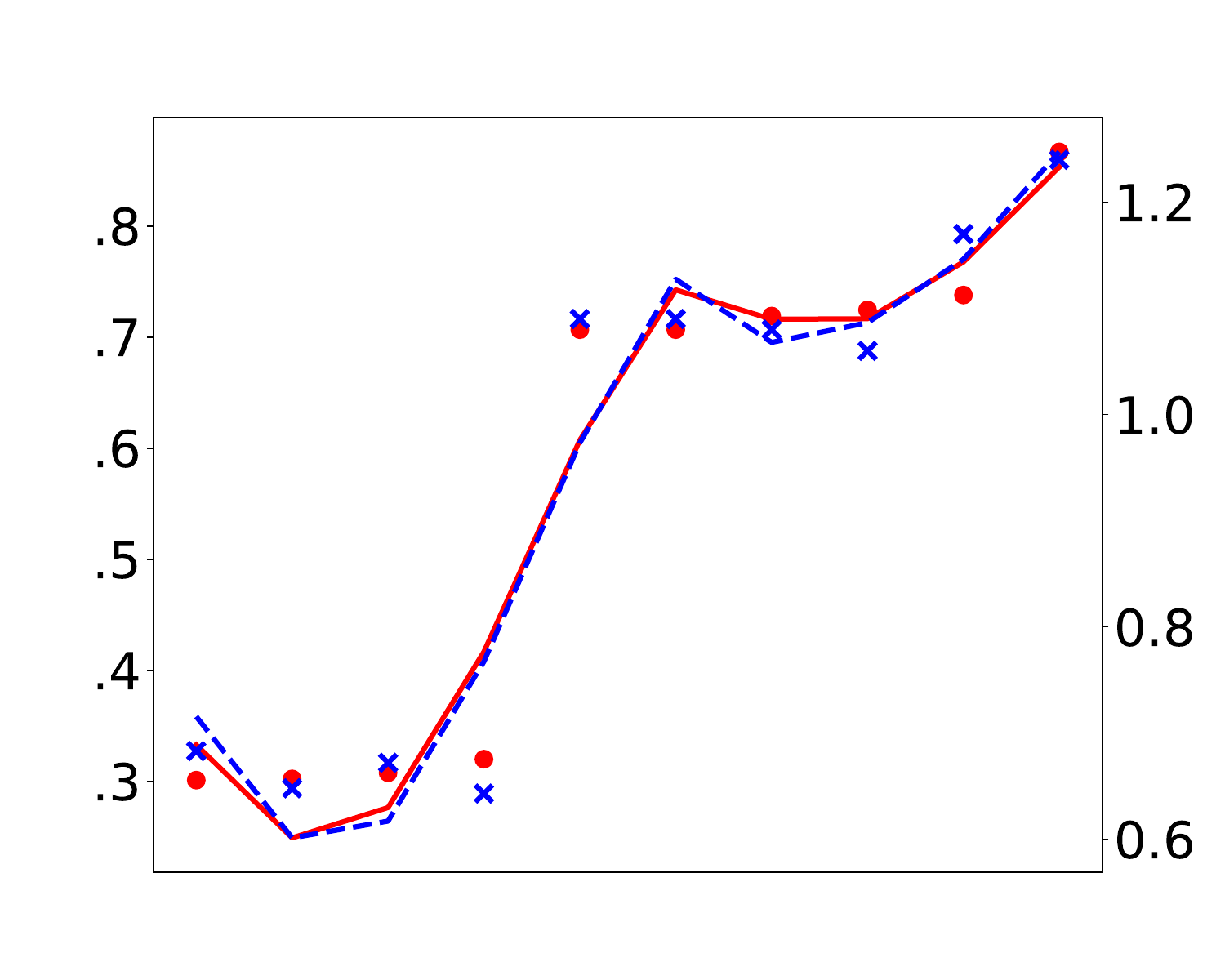}\includegraphics[width=0.16\textwidth]{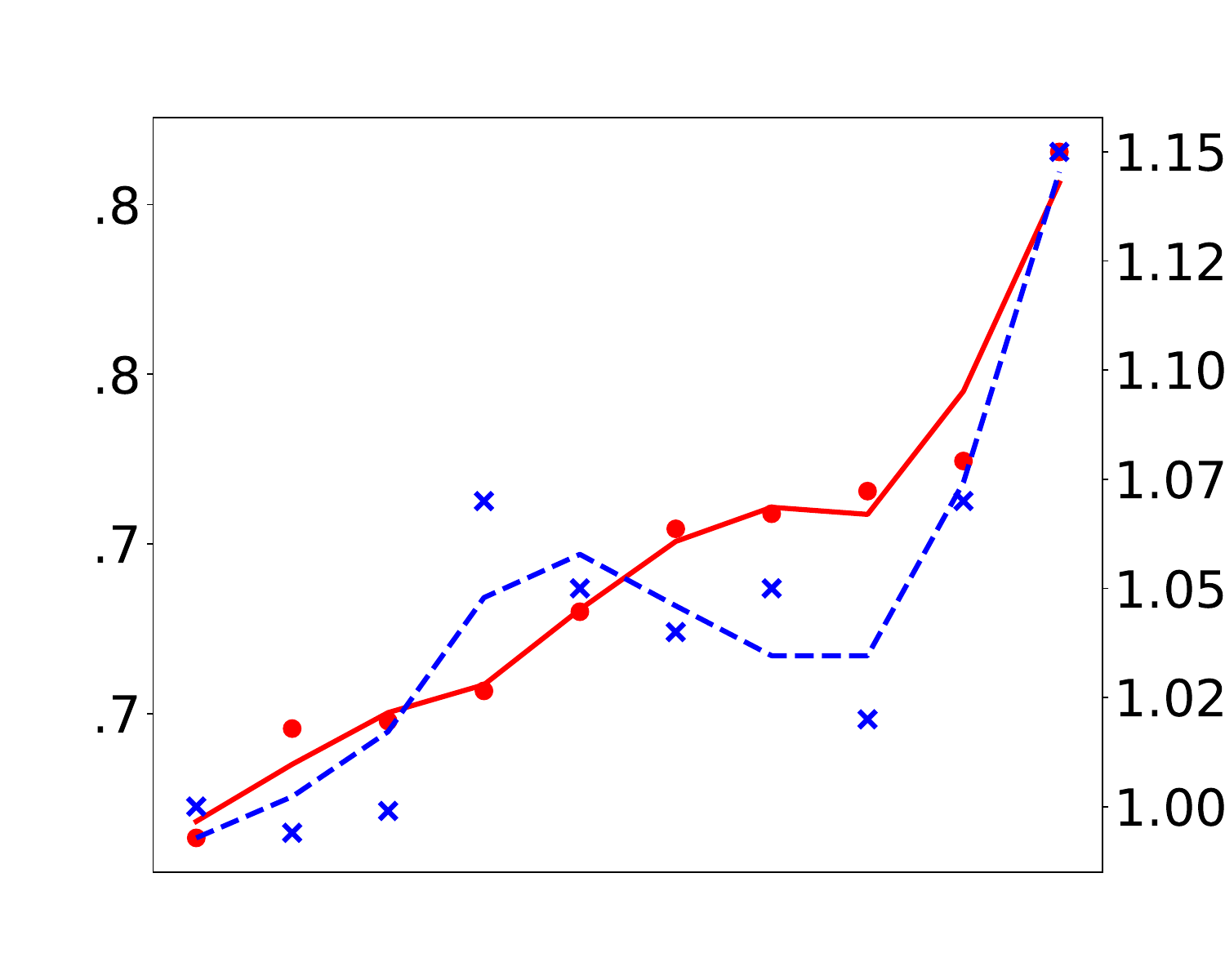}\includegraphics[width=0.16\textwidth]{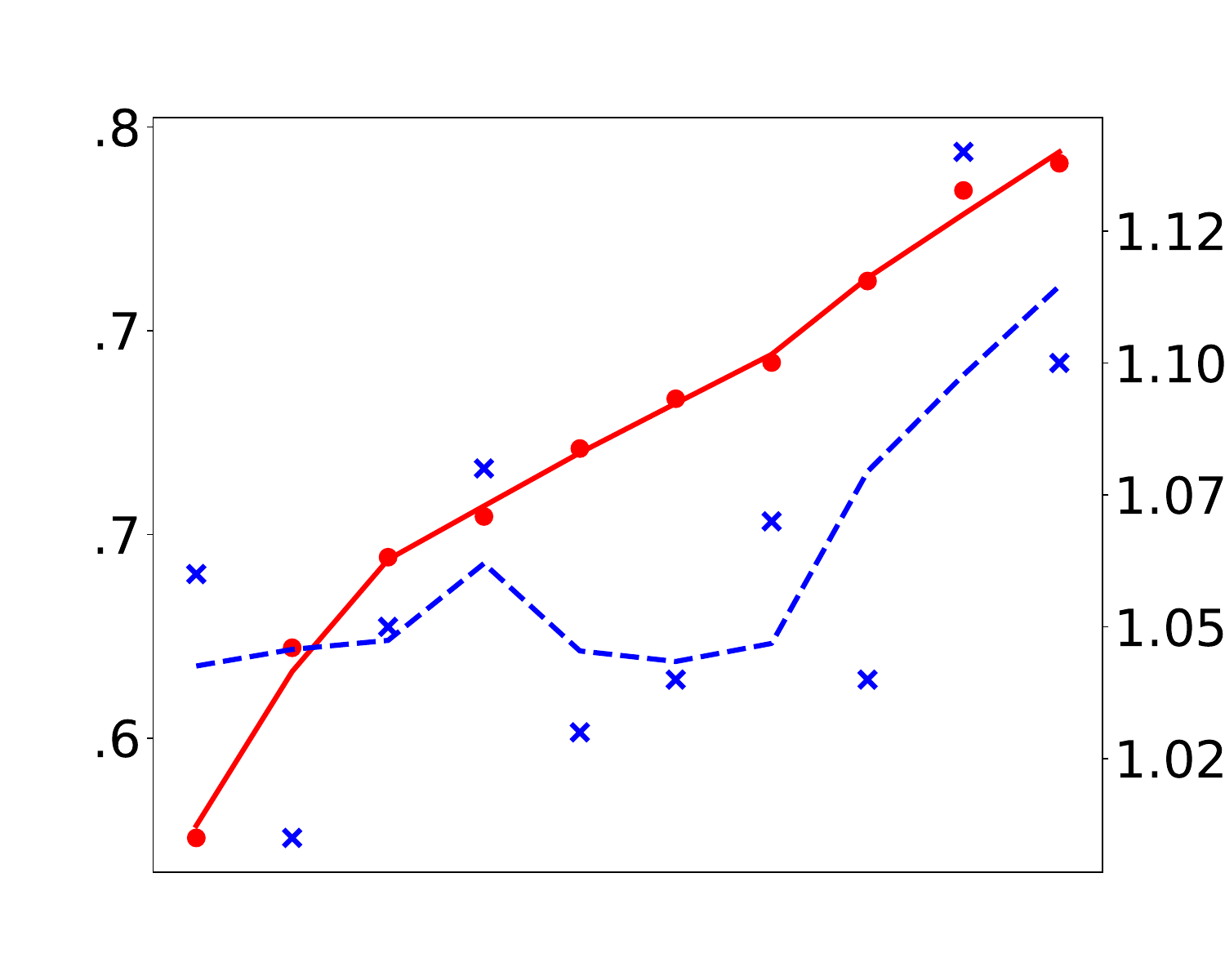}\includegraphics[width=0.16\textwidth]{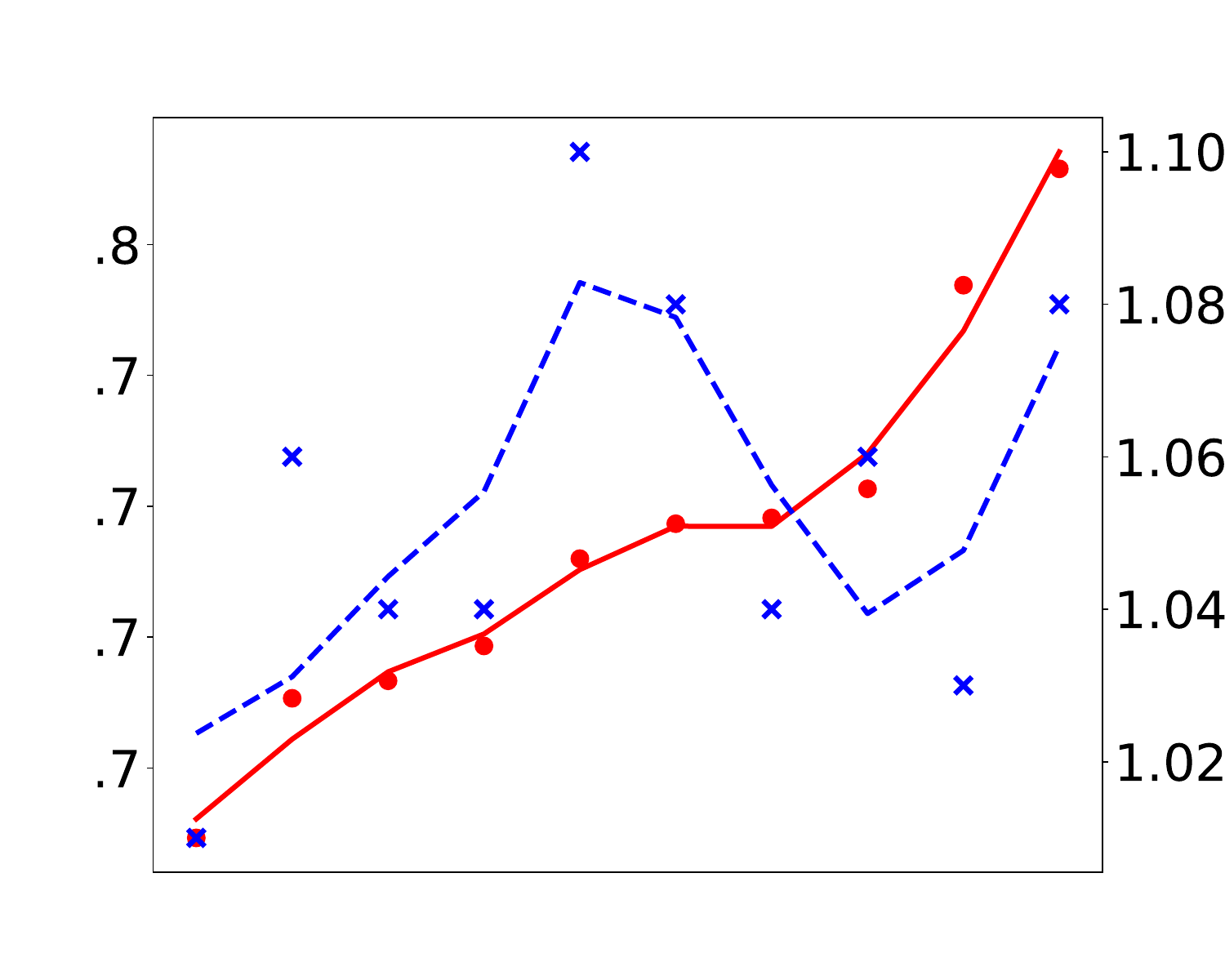}}
\vspace{-1mm}
\subfloat[\small{[T-MPHN, 0.950, 0.576, 0.896, 0.629, 0.380, 0.606]}]{\includegraphics[width=0.16\textwidth]{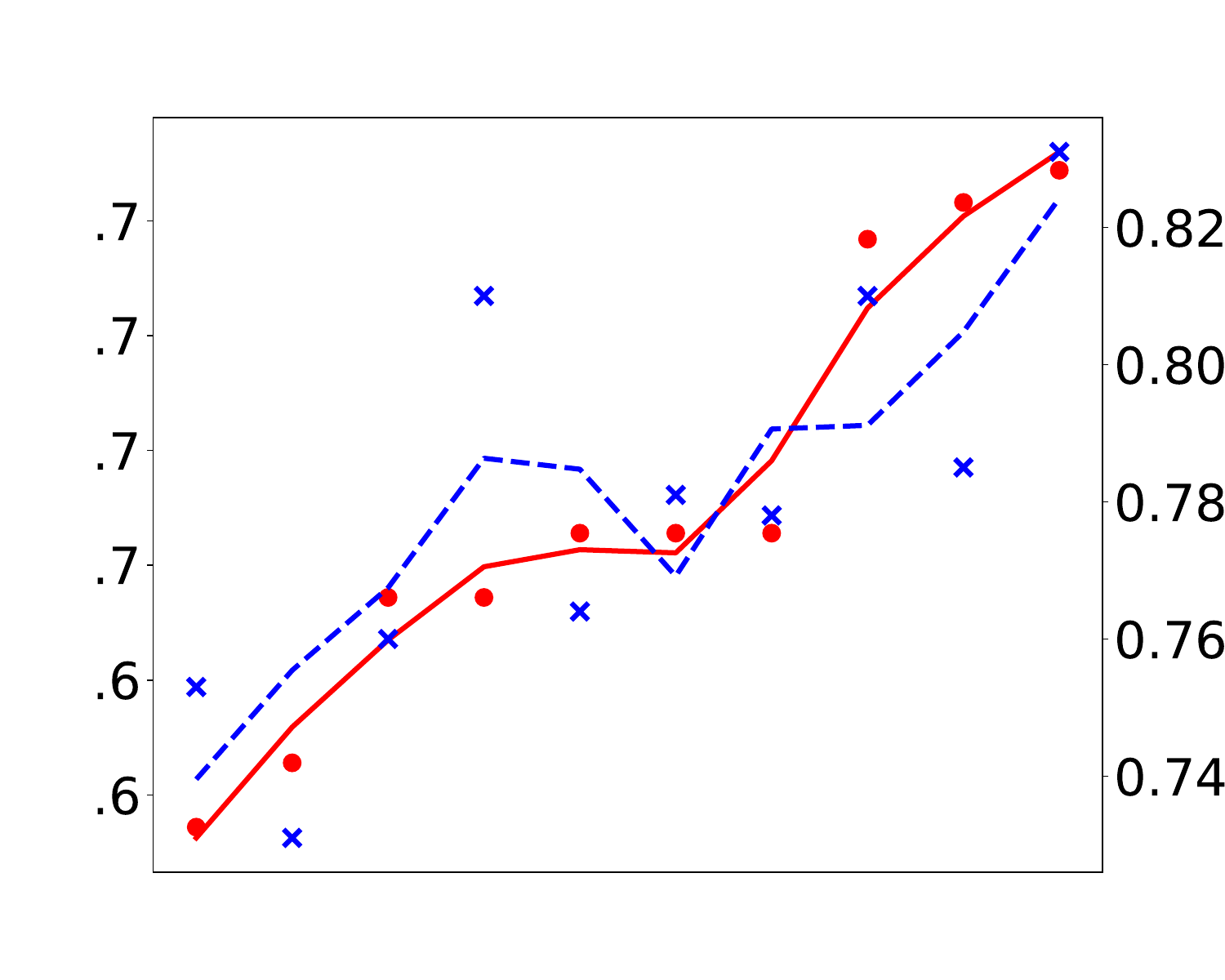}\includegraphics[width=0.16\textwidth]{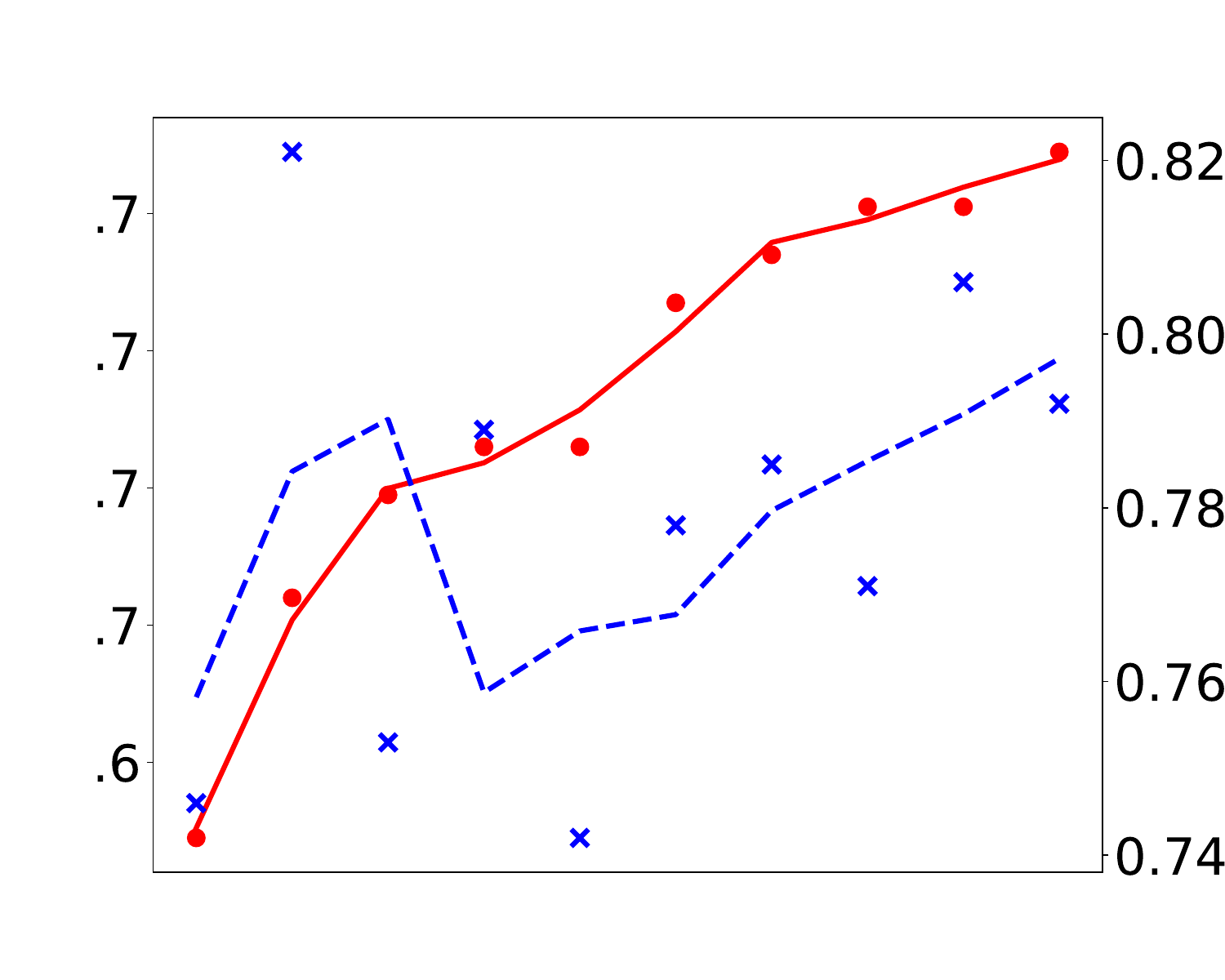}\includegraphics[width=0.16\textwidth]{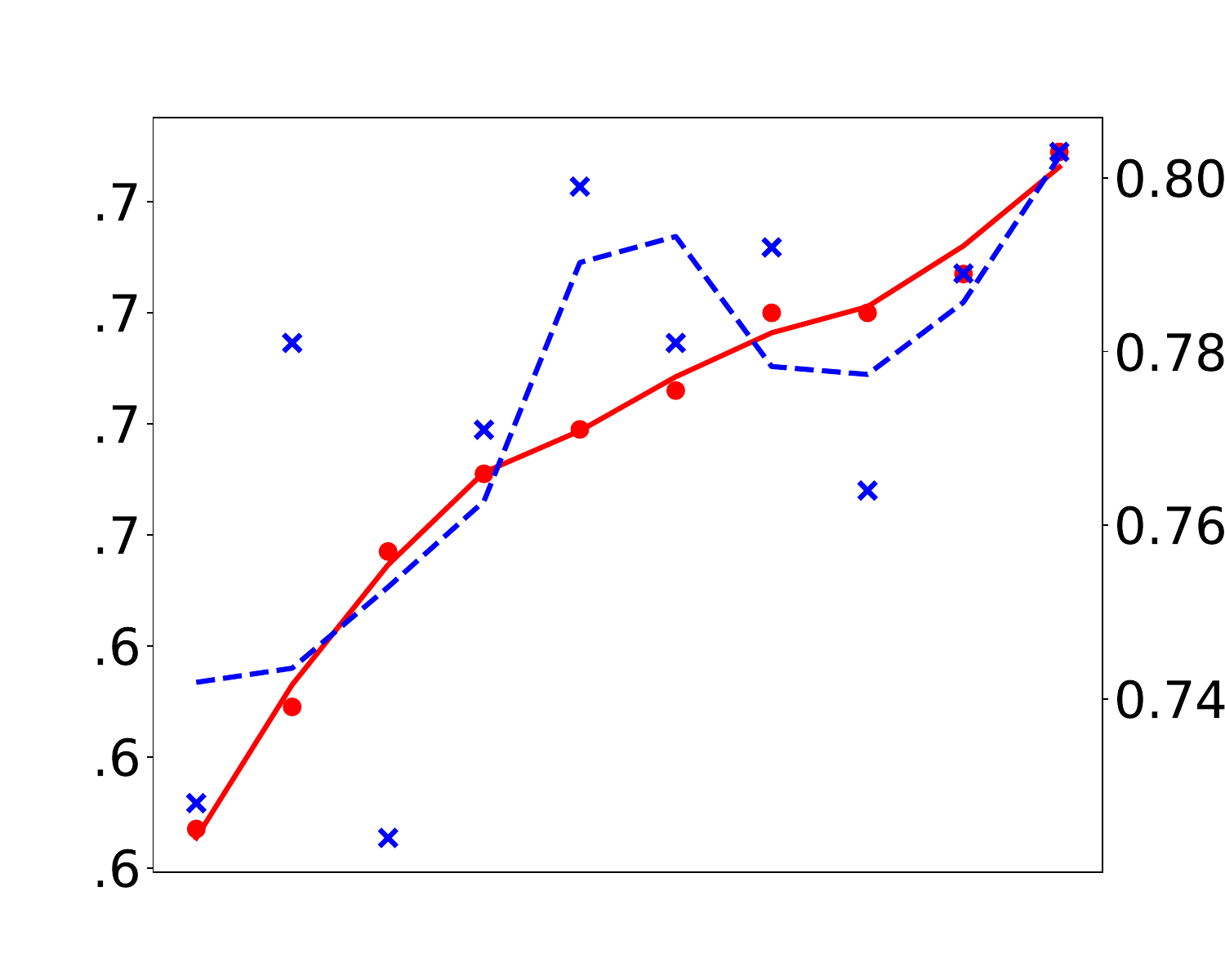}\includegraphics[width=0.16\textwidth]{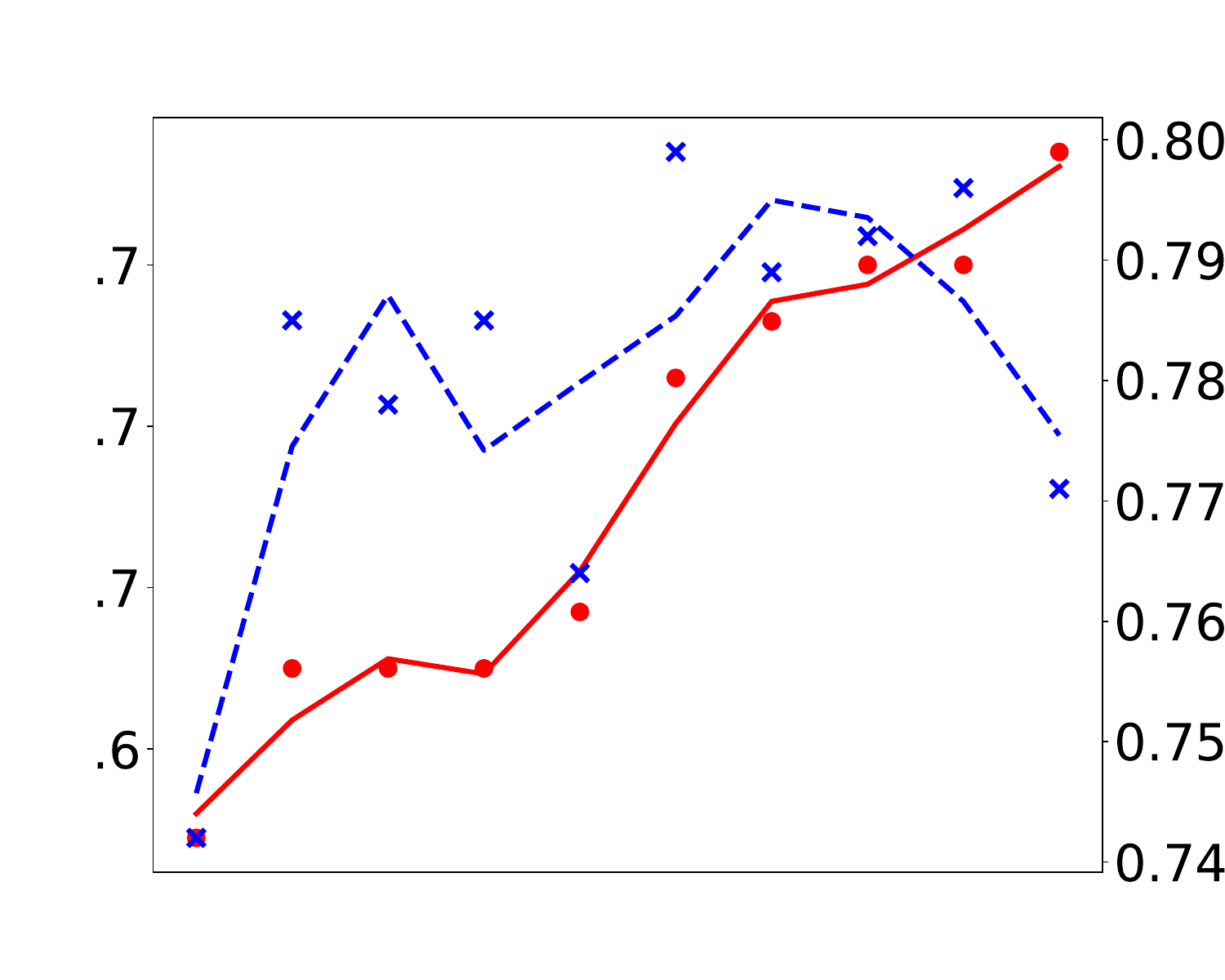}\includegraphics[width=0.16\textwidth]{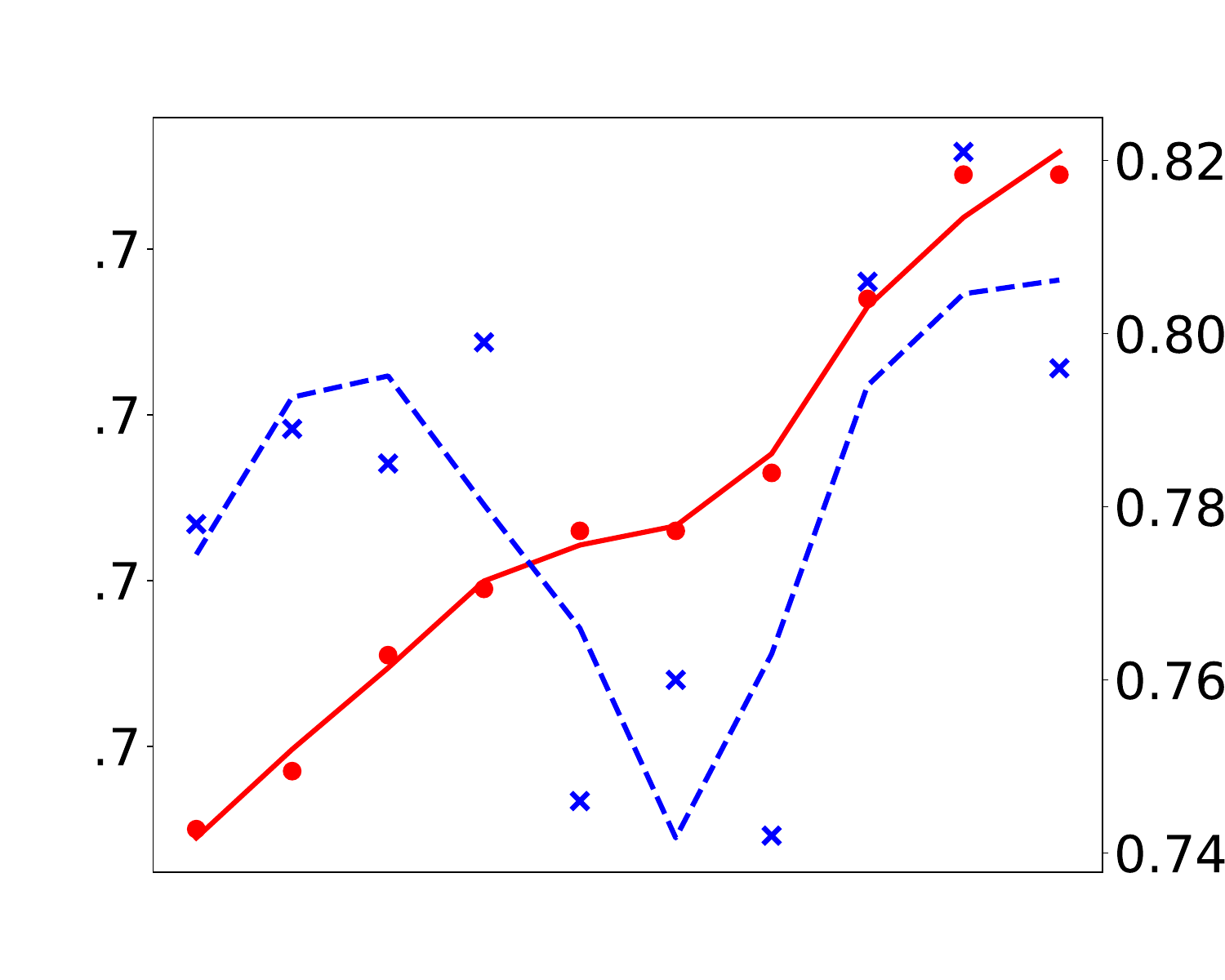}\includegraphics[width=0.16\textwidth]{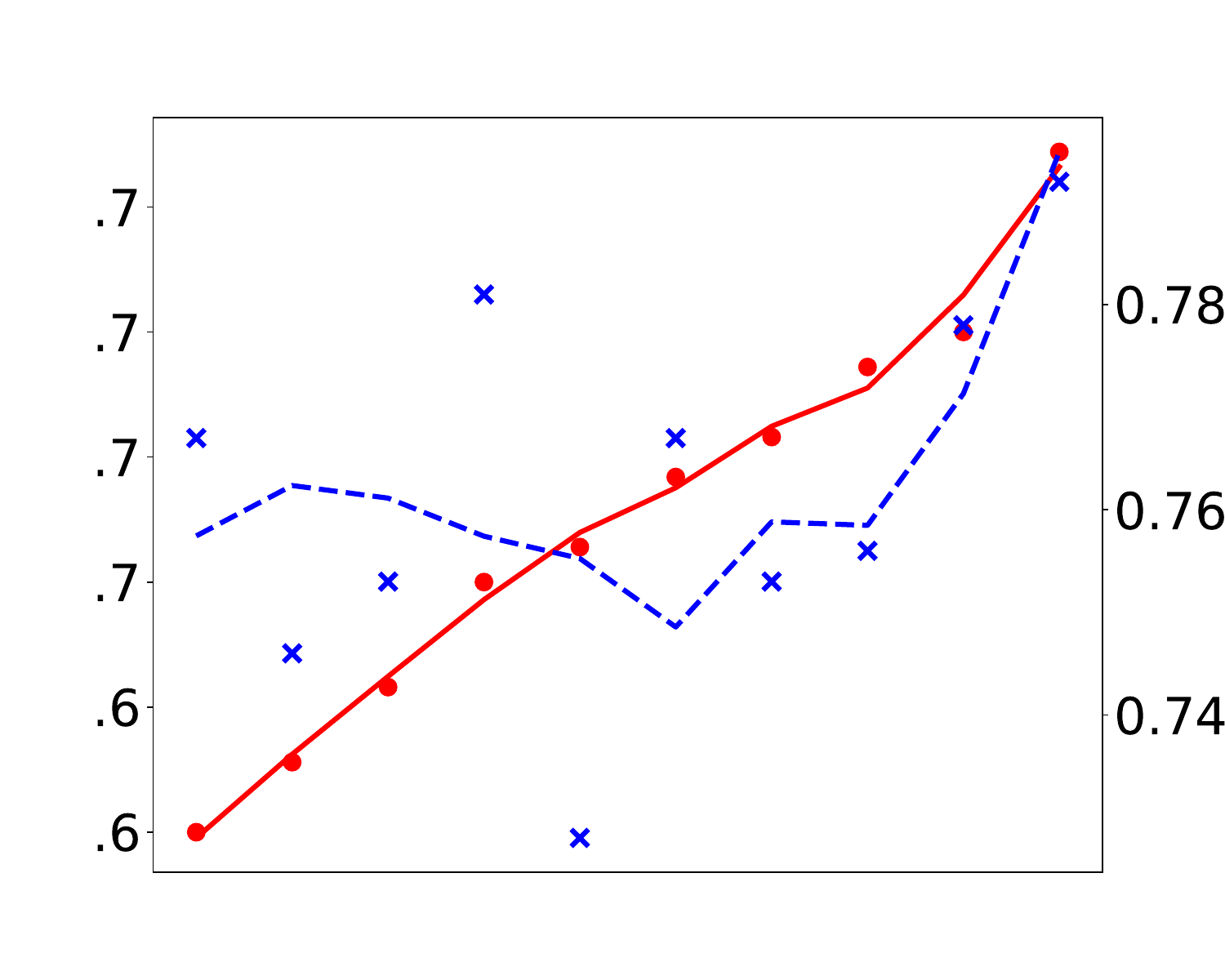}}
\vspace{-1mm}
\caption{\textbf{Results on Collab.} Each subgroup labeled by [model, $r_1$, $r_2$, $r_3$, $r_4$, $r_5$, $r_6$] shows the empirical loss, theoretical bound, and their curves via Savitzky-Golay filter; each figure plots the results over ten independent experiments with random initializations and such process is repeated for six times, where $r_i \in [-1,1]$ for $i\in[6]$ is the Pearson correlation coefficient for the $i$-th figure (from left to right) -- higher $r$ indicating stronger positive correlation.} 
\label{fig:4}
\vspace{-3mm}
\Description{figure1}
\end{figure*}

\end{document}